\documentclass[twoside]{article}

\usepackage[preprint]{aistats2026}
\usepackage{tikz}
\usepackage{algorithm}
\usepackage{algorithmic}
\usepackage{natbib}
\usepackage{dutchcal}
\usepackage{amsmath}
\usepackage{amsthm}
\usepackage{amssymb}
\usepackage{graphicx}
\usepackage{mathtools}
\usepackage{natbib}
\usepackage{url}
\usepackage{graphicx}
\usepackage{tikz-cd}
\usepackage{array,epsfig,fancyheadings,rotating}
\usepackage{bm}
\usepackage{enumitem} 
\usepackage{subcaption}
\usepackage{wrapfig} 

\newtheorem{theorem}{Theorem}
 
\newtheorem{lemma}{Lemma}

\newtheorem{definition}{Definition}

\newtheorem{condition}{Condition} 

\theoremstyle{definition}

\usepackage{hyperref}
\usepackage{cleveref}

\setlist[itemize]{itemsep=1.5pt, topsep=4pt} 
\setlist[enumerate]{itemsep=1.5pt, topsep=4pt}

\begin{document}

% If your paper is accepted and the title of your paper is very long,
% the style will print as headings an error message. Use the following
% command to supply a shorter title of your paper so that it can be
% used as headings.
%
\runningtitle{Provable Effects of Data Replay in Continual Learning: A Feature Learning Perspective}

% If your paper is accepted and the number of authors is large, the
% style will print as headings an error message. Use the following
% command to supply a shorter version of the author names so that
% they can be used as headings (for example, use only the surnames)
%
% \runningauthor{Surname 1, Surname 2, Surname 3, ...., Surname n}

\twocolumn[

\aistatstitle{Provable Effects of Data Replay in Continual Learning: \\ A Feature Learning Perspective}

\aistatsauthor{Meng Ding$^{2}$ \quad Jinhui Xu$^{1}$ \quad Kaiyi Ji$^{2}$ }

\aistatsaddress{ 
$^{1}$ School of Information Science and Technology, USTC and Institute of Artificial Intelligence, HCNSC \\
$^{2}$ Department of Computer Science and Engineering, SUNY at Buffalo }

]

\begin{abstract}
Continual learning (CL) aims to train models on a sequence of tasks while retaining performance on previously learned ones. A core challenge in this setting is \textit{catastrophic forgetting}, where new learning interferes with past knowledge. Among various mitigation strategies, data-replay methods—where past samples are periodically revisited—are considered simple yet effective, especially when memory constraints are relaxed. However, the theoretical effectiveness of full data replay, where all past data is accessible during training, remains largely unexplored. In this paper, we present a comprehensive theoretical framework for analyzing full data-replay training in continual learning from a feature learning perspective. Adopting a multi-view data model, we identify the signal-to-noise ratio (SNR) as a critical factor affecting forgetting. Focusing on task-incremental binary classification across $M$ tasks, our analysis verifies two key conclusions: (1) forgetting can still occur under full replay when the cumulative noise from later tasks dominates the signal from earlier ones; and (2) with sufficient signal accumulation, data replay can recover earlier tasks-even if their initial learning was poor. Notably, we uncover a novel insight into task ordering: prioritizing higher-signal tasks not only facilitates learning of lower-signal tasks but also helps prevent catastrophic forgetting.
% —highlighting the importance of order-aware replay strategies
We validate our theoretical findings through synthetic experiments that visualize the interplay between signal learning and noise memorization across varying SNRs and task correlation regimes.

\end{abstract}

\section{Introduction}
\label{sec:intro}
Continual learning (CL) is a paradigm in machine learning where models learn sequentially from a stream of tasks or datasets, continually adapting to new information while preserving performance on previously learned tasks \cite{parisi2019continual,wang2024comprehensive}. The key challenge in continual learning is \textit{catastrophic forgetting}, a phenomenon where modern models drastically lose previously acquired knowledge when learning new tasks \cite{mccloskey1989catastrophic,kirkpatrick2017overcoming,korbak2022controlling}.

Previous empirical research alleviating catastrophic forgetting in continual learning can be broadly classified into five categories \cite{wang2024comprehensive}: regularization-, replay-, optimization-, representation-, and architecture-based approaches. Regularization-based methods \cite{ritter2018online,aljundi2018memory,titsias2019functional,pan2020continual,benzing2022unifying,lin2022towards} introduce explicit regularizers to balance learning across tasks, often relying on a frozen copy of the old model for reference. Replay-based methods \cite{lopez2017gradient,riemer2018learning,chaudhry2019tiny,yoon2021online,shim2021online,tiwari2022gcr,van2020brain,liu2020generative,zheng2024multi} approximate and recover past data distributions to reinforce old knowledge. Optimization-based methods \cite{lopez2017gradient,chaudhry2018efficient,tang2021layerwise,liu2020generative,wang2022anti} focus on modifying the learning dynamics, such as through gradient projection, to avoid interference. Representation-based methods \cite{wu2022class,shi2022mimicking,wang2022s,mcdonnell2023ranpac,le2024mixture} aim to develop and leverage task-robust representations via the advantages of pretraining, while architecture-based methods \cite{gurbuz2022nispa,douillard2022dytox,miao2021continual,ostapenko2021continual} design adaptable model structures that share parameters across tasks to retain knowledge.

Among these approaches, data-replay methods are often regarded as the most straightforward to implement—particularly when buffer constraints are ignored—since they rely on storing and periodically retraining on past task samples to preserve prior knowledge. However, their empirical success typically hinges on careful sample selection \cite{chaudhry2019tiny,riemer2018learning}. When full data replay is employed, exposing the model to all historical data, the effectiveness of this strategy remains an open question: does it still reliably counteract forgetting under such conditions?

To address this, we present a comprehensive theoretical analysis showing that full data-replay training does not always effectively mitigate forgetting. 
% we develop the first theoretical framework that rigorously analyzes full data-replay training in continual learning and reveals that it does not always succeed in mitigating forgetting. 
% \meng{Existing theoretical studies in continual learning primarily focus on simplified two-task setups or naive sequential training, leaving fundamental gaps in understanding the behavior of replay-based methods across general multi-task settings (see \cref{sec:related_work} for details).} 

Our contribution can be summarized as follows:

\begin{itemize}
    \item We develop a thorough theoretical framework that rigorously analyzes full data-replay training within the theoretical continual learning community. Prior studies have primarily focused on simplified linear regression models, two-task setups, or naive sequential training, leaving fundamental gaps in understanding the behavior of replay-based methods in general multi-task settings (see \cref{sec:related_work} for details). More specifically: (1) we adopt a multi-view data model (following \cite{allen2020towards}), where each data point consists of both feature signals and noise, allowing us to introduce the signal-to-noise ratio as a key factor governing whether forgetting occurs; and (2) we focus on task-incremental binary classification in a general $M$-task setting, where each task is associated with a distinct feature signal vector. This formulation enables us to characterize how task ordering and inter-task correlation influence forgetting.
    \item Based on the above data model, our results formally show two interesting findings: (1) Even with full data replay, forgetting of task $k$ after replaying up to task $m$ ($m > k$) can still occur under certain SNR regimes, particularly when the cumulative noise from later tasks outweighs the signal intensity of task $k$. (2) Even if the performance on task $k$ is initially unsatisfactory, data replay can help amplify the signal intensity, enabling the model to recover task $k$ 's information in later stages-provided the accumulated signal outweighs the noise. Furthermore, by incorporating task correlation, we uncover a key insight into task ordering: prioritizing higher-signal tasks not only facilitates learning for lower-signal tasks but can also help prevent catastrophic forgetting. This observation suggests a promising direction for designing order-aware replay strategies in future continual learning frameworks.
    \item We complement our theory with synthetic experiments that examine the dynamics of signal learning and noise memorization during continual training under full data replay, comparing different task orderings across varying levels of task correlation and SNR conditions.
\end{itemize}

% Specifically, we reveal that its performance critically depends on the signal-to-noise ratio (SNR) inherent in the data, highlighting conditions under which data-replay methods may fail to preserve learned knowledge from previous tasks.

\section{Related Work}
\label{sec:related_work}

\paragraph{Replay-based Continual Learning.}

Replay‑based approaches mitigate catastrophic forgetting by approximating the original data distribution during continual training. Specifically, they can be categorized based on how they reconstruct previous data: (1) Experience replay. A small subset of historical samples is stored in a memory buffer and replayed alongside new data. Early work stored a fixed or class‑balanced share of examples from each batch to enforce simple selection rules \cite{lopez2017gradient,riemer2018learning,chaudhry2019tiny}. Later studies introduced gradient‑aware or optimizable selection schemes to maximize sample diversity \cite{yoon2021online,shim2021online,tiwari2022gcr}, and used data‑augmentation techniques to improve storage efficiency \cite{ebrahimi2021remembering,kumari2022retrospective}. 
(2) Generative replay (pseudo‑rehearsal). Instead of storing raw inputs, an auxiliary generative model is trained to synthesise data from previous tasks, and these pseudo‑examples are replayed alongside new data during subsequent training. To mitigate forgetting in the generative model itself, additional strategies are often employed, such as weight regularization to preserve past knowledge \cite{nguyen2017variational,wang2021triple}, task-specific parameter allocation (e.g., binary masks) \cite{ostapenko2019learning,cong2020gan} to reduce inter-task interference, and feature-level replay to simplify conditional generation by replaying intermediate features instead of raw data \cite{van2020brain,liu2020generative}.
In practice, replay methods must work with a limited memory buffer. For analytical clarity, however, we assume an unlimited buffer that stores all past data; extending the theory to constrained‑memory settings will be left for future work.

\paragraph{Theoretical Continual Learning.}

Recent theoretical work on catastrophic forgetting has focused mainly on linear regression models, leaving more complex settings largely unexplored. \cite{evron2022catastrophic} analyzed catastrophic forgetting under two task‑ordering schemes—cyclic and random—using alternating projections and the Kaczmarz method to pinpoint both the worst‑case and the no‑forgetting scenarios. Building on this, \cite{swartworth2023nearly} tightened nearly optimal forgetting bounds for cyclic orderings, and \cite{evron2025better} further improved the rates for random orderings with replacement. Additionally, \cite{goldfarb2023analysis} provided analysis that overparameterization accounts for most of the performance loss caused by catastrophic forgetting.  \cite{lin2023theory} examined how overparameterization, task similarity, and task ordering jointly influence both forgetting and generalization error in continual learning, and \cite{li2024theory} extended this analysis by characterizing the role of Mixture-of-Experts (MoE) architectures. \cite{ding2024understanding} developed a general theoretical framework for catastrophic forgetting under Stochastic Gradient Descent, revealing that the task order shapes the extent of forgetting in continual learning. \cite{zhao2024statistical} offered a statistical perspective on regularization‑based continual learning, showing how various regularizers affect model performance.

Beyond linear‑regression settings, several studies have investigated catastrophic forgetting in neural networks settings. \cite{doan2021theoretical} investigated catastrophic forgetting in the Neural Tangent Kernel (NTK) regime and showed that projected‑gradient algorithms can mitigate forgetting by introducing a task‑similarity measure called the NTK overlap matrix. 
\cite{cao2022provable} demonstrated that, for any target accuracy, one can keep the learned representation’s dimension nearly as small as the true underlying representation with the proposed CL algorithm. 
% Despite recent theoretical advances, catastrophic forgetting in continual learning remains far from fully understood. Specifically, (1) most analyses rely heavily on linear regression models and closed-form solutions, leaving more complex and realistic scenarios unexplored; (2) existing studies on neural networks predominantly examine simplified two-task scenarios, with extensions to general $M$-task cases still posing significant challenges.
The most relevant works to ours with data-replay strategies are \cite{banayeeanzade2024theoretical,zhengtowards}, where \cite{banayeeanzade2024theoretical} primarily focuses on the comparison between multi-task learning and continual learning, while \cite{zhengtowards} extends previous continual learning theory to memory-based methods. Both works are limited to the linear regression setting and leave the behavior of more complex models unexplored. 
% ; and (3) current theoretical works primarily focus on naive sequential training, neglecting a rigorous exploration of data-replay methods. 
We will provide more discussion in \cref{para:comparison}.

\section{Preliminaries}
\label{sec:preliminaries}

\begin{figure*}[t]
\centering
\begin{tikzpicture}
    % Include the first image
    \node[anchor=south west, inner sep=0] (img1) at (0, 0) {\includegraphics[width=0.22\textwidth]{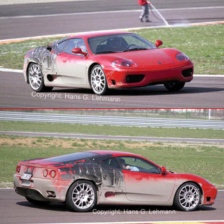}};
    \draw[green, thick] (1.6, 2.4) circle (0.3);
    \draw[green, thick] (1.5, 0.5) circle (0.3);
    \node[green] at (1.0, 1.3) {\textbf{\small car's wheel}};
    
    % Include the second image
    \node[anchor=south west, inner sep=0] (img2) at (3.5, 0) {\includegraphics[width=0.22\textwidth]{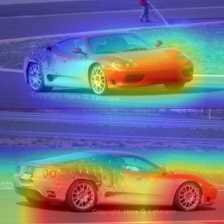}};
    \draw[green, thick] (5.2, 2.4) circle (0.3);
    \draw[green, thick] (4.8, 0.5) circle (0.3);
    \node[green] at (4.5, 1.3) {\textbf{\small car's wheel}};
    
    % Include the third image with annotations
    \node[anchor=south west, inner sep=0] (img3) at (7.0, 0) {\includegraphics[width=0.22\textwidth]{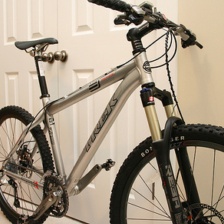}};
    \draw[red, thick] (7.6, 0.9) circle (0.7);
    \node[red] at (8.4, 2.0) {\textbf{\small  bicycle’s wheel}};
    
    % Include the fourth image
    \node[anchor=south west, inner sep=0] (img4) at (10.5, 0) {\includegraphics[width=0.22\textwidth]{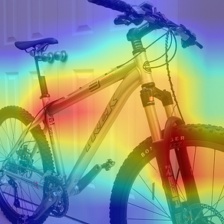}};
    \draw[red, thick] (11.1, 0.9) circle (0.7);
    \node[red] at (11.9, 2.0) {\textbf{\small  bicycle’s wheel}};
\end{tikzpicture}
\caption{Illustration of feature signals across multiple tasks using images from Salient ImageNet. }
\label{fig:image_features}
\end{figure*}

% \textbf{Notation.} In this paper, we adhere to a consistent notation
% style for clarity. We use bold lowercase letters (e.g., $\mathbf{x , w}$ ) for vectors. For positive functions $f$ and $g$, we write $f(x) \lesssim g(x)$ (or $f(x) \gtrsim g(x)$ ) when $f(x) \leq c g(x)$ (or $f(x) \geq c g(x)$ ) for some absolute constant $c>0$, and $f(x) \approx g(x)$ when both inequalities hold. Given a positive integer $n$, we denote $[n]=\{1, \ldots, n\}$; for any index set $\mathcal{S},|\mathcal{S}|$ is its cardinality. The Gaussian distribution with mean $\mu$ and covariance $\Sigma$ is $\mathcal{N}(\mu, \Sigma)$, and $\mathbf{I}_d$ denotes the $d \times$ $d$ identity matrix. We use $\widetilde{O}, \widetilde{\Theta}$, and $\widetilde{\Omega}$ to suppress polylogarithmic factors in $d$; the terms poly $(d)$ and polylog $(d)$ represent unspecified polynomial and polylogarithmic factors in $d$.

\textbf{Problem Setup.} In our setup, we consider a sequence of tasks denoted by $\mathbb{M}=\{1,2, \ldots, M\}$. For each task $m$ in this sequence, let $\{\mathbf{v}_m^*\}_{m \in[M]} \subseteq \mathbb{R}^d$ represent the feature vectors, where $\|\mathbf{v}_m^*\|=1$ for all $m \in[M]$, and $\langle\mathbf{v}_m^*, \mathbf{v}_{m^{\prime}}^*\rangle= A_{(m,m^{\prime})} \geq 0$ whenever $m \neq m^{\prime}$. Then, we define the data distributions for each task as follows. 

\begin{definition}[Data Distribution for Task $m$]\label{def:data_distribution}
For the task $m$, let $\mathbf{v}_m^* \in \mathbb{R}^d$ be a fixed vector representing the feature signal contained in each data point. Each data point $(\mathbf{x}_m,y_m)$ with input $\mathbf{x}_m = [\mathbf{x}_m^1, \mathbf{x}_m^2] \in (\mathbb{R}^d)^2$ and label $y \in \{+1, -1\}$ is generated from a data distribution $\mathcal{D}_m$ as follows:
\begin{itemize}
    \item[(1)] The label $y \in\{-1,1\}$ is sampled uniformly;
    \item[(2)] The input $\mathbf{x}_m$ is generated as a vector of $2$ patches, i.e., $\mathbf{x}_m = [\mathbf{x}_m^1, \mathbf{x}_m^2] \in (\mathbb{R}^d)^2$, where
    \begin{itemize}
        \item Feature patch. The first patch is given by $\mathbf{x}_m^1=\alpha_m y_m \cdot \mathbf{v}_m^*$, where {$\alpha_m>0$ indicates the signal intensity}. 
        \item  Noise patch. The second patch is given by $\mathbf{x}_m^2=$ $\bm{\xi}_m$, where $\bm{\xi}_m \sim \mathcal{N} (0, \sigma_{\xi}^2 \cdot \mathbf{H} )$ and is independent of the label $y_m$, where $\mathbf{H}=\mathbf{I}_d-\sum_{m=1}^M \mathbf{v}_m^* (\mathbf{v}_m^* )^{\top}$.
    \end{itemize}
\end{itemize}
\end{definition}

Our data generation model is inspired by the structure of image data, which has been widely utilized in the feature learning theory area \cite{allen2020towards,cao2022benign,jelassi2022towards,kou2023benign,zou2023benefits,ding2025understanding,han2024feature,li2024optimization,bu2024provably,bu2025provable,han2025role}. Specifically, the input data comprises two patches, among which only a subset is relevant to the class label of the image. We denote this relevant part as $y_m \alpha_m \mathbf{v}_m^*$, where $y_m$ represents the label, $\mathbf{v}_m^*$ is the corresponding feature signal vector, and $\alpha_m >0$ indicates the intensity of the feature signal. As described in Definition~\ref{def:data_distribution}, we assume that each task $m$ has its own unique feature signal vector and that the feature vectors across tasks are correlated with the correlation strength $A_{(m, m^{\prime})} >0$. For instance, in a continual learning setting where the model first classifies cars and later bicycles, the initial task may use the car’s wheel as a key feature and the subsequent task may use the bicycle’s wheel. Because both wheels share similar shapes, this overlap promotes feature reuse and helps the model recognize both objects as forms of transportation.
% When classifying cars, cats, and cameras, their corresponding feature vectors might represent characteristics such as wheels, fur, and lenses, respectively.
In contrast, the irrelevant patches, referred to as noise, are independent of the data label and do not contribute to prediction. We denote such noise as $\bm{\xi}$, which is assumed to follow a Gaussian distribution $\mathcal{N}(0, \sigma_{\xi}^2 \cdot \mathbf{H})$. For simplicity, the noise follows the same independent distribution for each task, and the noise vector is orthogonal to any feature signal vector $\mathbf{v}_m^*$.

\textbf{Learner Model.} 
Following existing work \cite{jelassi2022towards,baoprovable}, we consider a one-hidden-layer convolutional neural network architecture equipped with the cubic activation function $\sigma(z) = z^3$:
\begin{equation}
    \begin{aligned}
        F(\mathbf{W}, \mathbf{x}_m) &=\sum_{r \in[R]} \sigma(\langle\mathbf{w}_r, \mathbf{x}_m^1\rangle) + \sigma(\langle\mathbf{w}_r,
        \mathbf{x}_m^2\rangle) \\
        &=\sum_{r \in[R]} \sigma(\langle\mathbf{w}_r, \alpha_m y_m \mathbf{v}_m^*\rangle) + \sigma(\langle\mathbf{w}_r, \bm{\xi}_m\rangle),
    \end{aligned}
\end{equation} 
where $R$ is the number of hidden neurons and $\mathbf{W}=$ $\{\mathbf{w}_1, \ldots, \mathbf{w}_R\}$ represents the model weights. We denote the logistic loss function evaluated for the $m$-th task as
\begin{equation}
    L(\mathbf{W} ; {D}_m)=\frac{1}{n_m} \sum_{j \in[n_m]} \log \{1+e^{-y_{mj} F(\mathbf{W}, \mathbf{x}_{m j})}\}.
\end{equation}
Here, $D_m$ is the training data set for task $m$ with sample size $n_m$. To keep the analysis clean, we assume all tasks share the same sample size, i.e., $n_m=n$ for every $m$.
We train the model from a Gaussian initialization, drawing each hidden weight $\mathbf{w}_r^{(0)}$ independently from $\mathcal{N}\left(0, \sigma_0^2 \mathbf{I}_d\right)$.

\textbf{Data Replay Training.} Starting with the randomly initialized point $\mathbf{W}_0$ and employing a constant step size $\eta$, the model is updated by data-replay training for task $m$ over $T$ iterations, with $t =1,..,T$:
\begin{equation}\label{eq:data_replay}
    \mathbf{W}_m^{(t+1)}=\mathbf{W}_m^{(t)}-\frac{\eta}{m n_m} \sum_{j \in[n_m]} \nabla L(\mathbf{W}_m^{(t)} ; {D}_1, {D}_2, ...,{D}_m).
\end{equation}
Here, $\mathbf{W}_m^{(T)}$ denotes the parameter state after the completion of training on task $m$, which subsequently serves as the starting point for training on task $m+1$. 
In contrast to classical sequential training, the fully data-replay training incorporates all previous task datasets, $D_1, D_2,..., D_m$, into the training of the current task model. 

\textbf{Catastrophic Forgetting.} Catastrophic forgetting refers to the phenomenon where modern models substantially lose previously acquired knowledge when learning new tasks \cite{mccloskey1989catastrophic}. In the following, we provide a formal definition of this behavior in the context of continual learning over $M$ tasks.
\begin{definition}[Catastrophic Forgetting] \label{def:forgetting}
    Given a test data $(\mathbf{x}_k, y_k)$ drawn from the data distribution $\mathcal{D}_k$ of the $k$-th task, we claim \textit{Catastrophic Forgetting} occurs if the following conditions hold:
    \begin{itemize}
        \item[1.] After training on the $k$-th task (i.e., at iteration $T_k$ ), with high probability, the model correctly classifies the sample:
        $$
        \mathbb{P}\left\{y_k F\left(\mathbf{W}^{\left(T_k\right)}, \mathbf{x}_k\right)<0\right\} \leq \frac{1}{\operatorname{poly}(d)}.
        $$
        \item[2.] After training on the $m$-th task ( $m>k$, at iteration $T_m$ ), with high probability, the model's performance on task $k$ deteriorates:
        $$
        \mathbb{P}\left\{y_k F\left(\mathbf{W}^{\left(T_m\right)}, \mathbf{x}_k\right)<0\right\} \geq {\frac{1}{2}-\frac{1}{\operatorname{ polylog(d) }}} .
        $$
    \end{itemize}
\end{definition}

\section{Main Results}

In this section, we present our main results on the generalization performance for task $k$, evaluated after training on the $k$-th task and again after training on the $m$-th task ($m>k$) based on $\operatorname{SNR} = \alpha_p / \sigma_{\xi} \sqrt{d}$, respectively. Before stating the theorems, we first introduce the conditions that underlie our analysis.

\begin{condition}\label{con:parameter}
    For the data model described in Definition~\ref{def:data_distribution}, we assume that the noise standard deviation scales as $\sigma_{\xi} = \Theta(d^{-0.51})$. For the random initialization of the model weights, we assume $\sigma_0 = \Theta\left((n/R)^{1/3} d^{-0.52}\right)$. Furthermore, we assume the model is overparameterized, with both the hidden dimension $R$ and the sample size $n$ are bounded by $\operatorname{polylog}(d)$.
\end{condition}

Our conditions follow those in existing work \cite{jelassi2022towards,baoprovable}, but without imposing assumptions on the signal intensity. This relaxation allows us to explicitly investigate how the signal-to-noise ratio (SNR) influences the behavior of data replay training in continual learning.

% \subsection{Continual Forgetting of Task $k$ under Lower SNR}

\begin{theorem}\label{thm:forget_m_ffs_tasks}
    Suppose the setting in Condition~\ref{con:parameter} holds, and the SNR satisfies $ \frac{k^2}{R^{2/3} \sigma_0^2 \sigma_{\xi}^2 d^{13/6}} \lesssim \frac{\sum_{p=1}^{k} (1-\frac{p-1}{k}) \alpha_p^3 A_{(p,k)}}{(\sigma_{\xi}\sqrt{d})^3} \lesssim \frac{1}{n} $. Consider full data-replay training with learning rate $\eta \in(0, \widetilde{O}(1)]$, and let $(\mathbf{x}_k, y_k) \sim \mathcal{D}_k$ be a test sample from the task $k$. Then, with high probability, there exist training times $T_k$ and $T_m$ ($m>k$) such that 
    \begin{itemize}
        \item The model fails to correctly classify task $k$ immediately after learning it:
        \begin{equation}\label{eq:ff_k}
            \mathbb{P}\left\{y_k F\left(\mathbf{W}^{(T_k)}, \mathbf{x}_k\right)<0\right\} \geq \frac{1}{2}-{\frac{1}{\operatorname{polylog}(d)}}.
        \end{equation}
        \item (Persistent Learning Failure on Task $k$) If the additional SNR condition holds $\frac{m^2-k^2}{R^{1/3} \sigma_0 \sigma_{\xi} d} \lesssim \frac{\sum_{p=1}^{m} (1-\frac{p-1}{m}) \alpha_p^3 A_{(p,k)}}{(\sigma_{\xi}\sqrt{d})^3} \lesssim \frac{1}{n} $, then the model still fails to correctly classify task $k$ after subsequent training to task $m$: 
        \begin{equation}\label{eq:ff_m}
            \mathbb{P}\left\{y_k F\left(\mathbf{W}^{(T_m)}, \mathbf{x}_k\right)<0\right\} \geq \frac{1}{2}-{\frac{1}{\operatorname{polylog}(d)}}.
        \end{equation}
        \item (Enhanced Signal Learning on Task $k$) If the additional SNR conditions holds $\frac{\sum_{p=1}^{m}  \alpha_p^3 A_{(p,k)}}{(\sigma_{\xi}\sqrt{d})^3} \gtrsim \frac{1}{n R^{1/3} \sigma_0 \sigma_{\xi}\sqrt{d} } $, then the model can correctly classify task $k$ after subsequent training to task $m$:
        \begin{equation}\label{eq:fs_m}
            \mathbb{P}\left\{y_k F\left(\mathbf{W}^{(T_m)}, \mathbf{x}_k\right)<0\right\} \leq {\frac{1}{\operatorname{poly}(d)}}.
        \end{equation}
    \end{itemize}
\end{theorem}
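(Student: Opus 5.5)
The plan is a feature-learning analysis in the style of \cite{jelassi2022towards}. We track two families of scalar quantities along the full-replay dynamics \eqref{eq:data_replay}: the signal components $c_{r,p}^{(t)} := \langle \mathbf{w}_r^{(t)}, \mathbf{v}_p^* \rangle$ for each task $p$, and the noise memorization $\zeta_{r,p,j}^{(t)} := \langle \mathbf{w}_r^{(t)}, \bm{\xi}_{pj}\rangle$. Because $\mathbf{H}$ projects out every $\mathbf{v}_p^*$, the gradient along $\mathbf{v}_k^*$ gets contributions only from feature patches. Projecting the update onto $\mathbf{v}_k^*$ then gives
\[
c_{r,k}^{(t+1)} = c_{r,k}^{(t)} + \frac{3\eta}{m n}\sum_{p\le m}\sum_{j} \ell'_{pj}\, \alpha_p^3 A_{(p,k)} \bigl(\langle \mathbf{w}_r^{(t)}, \mathbf{v}_p^*\rangle\bigr)^2 ,
\]
with $A_{(k,k)}=1$ and $\ell'_{pj}=1/(1+e^{y F})$ the negative-loss derivatives. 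The noise components obey the usual cubic growth $\zeta^{(t+1)} \approx \zeta^{(t)} + \Theta(\eta \sigma_\xi^2 d/(mn))\,\ell'\,(\zeta^{(t)})^2$. By Condition~\ref{con:parameter}, at initialization $|c_{r,p}^{(0)}|\approx\sigma_0$ and $|\zeta^{(0)}|\approx \sigma_0\sigma_\xi\sqrt d$, each up to logs.

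The first step is an early-phase lemma, which I would prove by induction over stages $1,\dots,m$. As long as all $\ell'_{pj}=\Theta(1)$, both families evolve as discrete cubic ODEs $x_{t+1}=x_t+\eta a x_t^2$, so each blows up at a time $\approx 1/(\eta a x_0)$. During stage $q$, the accumulated rate for $c_{r,k}$ is the $A$-weighted sum of $\alpha_p^3$ over the replayed tasks. Integrating across stages gives a time-averaged rate that is exactly the weighting $\sum_p (1-\frac{p-1}{m})\alpha_p^3 A_{(p,k)}$, since task $p$ is present in $m-p+1$ of the $m$ stages. The first-hitting comparison is then signal rate $\times\,\sigma_0$ versus noise rate $\times\,\sigma_0\sigma_\xi\sqrt d$. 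Together with the max over $R$ neurons and $n$ samples (giving the $R^{1/3}$, $n$ factors), this is what the SNR conditions encode.

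For the \emph{lower} inequalities $\lesssim 1/n$ in the first two bullets, noise wins the race: some $\zeta$ reaches $\Omega(1)$ first and drives the training loss to zero. The logistic derivatives then decay, which freezes the signal at $c_{r,k}^{(T)} = \widetilde O(\sigma_0)$ (or at most a polylog multiple). The role of the lower bound in the condition is to guarantee that the signal still dominates the test-time randomness of $\sigma_0$ scale, so that the relevant quantity is well defined. For a fresh test point, $y_kF = \sum_r \alpha_k^3 (c_{r,k})^3 + \sum_r \langle \mathbf{w}_r, \bm{\xi}\rangle^3$. The noise term is symmetric and has magnitude exceeding the signal term, since $\mathbf{w}_r$ has acquired a noise component of norm $\gtrsim$ the memorized amount. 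An anti-concentration argument for a symmetric sum of cubed Gaussians then gives misclassification probability $\ge \frac12 - 1/\mathrm{polylog}(d)$. This yields \eqref{eq:ff_k} at $T_k$. For \eqref{eq:ff_m}, I would apply the same argument at $T_m$. The hypothesis with the $m^2-k^2$ factor is there to ensure that the signal increments accrued during stages $k+1,\dots,m$ (which run while replayed losses are already small) remain dominated. To show this, I would bound $\sum_t \ell'$ over these stages by $\widetilde O(1/\eta)$ times the number of stages, and this is where the quadratic-in-$m$ factor comes from.

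For \eqref{eq:fs_m}, the larger accumulated signal instead lets $\max_r c_{r,k}$ hit $\widetilde\Omega(1/\alpha_k)$ before any $\zeta$ becomes $\Omega(1)$. After that, a standard second-phase argument applies: loss decreases, gradients are bounded, and noise components stay $\widetilde O(\sigma_0\sigma_\xi\sqrt d)$. The test margin is then dominated by the positive signal term, and Gaussian tails give error $\le 1/\mathrm{poly}(d)$.

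The main obstacle is the multi-stage coupling. After task $k$'s noise has been memorized, its $\ell'$ are small, yet the signal from later tasks still pushes $c_{r,k}$ through the correlations $A_{(p,k)}$. To handle this, I would carry a simultaneous induction hypothesis on all $c_{r,p}$ and $\zeta$ across stage boundaries, and control $\sum_t \ell'_{pj}$ per stage by a loss-potential argument. This is where the precise exponents in the conditions must be matched.
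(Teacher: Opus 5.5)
Your stage-$k$ race matches the paper. That includes the $(1-\frac{p-1}{k})$ weighting from counting the stages in which task $p$ is replayed, noise overtaking signal under the $\lesssim 1/n$ bound, and the symmetry-plus-anti-concentration argument at test time. The genuine gap is in the third bullet.

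The standing SNR hypothesis still holds there. So by the first bullet, and by your own argument for it, task $k$'s training noise is already memorized at $T_k$: $\max_r y_{kj}\Phi_{(k,r)}^{(T_k,k,j)}\ge\Theta(R^{-1/3})$, while the signal is still $\widetilde O(\sigma_0)$. Your claim that $\max_r c_{r,k}$ reaches $\widetilde\Omega(1/\alpha_k)$ before any $\zeta$ becomes large, with noise staying at $\widetilde O(\sigma_0\sigma_\xi\sqrt d)$, is therefore false in this setting. A fresh signal-versus-noise race from initialization would also give the wrong threshold, of order $1/n$.

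What is needed is a second phase started at $T_k$:
\begin{itemize}
\item The recovery must be driven by the samples of tasks $k+1,\dots,m$, whose loss derivatives are still $\Theta(1)$, acting through $A_{(p,k)}$. The freezing you invoke for the first bullet only affects the already-fitted samples.
\item $c_{r^*,k}$ restarts from $\gtrsim\sigma_0$, grows at rate $\frac{\eta}{m}\sum_{p\le m}\alpha_p^3A_{(p,k)}$, and reaches $\Theta(1/(\alpha_kR^{1/5}))$ after about $\frac{m}{\eta\sigma_0\sum_{p\le m}\alpha_p^3A_{(p,k)}}$ steps.
\item The already-memorized noise needs about $\frac{nmR^{1/3}}{\eta d\sigma_\xi^2}$ steps to climb from $\Theta(R^{-1/3})$ to $\Theta(R^{-1/4})$.
\item Requiring the signal time to be smaller is exactly $\frac{\sum_{p\le m}\alpha_p^3A_{(p,k)}}{(\sigma_\xi\sqrt d)^3}\gtrsim\frac{1}{nR^{1/3}\sigma_0\sigma_\xi\sqrt d}$.
\end{itemize}
The test margin then compares a signal term of order $R^{2/5}$ against noise controlled by the $R^{-1/4}$ cap, not the initialization scale.

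Second, you misread the two lower bounds, and as a result the step they serve is missing from your plan. A lower bound on cumulative signal cannot keep the signal increments of stages $k+1,\dots,m$ small; that is the job of the $\lesssim 1/n$ upper bound. Nor is it there to make signal dominate test-time randomness in a failure statement.

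In the paper both lower bounds control the off-diagonal terms of the replayed noise update,
$\frac{3\eta}{nm}\sum_{(p,j')\neq(k,j)}y_{kj}y_{pj'}\ell_{pj'}\bigl(\Phi_{(m,r)}^{(t,p,j')}\bigr)^2\langle\bm{\xi}_{pj'},\bm{\xi}_{kj}\rangle$,
which your approximate cubic recursion for $\zeta$ simply drops. The device, Lemma~\ref{lem:k_sum_l}, applies Lemma~\ref{lem:E8} to the \emph{signal} recursion and gets $\sum_s\frac1n\sum_{j'}\ell_{pj'}\le\widetilde O\bigl(m/(\eta\sigma_0\sum_p\alpha_p^3A_{(p,k)})\bigr)$. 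Multiply this by:
\begin{itemize}
\item $|\langle\bm{\xi}_{pj'},\bm{\xi}_{kj}\rangle|\lesssim\sigma_\xi^2\sqrt d$;
\item the squared noise cap, $(Rd)^{-2/3}$ through stage $k$ and $R^{-2/3}$ afterwards;
\item the number of (stage, replayed task) pairs, $\sim k^2$ and $m^2-k^2$ respectively.
\end{itemize}
Requiring the product to be small against $\sigma_0\sigma_\xi\sqrt d$, resp.\ $R^{-1/3}$, produces precisely the $d^{13/6}$ and $(m^2-k^2)/(R^{1/3}\sigma_0\sigma_\xi d)$ lower bounds in the statement. This pair count, not a per-stage $\widetilde O(1/\eta)$ bound, is where the quadratic factor comes from. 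Without this step, the one-dimensional tensor-power recursions your race argument rests on are not justified.
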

% \textbf{Failure to Learn Task $k$ Under Weak Cumulative Signal.} 
Theorem~\ref{thm:forget_m_ffs_tasks} shows that if the cumulative signal from the first $k$ tasks related to task $k$ is not sufficiently strong, the model fails to correctly classify task $k$ even immediately after learning it, as shown in \cref{eq:ff_k}. This reflects poor generalization under low-SNR conditions and aligns with observations in standard (non-continual) learning settings \cite{cao2022benign}. Moreover, if the cumulative signal from the first $m$ tasks remains weak with respect to task $k$, the model continues to misclassify task $k$, indicating a persistent failure to learn its features. However, if the cumulative signal from the first $m$ tasks becomes sufficiently strong, the model can eventually classify task $k$ correctly--potentially even better than immediately after learning it-highlighting that learning subsequent tasks can help transfer useful features and improve generalization on earlier tasks. In addition, noticed that when analyzing learning failure, the SNR condition involves not only an upper bound but also a lower bound. This lower bound arises from the need to control the magnitude of noise memorization—even if effective signal learning does not occur. The model must still control the magnitude of noise memorization to ensure stable training, a principle that also holds in standard (non-continual) training settings \cite{cao2022benign}.

\textbf{Prioritizing Higher-Signal Tasks Facilitates Learning of Task $k$.} When evaluating the generalization performance for task $k$ under the SNR conditions, it can be observed that the cumulative signal depends on three key components: the coefficient $(1-\frac{p-1}{k})$, the signal intensity $\alpha_p^3$, and the correlation strength $A_{(p, k)}$. The coefficient reflects that tasks appearing earlier (i.e., smaller $p$ ) contribute more heavily to the accumulation of signal relevant to task $k$. The term $\alpha_p^3 A_{(p, k)}$ quantifies how much task $p$ contributes to the effective signal aligned with task $k$. Therefore, placing tasks with stronger signal intensity and higher alignment to task $k$ earlier in the sequence may help prevent persistent learning failure on task $k$, by boosting the overall cumulative signal in its favor.

\begin{theorem}\label{thm:forget_m_sfs_tasks}
    Suppose the setting in Condition~\ref{con:parameter} holds, and the SNR satisfies $ \frac{\sum_{p=1}^{k} \alpha_p^3 A_{(p,k)}}{(\sigma_{\xi}\sqrt{d})^3} \gtrsim \frac{1+n k^2/\sqrt{d}}{kn} $. Consider full data-replay training with learning rate $\eta \in(0, \widetilde{O}(1)]$, and let $(\mathbf{x}_k, y_k) \sim \mathcal{D}_k$ be a test sample from the task $k$. Then, with high probability, there exist training times $T_k$ and $T_m$ ($m>k$) such that 
    \begin{itemize}
        \item The model can correctly classify task $k$ immediately after learning it: 
        \begin{equation}\label{eq:k_s}
            \mathbb{P}\left\{y_k F\left(\mathbf{W}^{(T_k)}, \mathbf{x}_k\right)<0\right\} \leq {\frac{1}{\operatorname{poly}(d)}}. 
        \end{equation}
        \item (Catastrophic Forgetting on Task $k$) If the additional SNR conditions holds $ \frac{m^2}{R^{2/3} \sigma_0^2 \sigma_{\xi}^2 d^{13/6}} \lesssim \frac{\sum_{p=1}^{m} \alpha_p^3 A_{(p,k)}}{(\sigma_{\xi}\sqrt{d})^3} \lesssim \frac{\alpha_k R^{1/3}}{n} $, then it occurs \textit{Catastrophic Forgetting} on task $k$ after subsequent training to task $m$:
        \begin{equation}\label{eq:m_sf}
            \mathbb{P}\left\{y_k F\left(\mathbf{W}^{(T_m)}, \mathbf{x}_k\right)<0\right\} \geq \frac{1}{2}-{\frac{1}{\operatorname{polylog}(d)}}.
        \end{equation}
        \item (Continual Learning on Task $k$) If the additional SNR conditions holds $ \frac{\sum_{p=1}^{m} \alpha_p^3 A_{(p,k)}}{(\sigma_{\xi}\sqrt{d})^3} \gtrsim \frac{\alpha_k R^{1/3} \sigma_0 \left((1-\frac{k-1}{m}) + nm/\sqrt{d} \right)}{n} $, then the model can still correctly classify task $k$ after subsequent training to task $m$:
        \begin{equation}\label{eq:m_ss}
            \mathbb{P}\left\{y_k F\left(\mathbf{W}^{(T_m)}, \mathbf{x}_k\right)<0\right\} \leq {\frac{1}{\operatorname{poly}(d)}}.
        \end{equation}
    \end{itemize}
\end{theorem}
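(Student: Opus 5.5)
The plan is to run the standard signal--noise decomposition of feature-learning analyses (as in \cite{jelassi2022towards,cao2022benign}), adapted to full data replay. The key statistics are the signal coefficients $\langle \mathbf{w}_r^{(t)}, \mathbf{v}_p^*\rangle$ and the noise-memorization coefficients $\langle \mathbf{w}_r^{(t)}, \bm{\xi}_{qj}\rangle$. Since $\bm{\xi}\perp \mathbf{v}_p^*$ and the noise vectors are nearly orthogonal ($|\langle\bm{\xi}_{qj},\bm{\xi}_{q'j'}\rangle|\lesssim \sigma_\xi^2\sqrt{d}\log d$ versus $\|\bm{\xi}\|^2\approx\sigma_\xi^2 d$), the gradient update \eqref{eq:data_replay} splits as follows:
\[
\langle \mathbf{w}_r^{(t+1)},\mathbf{v}_k^*\rangle \approx \langle \mathbf{w}_r^{(t)},\mathbf{v}_k^*\rangle + \tfrac{3\eta}{m n}\textstyle\sum_{p\le m}\sum_j \ell'_{pj}\,\alpha_p^3 A_{(p,k)}\langle \mathbf{w}_r^{(t)},\mathbf{v}_p^*\rangle^2 ,
\]
with an analogous cubic recursion $\langle \mathbf{w}_r,\bm{\xi}_{qj}\rangle$ driven by $\sigma_\xi^2 d\,\langle\mathbf{w}_r,\bm{\xi}_{qj}\rangle^2/(mn)$. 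Both are tensor-power-type ODEs $x'\propto c\,x^2$. The first step is therefore to prove, by induction over tasks $p=1,\dots,m$ and iterations, the usual invariants:
\begin{itemize}
\item the coefficients stay bounded by $\widetilde O(1)$;
\item the loss derivatives $\ell'$ stay $\Theta(1)$ until some group reaches constant scale;
\item cross noise terms remain negligible.
\end{itemize}

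For \eqref{eq:k_s} (phase $k$), I compare the hitting times of the two recursions started at the initial scale $\sigma_0$ (respectively $\sigma_0\sigma_\xi\sqrt d$). The signal direction relevant to task $k$ receives drive $\propto \sum_{p\le k}\alpha_p^3A_{(p,k)}/k$. One subtlety is that earlier tasks have already been growing during phases $1,\dots,k-1$; this is where the weights $(1-\tfrac{p-1}{k})$ arise. Each noise direction receives drive $\propto\sigma_\xi^2 d/(kn)$. The SNR hypothesis $\sum_{p\le k}\alpha_p^3A_{(p,k)}/(\sigma_\xi\sqrt d)^3\gtrsim (1+nk^2/\sqrt d)/(kn)$ is exactly what makes the signal hitting time $\sim 1/(\eta\,\text{drive}\cdot\sigma_0)$ shorter than the noise hitting time. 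The $nk^2/\sqrt d$ term absorbs the noise accumulated over $k$ phases of replay and the cross-correlation errors. So at $T_k$ some neuron has $\max_r\langle\mathbf{w}_r,\mathbf{v}_k^*\rangle\ge\widetilde\Omega(1/\alpha_k)$ while all noise coefficients are still $\widetilde O(\sigma_0\sigma_\xi\sqrt d)$. For a fresh test point, $y_kF=\sum_r\alpha_k^3\langle\mathbf{w}_r,\mathbf{v}_k^*\rangle^3+\sum_r\langle\mathbf{w}_r,\bm{\xi}\rangle^3$. The first term is $\Omega(1)$. The second is Gaussian in $\bm\xi$ with scale $\widetilde O(R(\sigma_0\sigma_\xi\sqrt d)^3)=o(1)$, giving failure probability $1/\mathrm{poly}(d)$.

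For phases $k+1,\dots,m$, I rerun the same comparison with drives $\sum_{p\le m}\alpha_p^3A_{(p,k)}/m$ against the cumulative noise memorization, which now involves $mn$ noise vectors. In the forgetting regime, the upper bound $\lesssim \alpha_kR^{1/3}/n$ ensures that noise memorization wins: the training loss gets driven to zero by memorizing, $\ell'\to 0$, and the signal along $\mathbf{v}_k^*$ stalls. In particular, the updates over later phases cannot sustain $\langle\mathbf{w},\mathbf{v}_k^*\rangle$ against the decay contributed by the fitted replay terms. The step I have to confirm is that the component of $\mathbf{w}_r$ along $\mathbf{v}_k^*$ is rotated toward other $\mathbf{v}_p^*$ and reduced, using $A_{(p,k)}<1$ and the renormalization from $\ell'$. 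The lower bound $m^2/(R^{2/3}\sigma_0^2\sigma_\xi^2d^{13/6})$ is used to keep the memorized noise coefficients at the right order. The test output is then dominated by the symmetric Gaussian term $\sum_r\langle\mathbf{w}_r,\bm\xi\rangle^3$ with memorized weights of size $\widetilde\Omega(1)$ projected onto a fresh $\bm\xi$, and an anticoncentration argument gives probability $\ge 1/2-1/\mathrm{polylog}(d)$. In the continual-learning regime \eqref{eq:m_ss}, the lower bound ensures that the signal coefficients stay at order $1/\alpha_k$: they are monotone under the positive drive while noise remains small. The same test-error argument as in phase $k$ then applies.

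The main obstacle is the forgetting branch \eqref{eq:m_sf}. Signal coefficients under gradient descent with positive correlations are monotone nondecreasing, so forgetting cannot come from shrinking the signal. It must instead come from noise memorization of the $mn$ replayed samples growing large enough that the test-time noise term dominates a fixed signal term of order $\alpha_k^3\langle\mathbf{w},\mathbf{v}_k^*\rangle^3$. I would first try to show that the memorized weights induce $\|\mathbf{w}_r\|_{\mathbf{H}}$ large, so that $\mathrm{Var}(\langle\mathbf{w}_r,\bm\xi\rangle^3)$ exceeds the signal contribution. This requires a careful accounting of how the constant-order bound depends on $\alpha_k$ and $R^{1/3}$.
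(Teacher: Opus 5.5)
Your treatment of \eqref{eq:k_s} and \eqref{eq:m_ss} follows the paper's route: a tensor-power hitting-time comparison between signal learning and noise memorization, then the signal term dominating the test margin. The forgetting branch \eqref{eq:m_sf}, however, has a genuine gap, and neither mechanism you propose can close it.

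Your first mechanism, with the component along $\mathbf{v}_k^*$ being ``rotated and reduced,'' is impossible. By \eqref{eq:update_signal_data_replay}, every increment $\Gamma_{(m,r)}^{(t+1,k)}-\Gamma_{(m,r)}^{(t,k)}=\frac{3\eta}{nm}\sum_{p\le m,\,j}\alpha_p^3A_{(p,k)}\ell_{pj}(\Gamma_{(m,r)}^{(t,p)})^2$ is nonnegative, as you note yourself at the end.

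Your fallback is to make $\|\mathbf{w}_r\|_{\mathbf{H}}$ large enough that the fresh-noise term beats the signal. This fails quantitatively. Memorization of a training sample saturates at $|\Phi_{(m,r)}^{(t,p,j)}|=\widetilde O(1)$ within $\operatorname{poly}(d)$ steps, because $\ell_{pj}$ becomes $1/\operatorname{poly}(d)$ once the sample is fitted. So the coefficient of $\mathbf{w}_r$ along $\bm{\xi}_{pj}$ is $\widetilde O(1/(\sigma_\xi^2 d))$, and $\sigma_\xi\|\mathbf{P}_v^{\perp}\mathbf{w}_r\|\le\widetilde O(\sigma_0\sigma_\xi\sqrt d+\sqrt{mn/d})$. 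For the independent test noise this gives $|\sum_r\langle\mathbf{w}_r,\bm{\xi}_k\rangle^3|\le\widetilde O(R(\sigma_0\sigma_\xi\sqrt d+\sqrt{mn/d})^3)=o(1)$ with high probability. That is polynomially small, consistent with the $\epsilon\sigma_\xi^3$ scale that Lemma~\ref{lem:K12} produces in the proof of Theorem~\ref{thm:forget_m_ffs_tasks}.

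You establish a signal term of order $\Omega(1)$ at $T_k$, and that term can only grow. So your route actually yields $y_kF(\mathbf{W}^{(T_m)},\mathbf{x}_k)>0$ with probability $1-1/\operatorname{poly}(d)$, the opposite of \eqref{eq:m_sf}. The same two facts also give \eqref{eq:m_ss} without its additional SNR hypothesis.

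The paper closes this branch differently, and the argument does not survive your variance computation. It invokes Lemma~\ref{lem:signal_sf}: under the SNR upper bound, training-noise memorization reaches $\Theta(R^{-1/5})$ by $\mathrm{T}_{\xi}^m\le\mathrm{T}_v^m$. It then bounds the margin by $\Theta(1)-\Theta(R\cdot R^{-3/4})$. In doing so it treats the memorized level of $\langle\mathbf{w}_r,\bm{\xi}_{kj}\rangle$, for a \emph{training} noise vector, as the size of $\langle\mathbf{w}_r,\bm{\xi}_k\rangle$ for the fresh test noise, with an adverse sign about half the time. Your computation of $\sigma_\xi\|\mathbf{P}_v^{\perp}\mathbf{w}_r\|$ is exactly what shows this identification is unavailable. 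Adopting that step would not give you a valid proof either.

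What is missing is not a sharper estimate but a different regime. For \eqref{eq:m_sf} to hold, the signal term at $T_k$ would have to satisfy three constraints:
\begin{itemize}
\item exceed roughly $R(\sigma_0\sigma_\xi\sqrt d)^3\operatorname{polylog}(d)$, which is needed for $1/\operatorname{poly}(d)$ error at $T_k$;
\item stay below $1/\operatorname{polylog}(d)$ times the typical fresh-noise term at $T_m$, which is at most $\widetilde O(R(mn/d)^{3/2})$;
\item not leave this window during phases $k+1,\dots,m$ while memorization inflates $\|\mathbf{P}_v^{\perp}\mathbf{w}_r\|$.
\end{itemize}
Neither your phase-$k$ analysis, which drives the signal to $\widetilde\Omega(1/\alpha_k)$, nor the paper's, which drives it to $\Theta(1/(\alpha_kR^{1/3}))$, is set up to land in that window.
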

In contrast to Theorem~\ref{thm:forget_m_ffs_tasks}, Theorem~\ref{thm:forget_m_sfs_tasks} considers the case where the model successfully learns task $k$ after training on it, due to a sufficiently strong cumulative signal from the first $k$ tasks, as shown in \cref{eq:k_s}. This success may be maintained throughout continual learning if subsequent tasks continue to contribute meaningful signal toward task $k$ (see \cref{eq:m_ss}). However, if the cumulative signal from later tasks is insufficient or misaligned, the model may still experience forgetting of task $k$ despite its initial success—resulting in \textit{catastrophic forgetting} (refer to \cref{eq:m_sf}).

\textbf{Prioritizing Higher-Signal Tasks Mitigates Forgetting of Task $k$.} Similar to Theorem~\ref{thm:forget_m_ffs_tasks}, task ordering and signal intensity also play crucial roles in the subsequent learning and retention of task $k$. For instance, when evaluation occurs shortly after training task $k$ (i.e., when $m>k$ is close to $k$ ), a smaller amount of cumulative signal is required to satisfy the relaxed SNR condition in \cref{eq:fs_m}. Furthermore, placing tasks with stronger signal intensity and higher alignment to task $k$ between tasks $k$ and $m$ increases the cumulative signal, making it more likely to meet the continual learning condition and prevent \textit{catastrophic forgetting}. 

\label{para:comparison}\textbf{Comparison with Existing Work} Existing work shows that task ordering affects forgetting behavior from both empirical \cite{lesort2022challenging,hemati2025continual,li2025optimal} and analytical perspectives \cite{evron2022catastrophic,swartworth2023nearly,lin2023theory,ding2024understanding,evron2025better,li2025optimal}. Specifically, \cite{evron2022catastrophic} demonstrates that forgetting diminishes over time when task ordering is cyclic or random. \cite{swartworth2023nearly} and \cite{evron2025better} provide tighter forgetting bounds for cyclic and random orderings, respectively. \cite{lin2023theory}, \cite{ding2024understanding}, and \cite{li2025optimal} show that forgetting can be influenced by the arrangement of task orderings based on task similarity. Our work shares similar insights but from a novel feature signal perspective: prioritizing higher-signal tasks not only aids in learning lower-signal tasks but also mitigates forgetting. Moreover, prior analyses are primarily based on linear regression models, two-tasks settings, and naive sequential training, whereas our approach is grounded in a more general two-layer neural network model and a more challenging data replay training setup, making our work more applicable to realistic continual learning scenarios.

% Task Orderings and Challenge of Analyzing the CL

\section{Data Replay with $M$ Tasks }
\label{sec:proof_sketch}
In this section, we provide a proof sketch of the theoretical results introduced earlier. Our analysis focuses on understanding when and how a model trained via full data replay can either memorize noise or successfully learn meaningful features across multiple tasks. Before diving into the technical lemmas, we first establish the following notation:
% $$
% \begin{cases}
%     \text{The signal learning of task $k$'s feature at time $t$ under task $m$: } \Gamma_{(m, r)}^{(t,k)}:=\langle\mathbf{w}_{(m,r)}^{(t)}, \mathbf{v}_k^*\rangle, \\
%     \text{The noise memorization of sample $j$ from task $k$ at time $t$ under task $m$: }\Phi_{(m, r)}^{(t,k,j)}:=\langle\mathbf{w}_{(m,r)}^{(t)}, \bm{\xi}_k^j\rangle.
% \end{cases}
% $$
\begin{itemize}[leftmargin=*]
    \item The signal learning of task $k$'s feature at time $t$ under task $m$: $\Gamma_{(m, r)}^{(t,k)}:=\langle\mathbf{w}_{(m,r)}^{(t)}, \mathbf{v}_k^*\rangle.$
    \item The noise memorization of sample $j$ from task $k$ at time $t$ under task $m$: $\Phi_{(m, r)}^{(t,k,j)}:=\langle\mathbf{w}_{(m,r)}^{(t)}, \bm{\xi}_k^j\rangle.$
\end{itemize}
In \cref{sec:experiment}, we will illustrate the dynamics of signal learning and noise memorization during the continual training process under full data-replay.

\begin{lemma}[Continual Noise Memorization]\label{lem:noise_ff}
    Suppose the SNR condition satisfying
    $ \frac{k^2}{R^{2/3} \sigma_0^2 \sigma_{\xi}^2 d^{13/6}} \lesssim \frac{\sum_{p=1}^{k} (1-\frac{p-1}{k}) \alpha_p^3 A_{(p,k)}}{(\sigma_{\xi}\sqrt{d})^3} \lesssim \frac{1}{n} $, and there exists an iteration $\tau_{k j}^k \leq \mathrm{T}_{\xi}^k=\mathrm{T}_{\xi}^{-}+O(\log (d))$ such that $\tau_{k j}^k$ is the first iteration for which $\max _{r \in[R]}(y_{k j} \Phi_{(k, r)}^{(t,k,j)}) \geq \Theta(R^{-\frac{1}{3}})$, and for any $t \leq \mathrm{T}_{\xi}^k$ it holds that $\max _{r \in[R]}|\Gamma_{(k,r)}^{(t,k)}| \leq \widetilde{O}(\sigma_0)$.
    Then, if the additional SNR condition $\frac{m^2-k^2}{R^{1/3} \sigma_0 \sigma_{\xi} d} \lesssim \frac{\sum_{p=1}^{m} (1-\frac{p-1}{m}) \alpha_p^3 A_{(p,k)}}{(\sigma_{\xi}\sqrt{d})^3} \lesssim \frac{1}{n} $ also holds, there exists an iteration $\tau_{k j}^m$ such that $\tau_{k j}^m$ is the first iteration satisfying $\max _{r \in[R]}(y_{k j} \Phi_{(m, r)}^{(t,k,j)}) \geq \Theta(R^{-\frac{1}{4}})$. In this case, we can also guarantee that $\max _{r \in[R]}|\Gamma_{(m,r)}^{(t,k)}| \leq \widetilde{O}(\sigma_0)$ for any $t \leq \mathrm{T}_{\xi}^m$.
\end{lemma}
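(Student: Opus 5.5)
We track the per-neuron scalar dynamics of the signal coefficients $\Gamma_{(m,r)}^{(t,k)}$ and the noise coefficients $\Phi_{(m,r)}^{(t,p,j)}$ under the replay update \eqref{eq:data_replay}. The strategy is a two-stage induction, first over the stages $m' = k, k+1, \dots, m$ and then within each stage over $t$. For the cubic network, projecting the gradient step onto $\mathbf{v}_k^*$ gives
\[
\Gamma_{(m',r)}^{(t+1,k)} = \Gamma_{(m',r)}^{(t,k)} + \frac{3\eta}{m' n}\sum_{p\le m'}\sum_{j} \ell_{pj}^{(t)}\, \alpha_p^3 A_{(p,k)} \bigl(\Gamma_{(m',r)}^{(t,p)}\bigr)^2 ,
\]
where $\ell_{pj}^{(t)}=\mathrm{sigmoid}(-y_{pj}F)$ and $A_{(k,k)}=1$. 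Because $\mathbf{H}\mathbf{v}^*_p=0$, the noise update only contains noise terms:
\[
\Phi_{(m',r)}^{(t+1,k,j)} = \Phi_{(m',r)}^{(t,k,j)} + \frac{3\eta}{m' n}\Bigl(\ell_{kj}^{(t)} y_{kj}\|\bm\xi_{kj}\|^2 (\Phi_{(m',r)}^{(t,k,j)})^2 + \sum_{(p,i)\ne(k,j)} \ell_{pi}^{(t)} y_{pi}\langle\bm\xi_{pi},\bm\xi_{kj}\rangle(\Phi_{(m',r)}^{(t,p,i)})^2\Bigr).
\]
With high probability we have $\|\bm\xi\|^2=\Theta(\sigma_\xi^2 d)$ and $|\langle\bm\xi,\bm\xi'\rangle|\le \widetilde O(\sigma_\xi^2\sqrt d)$. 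The cross terms are therefore a $1/\sqrt d$ fraction of the diagonal term. Summed over at most $m'n$ samples, this is what produces the $nm/\sqrt d$-type and $(m^2-k^2)$-type corrections in the conditions.

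\textbf{Step 1 (noise wins the race at each stage).} Each recursion is of the form $x_{t+1}\ge x_t + c\,x_t^2$, i.e.\ a tensor-power-type growth. The time to escape from the initial scale $x_0$ is $\approx 1/(c\,x_0)$. For the noise of sample $(k,j)$ at stage $m'$ we take $c_\Phi \approx \eta\sigma_\xi^2 d/(m'n)$ and the initial scale $\Phi\approx \sigma_0\sigma_\xi\sqrt d$. For the signal we take $c_\Gamma \approx \eta\sum_{p\le m'}\alpha_p^3A_{(p,k)}\cdot(\text{weight})/m'$ and the initial scale $\sigma_0$. The weight $(1-\frac{p-1}{m'})$ appears because task $p$ only enters the replay at stage $p$. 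Averaging the $1/m''$ prefactors over stages $m''=p,\dots,m'$ then gives an effective fraction $(m'-p+1)/m'$ of the total time. Comparing the escape times, the upper SNR condition $\sum_p(1-\frac{p-1}{m})\alpha_p^3A_{(p,k)}/(\sigma_\xi\sqrt d)^3\lesssim 1/n$ ensures that the noise reaches $\Theta(R^{-1/4})$ at some $\tau^m_{kj}\le \mathrm T_\xi^m$. Over that horizon the signal grows at most by a constant factor, so $|\Gamma|\le\widetilde O(\sigma_0)$ is preserved. I will use the hypothesis at stage $k$ (that $\Phi$ has reached $R^{-1/3}$ while $\Gamma=\widetilde O(\sigma_0)$) as the initial condition for stage $k+1$. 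The noise then already sits above its initialization, which only helps, provided the loss derivatives $\ell_{kj}$ remain $\Theta(1)$. This holds because $F$ stays $O(1)$ until the threshold $R^{-1/4}$ summed over neurons.

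\textbf{Step 2 (controlling interference).} We must show that the cross-sample noise terms and the new tasks' noise memorization do not flip the sign or drain $y_{kj}\Phi^{(t,k,j)}$. I will bound the cumulative cross contribution over stages $k<m'\le m$ by a quantity of order $\frac{m^2-k^2}{\sqrt d}\cdot(\text{noise scale})^2$. The count of new cross terms added from stage $k$ to stage $m$ scales like $\sum_{m'}m'\approx m^2-k^2$. The lower SNR condition $\frac{m^2-k^2}{R^{1/3}\sigma_0\sigma_\xi d}\lesssim\cdot$ is exactly what I expect to make this negligible relative to the diagonal growth. In the same way, the lower bound $\frac{k^2}{R^{2/3}\sigma_0^2\sigma_\xi^2d^{13/6}}$ keeps the noise growth from being so fast that it overshoots before the signal argument closes.

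\textbf{Main obstacle.} The hardest part will be this stage-coupled bookkeeping: the $1/m'$ normalization changes across stages, and the loss derivatives of older samples decay once those samples are fit. I would handle it by a uniform induction hypothesis that holds for all $m'\in[k,m]$ and $t\le \mathrm T_\xi^{m'}$, with three parts:
\begin{enumerate}
\item $|\Gamma|\le\widetilde O(\sigma_0)$;
\item $|\Phi^{(t,p,i)}|\le \widetilde O(1)$;
\item $\ell\ge\Theta(1)$ for all unfit samples.
\end{enumerate}
These would be verified via comparison lemmas for the discrete sequences $x_{t+1}=x_t+c x_t^2$, in the style of \cite{jelassi2022towards}.
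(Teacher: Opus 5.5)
Your skeleton matches the paper's. The upper SNR bound makes the noise escape time, of order $nm/(\eta\sigma_0(\sigma_\xi\sqrt d)^3)$, shorter than the signal escape time, of order $m/(\eta\sigma_0\sum_p(1-\frac{p-1}{m})\alpha_p^3A_{(p,k)})$, so $|\Gamma|$ stays $\widetilde O(\sigma_0)$ up to $\mathrm{T}_\xi^m$. The stage-$k$ hypothesis then serves as the initial condition for a \cref{lem:K16}-type argument carrying $y_{kj}\Phi$ from $\Theta(R^{-1/3})$ at $T_k$ to $\Theta(R^{-1/4})$.

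The gap is Step 2. That is exactly where the lower SNR conditions are supposed to enter, and as you have set it up they cannot.
\begin{enumerate}
\item Comparing each cross term multiplicatively with the diagonal term (``a $1/\sqrt d$ fraction'') is not valid. The cross terms carry $(\Phi_{(m',r)}^{(t,p,i)})^2$ for other samples, which can be far larger than $(\Phi_{(m',r)}^{(t,k,j)})^2$. You therefore need an additive error $C$ in the unrolled recursion, with $C\le z^{(0)}/8$ as \cref{lem:K16} requires.
\item More seriously, your bound $\frac{m^2-k^2}{\sqrt d}\cdot(\text{noise scale})^2$ has no time factor. The per-step cross contribution is $\widetilde O(\eta\sigma_\xi^2\sqrt d\,\Phi_{\max}^2)$ at every stage, since the $m'n$ summands cancel the $1/(m'n)$ normalization. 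Everything therefore hinges on how many iterations it accumulates over. Nothing in your bound involves $\sum_p\alpha_p^3A_{(p,k)}$, so a lower bound on that sum cannot make it negligible.
\item Your reading of the stage-$k$ lower bound as keeping the noise from overshooting does not hold up. In this regime $\Gamma=\widetilde O(\sigma_0)$ and $\ell_{kj}=\Theta(1)$, so the signal sum does not affect the diagonal noise growth. In the paper that bound serves the same cross-term purpose.
\end{enumerate}

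The paper supplies the missing link between the cross-term budget and the signal dynamics. It bounds the cross part by $\Theta\bigl(\frac{\eta\sqrt d\sigma_\xi^2}{nm}\bigr)\sum_q\sum_{p\le q,\,j'}\sum_s\ell_{pj'}(\Phi^{(s,p,j')})^2$. It then controls the cumulative loss derivatives through \cref{lem:k_sum_l}: applying \cref{lem:E8} to the signal recursion gives $\sum_s\frac1n\sum_{j'}\ell_{pj'}\le\widetilde O\bigl(m/(\eta\sigma_0\sum_p\alpha_p^3A_{(p,k)})\bigr)$. The resulting error is $\widetilde O\bigl(\frac{\sqrt d\sigma_\xi^2(m^2-k^2)}{\sigma_0\sum_p\alpha_p^3A_{(p,k)}}R^{-2/3}\bigr)$. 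The condition $\frac{m^2-k^2}{R^{1/3}\sigma_0\sigma_\xi d}\lesssim\frac{\sum_p(1-\frac{p-1}{m})\alpha_p^3A_{(p,k)}}{(\sigma_\xi\sqrt d)^3}$ is exactly what makes this $o(R^{-1/3})$, so \cref{lem:K16} applies with $A=\Theta(\eta d\sigma_\xi^2/(mn))$, $C=o(R^{-1/3})$ and $v=\Theta(R^{-1/4})$. The stage-$k$ lower bound is used the same way, with $k^2$ and $(Rd)^{-2/3}$.

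Within your own framework there is a second option: cap the horizon by the noise escape time itself, of order $mnR^{1/3}/(\eta d\sigma_\xi^2)$, via a stopping-time argument. The cross error is then $\widetilde O(mnR^{1/3}\Phi_{\max}^2/\sqrt d)$. This is $o(R^{-1/3})$ for $m\le\operatorname{polylog}(d)$ once you have an a priori bound $\Phi_{\max}\le\widetilde O(1)$, and it uses no lower SNR condition. This route is arguably preferable. Under Condition~\ref{con:parameter}, $R^{1/3}\sigma_0\sigma_\xi d=\Theta(n^{1/3}d^{-0.03})$, so the additional condition in the statement forces $(m^2-k^2)n^{2/3}d^{0.03}\lesssim 1$. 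That fails for large $d$ whenever $m>k$.

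Either way, the time factor must be made explicit. As written, Step 2 does not establish the additive $o(R^{-1/3})$ control that the growth lemma needs.
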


Lemma~\ref{lem:noise_ff} shows that the signal alignment for task $k$ remains bounded by $\widetilde{O}(\sigma_0)$, indicating that the model fails to learn sufficient features of task $k$ even by the end of its training. Instead, noise memorization dominates the learning process with a lower bound by $\Theta(R^{-\frac{1}{3}})$. This issue persists through subsequent training up to task $m$, suggesting that when the cumulative signal contribution from the first $m$ tasks is insufficient, the model consistently fails to learn task $k$. As a result, task $k$ suffers from continual learning failure and poor performance.

\begin{figure*}[t!]
    \centering
    \begin{subfigure}[b]{0.3\textwidth}
        \includegraphics[width=\textwidth]{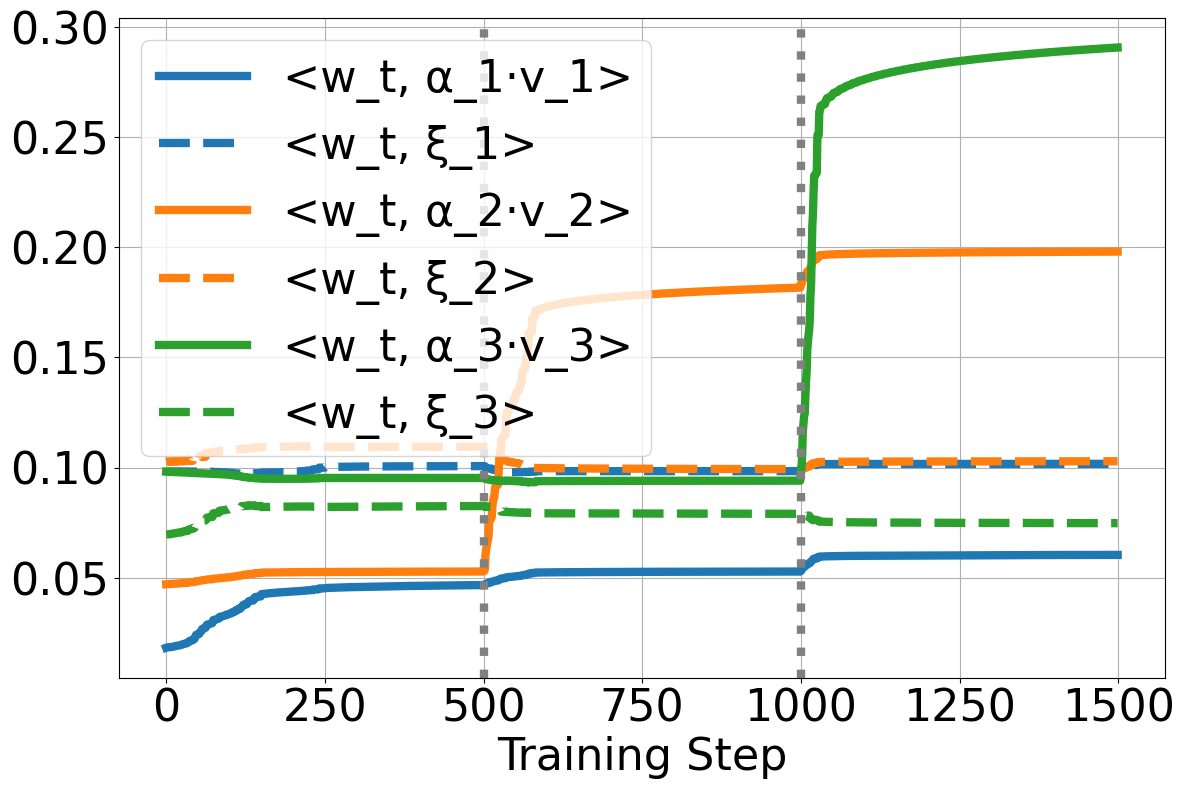}
        \caption{$A_{(m,m^{\prime})}$ = 0.1}
        \label{fig:align_01}
    \end{subfigure}
    \hfill
    \begin{subfigure}[b]{0.3\textwidth}
        \includegraphics[width=\textwidth]{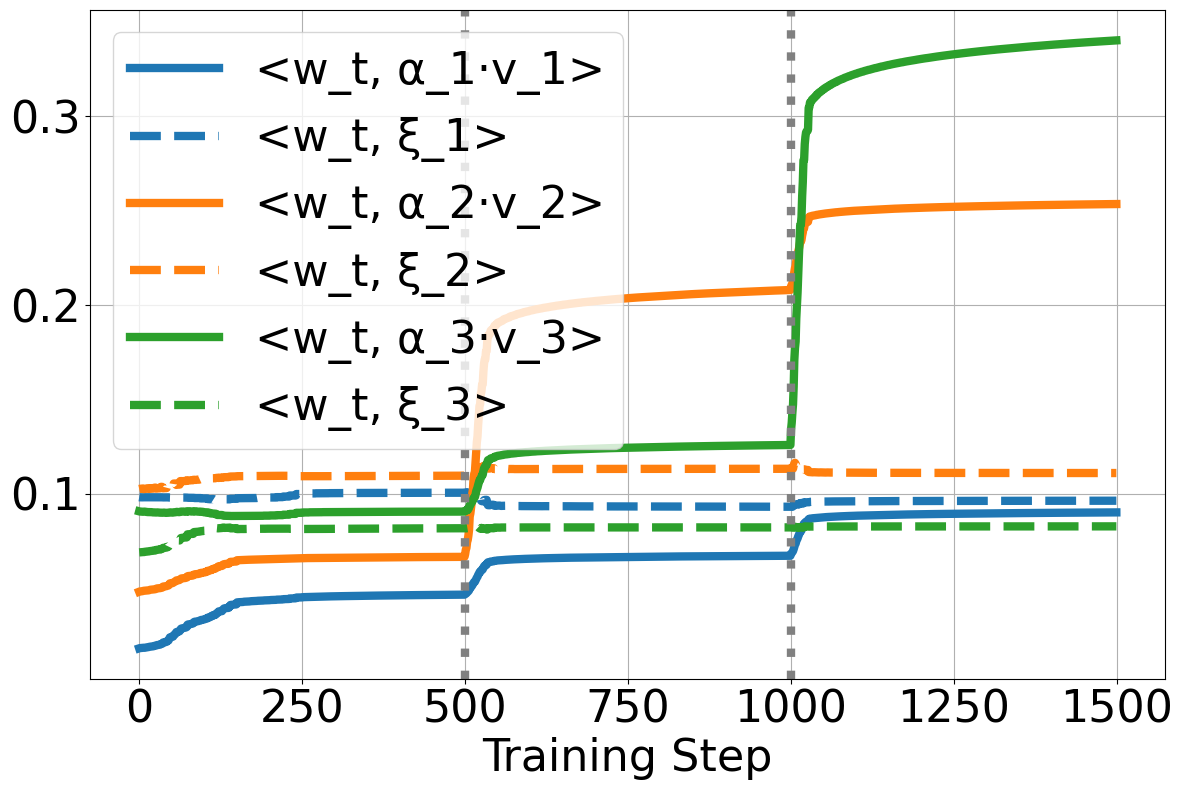}
        \caption{$A_{(m,m^{\prime})}$ = 0.3}
        \label{fig:align_03}
    \end{subfigure}
    \hfill
    \begin{subfigure}[b]{0.3\textwidth}
        \includegraphics[width=\textwidth]{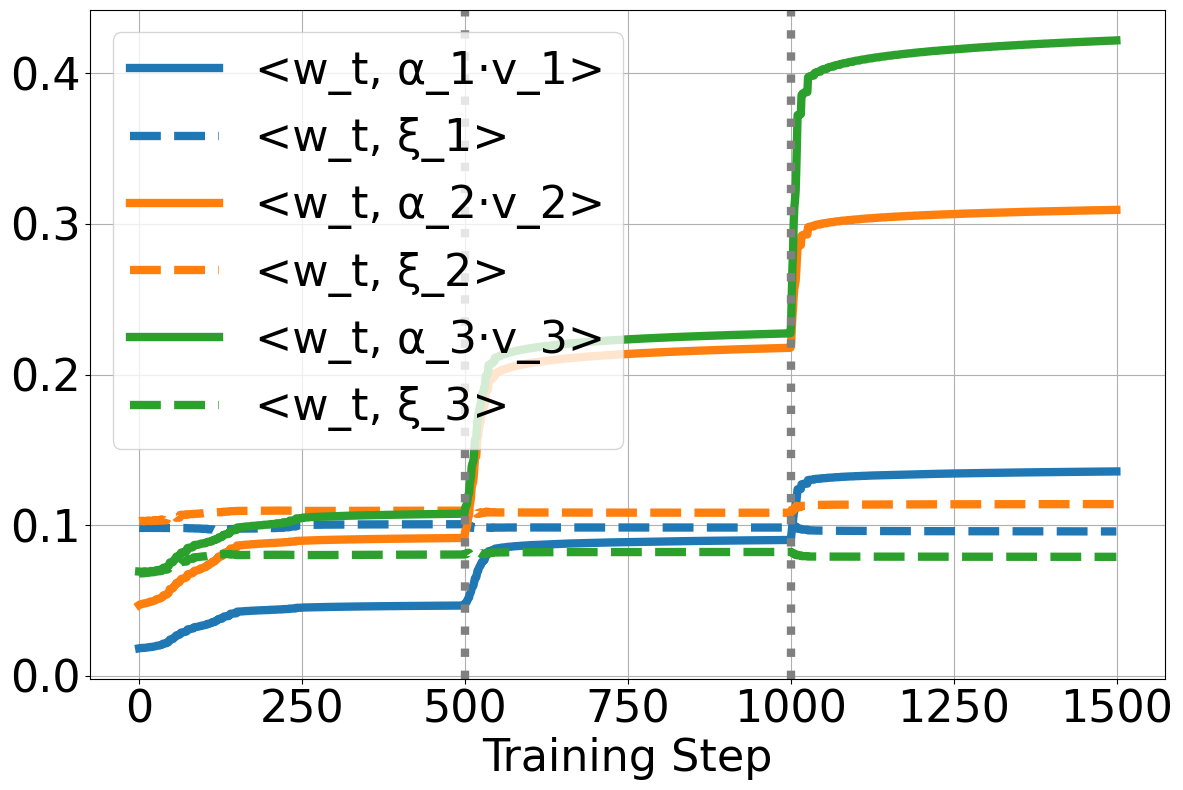}
        \caption{$A_{(m,m^{\prime})}$ = 0.7}
        \label{fig:align_07}
    \end{subfigure}

    \begin{subfigure}[b]{0.3\textwidth}
        \includegraphics[width=\textwidth]{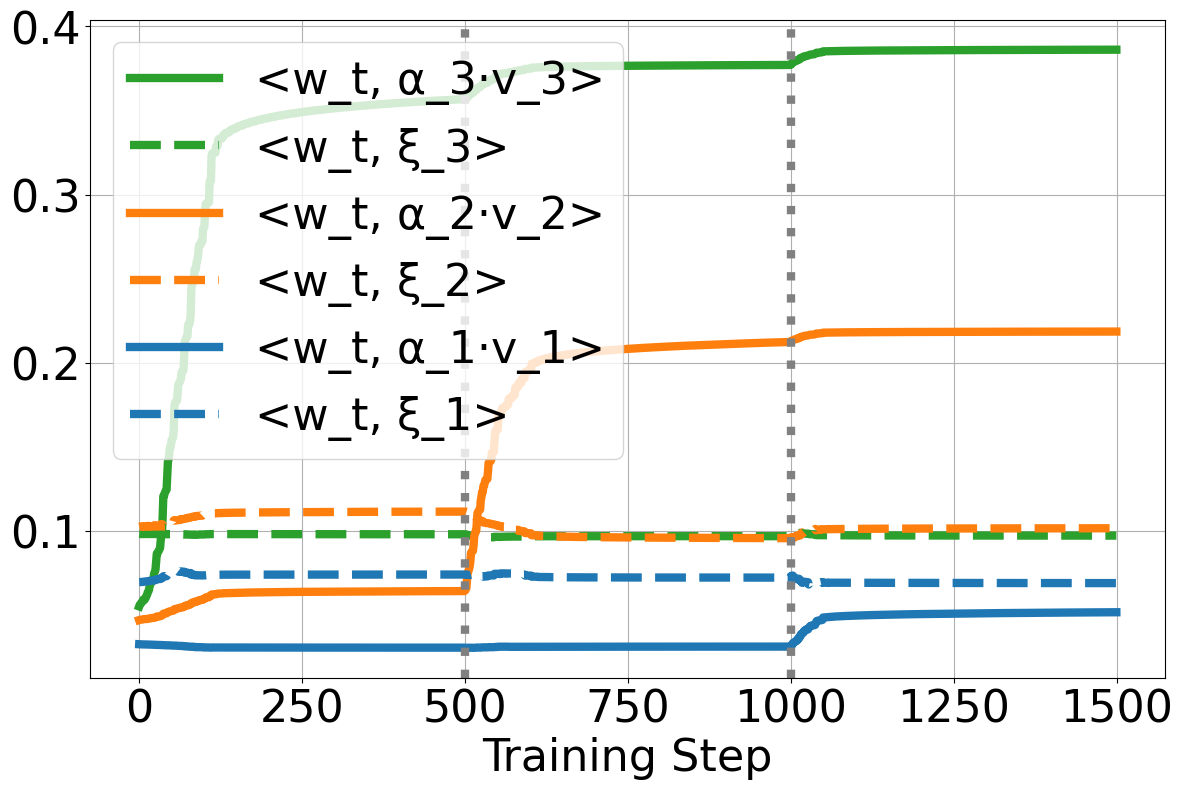}
        \caption{$A_{(m,m^{\prime})}$ = 0.1}
        \label{fig:align_01_order}
    \end{subfigure}
    \hfill
    \begin{subfigure}[b]{0.3\textwidth}
        \includegraphics[width=\textwidth]{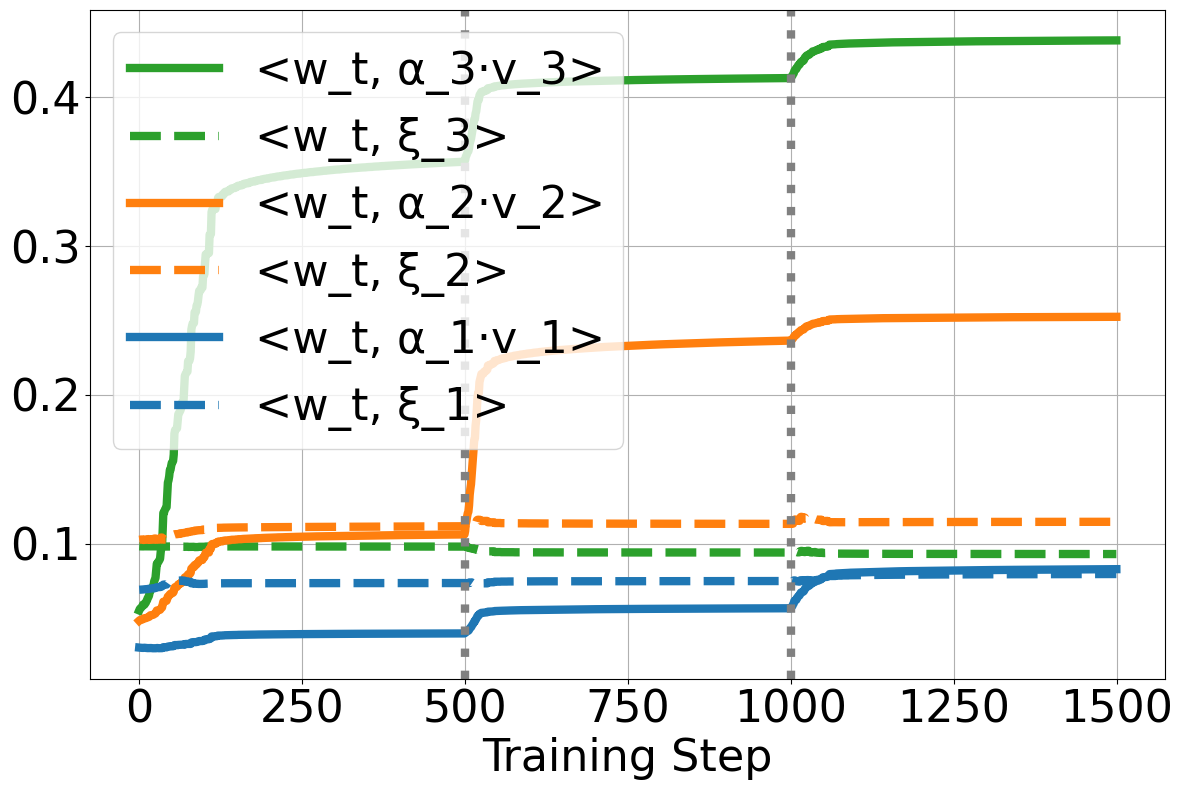}
        \caption{$A_{(m,m^{\prime})}$ = 0.3}
        \label{fig:align_03_order}
    \end{subfigure}
    \hfill
    \begin{subfigure}[b]{0.3\textwidth}
        \includegraphics[width=\textwidth]{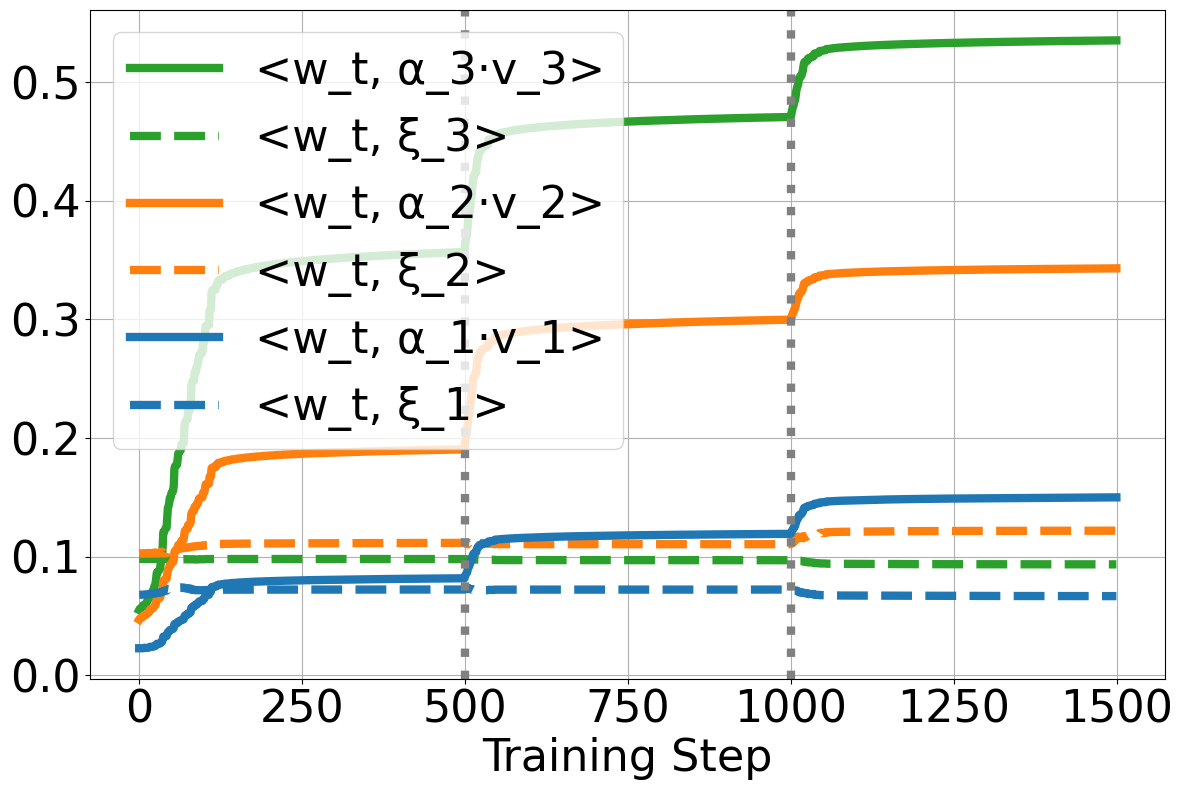}
        \caption{$A_{(m,m^{\prime})}$ = 0.7}
        \label{fig:align_07_order}
    \end{subfigure}
    \caption{Dynamics of signal learning and noise memorization during full data-replay continual training across different task orderings and correlation strengths. }
    \label{fig:dynamics}
    \vspace{-0.1in}
\end{figure*}
\begin{lemma}[Enhanced Signal Learning]\label{lem:noise_fs}
    Suppose the SNR satisfying $ \frac{k^2}{R^{2/3} \sigma_0^2 \sigma_{\xi}^2 d^{13/6}} \lesssim \frac{\sum_{p=1}^{k} (1-\frac{p-1}{k}) \alpha_p^3 A_{(p,k)}}{(\sigma_{\xi}\sqrt{d})^3} \lesssim \frac{1}{n} $, and there exists an iteration $\tau_{k j}^k \leq \mathrm{T}_{\xi}^k=\mathrm{T}_{\xi}^{-}+O(\log (d))$ such that $\tau_{k j}^k$ is the first iteration where $\max _{r \in[R]}(y_{k j} \Phi_{(k, r)}^{(t,k,j)}) \geq \Theta(R^{-\frac{1}{3}})$, and for any $t \leq \mathrm{T}_{\xi}^k$ it holds that $\max _{r \in[R]}|\Gamma_{(k,r)}^{(t,k)}| \leq \widetilde{O}(\sigma_0)$.
    Then, if the additional SNR condition $\frac{\sum_{p=1}^{m}  \alpha_p^3 A_{(p,k)}}{(\sigma_{\xi}\sqrt{d})^3} \gtrsim \frac{1}{n R^{1/3} \sigma_0 \sigma_{\xi}\sqrt{d} } $ also holds, there exists $\tau_{k v}^m \leq \mathrm{T}_{v}^m=\mathrm{T}_{v}^{k}+O(\log (d))$ such that  $\tau_{k v}^m$ be the first iteration satisfying $\max _{r \in[R]}|\Gamma_{(m,r)}^{(t,k)}| \geq \Theta(\frac{1}{\alpha_k R^{1/5}})$.  
\end{lemma}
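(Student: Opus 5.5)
We analyze the coupled dynamics of $\Gamma_{(m,r)}^{(t,k)}$ and $\Phi_{(m,r)}^{(t,p,j)}$ under the replay update \eqref{eq:data_replay}, in the style of the tensor-power-method arguments of \cite{jelassi2022towards}.

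\textbf{Step 1: gradient formulas.} Project the gradient onto $\mathbf{v}_k^*$. Since $\mathbf{H}\mathbf{v}_k^*=0$, the noise patches do not contribute to this projection. We obtain
\begin{equation*}
\Gamma_{(m,r)}^{(t+1,k)}=\Gamma_{(m,r)}^{(t,k)}+\frac{3\eta}{mn}\sum_{p=1}^{m}\sum_{j\in[n]}\ell_{pj}^{(t)}\,\alpha_p^3 A_{(p,k)}\big(\Gamma_{(m,r)}^{(t,p)}\big)^2 ,
\end{equation*}
with $A_{(k,k)}=1$ and $\ell_{pj}^{(t)}=\mathrm{sigmoid}(-y_{pj}F)$. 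Similarly, projecting onto $\bm{\xi}_k^j$ gives
\begin{equation*}
\Phi_{(m,r)}^{(t+1,k,j)}=\Phi_{(m,r)}^{(t,k,j)}+\frac{3\eta}{mn}\sum_{p,j'}\ell_{pj'}^{(t)}\,y_{pj'}\big(\Phi_{(m,r)}^{(t,p,j')}\big)^2\langle\bm{\xi}_{p}^{j'},\bm{\xi}_k^j\rangle .
\end{equation*}
Here the diagonal term has size $\sigma_\xi^2 d$ and the cross terms have size $\widetilde O(\sigma_\xi^2\sqrt d)$.

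\textbf{Step 2: entering stage $m$.} We use the hypothesis and Lemma~\ref{lem:noise_ff}-type bookkeeping. At the end of stage $k$, the signal is $|\Gamma^{(T_k,k)}|\le\widetilde O(\sigma_0)$, while the noise of the old samples is already of order $R^{-1/3}$. Because of this, the old samples' losses $\ell_{pj}$, $p\le k$, are already small, of order $1/\mathrm{poly}$ or at most constant. Fresh samples from tasks $k+1,\dots,m$ have $\ell\approx 1/2$.

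Next, lower bound the signal growth. Every task-$p$ feature $\Gamma^{(t,p)}$ has a neuron $r^*$ with $\Gamma\gtrsim\sigma_0$ at initialization. Using $A_{(p,k)}\ge0$, we couple the growth of $\Gamma^{(t,k)}$ with the cross-feature terms. This yields a cubic-type recursion
\begin{equation*}
\Gamma^{(t+1,k)}\ge \Gamma^{(t,k)}+c\,\eta\,\frac{\sum_{p\le m}\alpha_p^3A_{(p,k)}}{m}\,\big(\Gamma^{(t,k)}\big)^2 .
\end{equation*}
By the standard tensor-power lemma, this reaches $\Theta\big(1/(\alpha_kR^{1/5})\big)$ within $T_v^m\approx m/\big(\eta\sigma_0\sum_p\alpha_p^3A_{(p,k)}\big)$ iterations.

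\textbf{Step 3: comparison with noise.} The noise recursion has rate $\eta\sigma_\xi^2 d/(mn)$ starting from $\widetilde O(\sigma_0\sigma_\xi\sqrt d)$. The signal therefore wins the race, before the noise saturates the loss on new samples, exactly when
\begin{equation*}
\frac{\sum_p\alpha_p^3A_{(p,k)}}{\sigma_0}\gtrsim\frac{\sigma_\xi^2 d}{n\,\sigma_0\sigma_\xi\sqrt d\,R^{1/3}} ,
\end{equation*}
after tracking the $R^{1/3}$ from the threshold. Rearranging gives the stated condition $\sum_p\alpha_p^3A_{(p,k)}/(\sigma_\xi\sqrt d)^3\gtrsim 1/(nR^{1/3}\sigma_0\sigma_\xi\sqrt d)$. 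The $O(\log d)$ slack in $T_v^m=T_v^k+O(\log d)$ comes from the doubling phase once $\Gamma$ is of constant fraction of the threshold.

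\textbf{Main obstacle.} The hardest step is keeping the loss derivatives $\ell^{(t)}$ of the new-task samples bounded below by a constant throughout stage $m$ until $\tau_{kv}^m$. This requires showing that the noise memorization for samples of tasks $k+1,\dots,m$ stays below $R^{-1/3}$ up to that time. It also requires showing that the already memorized old noise does not feed back through the cross terms $\langle\bm{\xi}_p^{j'},\bm{\xi}_k^j\rangle$.

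I would attempt this by induction over $t$, with a joint hypothesis: $|\Phi|\le\widetilde O(\sigma_0\sigma_\xi\sqrt d)$ for the fresh samples, and old $\Phi$ bounded by $\widetilde O(1)$. The cross terms are then controlled by $\widetilde O(n m/\sqrt d)$, which is negligible under Condition~\ref{con:parameter}.
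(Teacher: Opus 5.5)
Your Steps 1--2 follow the paper's route: the signal stays at $\widetilde{O}(\sigma_0)$ through $T_k$, then grows by a tensor-power recursion at rate $\frac{\eta}{m}\sum_{p\le m}\alpha_p^3A_{(p,k)}$ with $\ell_{pj}=\Theta(1)$, costing $\Theta\big(\frac{m}{\eta\sigma_0\sum_p\alpha_p^3A_{(p,k)}}\big)$ extra iterations. The gap is in Step 3, which is where the additional SNR hypothesis has to enter: you race the signal against the wrong noise.

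By hypothesis $\tau_{kj}^k\le T_\xi^k$, so when the later stage starts, the task-$k$ training noise is already at $\Theta(R^{-1/3})$, with $\ell_{kj}$ still $\Theta(1)$ rather than small. Under replay this noise keeps growing at rate $\Theta(\eta d\sigma_\xi^2/(nm))$. Started from $\Theta(R^{-1/3})$, the tensor-power bound has it reach $R^{-1/4}$ after only $\Theta\big(\frac{nmR^{1/3}}{\eta d\sigma_\xi^2}\big)$ further iterations; this is the extra term in $T_\xi^m$ in the proof of Lemma~\ref{lem:m_yPhi_leq_R}. The paper finishes by checking $T_v^m\le T_\xi^m$, i.e. $\frac{m}{\eta\sigma_0\sum_p\alpha_p^3A_{(p,k)}}\lesssim\frac{nmR^{1/3}}{\eta d\sigma_\xi^2}$, which rearranges exactly to the stated condition. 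The $R^{1/3}$ comes from the noise's starting level, not from the signal threshold.

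Racing against fresh noise started at $\sigma_0\sigma_\xi\sqrt d$, as you do, gives only $\sum_p\alpha_p^3A_{(p,k)}\gtrsim(\sigma_\xi\sqrt d)^3/n$. Your displayed inequality matches neither race. It rearranges to $\frac{\sum_p\alpha_p^3A_{(p,k)}}{(\sigma_\xi\sqrt d)^3}\gtrsim\frac{1}{nR^{1/3}\sigma_\xi^2 d}$, which differs from the stated threshold by the factor $\sigma_\xi\sqrt d/\sigma_0=\widetilde{\Theta}(d^{0.51})$. So the claim that rearranging gives the stated condition is false.

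This affects validity, not just constants. Unless you win the race against the already-memorized noise, nothing stops the noise of samples from tasks $p\le k$ from exceeding $R^{-1/4}$ before $\tau_{kv}^m$. Their $\ell_{pj}$ then decay, and your Step 2 recursion breaks, since it keeps the full coefficient $\sum_{p\le m}\alpha_p^3A_{(p,k)}$ with $\ell=\Theta(1)$. That recursion also contradicts your own remark that the old losses are already small.

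Your induction hypothesis ``old $\Phi\le\widetilde{O}(1)$'' cannot repair this. Noise of that size lets $y_{pj}F$ become polylogarithmic, so it does not keep $\ell_{pj}$ bounded below by a constant. What you need is old noise $\lesssim R^{-1/4}$ up to $T_v^m$, and that is precisely what the SNR hypothesis buys through $T_v^m\le T_\xi^m$.

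A smaller point: your lower bound puts $(\Gamma^{(t,k)})^2$ in front of the whole sum, but the recursion contains $(\Gamma^{(t,p)}_{(m,r^*)})^2$, and only the $p=k$ term is $(\Gamma^{(t,k)})^2$. The paper handles this coupled family through its multi-sequence tensor-power lemma (Lemma~\ref{lem:K15_extension}), which tracks $\max_p\Gamma^{(t,p)}$. You would need that lemma, or a comparison $\Gamma^{(t,p)}_{(m,r^*)}\gtrsim\Gamma^{(t,k)}_{(m,r^*)}$ for the contributing $p$.
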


% Similar to Lemma 1, Lemma 2 also indicates that before the end of training task $k$, the model can not accumulate feature signal of task $k$. However, after this phase, due to the strong coeffient strength of tasks $p \in (k,m)$ on the task k, they can contribute effective feature $k$ to facilite the mode signal learning process. Thus, after $T_v^m$, the model can correctly classify the sample from distribution of task k.

Similar to Lemma~\ref{lem:noise_ff}, Lemma~\ref{lem:noise_fs} shows that the model fails to learn task $k$ 's feature signal during its own training phase. However, in this case, tasks in later stages $p \in(k, m]$ possess strong alignment with task $k$, contributing sufficient signal to compensate for the earlier deficiency. This cumulative reinforcement enables the model to gradually build up the correct representation of task $k$, and by time $T_v^m$, it can successfully classify samples from task $k$ 's distribution. 

\begin{lemma}[Amplified Noise Memorization]\label{lem:signal_sf}
    Suppose the SNR satisfying $ \frac{\sum_{p=1}^{k} \alpha_p^3 A_{(p,k)}}{(\sigma_{\xi}\sqrt{d})^3} \gtrsim \frac{1+n k^2/\sqrt{d}}{kn} $, and there exists an iteration $\tau_{k v}^k \leq \mathrm{T}_{v}^k=\mathrm{T}_{v}^{-}+O(\log (d))$ such that $\tau_{k v}^k$ is the first iteration where $ \max _{r \in[R]}|\Gamma_{(k,r)}^{(t,k)}| \geq \Theta(\frac{1}{\alpha_k R^{1/3}})$, and for any $t \leq \mathrm{T}_{v}^k$ it holds that $\max _{r \in[R]}|\Phi_{(k, r)}^{(t,k,j)})| \leq \widetilde{O}(\sigma_0\sigma_{\xi}\sqrt{d})$.
    Then, if the additional SNR condition $ \frac{m^2}{R^{2/3} \sigma_0^2 \sigma_{\xi}^2 d^{13/6}} \lesssim \frac{\sum_{p=1}^{m} \alpha_p^3 A_{(p,k)}}{(\sigma_{\xi}\sqrt{d})^3} \lesssim \frac{\alpha_k R^{1/3}}{n}$ also holds, there exists $\tau_{k j}^m \leq \mathrm{T}_{\xi}^m=\mathrm{T}_{\xi}^{k}+O(\log (d))$ such that  $\tau_{k j}^m$ be the first iteration satisfying $\max _{r \in[R]}(y_{k j} \Phi_{(m, r)}^{(t,k,j)}) \geq \Theta(R^{-\frac{1}{5}})$. 
\end{lemma}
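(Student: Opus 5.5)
We prove Lemma~\ref{lem:signal_sf} by tracking, across the training phases $k, k+1, \dots, m$, the coupled dynamics of the signal coordinates $\Gamma_{(m,r)}^{(t,k)}$ and the noise coordinates $\Phi_{(m,r)}^{(t,k,j)}$. The first step is to project the full-replay update \eqref{eq:data_replay} onto $\mathbf{v}_k^*$ and onto $\bm{\xi}_k^j$. Because $\mathbf{H}$ annihilates every $\mathbf{v}_p^*$, these projections give two recursions. The signal recursion is
\[
\Gamma_{(m,r)}^{(t+1,k)}=\Gamma_{(m,r)}^{(t,k)}+\frac{3\eta}{mn}\sum_{p\le m}\sum_{i\in[n]}\ell_{pi}^{(t)}\,\alpha_p^3 A_{(p,k)}\bigl(\Gamma_{(m,r)}^{(t,p)}\bigr)^2 ,
\]
where $\ell_{pi}^{(t)}=1/(1+e^{y_{pi}F})$ and $A_{(k,k)}=1$. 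The noise recursion is
\[
\Phi_{(m,r)}^{(t+1,k,j)}=\Phi_{(m,r)}^{(t,k,j)}+\frac{3\eta}{mn}\ell_{kj}^{(t)}y_{kj}\bigl(\Phi_{(m,r)}^{(t,k,j)}\bigr)^2\|\bm{\xi}_k^j\|^2+\text{cross terms}.
\]
The cross terms involve $\langle\bm{\xi}_k^j,\bm{\xi}_{p}^{i}\rangle=\widetilde O(\sigma_\xi^2\sqrt d)$, against the diagonal term $\|\bm{\xi}_k^j\|^2=\Theta(\sigma_\xi^2 d)$. Standard Gaussian concentration, holding with high probability over initialization and data, controls these quantities. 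It also gives $|\Phi^{(0)}|=\widetilde\Theta(\sigma_0\sigma_\xi\sqrt d)$ and $|\Gamma^{(0)}|=\widetilde O(\sigma_0)$.

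\textbf{Phase $k$ (signal is learned, noise stays small).} By hypothesis, $\tau_{kv}^k\le \mathrm{T}_v^k$ and the noise stays at $\widetilde O(\sigma_0\sigma_\xi\sqrt d)$ up to $\mathrm{T}_v^k$. After $\tau_{kv}^k$, the task-$k$ loss derivatives $\ell_{ki}$ decay once $\alpha_k^3\sum_r(\Gamma^{(t,k)})^3$ exceeds order one. This reduces the drive on $\Gamma^{(t,k)}$, so at $\mathrm{T}_k$ the signal sits at $\Theta(1/(\alpha_k R^{1/3}))$ up to logarithmic factors. We verify this with the tensor-power (cubic) comparison lemma used in \cite{jelassi2022towards}: a sequence $x_{t+1}=x_t+\eta c\,x_t^2$ reaches threshold $b$ from $x_0$ in roughly $1/(\eta c x_0)$ steps.

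\textbf{Phases $k+1,\dots,m$ (noise overtakes).} In later phases the normalization becomes $1/(mn)$. The effective signal drive on $\mathbf{v}_k^*$ is governed by $\sum_{p\le m}\alpha_p^3A_{(p,k)}$, while the task-$k$ sample loss derivatives $\ell_{kj}$ remain bounded away from zero on a fraction of neurons. The upper SNR condition $\sum_{p\le m}\alpha_p^3A_{(p,k)}/(\sigma_\xi\sqrt d)^3\lesssim \alpha_kR^{1/3}/n$ means the per-step growth rate of $\Phi$, which is $\propto \sigma_\xi^2 d\cdot\Phi/(mn)$, beats that of $\Gamma^{(\cdot,k)}$, which is $\propto \sum_p\alpha_p^3A_{(p,k)}\Gamma/(mn)$, once both are rescaled to their respective thresholds. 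Applying the comparison lemma to both recursions, we show that $\max_r y_{kj}\Phi$ reaches $\Theta(R^{-1/5})$ within $O(\log d)$ additional iterations after $\mathrm{T}_\xi^k$. This gives the claimed $\tau_{kj}^m\le\mathrm{T}_\xi^m$. Meanwhile the noise-term contributions to $F$ are driven to make $y_{kj}F$ large through memorization rather than through the signal.

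\textbf{Main obstacle.} The hardest part is the lower bound in the SNR condition, $m^2/(R^{2/3}\sigma_0^2\sigma_\xi^2d^{13/6})$. It is needed to show that the cross-sample terms, summed over up to $mn$ samples from all replayed tasks, cannot destroy the sign of $y_{kj}\Phi$. It is also needed to keep the loss derivatives $\ell_{kj}$ from vanishing before noise reaches its threshold, even though the training set has grown from $kn$ to $mn$. My first attempt would be an induction over phases that maintains two invariants. The first is $|\Phi^{(t,p,i)}|\le \widetilde O(R^{-1/5})$ for all samples. The second is that the cross-term contribution is at most a $1/\mathrm{polylog}(d)$ fraction of the diagonal term, which the $d^{-1/2}$ inner-product concentration provides provided $mn\ll\sqrt d$. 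Finally, I would check that the growth of $\Gamma^{(\cdot,k)}$ caused by replay of correlated tasks stays below the level at which it would have already fit the task-$k$ samples, so that $\ell_{kj}$ stays $\Omega(1)$ until $\tau_{kj}^m$.
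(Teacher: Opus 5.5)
Your outline has the same skeleton as the paper's proof: project the replay update onto $\mathbf{v}_k^*$ and $\bm{\xi}_{kj}$, bound hitting times with tensor-power lemmas, and let the upper SNR bound decide the race $\mathrm{T}_{\xi}^m\le\mathrm{T}_v^m$. The step that carries the noise analysis, however, would fail as you propose it.

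\textbf{Cross-sample terms.} You want these to be a $1/\mathrm{polylog}(d)$ fraction of the diagonal term at every step, using $|\langle\bm{\xi}_{pj'},\bm{\xi}_{kj}\rangle|=\widetilde O(\sigma_\xi^2\sqrt d)$ and $mn\ll\sqrt d$. But in the update of $\Phi_{(m,r)}^{(t,k,j)}$ the cross terms are weighted by $\ell_{pj'}\bigl(\Phi_{(m,r)}^{(t,p,j')}\bigr)^2$. That is the memorization of the \emph{other} samples on the same neuron, not $\bigl(\Phi_{(m,r)}^{(t,k,j)}\bigr)^2$. Tensor-power growth is very sensitive to the starting value. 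So a sample whose best neuron is also $r^*$ can reach $\Theta(R^{-1/3})$ while $y_{kj}\Phi_{(m,r^*)}^{(t,k,j)}$ is still $\widetilde\Theta(\sigma_0\sigma_\xi\sqrt d)=\widetilde\Theta(d^{-0.53})$. The ratio is then of order $mn\,d^{0.56}$ up to polylogarithmic factors, not small. Your mechanism also never uses the lower SNR bound, so it cannot explain why that bound is a hypothesis.

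\textbf{How the paper handles them.} It controls the cross terms additively and cumulatively. Lemma~\ref{lem:k_sum_l}, applied to the signal recursion started from the phase-$k$ level $\Gamma_{(k,r^*)}^{(\tau_{kv}^k,k)}$, bounds the time-summed loss derivatives $\sum_s\frac1n\sum_{j'}\ell_{pj'}^{(s)}$ by a quantity inversely proportional to $\sum_{p\le m}\alpha_p^3A_{(p,k)}$. The intuition is that the signal fits the replayed data, which limits the total derivative mass. Multiplying by the a priori bound $(Rd)^{-2/3}$ on the squared noise terms gives a perturbation $C$, and the lower SNR bound is what makes $C=o((Rd)^{-1/3})$. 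The argument is then two-stage. First, the noise stays below $(Rd)^{-1/3}$ up to $\tau_{r^*,kj}^k$. From that level, Lemma~\ref{lem:K16} with $A=\Theta(\eta d\sigma_\xi^2/(mn))$ and $C\le z^{(0)}/8$ yields the hitting time of $R^{-1/5}$.

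\textbf{The $\ell_{kj}$ invariant.} You want $\ell_{kj}=\Omega(1)$ on the grounds that $\Gamma^{(\cdot,k)}$ stays below the level at which it would fit the task-$k$ samples. This contradicts the hypothesis. Every increment in the $\Gamma$ recursion is nonnegative, so $\Gamma_{(m,r)}^{(t,k)}$ is nondecreasing. By assumption it already reached $\Theta(1/(\alpha_kR^{1/3}))$ at $\tau_{kv}^k$, which is exactly the level at which your own Phase-$k$ paragraph has $\ell_{ki}$ decaying. The paper does not route through such an invariant. It lets $\Gamma^{(\cdot,k)}$ continue from $\Theta(1/(\alpha_kR^{1/3}))$, bounds its next hitting time $\mathrm{T}_v^m$ with Lemma~\ref{lem:K15_extension}, and compares hitting times.

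\textbf{The $O(\log d)$ claim.} You say the noise reaches $R^{-1/5}$ within $O(\log d)$ iterations after $\mathrm{T}_\xi^k$. This drops the $21/(Az^{(0)})$ term of the tensor-power bound. In the paper that term, of order $\frac{mn}{\eta(\sqrt d\sigma_\xi)^3\sigma_0}+\frac{mnR^{1/3}}{\eta d^{2/3}\sigma_\xi^2}$, is the dominant part of $\mathrm{T}_\xi^m$. It is exactly what the upper SNR bound compares against $\mathrm{T}_v^m$. As written, your race between signal and noise is asserted rather than derived.
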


% In contrast to Lemma 1,2, Lemma 3 provide a new case of training task $k$, the model can already learn the sufficient feature of task $k$, but failed to preserve it because after the end of training task $k$, the second phase was dominated by the noise memorization, the accumulation of signal $k$ from task $k$ to $m$ is not enough. Thus that’s why catastrophic forgetting happened in Theorem 3. 

In contrast to Lemmas~\ref{lem:noise_ff} and~\ref{lem:noise_fs}, Lemma~\ref{lem:signal_sf} presents a case where the model initially succeeds in learning the feature of task $k$. However, this learned signal is not preserved-subsequent training phases are dominated by noise memorization, and the cumulative signal contribution from tasks $k$ to $m$ is insufficient to maintain the representation. As a result, the model gradually forgets task $k$, leading to catastrophic forgetting as characterized in Theorem~\ref{thm:forget_m_sfs_tasks}.

\begin{lemma}[Continual Signal Learning]\label{lem:signal_ss}
    Suppose the SNR satisfying $ \frac{\sum_{p=1}^{k} \alpha_p^3 A_{(p,k)}}{(\sigma_{\xi}\sqrt{d})^3} \gtrsim \frac{1+n k^2/\sqrt{d}}{kn} $, and there exists an iteration $\tau_{k v}^k \leq \mathrm{T}_{v}^k=\mathrm{T}_{v}^{-}+O(\log (d))$ such that $\tau_{k v}^k$ is the first iteration where $\max _{r \in[R]}|\Gamma_{(k,r)}^{(t,k)}| \geq \Theta(\frac{1}{\alpha_k R^{1/3}})$, and for any $t \leq \mathrm{T}_{v}^k$ it holds that $\max _{r \in[R]}|\Phi_{(k, r)}^{(t,k,j)})| \leq \widetilde{O}(\sigma_0\sigma_{\xi}\sqrt{d})$.
    Then, if the additional SNR condition $ \frac{\sum_{p=1}^{m} \alpha_p^3 A_{(p,k)}}{(\sigma_{\xi}\sqrt{d})^3} \gtrsim \frac{\alpha_k R^{1/3} \sigma_0 \left((1-\frac{k-1}{m}) + nm/\sqrt{d} \right)}{n}$ also holds, there exists $\tau_{k v}^m \leq \mathrm{T}_{v}^m=\mathrm{T}_{v}^{k}+O(\log (d))$ such that $\tau_{k v}^m$ be the first iteration satisfying $\max _{r \in[R]}|\Gamma_{(m,r)}^{(t,k)}| \geq \Theta(\frac{1}{\alpha_k R^{1/5}})$. 
\end{lemma}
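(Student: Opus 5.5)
We will prove Lemma~\ref{lem:signal_ss} by tracking the signal coefficients $\Gamma_{(m,r)}^{(t,k)}$ and the noise coefficients $\Phi_{(m,r)}^{(t,p,j)}$ through the replay phases $k+1,\dots,m$, starting from the end state of phase $k$. That end state is given by the hypothesis of the lemma: at $\tau_{kv}^k$, $\max_r|\Gamma_{(k,r)}^{(t,k)}|\ge\Theta(1/(\alpha_kR^{1/3}))$, and all noise inner products are still $\widetilde O(\sigma_0\sigma_\xi\sqrt d)$.

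\textbf{Step 1: explicit updates.} Since $\mathbf H\mathbf v_p^*=0$, the data-replay gradient \eqref{eq:data_replay} in phase $m$ gives
\[
\Gamma^{(t+1,k)}_{(m,r)}=\Gamma^{(t,k)}_{(m,r)}+\frac{3\eta}{mn}\sum_{p\le m}\sum_{j\le n}\ell'_{pj}\,\alpha_p^3A_{(p,k)}\big(\Gamma^{(t,p)}_{(m,r)}\big)^2,
\]
where $A_{(k,k)}=1$ and $\ell'_{pj}=(1+e^{y_{pj}F})^{-1}$. The noise coefficients satisfy
\[
\Phi^{(t+1,k,j)}_{(m,r)}=\Phi^{(t,k,j)}_{(m,r)}+\frac{3\eta}{mn}\sum_{p,j'}\ell'_{pj'}\,y_{pj'}\big(\Phi^{(t,p,j')}_{(m,r)}\big)^2\langle\boldsymbol\xi_{pj'},\boldsymbol\xi_{kj}\rangle .
\]
Standard Gaussian concentration gives $\|\boldsymbol\xi\|^2=\Theta(\sigma_\xi^2d)$ and $|\langle\boldsymbol\xi,\boldsymbol\xi'\rangle|\le\widetilde O(\sigma_\xi^2\sqrt d)$.

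\textbf{Step 2: signal lower bound.} Restricting the sum to $p=k$, or using the correlated terms together with positivity of $A$, shows that the signal term grows like a cubic tensor-power iteration $x_{t+1}\ge x_t+c\,\eta\,\frac{\sum_p\alpha_p^3A_{(p,k)}}{m}\,x_t^2$. Its initial value is the phase-$k$ terminal level $\Theta(1/(\alpha_kR^{1/3}))$. The loss derivatives $\ell'$ stay $\Theta(1)$ for every sample $(p,j)$ until that sample is fitted. The standard tensor-power lemma of \cite{jelassi2022towards} then says that $\Gamma$ reaches $\Theta(1/(\alpha_kR^{1/5}))$ within $O(\log d)$ additional iterations. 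Unrolling across phases produces the weights $(1-\frac{k-1}{m})$, because earlier data enter $m-k+1$ or more phases each normalized by $1/m$.

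\textbf{Step 3: noise stays small.} We must show that during $[\mathrm T_v^k,\mathrm T_v^m]$ every $|\Phi|$ stays $\widetilde O(\sigma_0\sigma_\xi\sqrt d)$. For this we compare growth rates. The self-term of the noise update grows at rate $\frac{\eta}{mn}\sigma_\xi^2 d\,\Phi^2$. The cross terms contribute $\frac{\eta}{m}\sigma_\xi^2\sqrt d\,\Phi^2$ summed over all $mn$ samples, which is where $nm/\sqrt d$ enters. The signal grows at rate $\frac{\eta}{m}\sum_p\alpha_p^3A_{(p,k)}\Gamma^2$. The assumed inequality
\[
\frac{\sum_{p\le m}\alpha_p^3A_{(p,k)}}{(\sigma_\xi\sqrt d)^3}\gtrsim\frac{\alpha_kR^{1/3}\sigma_0\big((1-\frac{k-1}{m})+nm/\sqrt d\big)}{n}
\]
is exactly what makes the signal hitting time shorter than the time the noise needs to escape from $\sigma_0\sigma_\xi\sqrt d$. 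The argument is a comparison lemma for two quadratic recursions: the hitting time scales like $1/(\text{rate}\times\text{initial value})$. Noise has initial value $\sigma_0\sigma_\xi\sqrt d$; signal has initial value $1/(\alpha_kR^{1/3})$. Plugging these into the hitting-time formula reproduces the stated condition.

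\textbf{Step 4: main obstacle.} The hardest step is controlling the loss derivatives $\ell'_{pj}$ across phases, and the cross-task signal terms $\Gamma^{(t,p)}$ for $p\neq k$, inside a single induction. Some tasks may already be fitted, making their $\ell'$ small, while others are not. I would handle this by an induction hypothesis that holds uniformly over $t$ and phases. It would consist of (i) the noise bound, (ii) $|\Gamma^{(t,p)}|\le\widetilde O(1/\alpha_p)$, and (iii) $\ell'_{pj}=\Theta(1)$ whenever $y_{pj}F<\Theta(1)$. Before $\tau_{kv}^m$, (iii) holds for the task-$k$ samples, since $\alpha_k^3\Gamma^3<1$ there. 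This suffices, since only a lower bound on the $k$-contribution to the signal is needed and the other terms are nonnegative once signs are aligned. The sign alignment follows from choosing the neuron $r$ attaining the maximum, whose sign is preserved by the monotone recursion.
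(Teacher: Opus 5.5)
The gap is in the growth coefficient. Your overall route does match the paper's: a tensor-power argument for $\Gamma^{(t,k)}_{(m,r^*)}$ started from the phase-$k$ level $\Theta(1/(\alpha_kR^{1/3}))$, then a hitting-time comparison showing that, under the second SNR condition, the signal reaches $\Theta(1/(\alpha_kR^{1/5}))$ before the noise escapes.

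The increment of $\Gamma^{(t,k)}$ is $\frac{3\eta}{mn}\sum_{p,j}\alpha_p^3A_{(p,k)}\ell'_{pj}(\Gamma^{(t,p)})^2$. So the aggregate weights $\alpha_p^3A_{(p,k)}$ multiply the squares of the \emph{other} coordinates $\Gamma^{(t,p)}$, not of $\Gamma^{(t,k)}$.

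Keeping only the $p=k$ term, as Step 4 says you do, gives $x_{t+1}\ge x_t+c\frac{\eta}{m}\alpha_k^3x_t^2$. That yields a hitting time of order $mR^{1/3}/(\eta\alpha_k^2)$, and your Step 3 comparison would then need $\alpha_k^3/(\sigma_\xi\sqrt d)^3\gtrsim\alpha_kR^{1/3}\sigma_0(\cdots)/n$. The hypothesis does not give this. It only lower-bounds $\sum_{p\le m}\alpha_p^3A_{(p,k)}$ and allows $\alpha_k$ to be small, which is exactly the regime the lemma is meant to cover: later correlated tasks sustaining task $k$.

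Positivity of $A$ only makes the cross terms nonnegative. To make them contribute $\gtrsim\alpha_p^3A_{(p,k)}(\Gamma^{(t,k)})^2$, you need lower bounds $\Gamma^{(t,p)}\gtrsim\Gamma^{(t,k)}$ and $\ell'_{pj}=\Omega(1)$ for the tasks $p$ that carry the sum. Your induction hypotheses (ii)--(iii) give only upper bounds on $\Gamma^{(t,p)}$ and loss-derivative control for task-$k$ samples.

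The paper obtains the aggregate coefficient by treating the coupled coordinates $\{\Gamma^{(t,p)}_{(m,r^*)}\}_{p\le m}$ jointly through the multi-sequence tensor-power bound, Lemma~\ref{lem:K15_extension}, with $h=H=\frac{3\eta}{m}\sum_{p\le m}3\alpha_p^3A_{(p,k)}$. Its treatment there is brief, but some such coupled analysis is the ingredient your proof lacks. Your own Step 4 worry bites here too: the high-signal tasks that dominate the sum are the ones likely to be fitted first, which shrinks exactly the $\ell'_{pj}$ you would need.

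Two smaller points. First, the tensor-power lemma does not give ``$O(\log d)$ additional iterations.'' Its leading term $3/(hz^{(0)})\asymp m\alpha_kR^{1/3}/(\eta\sum_{p\le m}\alpha_p^3A_{(p,k)})$ is not logarithmic under the hypotheses. Your Step 3 comparison is non-vacuous only because of this term, so Steps 2 and 3 as written contradict each other. The paper's own bound on $\tau^m_{kv}$ likewise carries an extra non-logarithmic summand $\frac{m}{3\eta\sigma_0\sum_{p\in[m]}3\alpha_p^3A_{(p,k)}}$, so you should state the hitting time in that form.

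Second, the per-neuron bound $\alpha_k^3\Gamma^3<1$ below the threshold $1/(\alpha_kR^{1/5})$ only yields $y_{kj}F\lesssim R\cdot R^{-3/5}$ on task-$k$ samples. So $\ell'_{kj}=\Theta(1)$ needs an extra argument that the non-maximal neurons stay small.
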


To achieve successful continual learning of task $k$, the model must consistently prioritize signal learning over noise memorization-not only during the training of task $k$ but also throughout subsequent tasks up to task $m$. Lemma~\ref{lem:signal_ss} formalizes this by showing that the signal intensity aligned with task $k$ must remain above a certain threshold, while noise memorization must be kept under control. This balance ensures that the feature of task $k$ is both learned and retained over time.

% \kj{if space allowed, we could add a proof sketch here to highlight the key proof ideas. }
% \meng{yeah yeah definitely, i haven't finished this part, should be completed by tonight.}

\section{Experiment}
\label{sec:experiment}

In this section, we present synthetic experimental results to support our theoretical findings. Additional results are provided in the Appendix due to space limitations.

\textbf{Experimental Setup.} 
We design a synthetic continual learning experiment using a two-layer neural network with cubic activation. The model takes an input of dimension $2d$ (with $d = 1000$) and projects it to a hidden layer of size $R = 10$. The network is trained to solve three binary classification tasks sequentially, each associated with a distinct signal sampled from a multivariate Gaussian with varying correlation levels (off-diagonal entries set to $0.1$, $0.3$, and $0.7$ to represent low, medium, and high correlation). For each task $k$, the input is generated from \cref{def:data_distribution}, comprising signal and noise components. The signal strength $\alpha_k$ is scaled based on a task-specific SNR (set to $[0.1, 0.2, 0.3]$), and the noise is drawn from a distribution orthogonal to all signal directions, with fixed deviation $\sigma_\xi = 0.1$.
Training is performed using SGD with a fixed learning rate $\eta = 0.1$ and Gaussian initialization ($\sigma_0 = 0.1$). Each task is trained for 50 epochs with 10 samples. To assess learning dynamics, we track the alignment between hidden weights and both signal and noise across tasks. Notably, the dynamics of signal learning and noise memorization are closely consistent with accuracy performance—stronger signal learning generally corresponds to higher accuracy. Due to space limitations, we present the detailed accuracy figures in the Appendix.

\textbf{Prioritizing Higher-Signal Tasks May Enhance Lower-Signal Tasks Learning.} 
Figures~\ref{fig:dynamics} shows the dynamics of signal learning and noise memorization during continual training under full data replay, comparing different task orderings across varying levels of task correlation. 
In Figures~\ref{fig:align_01}-\ref{fig:align_07}, Task 3—which has the highest signal intensity (corresponding to the highest SNR = 0.3 under fixed noise scale)—is placed earlier in the task sequence. In contrast, Figures~\ref{fig:align_01_order}-\ref{fig:align_07_order} reverse the task order, placing lower-SNR tasks earlier.
When the correlation strength is low ($A_{(m,m^{\prime})} =0.1$, implying near-orthogonality between task vectors and low task similarity), prioritizing the high-signal Task 3 has limited effect: the cumulative signal for the lower-signal Task 1 remains insufficient in both orderings (see Figures~\ref{fig:align_01} and~\ref{fig:align_01_order}). However, as correlation strength increases, the effect of task ordering becomes more pronounced. For instance, in the moderate correlation setting (Figures~\ref{fig:align_03} and~\ref{fig:align_03_order}), prioritizing Task 3 improves signal acquisition for the other tasks—Task 2 achieves higher signal learning in the ordered setting. Furthermore, in Figure~\ref{fig:align_03_order}, the signal learning of Task 1 eventually exceeds its noise memorization, while in the non-prioritized setting (Figure~\ref{fig:align_03}), Task 1 continues to struggle. This effect becomes even more evident under high correlation ($A_{(m,m^{\prime})} =0.7$), where prioritizing high-signal tasks yields better signal learning for lower-SNR tasks, as shown in Figure~\ref{fig:align_07_order}. These empirical observations also validate our theoretical conclusions in Theorem~\ref{thm:forget_m_ffs_tasks} and 
\ref{thm:forget_m_sfs_tasks}.

\textbf{Higher Correlation Enhances Signal Learning.} 
Figures~\ref{fig:align_01}, \ref{fig:align_03}, and \ref{fig:align_07} (and their reordered counterparts) illustrate that increasing the correlation between tasks significantly 
improves signal learning across the board. When the correlation strength is low ($A_{(m,m^{\prime})} =0.1$), tasks contribute little to one another, resulting in limited signal accumulation for earlier, lower-SNR tasks—regardless of ordering. However, as the correlation increases to $0.3$ and $0.7$, tasks—especially those with stronger signals—can contribute more effectively to the overall feature representation, improving the learning of other tasks in the sequence. For example, under the high-correlation setting ($A_{(m,m^{\prime})} =0.7$), even lower-signal tasks (e.g., Task 1) can accumulate sufficient signal to surpass noise memorization, demonstrating that strong task correlation amplifies the benefits of both task ordering and feature sharing in continual learning.

\begin{figure}[t]   
    \centering
    \includegraphics[width=0.7\columnwidth]{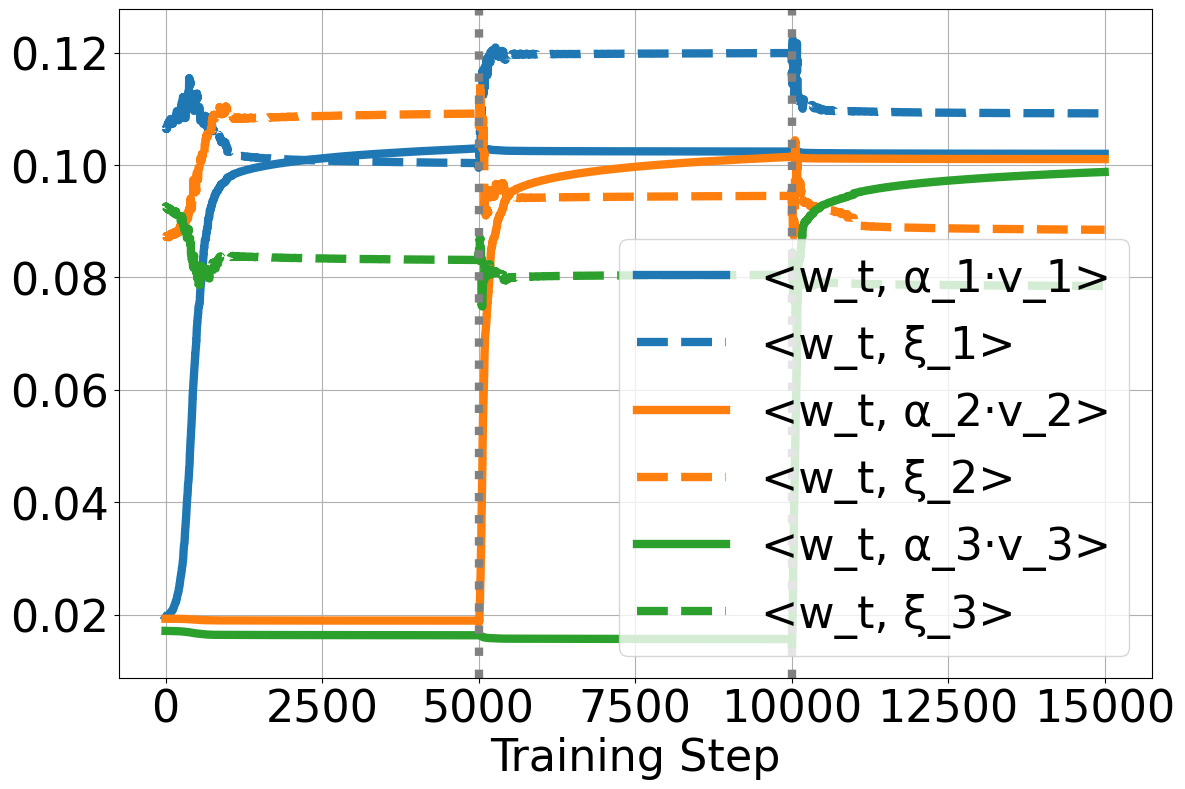}
    \caption{Dynamics of signal learning and noise memorization under lower SNR.}
    \label{fig:dynamics_noise}
    \vspace{-0.1in}
\end{figure}

\textbf{Competition between Noise Memorization and Signal Learning.} In Figure~\ref{fig:dynamics}, it is clear that noise memorization remains relatively stable, which may be attributed to the model focusing more on signal learning during training. To further investigate the behavior of noise memorization, we increase the sample size to $100$, reduce the signal intensity to $0.06$ for all tasks, and set the correlation strength to $0.01$ to simulate a low-correlation regime. As shown in Figure~\ref{fig:dynamics_noise}, Task $1$ performs well during its initial training phase, as the signal learning surpasses noise memorization. However, as new tasks are introduced—each weakly correlated with Task $1$—the model fails to reinforce Task $1$'s features, ultimately leading to catastrophic forgetting of Task $1$. We further explore the impact of correlation by increasing the correlation strength to 0.3 and 0.7. As expected, higher correlation allows the model to benefit from the features learned in Tasks 2 and 3, effectively contributing to Task 1’s signal and mitigating forgetting. These results demonstrate that \textit{catastrophic forgetting tends to occur when tasks are orthogonal}, consistent with Theorem~\ref{thm:forget_m_sfs_tasks}, where the SNR conditions fail to hold due to near-zero correlation $A_{(m,m^{\prime})}=0$.
Due to space limitations, the corresponding figures are deferred to the Appendix.
\vspace{-0.1in}
\section{Conclusion}
\vspace{-0.1in}
In this work, we provide a comprehensive theoretical framework for understanding full data-replay training in continual learning through the lens of feature learning. By adopting a multi-view data model, task-specific signal structures and inter-task correlations, we identify the SNR as a fundamental factor driving forgetting. 
% Our theoretical results reveal that full data replay does not universally prevent forgetting: it fails when cumulative noise dominates the signal, but it can also enhance recovery when later tasks contribute positively to earlier ones. 
A particularly novel insight from our study is the impact of task ordering—prioritizing higher-signal tasks not only improves learning for subsequent tasks but also mitigates forgetting of earlier ones. This highlights the need for order-aware replay strategies in the design of continual learning systems.

% Finally, our synthetic experiments validate the theoretical predictions by visualizing the dynamics of signal and noise across different replay sequences, correlations, and SNR regimes, offering practical evidence for our findings and guiding future empirical and algorithmic developments.

\section*{Acknowledgment}
We thank the AISTATS reviewers and community for their valuable suggestions, which motivated us to conduct and include additional empirical verification on real-world CIFAR-100 data in Appendix. The research of Jinhui Xu was partially supported by startup funds from USTC and a grant from IAI.
% \newpage

\bibliography{cl_ref}

@article{parisi2019continual,
  title={Continual lifelong learning with neural networks: A review},
  author={Parisi, German I and Kemker, Ronald and Part, Jose L and Kanan, Christopher and Wermter, Stefan},
  journal={Neural networks},
  volume={113},
  pages={54--71},
  year={2019},
  publisher={Elsevier}
}

@inproceedings{he2016deep,
  title={Deep residual learning for image recognition},
  author={He, Kaiming and Zhang, Xiangyu and Ren, Shaoqing and Sun, Jian},
  booktitle={Proceedings of the IEEE conference on computer vision and pattern recognition},
  pages={770--778},
  year={2016}
}

@article{krizhevsky2009learning,
  title={Learning multiple layers of features from tiny images},
  author={Krizhevsky, Alex and Hinton, Geoffrey and others},
  year={2009},
  publisher={Toronto, ON, Canada}
}

@inproceedings{ding2024understanding,
  title={Understanding Forgetting in Continual Learning with Linear Regression},
  author={Ding, Meng and Ji, Kaiyi and Wang, Di and Xu, Jinhui},
  booktitle={Forty-first International Conference on Machine Learning},
year = {2024}
}

@article{ding2025understanding,
  title={Understanding Private Learning From Feature Perspective},
  author={Ding, Meng and Lei, Mingxi and Fu, Shaopeng and Wang, Shaowei and Wang, Di and Xu, Jinhui},
  journal={arXiv preprint arXiv:2511.18006},
  year={2025}
}

@article{cao2022benign,
  title={Benign overfitting in two-layer convolutional neural networks},
  author={Cao, Yuan and Chen, Zixiang and Belkin, Misha and Gu, Quanquan},
  journal={Advances in neural information processing systems},
  volume={35},
  pages={25237--25250},
  year={2022}
}

@inproceedings{kou2023benign,
  title={Benign overfitting in two-layer ReLU convolutional neural networks},
  author={Kou, Yiwen and Chen, Zixiang and Chen, Yuanzhou and Gu, Quanquan},
  booktitle={International Conference on Machine Learning},
  pages={17615--17659},
  year={2023},
  organization={PMLR}
}

@inproceedings{jelassi2022towards,
  title={Towards understanding how momentum improves generalization in deep learning},
  author={Jelassi, Samy and Li, Yuanzhi},
  booktitle={International Conference on Machine Learning},
  pages={9965--10040},
  year={2022},
  organization={PMLR}
}

@inproceedings{zou2023benefits,
  title={The benefits of mixup for feature learning},
  author={Zou, Difan and Cao, Yuan and Li, Yuanzhi and Gu, Quanquan},
  booktitle={International Conference on Machine Learning},
  pages={43423--43479},
  year={2023},
  organization={PMLR}
}

@article{li2024optimization,
  title={On the optimization and generalization of two-layer transformers with sign gradient descent},
  author={Li, Bingrui and Huang, Wei and Han, Andi and Zhou, Zhanpeng and Suzuki, Taiji and Zhu, Jun and Chen, Jianfei},
  journal={arXiv preprint arXiv:2410.04870},
  year={2024}
}

@inproceedings{bu2025provable,
  title={Provable In-Context Vector Arithmetic via Retrieving Task Concepts},
  author={Bu, Dake and Huang, Wei and Han, Andi and Nitanda, Atsushi and Zhang, Qingfu and Wong, Hau-San and Suzuki, Taiji},
  booktitle={Forty-second International Conference on Machine Learning},
  year={2025}
}

@article{han2025role,
  title={On the Role of Label Noise in the Feature Learning Process},
  author={Han, Andi and Huang, Wei and Zhou, Zhanpeng and Niu, Gang and Chen, Wuyang and Yan, Junchi and Takeda, Akiko and Suzuki, Taiji},
  journal={arXiv preprint arXiv:2505.18909},
  year={2025}
}

@article{allen2020towards,
  title={Towards understanding ensemble, knowledge distillation and self-distillation in deep learning},
  author={Allen-Zhu, Zeyuan and Li, Yuanzhi},
  journal={arXiv preprint arXiv:2012.09816},
  year={2020}
}

@inproceedings{baoprovable,
  title={Provable Benefits of Local Steps in Heterogeneous Federated Learning for Neural Networks: A Feature Learning Perspective},
  author={Bao, Yajie and Crawshaw, Michael and Liu, Mingrui},
  booktitle={Forty-first International Conference on Machine Learning}
}

@incollection{mccloskey1989catastrophic,
  title={Catastrophic interference in connectionist networks: The sequential learning problem},
  author={McCloskey, Michael and Cohen, Neal J},
  booktitle={Psychology of learning and motivation},
  volume={24},
  pages={109--165},
  year={1989},
  publisher={Elsevier}
}

@article{wang2024comprehensive,
  title={A comprehensive survey of continual learning: Theory, method and application},
  author={Wang, Liyuan and Zhang, Xingxing and Su, Hang and Zhu, Jun},
  journal={IEEE Transactions on Pattern Analysis and Machine Intelligence},
  year={2024},
  publisher={IEEE}
}

@article{kirkpatrick2017overcoming,
  title={Overcoming catastrophic forgetting in neural networks},
  author={Kirkpatrick, James and Pascanu, Razvan and Rabinowitz, Neil and Veness, Joel and Desjardins, Guillaume and Rusu, Andrei A and Milan, Kieran and Quan, John and Ramalho, Tiago and Grabska-Barwinska, Agnieszka and others},
  journal={Proceedings of the national academy of sciences},
  volume={114},
  number={13},
  pages={3521--3526},
  year={2017},
  publisher={National Academy of Sciences}
}

@inproceedings{evron2022catastrophic,
  title={How catastrophic can catastrophic forgetting be in linear regression?},
  author={Evron, Itay and Moroshko, Edward and Ward, Rachel and Srebro, Nathan and Soudry, Daniel},
  booktitle={Conference on Learning Theory},
  pages={4028--4079},
  year={2022},
  organization={PMLR}
}

@inproceedings{goldfarb2023analysis,
  title={Analysis of catastrophic forgetting for random orthogonal transformation tasks in the overparameterized regime},
  author={Goldfarb, Daniel and Hand, Paul},
  booktitle={International Conference on Artificial Intelligence and Statistics},
  pages={2975--2993},
  year={2023},
  organization={PMLR}
}

@inproceedings{lin2023theory,
  title={Theory on forgetting and generalization of continual learning},
  author={Lin, Sen and Ju, Peizhong and Liang, Yingbin and Shroff, Ness},
  booktitle={International Conference on Machine Learning},
  pages={21078--21100},
  year={2023},
  organization={PMLR}
}

@article{swartworth2023nearly,
  title={Nearly optimal bounds for cyclic forgetting},
  author={Swartworth, William and Needell, Deanna and Ward, Rachel and Kong, Mark and Jeong, Halyun},
  journal={Advances in Neural Information Processing Systems},
  volume={36},
  pages={68197--68206},
  year={2023}
}

@article{evron2025better,
  title={Better Rates for Random Task Orderings in Continual Linear Models},
  author={Evron, Itay and Levinstein, Ran and Schliserman, Matan and Sherman, Uri and Koren, Tomer and Soudry, Daniel and Srebro, Nathan},
  journal={arXiv preprint arXiv:2504.04579},
  year={2025}
}

@article{li2024theory,
  title={Theory on mixture-of-experts in continual learning},
  author={Li, Hongbo and Lin, Sen and Duan, Lingjie and Liang, Yingbin and Shroff, Ness B},
  journal={arXiv preprint arXiv:2406.16437},
  year={2024}
}

@article{zhao2024statistical,
  title={A statistical theory of regularization-based continual learning},
  author={Zhao, Xuyang and Wang, Huiyuan and Huang, Weiran and Lin, Wei},
  journal={arXiv preprint arXiv:2406.06213},
  year={2024}
}

@inproceedings{doan2021theoretical,
  title={A theoretical analysis of catastrophic forgetting through the ntk overlap matrix},
  author={Doan, Thang and Bennani, Mehdi Abbana and Mazoure, Bogdan and Rabusseau, Guillaume and Alquier, Pierre},
  booktitle={International Conference on Artificial Intelligence and Statistics},
  pages={1072--1080},
  year={2021},
  organization={PMLR}
}

@inproceedings{cao2022provable,
  title={Provable lifelong learning of representations},
  author={Cao, Xinyuan and Liu, Weiyang and Vempala, Santosh},
  booktitle={International Conference on Artificial Intelligence and Statistics},
  pages={6334--6356},
  year={2022},
  organization={PMLR}
}

@article{chaudhry2019tiny,
  title={On tiny episodic memories in continual learning},
  author={Chaudhry, Arslan and Rohrbach, Marcus and Elhoseiny, Mohamed and Ajanthan, Thalaiyasingam and Dokania, Puneet K and Torr, Philip HS and Ranzato, Marc'Aurelio},
  journal={arXiv preprint arXiv:1902.10486},
  year={2019}
}

@article{riemer2018learning,
  title={Learning to learn without forgetting by maximizing transfer and minimizing interference},
  author={Riemer, Matthew and Cases, Ignacio and Ajemian, Robert and Liu, Miao and Rish, Irina and Tu, Yuhai and Tesauro, Gerald},
  journal={arXiv preprint arXiv:1810.11910},
  year={2018}
}

@article{lopez2017gradient,
  title={Gradient episodic memory for continual learning},
  author={Lopez-Paz, David and Ranzato, Marc'Aurelio},
  journal={Advances in neural information processing systems},
  volume={30},
  year={2017}
}

@article{yoon2021online,
  title={Online coreset selection for rehearsal-based continual learning},
  author={Yoon, Jaehong and Madaan, Divyam and Yang, Eunho and Hwang, Sung Ju},
  journal={arXiv preprint arXiv:2106.01085},
  year={2021}
}

@inproceedings{shim2021online,
  title={Online class-incremental continual learning with adversarial shapley value},
  author={Shim, Dongsub and Mai, Zheda and Jeong, Jihwan and Sanner, Scott and Kim, Hyunwoo and Jang, Jongseong},
  booktitle={Proceedings of the AAAI Conference on Artificial Intelligence},
  volume={35},
  number={11},
  pages={9630--9638},
  year={2021}
}

@inproceedings{tiwari2022gcr,
  title={Gcr: Gradient coreset based replay buffer selection for continual learning},
  author={Tiwari, Rishabh and Killamsetty, Krishnateja and Iyer, Rishabh and Shenoy, Pradeep},
  booktitle={Proceedings of the IEEE/CVF Conference on Computer Vision and Pattern Recognition},
  pages={99--108},
  year={2022}
}

@article{kumari2022retrospective,
  title={Retrospective adversarial replay for continual learning},
  author={Kumari, Lilly and Wang, Shengjie and Zhou, Tianyi and Bilmes, Jeff A},
  journal={Advances in neural information processing systems},
  volume={35},
  pages={28530--28544},
  year={2022}
}

@article{ebrahimi2021remembering,
  title={Remembering for the right reasons: Explanations reduce catastrophic forgetting},
  author={Ebrahimi, Sayna and Petryk, Suzanne and Gokul, Akash and Gan, William and Gonzalez, Joseph E and Rohrbach, Marcus and Darrell, Trevor},
  journal={Applied AI letters},
  volume={2},
  number={4},
  pages={e44},
  year={2021},
  publisher={Wiley Online Library}
}

@article{wang2021triple,
  title={Triple-memory networks: A brain-inspired method for continual learning},
  author={Wang, Liyuan and Lei, Bo and Li, Qian and Su, Hang and Zhu, Jun and Zhong, Yi},
  journal={IEEE Transactions on Neural Networks and Learning Systems},
  volume={33},
  number={5},
  pages={1925--1934},
  year={2021},
  publisher={IEEE}
}

@article{nguyen2017variational,
  title={Variational continual learning},
  author={Nguyen, Cuong V and Li, Yingzhen and Bui, Thang D and Turner, Richard E},
  journal={arXiv preprint arXiv:1710.10628},
  year={2017}
}

@article{cong2020gan,
  title={Gan memory with no forgetting},
  author={Cong, Yulai and Zhao, Miaoyun and Li, Jianqiao and Wang, Sijia and Carin, Lawrence},
  journal={Advances in neural information processing systems},
  volume={33},
  pages={16481--16494},
  year={2020}
}

@inproceedings{ostapenko2019learning,
  title={Learning to remember: A synaptic plasticity driven framework for continual learning},
  author={Ostapenko, Oleksiy and Puscas, Mihai and Klein, Tassilo and Jahnichen, Patrick and Nabi, Moin},
  booktitle={Proceedings of the IEEE/CVF conference on computer vision and pattern recognition},
  pages={11321--11329},
  year={2019}
}

@article{van2020brain,
  title={Brain-inspired replay for continual learning with artificial neural networks},
  author={Van de Ven, Gido M and Siegelmann, Hava T and Tolias, Andreas S},
  journal={Nature communications},
  volume={11},
  number={1},
  pages={4069},
  year={2020},
  publisher={Nature Publishing Group UK London}
}

@inproceedings{liu2020generative,
  title={Generative feature replay for class-incremental learning},
  author={Liu, Xialei and Wu, Chenshen and Menta, Mikel and Herranz, Luis and Raducanu, Bogdan and Bagdanov, Andrew D and Jui, Shangling and de Weijer, Joost van},
  booktitle={Proceedings of the IEEE/CVF conference on computer vision and pattern recognition workshops},
  pages={226--227},
  year={2020}
}

@inproceedings{korbak2022controlling,
  title={Controlling conditional language models without catastrophic forgetting},
  author={Korbak, Tomasz and Elsahar, Hady and Kruszewski, German and Dymetman, Marc},
  booktitle={International Conference on Machine Learning},
  pages={11499--11528},
  year={2022},
  organization={PMLR}
}

@article{ritter2018online,
  title={Online structured laplace approximations for overcoming catastrophic forgetting},
  author={Ritter, Hippolyt and Botev, Aleksandar and Barber, David},
  journal={Advances in Neural Information Processing Systems},
  volume={31},
  year={2018}
}

@inproceedings{aljundi2018memory,
  title={Memory aware synapses: Learning what (not) to forget},
  author={Aljundi, Rahaf and Babiloni, Francesca and Elhoseiny, Mohamed and Rohrbach, Marcus and Tuytelaars, Tinne},
  booktitle={Proceedings of the European conference on computer vision (ECCV)},
  pages={139--154},
  year={2018}
}

@article{titsias2019functional,
  title={Functional regularisation for continual learning with gaussian processes},
  author={Titsias, Michalis K and Schwarz, Jonathan and Matthews, Alexander G de G and Pascanu, Razvan and Teh, Yee Whye},
  journal={arXiv preprint arXiv:1901.11356},
  year={2019}
}

@article{pan2020continual,
  title={Continual deep learning by functional regularisation of memorable past},
  author={Pan, Pingbo and Swaroop, Siddharth and Immer, Alexander and Eschenhagen, Runa and Turner, Richard and Khan, Mohammad Emtiyaz E},
  journal={Advances in neural information processing systems},
  volume={33},
  pages={4453--4464},
  year={2020}
}

@inproceedings{benzing2022unifying,
  title={Unifying importance based regularisation methods for continual learning},
  author={Benzing, Frederik},
  booktitle={International Conference on Artificial Intelligence and Statistics},
  pages={2372--2396},
  year={2022},
  organization={PMLR}
}

@inproceedings{lin2022towards,
  title={Towards better plasticity-stability trade-off in incremental learning: A simple linear connector},
  author={Lin, Guoliang and Chu, Hanlu and Lai, Hanjiang},
  booktitle={Proceedings of the IEEE/CVF conference on computer vision and pattern recognition},
  pages={89--98},
  year={2022}
}

@article{chaudhry2018efficient,
  title={Efficient lifelong learning with a-gem},
  author={Chaudhry, Arslan and Ranzato, Marc'Aurelio and Rohrbach, Marcus and Elhoseiny, Mohamed},
  journal={arXiv preprint arXiv:1812.00420},
  year={2018}
}

@inproceedings{tang2021layerwise,
  title={Layerwise optimization by gradient decomposition for continual learning},
  author={Tang, Shixiang and Chen, Dapeng and Zhu, Jinguo and Yu, Shijie and Ouyang, Wanli},
  booktitle={Proceedings of the IEEE/CVF conference on Computer Vision and Pattern Recognition},
  pages={9634--9643},
  year={2021}
}

@inproceedings{wang2022anti,
  title={Anti-retroactive interference for lifelong learning},
  author={Wang, Runqi and Bao, Yuxiang and Zhang, Baochang and Liu, Jianzhuang and Zhu, Wentao and Guo, Guodong},
  booktitle={European Conference on Computer Vision},
  pages={163--178},
  year={2022},
  organization={Springer}
}

@inproceedings{wu2022class,
  title={Class-incremental learning with strong pre-trained models},
  author={Wu, Tz-Ying and Swaminathan, Gurumurthy and Li, Zhizhong and Ravichandran, Avinash and Vasconcelos, Nuno and Bhotika, Rahul and Soatto, Stefano},
  booktitle={Proceedings of the IEEE/CVF Conference on Computer Vision and Pattern Recognition},
  pages={9601--9610},
  year={2022}
}

@inproceedings{shi2022mimicking,
  title={Mimicking the oracle: An initial phase decorrelation approach for class incremental learning},
  author={Shi, Yujun and Zhou, Kuangqi and Liang, Jian and Jiang, Zihang and Feng, Jiashi and Torr, Philip HS and Bai, Song and Tan, Vincent YF},
  booktitle={Proceedings of the IEEE/CVF conference on computer vision and pattern recognition},
  pages={16722--16731},
  year={2022}
}

@article{wang2022s,
  title={S-prompts learning with pre-trained transformers: An occam’s razor for domain incremental learning},
  author={Wang, Yabin and Huang, Zhiwu and Hong, Xiaopeng},
  journal={Advances in Neural Information Processing Systems},
  volume={35},
  pages={5682--5695},
  year={2022}
}

@article{mcdonnell2023ranpac,
  title={Ranpac: Random projections and pre-trained models for continual learning},
  author={McDonnell, Mark D and Gong, Dong and Parvaneh, Amin and Abbasnejad, Ehsan and Van den Hengel, Anton},
  journal={Advances in Neural Information Processing Systems},
  volume={36},
  pages={12022--12053},
  year={2023}
}

@article{le2024mixture,
  title={Mixture of experts meets prompt-based continual learning},
  author={Le, Minh and Nguyen, Huy and Nguyen, Trang and Pham, Trang and Ngo, Linh and Ho, Nhat and others},
  journal={Advances in Neural Information Processing Systems},
  volume={37},
  pages={119025--119062},
  year={2024}
}

@article{gurbuz2022nispa,
  title={Nispa: Neuro-inspired stability-plasticity adaptation for continual learning in sparse networks},
  author={Gurbuz, Mustafa Burak and Dovrolis, Constantine},
  journal={arXiv preprint arXiv:2206.09117},
  year={2022}
}

@inproceedings{douillard2022dytox,
  title={Dytox: Transformers for continual learning with dynamic token expansion},
  author={Douillard, Arthur and Ram{\'e}, Alexandre and Couairon, Guillaume and Cord, Matthieu},
  booktitle={Proceedings of the IEEE/CVF conference on computer vision and pattern recognition},
  pages={9285--9295},
  year={2022}
}

@inproceedings{miao2021continual,
  title={Continual learning with filter atom swapping},
  author={Miao, Zichen and Wang, Ze and Chen, Wei and Qiu, Qiang},
  booktitle={International Conference on Learning Representations},
  year={2021}
}

@article{ostapenko2021continual,
  title={Continual learning via local module composition},
  author={Ostapenko, Oleksiy and Rodriguez, Pau and Caccia, Massimo and Charlin, Laurent},
  journal={Advances in Neural Information Processing Systems},
  volume={34},
  pages={30298--30312},
  year={2021}
}

@inproceedings{zheng2024multi,
  title={Multi-layer rehearsal feature augmentation for class-incremental learning},
  author={Zheng, Bowen and Zhou, Da-Wei and Ye, Han-Jia and Zhan, De-Chuan},
  booktitle={Forty-first International Conference on Machine Learning},
  year={2024}
}

@article{lesort2022challenging,
  title={Challenging common assumptions about catastrophic forgetting},
  author={Lesort, Timoth{\'e}e and Ostapenko, Oleksiy and Misra, Diganta and Arefin, Md Rifat and Rodr{\'\i}guez, Pau and Charlin, Laurent and Rish, Irina},
  journal={arXiv preprint arXiv:2207.04543},
  year={2022}
}

@article{hemati2025continual,
  title={Continual learning in the presence of repetition},
  author={Hemati, Hamed and Pellegrini, Lorenzo and Duan, Xiaotian and Zhao, Zixuan and Xia, Fangfang and Masana, Marc and Tscheschner, Benedikt and Veas, Eduardo and Zheng, Yuxiang and Zhao, Shiji and others},
  journal={Neural Networks},
  volume={183},
  pages={106920},
  year={2025},
  publisher={Elsevier}
}

@article{li2025optimal,
  title={Optimal Task Order for Continual Learning of Multiple Tasks},
  author={Li, Ziyan and Hiratani, Naoki},
  journal={arXiv preprint arXiv:2502.03350},
  year={2025}
}

@inproceedings{allen2022feature,
  title={Feature purification: How adversarial training performs robust deep learning},
  author={Allen-Zhu, Zeyuan and Li, Yuanzhi},
  booktitle={2021 IEEE 62nd Annual Symposium on Foundations of Computer Science (FOCS)},
  pages={977--988},
  year={2022},
  organization={IEEE}
}

@article{huang2023graph,
  title={Graph neural networks provably benefit from structural information: A feature learning perspective},
  author={Huang, Wei and Cao, Yuan and Wang, Haonan and Cao, Xin and Suzuki, Taiji},
  journal={arXiv preprint arXiv:2306.13926},
  year={2023}
}

@article{jelassi2022vision,
  title={Vision transformers provably learn spatial structure},
  author={Jelassi, Samy and Sander, Michael and Li, Yuanzhi},
  journal={Advances in Neural Information Processing Systems},
  volume={35},
  pages={37822--37836},
  year={2022}
}

@article{li2023theoretical,
  title={A theoretical understanding of shallow vision transformers: Learning, generalization, and sample complexity},
  author={Li, Hongkang and Wang, Meng and Liu, Sijia and Chen, Pin-Yu},
  journal={arXiv preprint arXiv:2302.06015},
  year={2023}
}

@article{han2024feature,
  title={On the feature learning in diffusion models},
  author={Han, Andi and Huang, Wei and Cao, Yuan and Zou, Difan},
  journal={arXiv preprint arXiv:2412.01021},
  year={2024}
}

@article{bu2024provably,
  title={Provably Transformers Harness Multi-Concept Word Semantics for Efficient In-Context Learning},
  author={Bu, Dake and Huang, Wei and Han, Andi and Nitanda, Atsushi and Suzuki, Taiji and Zhang, Qingfu and Wong, Hau-San},
  journal={Advances in Neural Information Processing Systems},
  volume={37},
  pages={63342--63405},
  year={2024}
}

@inproceedings{huang2023understanding,
  title={Understanding convergence and generalization in federated learning through feature learning theory},
  author={Huang, Wei and Shi, Ye and Cai, Zhongyi and Suzuki, Taiji},
  booktitle={The Twelfth International Conference on Learning Representations},
  year={2023}
}

@inproceedings{wen2021toward,
  title={Toward understanding the feature learning process of self-supervised contrastive learning},
  author={Wen, Zixin and Li, Yuanzhi},
  booktitle={International Conference on Machine Learning},
  pages={11112--11122},
  year={2021},
  organization={PMLR}
}

@article{banayeeanzade2024theoretical,
  title={Theoretical insights into overparameterized models in multi-task and replay-based continual learning},
  author={Banayeeanzade, Amin and Soltanolkotabi, Mahdi and Rostami, Mohammad},
  journal={arXiv preprint arXiv:2408.16939},
  year={2024}
}

@article{zhengtowards,
  title={Towards Understanding Memory buffer based Continual Learning},
  author={Zheng, Guodong and Wang, Peng and Shen, Li}
}
\bibliographystyle{plainnat}

% \subsubsection*{Acknowledgements}
% All acknowledgments go at the end of the paper, including thanks to reviewers who gave useful comments, to colleagues who contributed to the ideas, and to funding agencies and corporate sponsors that provided financial support. 
% To preserve the anonymity, please include acknowledgments \emph{only} in the camera-ready papers. The acknowledgements do not count against the 9-page page limit in the camera-ready.

% \subsubsection*{References}

% References follow the acknowledgements. Use an unnumbered third level
% heading for the references section.  Please use the same font
% size for references as for the body of the paper---remember that
% references do not count against your page length total.

%%%%%%%%%%%%%%%%%%%%%%%%%%%%%%%%%%%%%%%%%%%%%%%%%%%%%%%%%%%%
\section*{Checklist}

\begin{enumerate}

  \item For all models and algorithms presented, check if you include:
  \begin{enumerate}
    \item A clear description of the mathematical setting, assumptions, algorithm, and/or model. [Yes]
    \item An analysis of the properties and complexity (time, space, sample size) of any algorithm. [Yes]
    \item (Optional) Anonymized source code, with specification of all dependencies, including external libraries. [Yes]
  \end{enumerate}

  \item For any theoretical claim, check if you include:
  \begin{enumerate}
    \item Statements of the full set of assumptions of all theoretical results. [Yes]
    \item Complete proofs of all theoretical results. [Yes]
    \item Clear explanations of any assumptions. [Yes]     
  \end{enumerate}

  \item For all figures and tables that present empirical results, check if you include:
  \begin{enumerate}
    \item The code, data, and instructions needed to reproduce the main experimental results (either in the supplemental material or as a URL). [Yes]
    \item All the training details (e.g., data splits, hyperparameters, how they were chosen). [Yes]
    \item A clear definition of the specific measure or statistics and error bars (e.g., with respect to the random seed after running experiments multiple times). [Yes]
    \item A description of the computing infrastructure used. (e.g., type of GPUs, internal cluster, or cloud provider). [Yes]
  \end{enumerate}

  \item If you are using existing assets (e.g., code, data, models) or curating/releasing new assets, check if you include:
  \begin{enumerate}
    \item Citations of the creator If your work uses existing assets. [Not Applicable]
    \item The license information of the assets, if applicable. [Not Applicable]
    \item New assets either in the supplemental material or as a URL, if applicable. [Not Applicable]
    \item Information about consent from data providers/curators. [Not Applicable]
    \item Discussion of sensible content if applicable, e.g., personally identifiable information or offensive content. [Not Applicable]
  \end{enumerate}

  \item If you used crowdsourcing or conducted research with human subjects, check if you include:
  \begin{enumerate}
    \item The full text of instructions given to participants and screenshots. [Not Applicable]
    \item Descriptions of potential participant risks, with links to Institutional Review Board (IRB) approvals if applicable. [Not Applicable]
    \item The estimated hourly wage paid to participants and the total amount spent on participant compensation. [Not Applicable]
  \end{enumerate}

\end{enumerate}

\clearpage
\appendix
\thispagestyle{empty}

% Supplementary material: To improve readability, you must use a single-column format for the supplementary material.
\onecolumn
\aistatstitle{Supplementary Materials}

\section{Additional Related Work}

\paragraph{Feature Learning Theory} \cite{allen2022feature} first introduced the feature learning framework to explain the benefits of adversarial training in robust learning. This was further extended by \cite{allen2020towards}, who incorporated a multi-view data structure to show how ensemble methods can enhance generalization. Since then, feature learning has been studied across a range of model architectures, including graph neural networks \cite{huang2023graph}, convolutional neural networks \cite{cao2022benign,kou2023benign}, vision transformers \cite{jelassi2022vision,li2023theoretical}, and diffusion models \cite{han2024feature}.
Beyond model architectures, the framework has also been used to analyze the behavior of optimization algorithms and training techniques—such as Adam \cite{zou2023benefits}, momentum \cite{jelassi2022towards}, and Mixup \cite{zou2023benefits}. Furthermore, feature learning provides new insights into broader learning paradigms, including federated learning \cite{huang2023understanding,baoprovable}, contrastive learning \cite{wen2021toward}, and in-context learning \cite{bu2024provably}. To the best of our knowledge, this work is the first to investigate the effects of data replay in continual learning from the perspective of feature learning. Compared to standard learning settings, continual learning introduces additional challenges—such as task-specific feature vectors, and complex interactions between signal and noise across sequential tasks—which make theoretical analysis significantly more intricate.

\section{Additional Experimental}

\subsection{Synthetic Data}

\textbf{Accuracy Reflects Learning Dynamics.} Figure~\ref{fig:acc_dynamics} highlights how both task ordering and inter-task similarity influence model accuracy during continual learning, with trends that align closely with the signal and noise dynamics presented in Figure~\ref{fig:dynamics}. When the task with the strongest signal (i.e., highest $\alpha_k$) is placed earlier in the sequence—such as Task 3 in subplots (\ref{fig:acc_01_order}–\ref{fig:acc_07_order})—the model is better able to acquire meaningful representations, resulting in higher accuracy even for subsequent lower-signal tasks. In contrast, when lower-signal tasks are prioritized (subplots \ref{fig:acc_01}–\ref{fig:acc_07}), signal learning for those tasks becomes less effective, and overall accuracy suffers. Specifically, when the alignment with task-specific signal directions dominates over noise components , task accuracy exceeds 50\%. Conversely, when noise memorization exceeds signal learning, accuracy deteriorates to near-random levels. For instance, under low task correlation ($A_{(m,m')} = 0.1$), Task 1 performs poorly when it appears last in the training sequence (Figure~\ref{fig:acc_01}), but its performance significantly improves when prioritized earlier (Figure~\ref{fig:acc_01_order}), confirming that task ordering matters.
Additionally, across all orderings, stronger inter-task correlations (e.g., $A_{(m,m')} = 0.7$) facilitate signal transfer across tasks, allowing lower-signal tasks to benefit from earlier learned features. These patterns underscore the consistency between accuracy outcomes and the learning dynamics: accuracy increases when signal learning outweighs noise memorization, and fails when the noise dominates the representation.

\begin{figure}[!h]
    \centering
    \begin{subfigure}[b]{0.32\textwidth}
        \includegraphics[width=\textwidth]{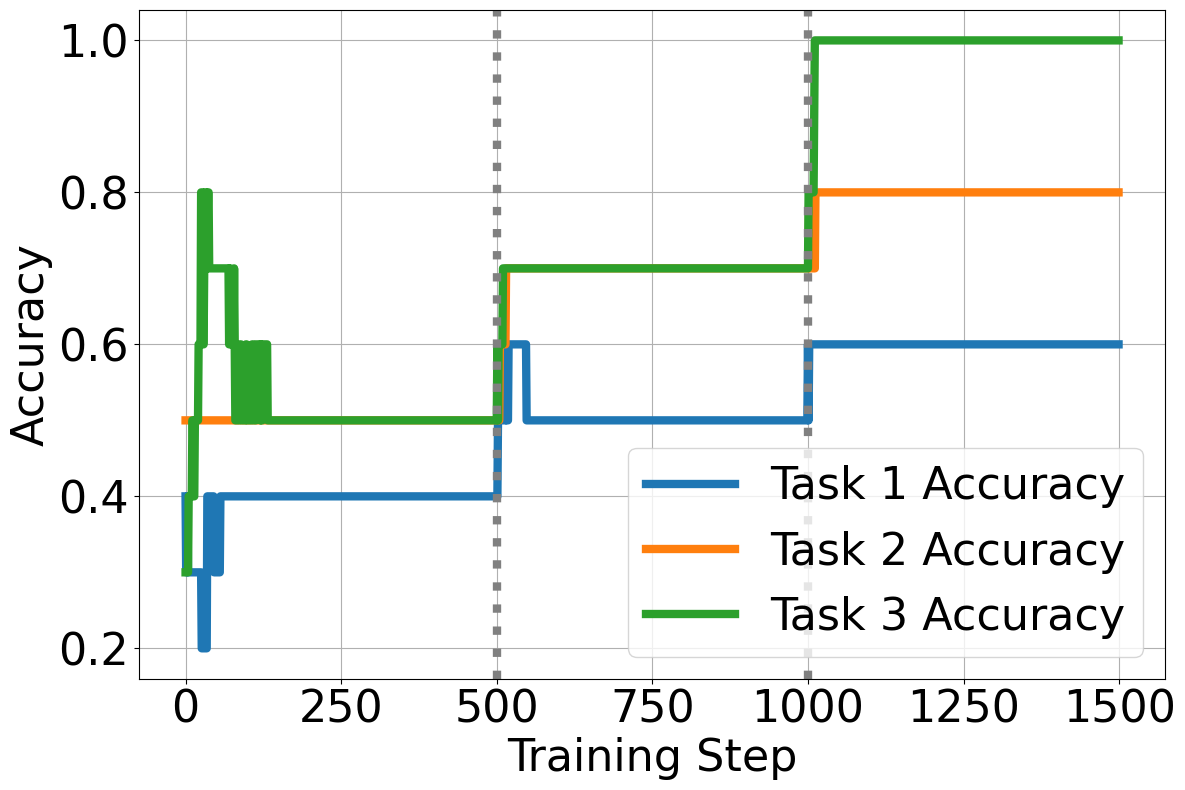}
        \caption{$A_{(m,m^{\prime})}$ = 0.1}
        \label{fig:acc_01}
    \end{subfigure}
    \hfill
    \begin{subfigure}[b]{0.32\textwidth}
        \includegraphics[width=\textwidth]{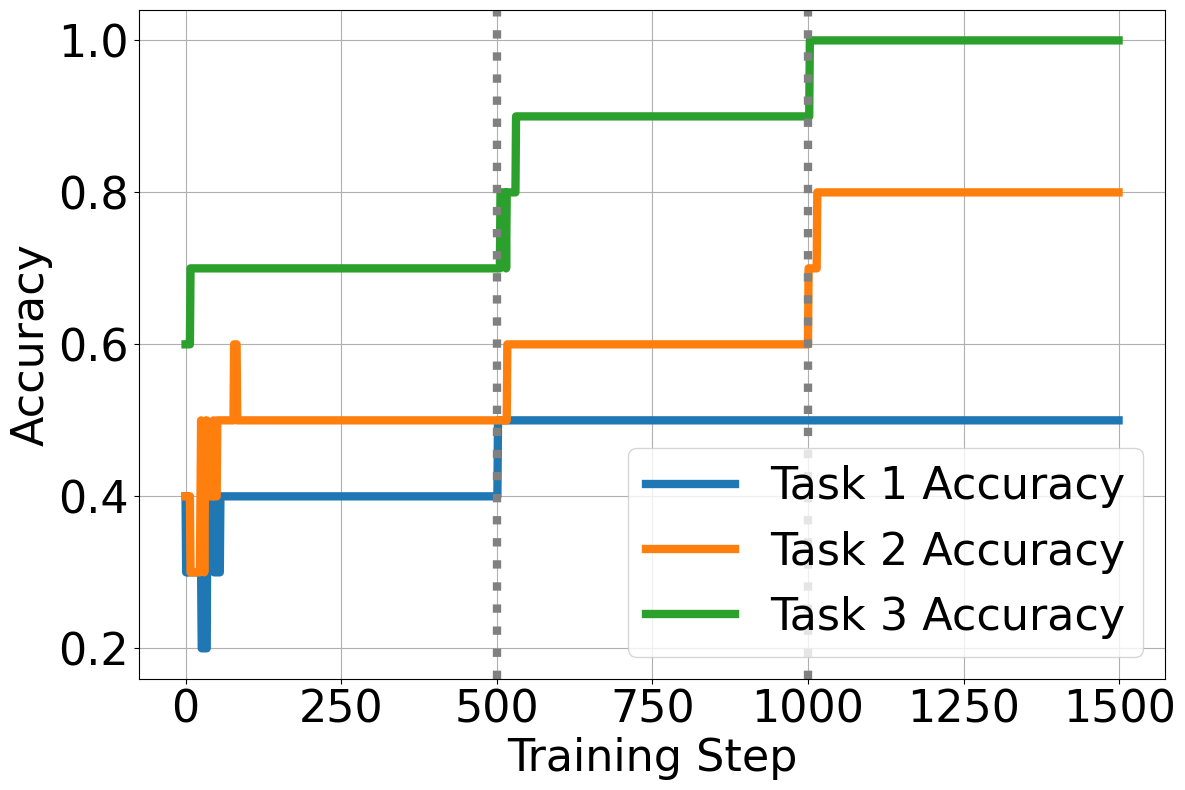}
        \caption{$A_{(m,m^{\prime})}$ = 0.3}
        \label{fig:acc_03}
    \end{subfigure}
    \hfill
    \begin{subfigure}[b]{0.32\textwidth}
        \includegraphics[width=\textwidth]{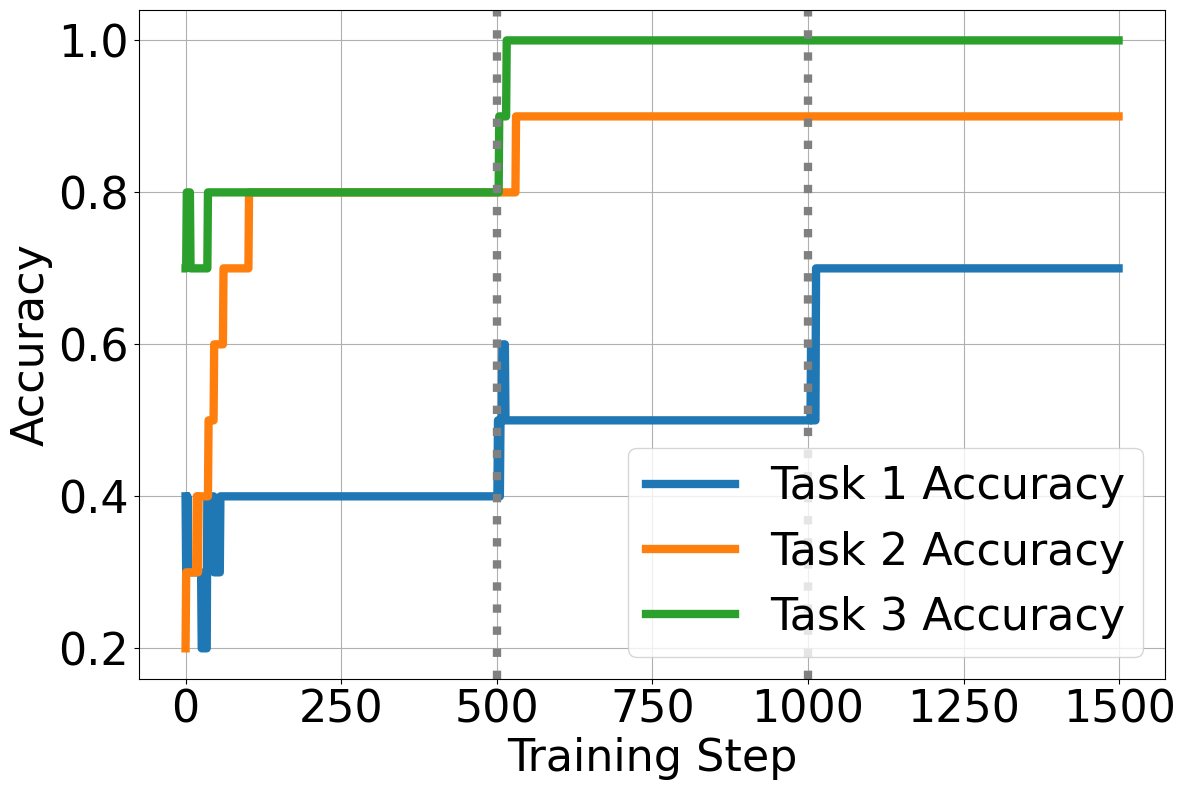}
        \caption{$A_{(m,m^{\prime})}$ = 0.7}
        \label{fig:acc_07}
    \end{subfigure}

    \begin{subfigure}[b]{0.32\textwidth}
        \includegraphics[width=\textwidth]{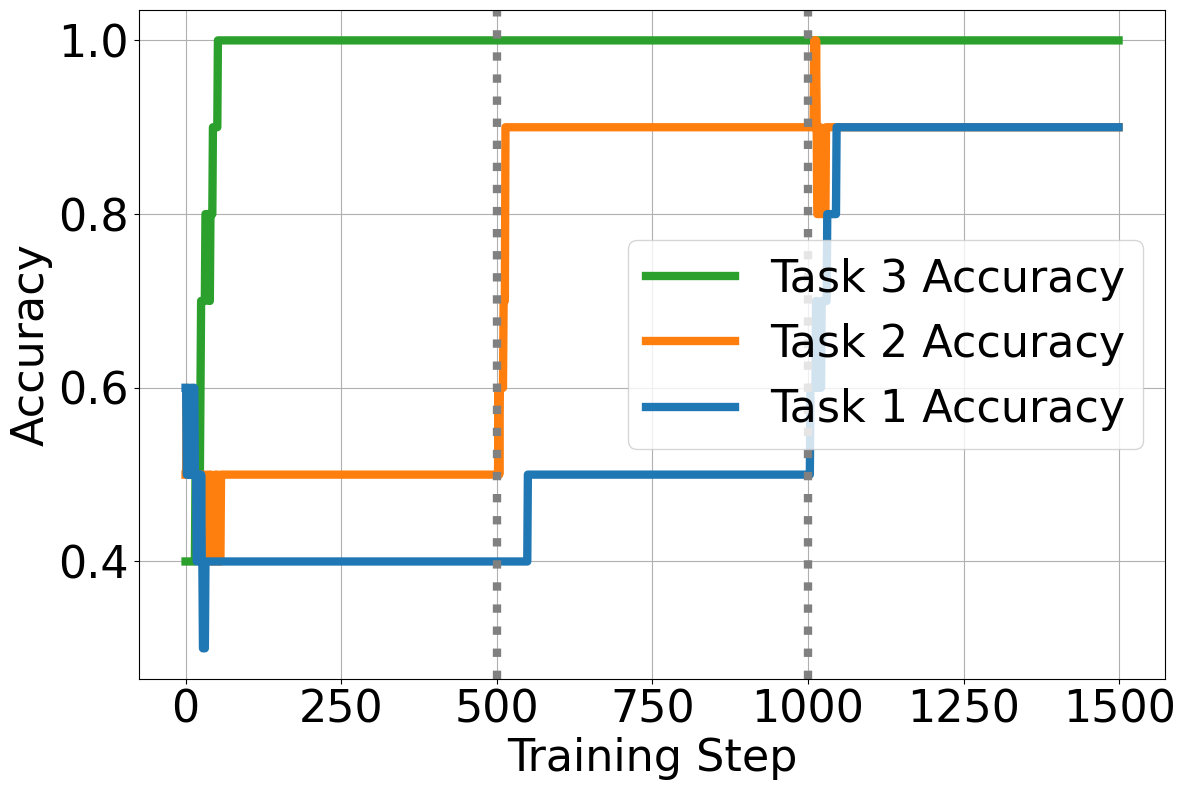}
        \caption{$A_{(m,m^{\prime})}$ = 0.1}
        \label{fig:acc_01_order}
    \end{subfigure}
    \hfill
    \begin{subfigure}[b]{0.32\textwidth}
        \includegraphics[width=\textwidth]{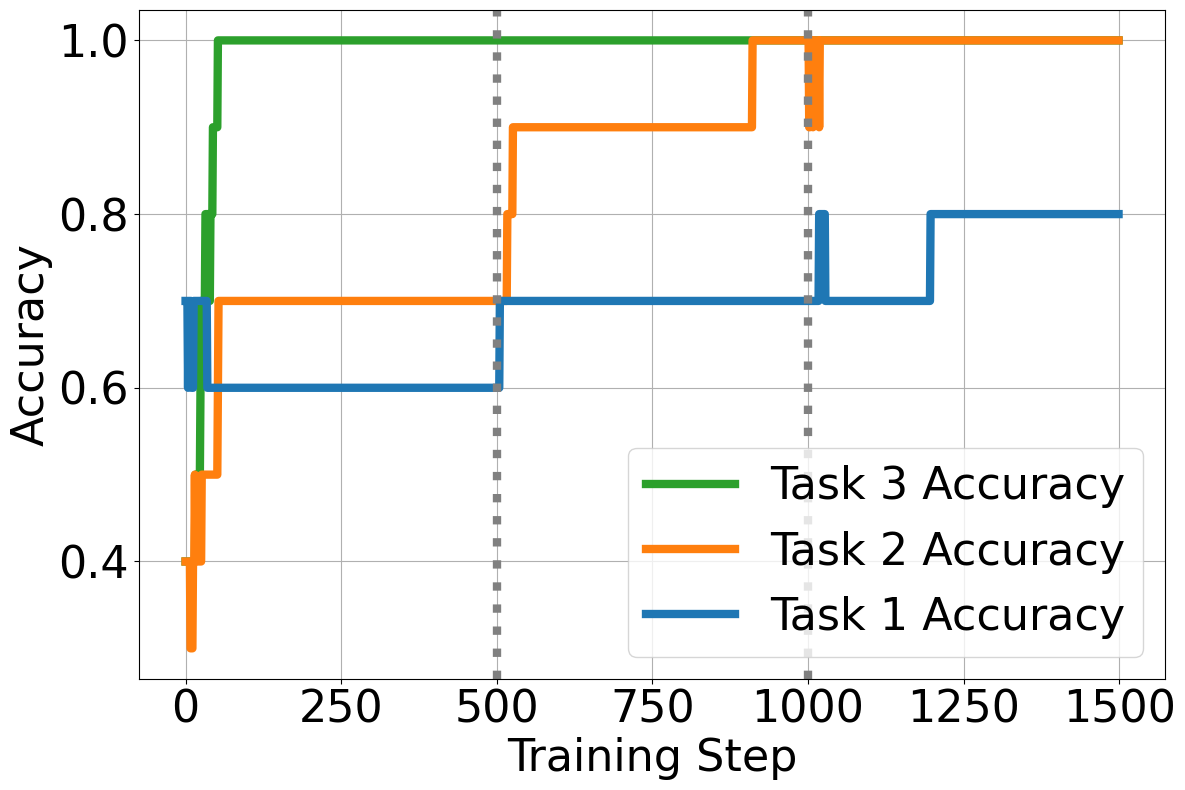}
        \caption{$A_{(m,m^{\prime})}$ = 0.3}
        \label{fig:acc_03_order}
    \end{subfigure}
    \hfill
    \begin{subfigure}[b]{0.32\textwidth}
        \includegraphics[width=\textwidth]{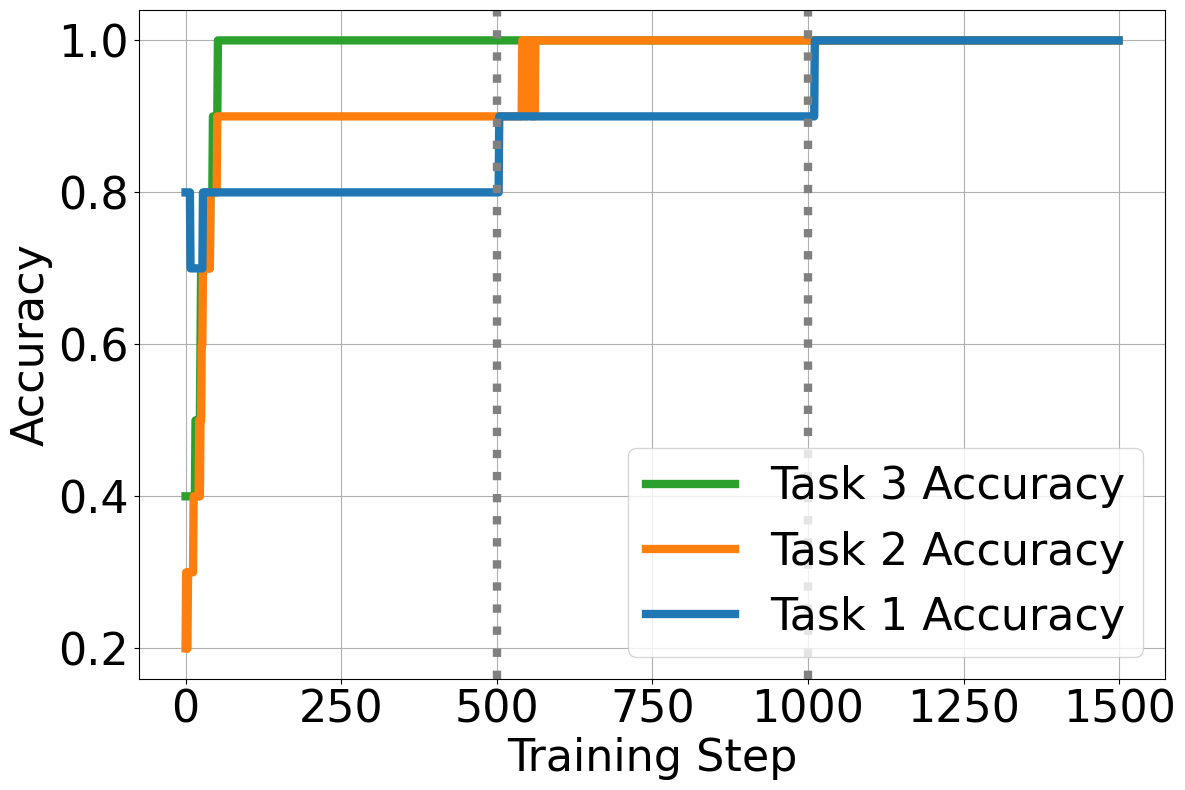}
        \caption{$A_{(m,m^{\prime})}$ = 0.7}
        \label{fig:acc_07_order}
    \end{subfigure}
    \caption{Accuracy under full data-replay continual training across different task orderings and correlation strengths. }
    \label{fig:acc_dynamics}
\end{figure}

\textbf{Catastrophic Forgetting Occurs with Lower Task Similarity.}
Figure~\ref{fig:cf_dynamics} investigates catastrophic forgetting under full data-replay continual learning by varying the inter-task correlation $A_{(m,m')}$. When the correlation is extremely low or near zero (e.g., $A_{(m,m')} = 0.01$), the tasks are nearly orthogonal—meaning their signal directions share no meaningful relationship. In this regime, newly introduced tasks overwrite earlier ones, and previously learned signal components decay, resulting in forgetting. As the correlation increases to 0.1, tasks begin to share overlapping features, which helps stabilize the representations and retain earlier task knowledge over time. These results highlight that task similarity, measured through correlation, is critical for mitigating forgetting: when tasks are orthogonal (i.e., $A \approx 0$), they compete destructively during training, whereas higher similarity allows for constructive feature reuse and knowledge retention.

\begin{figure}[!h]
    \centering
    \begin{subfigure}[b]{0.32\textwidth}
        \includegraphics[width=\textwidth]{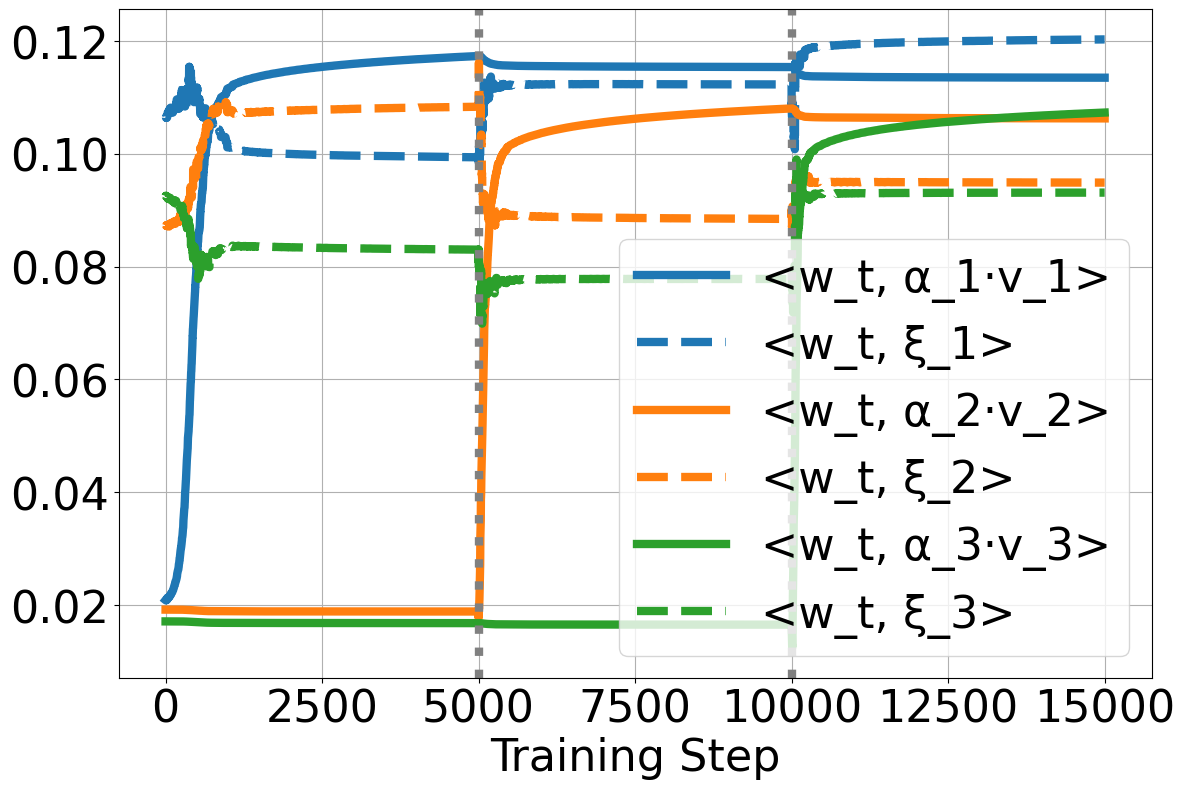}
        \caption{$A_{(m,m^{\prime})}$ = 0}
        \label{fig:cf_0}
    \end{subfigure}
    \hfill
    \begin{subfigure}[b]{0.32\textwidth}
        \includegraphics[width=\textwidth]{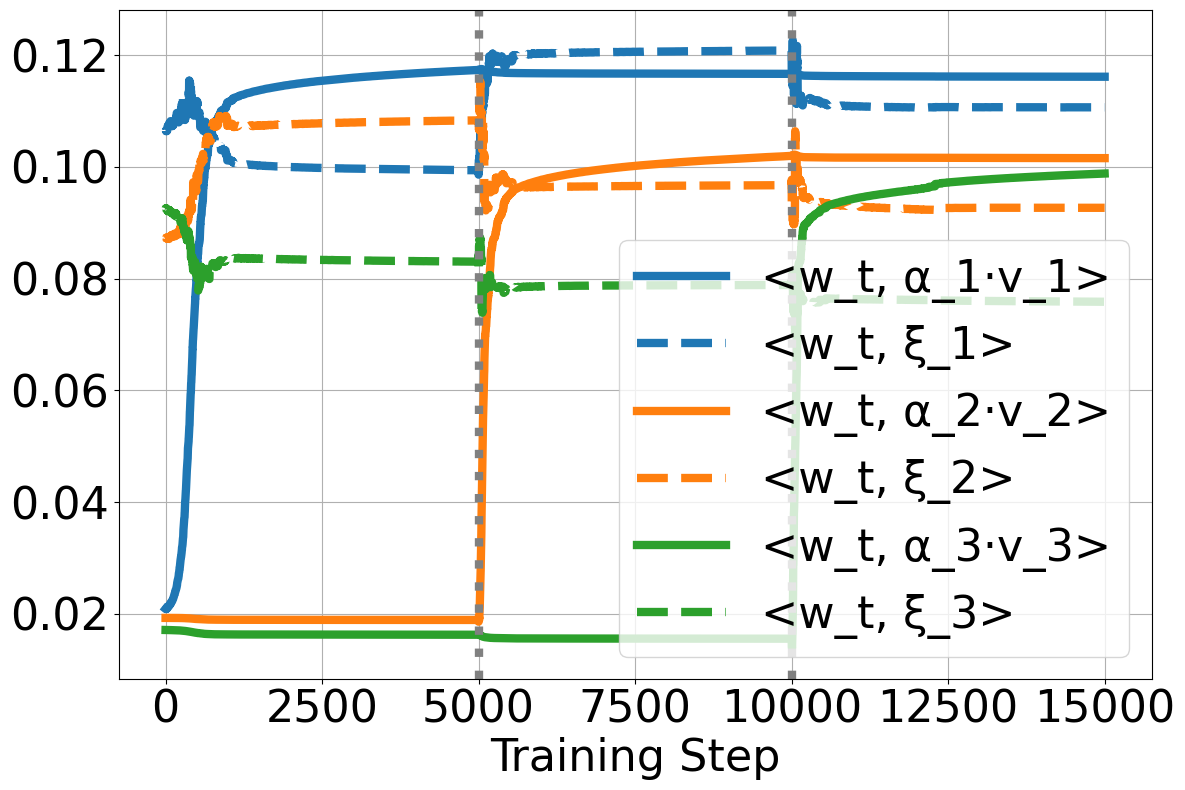}
        \caption{$A_{(m,m^{\prime})}$ = 0.01}
        \label{fig:cf_001}
    \end{subfigure}
    \hfill
    \begin{subfigure}[b]{0.32\textwidth}
        \includegraphics[width=\textwidth]{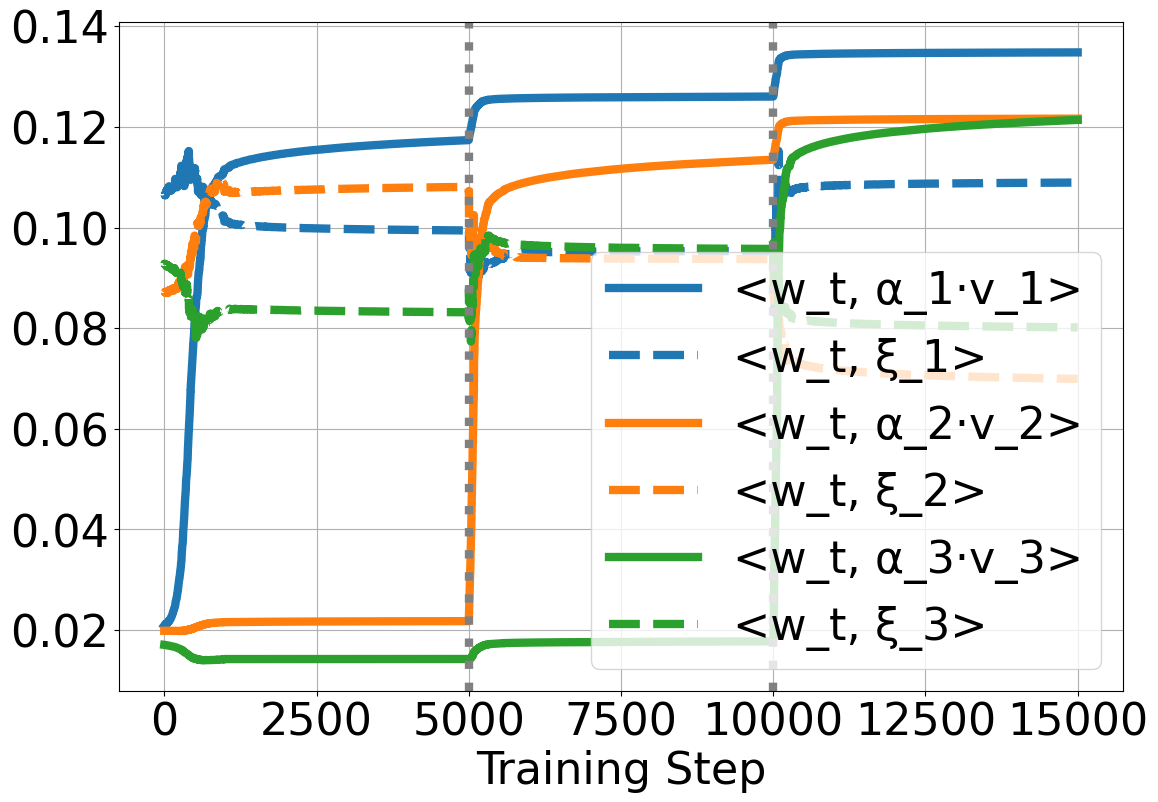}
        \caption{$A_{(m,m^{\prime})}$ = 0.1}
        \label{fig:cf_01}
    \end{subfigure}

    \begin{subfigure}[b]{0.32\textwidth}
        \includegraphics[width=\textwidth]{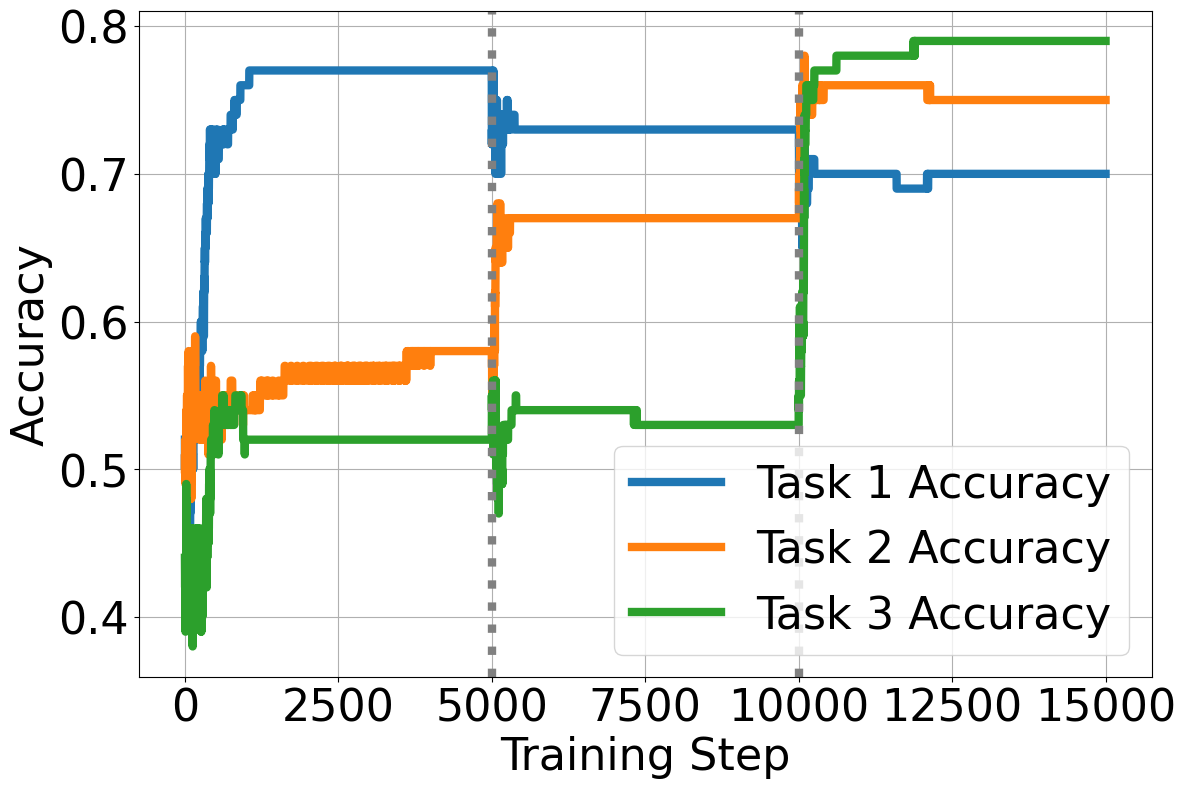}
        \caption{$A_{(m,m^{\prime})}$ = 0}
        \label{fig:acc_cf_0}
    \end{subfigure}
    \hfill
    \begin{subfigure}[b]{0.32\textwidth}
        \includegraphics[width=\textwidth]{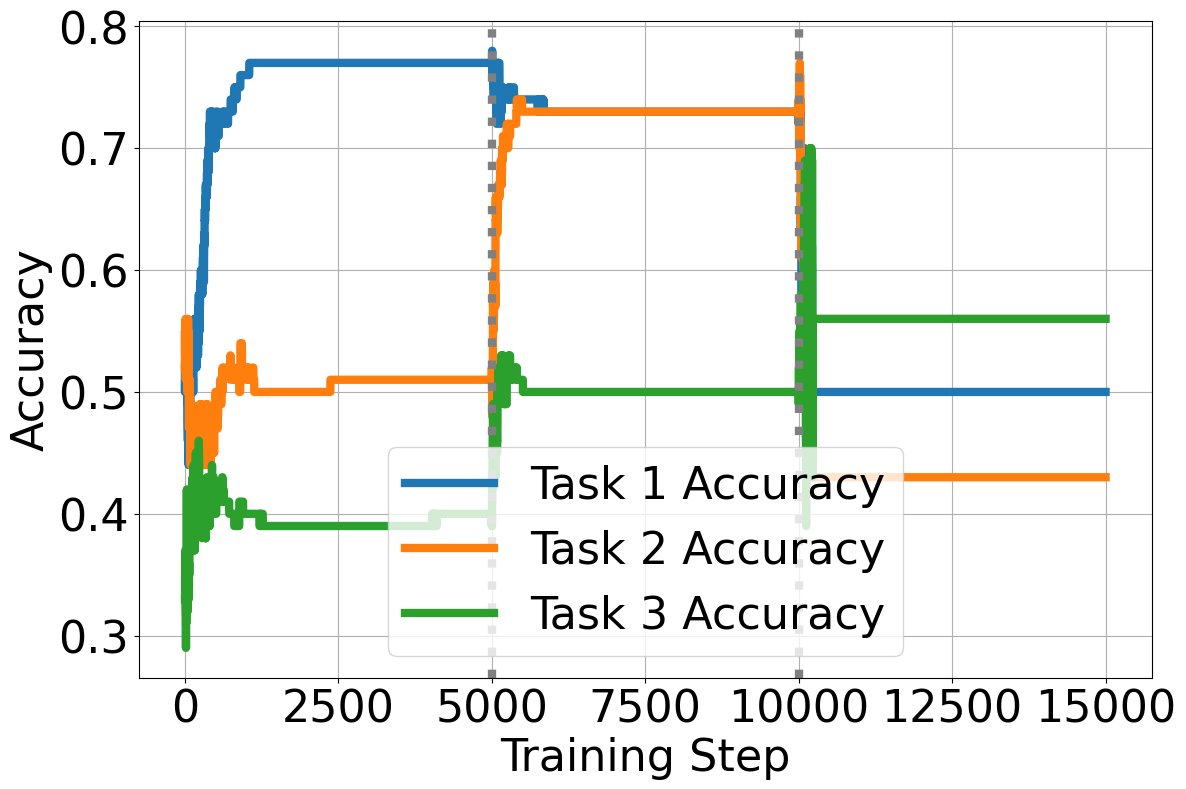}
        \caption{$A_{(m,m^{\prime})}$ = 0.01}
        \label{fig:acc_cf_001}
    \end{subfigure}
    \hfill
    \begin{subfigure}[b]{0.32\textwidth}
        \includegraphics[width=\textwidth]{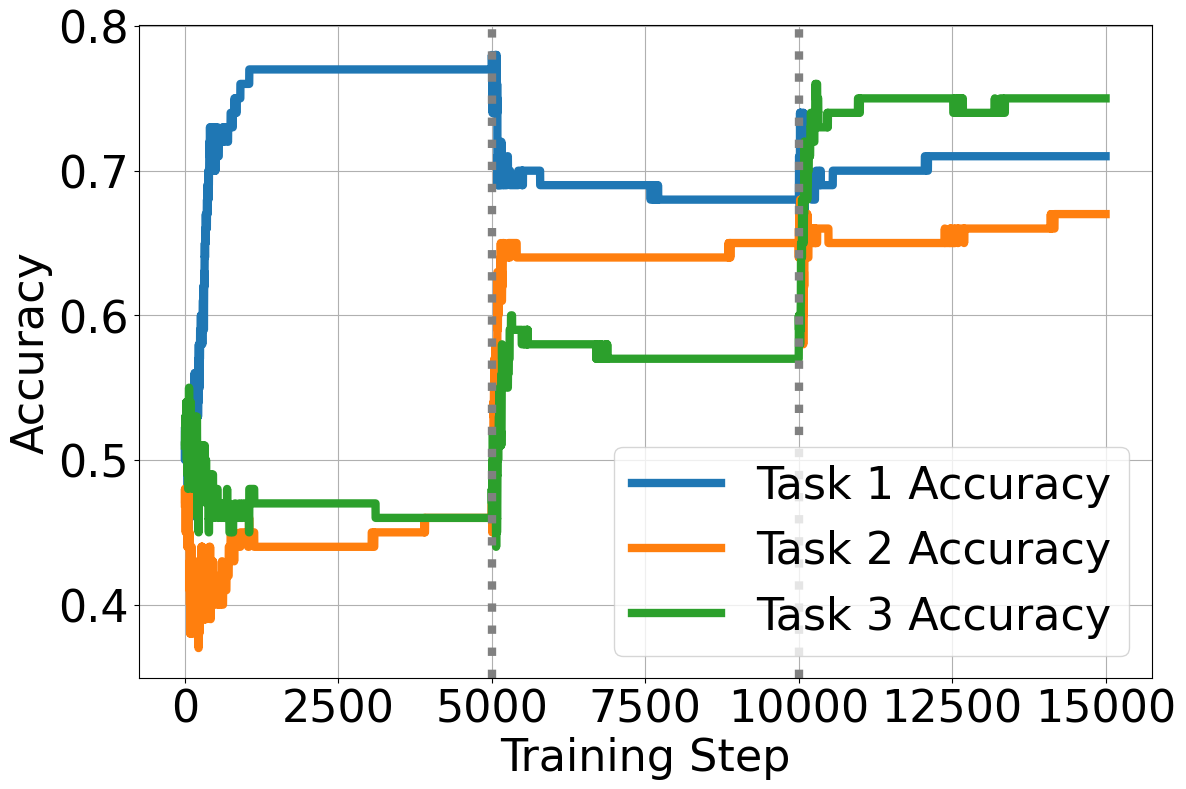}
        \caption{$A_{(m,m^{\prime})}$ = 0.1}
        \label{fig:acc_cf_01}
    \end{subfigure}

    \caption{Catastrophic forgetting under full data-replay continual training across various correlation strengths. }
    \label{fig:cf_dynamics}
\end{figure}

\subsection{Empirical Verification on Real-World Data}
\label{appendix:real_world_exp}

To address the limitations of synthetic data and shallow networks, and to further validate our theoretical findings in a realistic deep learning scenario, we conduct experiments using the {CIFAR-100} benchmark \cite{krizhevsky2009learning} with a {ResNet-18} architecture \cite{he2016deep}.

Crucially, to ensure a rigorous alignment with our theoretical framework—which analyzes {task-incremental binary classification} (see Definition~1 and Section~4), we adapt the CIFAR-100 tasks into binary classification problems (e.g., ``Class A vs. Rest''). This setup allows us to strictly verify the impact of {signal-to-noise ratio (SNR)} and {task correlation} ($A_{(m,m')}$) on feature learning and forgetting.

\paragraph{Experimental Setup.} 
We construct binary tasks from CIFAR-100 superclasses. For a target class $C$ (e.g., \textit{Bicycle}), positive samples are drawn from $C$, and negative samples are randomly sampled from disjoint classes to create a balanced binary dataset.
\begin{itemize}
    \item \textbf{Model:} We employ a ResNet-18 backbone. To isolate feature transfer from classifier interference, we utilize a multi-head architecture where the backbone is shared across tasks, but each task possesses an independent binary linear classifier.
    \item \textbf{Training:} Consistent with our theoretical premise, we employ \textit{Full Data Replay}. When training on Task $m$, the model is optimized on the union of all datasets $\mathcal{D}_{1} \cup \dots \cup \mathcal{D}_{m}$.
\end{itemize}

\begin{figure}
    \centering
    \includegraphics[width=0.85\linewidth]{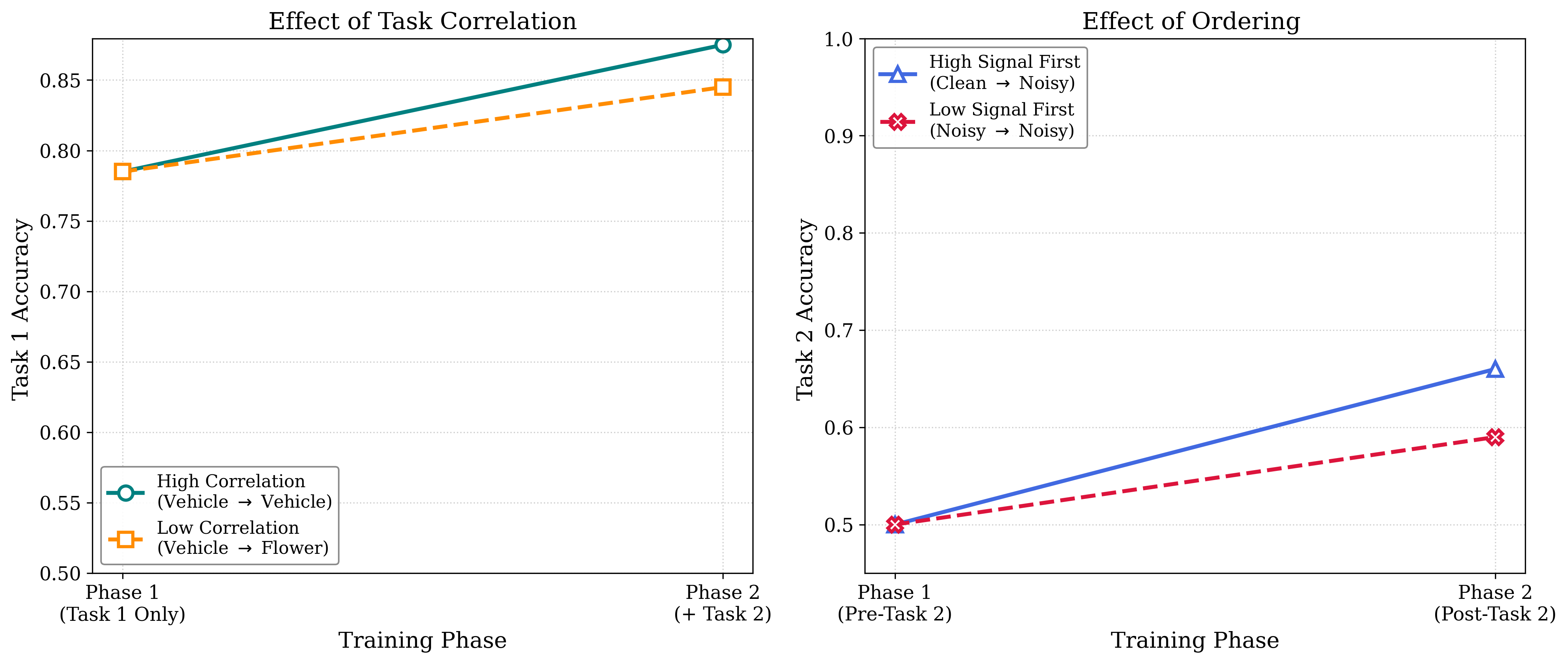}
    \caption{Empirical Verification of Correlation and Ordering Effects on CIFAR-100 (ResNet-18)}
    \label{fig:cifar100}
\end{figure}

\paragraph{Impact of Task Correlation.}
Theorem~1 suggests that high inter-task correlation ($A_{(m,m')} > 0$) facilitates signal accumulation. When tasks share feature subspaces, training on a subsequent Task $m$ should reinforce the features relevant to Task 1. We design two sequences:
\begin{enumerate}
    \item \textbf{High Correlation:} Task 1 (\textit{Bicycle}) $\to$ Task 2 (\textit{Motorcycle}). Both belong to the \textit{Vehicles 1} superclass and share semantic features (e.g., wheels).
    \item \textbf{Low Correlation:} Task 1 (\textit{Bicycle}) $\to$ Task 2 (\textit{Orchid}). The classes belong to disjoint superclasses and represent orthogonal tasks.
\end{enumerate}

\textbf{Results:} As illustrated in {Figure~\ref{fig:cifar100} (Left)}, while full replay allows both models to maintain performance, the {High Correlation} sequence (Teal line) exhibits superior retention and positive backward transfer compared to the Low Correlation sequence (Orange dashed line). The introduction of the semantically related \textit{Motorcycle} task reinforces the feature subspace used by \textit{Bicycle}, validating our theoretical insight that feature sharing is critical for robust signal accumulation.

\paragraph{Impact of Task Ordering and SNR.}
Theorem~2 uncovers that prioritizing higher-signal tasks facilitates the learning of subsequent tasks. To simulate varying SNR in real-world images, we inject strong Gaussian noise ($\sigma_{\text{noise}}$) into the inputs. We focus on two aligned tasks from the \textit{Fruit} superclass: \textit{Apple} (Task 1) and \textit{Pear} (Task 2). We investigate whether a high-signal Task 1 facilitates the learning of a low-signal Task 2:
\begin{enumerate}
    \item \textbf{High-Signal First (Setup A):} Task 1 is \textit{Clean Apple} ($\sigma=0$) $\to$ Task 2 is \textit{Noisy Pear} ($\sigma=5.0$).
    \item \textbf{Low-Signal First (Setup B):} Task 1 is \textit{Noisy Apple} ($\sigma=5.0$) $\to$ Task 2 is \textit{Noisy Pear} ($\sigma=5.0$).
\end{enumerate}

\textbf{Results:} {Figure~\ref{fig:cifar100} (Right)} demonstrates the critical role of ordering. In Setup A (Blue line), the model learns robust ``fruit" features from the Clean Apple task in Phase 1. When the Noisy Pear task arrives in Phase 2, the model leverages these pre-learned features to achieve significantly higher accuracy ($\sim 66\%$). In contrast, in Setup B (Red dashed line), the model struggles to learn meaningful features from the initial Noisy Apple task; consequently, its ability to learn the subsequent Noisy Pear task is impaired ($\sim 59\%$). This empirically confirms prioritizing high-signal tasks is essential for effective feature transfer to downstream low-signal tasks.

\section{Proof of Main Results}

\subsection{Notations.}
Given the iterate $\mathbf{W}^{(t)}$ in sequential training, we define the following notations during the training process:
\begin{itemize}
    \item The learning dynamics of task $k$'s feature at time $t$ under current task $m$: $\Gamma_{(m, r)}^{(t,k)}:=\langle\mathbf{w}_{(m,r)}^{(t)}, \mathbf{v}_k^*\rangle$.
    \item The learning dynamics of task $k$'s noise at time $t$ under current task $m$: $\Phi_{(m, r)}^{(t,k,j)}:=\langle\mathbf{w}_{(m,r)}^{(t)}, \bm{\xi}_k^j\rangle$.
    \item Derivative: $\ell_{m j}^{(t)}=\ell_{m j}(\mathbf{W}^{(t)})=1 /(1+e^{y_{m j} F(\mathbf{W}^{(t)}, \mathbf{x}_{m j})})$ for $j \in[n]$.
    % \item Derivative in each task: $\nu_m^{(t)}=\frac{1}{n} \sum_{j \in[n]} \ell_{m j}^{(t)}$ for $m \in[M]$.
    \item Maximum signal intensity: $\Gamma_{(m, r^*)}^{(t,k)} \equiv \Gamma_{(m, r_k^*)}^{(t,k)}$, where $r_k^*=\arg \max _{r \in[R]} \Gamma_{(0, r)}^{(0,k)} $.
    \item Maximum noise memorization: $\Phi_{(m, r^*)}^{(t,k,j)} \equiv \Phi_{(m, r_{kj}^*)}^{(t,k,j)} $, where $r_{k j}^*=\arg \max _{r \in[R]} y_{k j} \Phi_{(0, r)}^{(0,k,j)}$.
    % \item $\tau_{k j}^m$ : the first iteration $\max _{r \in[R]} y_{k j} \Phi_{(m, r)}^{(t,k,j)} \geq \Theta(R^{-\frac{1}{3}})$.
\end{itemize}

\subsection{Learning dynamics of task $k$'s feature and noise  at time $t$ under current task $m$.}
\label{sec:update_of_Phi_Gamma_replay}

According to Definition~\ref{def:data_distribution}, we assume that tasks share common features, i.e.,$A_{(m,m^{\prime})} > 0$. As a result, even without direct training on the target task, the model can still accumulate relevant features through similar tasks. Furthermore, based on the gradient computation, the learned signal can be characterized as follows:

\begin{equation}\label{eq:update_signal_data_replay}
        \begin{aligned}
            \Gamma_{(m, r)}^{(t,k)} &= \langle\mathbf{w}_{(m,r)}^{(t)}, \mathbf{v}_k^*\rangle  \\
            & = \langle\mathbf{w}_{(m,r)}^{(t-1)}-\eta \nabla_{\mathbf{w}_r} L(\mathbf{W}_m^{(t-1)},D_1, ...,D_m), \mathbf{v}_k^*\rangle \\
            & = \langle\mathbf{w}_{(m,r)}^{(t-1)}+\frac{\eta}{nm} \sum_{p \in[m]} \sum_{j \in[n]} y_{k j} \ell_{p j}(\mathbf{W}_m^{(t-1)})  [ 3\langle\mathbf{w}_{(m,r)}^{(t-1)}, \alpha_p y_{p j} \mathbf{v}_p^*\rangle^2 \cdot \alpha_p y_{p j} \mathbf{v}_p^* ], \mathbf{v}_k^*\rangle \\ 
            & = \Gamma_{(m, r)}^{(t-1,k)} + \frac{\eta}{nm} \sum_{j \in[n]} \sum_{p \in[m]}  3\alpha_p^3 A_{(p,k)} \ell_{p j}(\mathbf{W}_m^{(t-1)}) (\Gamma_{(m, r)}^{(t-1,p)})^2   \\
            & = \Gamma_{(m, r)}^{(0,k)} + \frac{\eta}{nm} \sum_{j \in[n]} \sum_{p \in[m]} \sum_{s \in[T_m]} 3\alpha_p^3 A_{(p,k)} \ell_{p j}(\mathbf{W}_m^{(s-1)}) (\Gamma_{(m, r)}^{(s-1,p)})^2   \\
            & =\Gamma_{(0, r)}^{(0,k)} + \frac{\eta}{nm} \sum_{j \in[n]} \sum_{q \in [m]}\sum_{p \in[q]} \sum_{s \in[T_q]} 3\alpha_p^3 A_{(p,k)} \ell_{p j}(\mathbf{W}_q^{(s-1)}) (\Gamma_{(q, r)}^{(s-1,p)})^2.
        \end{aligned}
    \end{equation}

When considering noise memorization, it can be observed that the noise also continues to accumulate regardless of the relationship between task $m$ and $k$.
\begin{equation}\label{eq:update_noise_data_replay}
        \begin{aligned}
            &\Phi_{(m, r)}^{(t,k,j)} = \langle\mathbf{w}_{(m,r)}^{(t)}, \bm{\xi}_{kj}\rangle = \langle\mathbf{w}_{(m,r)}^{(t)}, \bm{\xi}_{kj}\rangle \\
            & = \langle\mathbf{w}_{(m,r)}^{(t-1)}-\eta \nabla_{\mathbf{w}_r} L(\mathbf{W}_m^{(t-1)},D_1, ...,D_m), \bm{\xi}_{kj}\rangle \\
            & = \langle\mathbf{w}_{(m,r)}^{(t-1)}+\frac{\eta}{nm} \sum_{j^{\prime} \in[n]} y_{m j^{\prime}} \ell_{m j^{\prime}}(\mathbf{W}_m^{(t-1)})  [ 3\langle\mathbf{w}_{(m,r)}^{(t-1)}, \alpha y_{m j^{\prime}} \mathbf{v}_m^*\rangle^2 \cdot \alpha y_{m j^{\prime}} \mathbf{v}_m^* + 3\langle\mathbf{w}_{(m,r)}^{(t-1)}, \bm{\xi}_{m j^{\prime}}\rangle^2 \cdot \bm{\xi}_{m j^{\prime}} ], \bm{\xi}_{kj}\rangle \\ 
            & + \langle \sum_{p=1}^{m} \frac{\eta}{nm} \sum_{j^{\prime} \in[n]} y_{p j^{\prime}} \ell_{p j^{\prime}}(\mathbf{W}_m^{(t-1)})  [ 3\langle\mathbf{w}_{(m,r)}^{(t-1)}, \alpha y_{p j^{\prime}} \mathbf{v}_p^*\rangle^2 \cdot \alpha y_{p j^{\prime}} \mathbf{v}_p^* + 3\langle\mathbf{w}_{(m,r)}^{(t-1)}, \bm{\xi}_{p j^{\prime}}\rangle^2 \cdot \bm{\xi}_{p j^{\prime}} ], \bm{\xi}_{kj}\rangle \\ 
            & = \Phi_{(m, r)}^{(t-1,k,j)} +  \sum_{p=1}^{m}  \frac{\eta}{nm} \sum_{j^{\prime} \in[n]} y_{p j^{\prime}} \ell_{p j^{\prime}}(\mathbf{W}_m^{(t-1)}) (\Phi_{(m, r)}^{(t-1,p,j)})^2 \langle \bm{\xi}_{p j^{\prime}},\bm{\xi}_{k j} \rangle.
        \end{aligned}
\end{equation}

\subsection{Proof of Theorem~\ref{thm:forget_m_ffs_tasks}.}

In this section, we present the proof of Theorem~\ref{thm:forget_m_ffs_tasks} in two parts. The first part analyzes the failure of signal learning after training on $k$ tasks (i.e., before task $k+1$ ). The second part focuses on noise memorization after training on $m>k$ tasks (i.e., before task $m+1$ ) and further considers two scenarios in the later phase: one where learning continues to fail, and another where signal learning is enhanced.

In the following, we show that the signal learning is always under control before training task $m+1$.

\begin{lemma}\label{lem:max_Gamma_task_k}
    In the data replay training process on task $m$, with probability at least $1-1 /\operatorname{poly}(d)$, it holds that
    $\max _{r \in[R], p \in [k]}|\Gamma_{(k,r)}^{(t,p)}| \leq \widetilde{O}(\sigma_0) $ for any $t \in [T_k], p \in [k]$.
\end{lemma}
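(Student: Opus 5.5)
We prove the bound by induction over the task index $q=1,\dots,k$ and, within each task, over the iteration $t$. The induction uses the cumulative signal update \eqref{eq:update_signal_data_replay} together with the Theorem~\ref{thm:forget_m_ffs_tasks} SNR upper bound $\sum_{p\le k}(1-\frac{p-1}{k})\alpha_p^3A_{(p,k)}/(\sigma_\xi\sqrt d)^3\lesssim 1/n$.

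\textbf{Initialization.} Since $\mathbf{w}_r^{(0)}\sim\mathcal N(0,\sigma_0^2\mathbf I_d)$ and $\|\mathbf v_p^*\|=1$, a Gaussian tail bound and a union bound over $r\in[R]$, $p\in[M]$ give $\max_{r,p}|\Gamma^{(0,p)}_{(0,r)}|\le \sigma_0\sqrt{2\log(RMd)}=\widetilde O(\sigma_0)$ with probability $1-1/\mathrm{poly}(d)$. In the same way, $|\Phi^{(0,p,j)}_{(0,r)}|=\widetilde O(\sigma_0\sigma_\xi\sqrt d)$.

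\textbf{Inductive hypothesis.} Assume $\max_{r,p}|\Gamma^{(s,p)}_{(q,r)}|\le 2\max_{r,p}|\Gamma^{(0,p)}_{(0,r)}|$ for all earlier iterates. Using $\ell_{pj}\le 1$ in \eqref{eq:update_signal_data_replay}, each step changes $\Gamma^{(\cdot,k')}_{(q,r)}$ by at most $\frac{3\eta}{q}\sum_{p\le q}\alpha_p^3A_{(p,k')}\widetilde O(\sigma_0^2)$.

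\textbf{Summing the increments.} Summed over the training horizon of tasks $q\le k$, the total drift is bounded by
\[
\widetilde O(\eta\sigma_0^2)\sum_{q\le k}\frac{T_q}{q}\sum_{p\le q}\alpha_p^3A_{(p,k')}.
\]
Exchanging the order of summation shows that task $p$ is replayed during tasks $q=p,\dots,k$ with weight $1/q$. This is exactly where the coefficient $(1-\frac{p-1}{k})$ of the SNR condition comes from, up to logarithmic factors. It then suffices to show this drift is $\le \widetilde O(\sigma_0)$, i.e.
\[
\eta\,\sigma_0\,\sum_q T_q\cdot(\text{weighted signal})\lesssim \widetilde O(1),
\]
after which the induction closes.

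\textbf{Bounding the horizon (main obstacle).} The horizon $T_q\le \mathrm T_\xi^q=\mathrm T_\xi^- + O(\log d)$ is governed by noise memorization. I would first show, as in \cite{jelassi2022towards,cao2022benign}, that $\Phi^{(t,p,j)}_{(q,r^*)}$ grows like a cubic tensor-power iteration, $\Phi^{(t+1)}\ge \Phi^{(t)}+\Theta(\eta\sigma_\xi^2 d/(nq))(\Phi^{(t)})^2$. The cross terms $\langle\bm\xi_{pj'},\bm\xi_{kj}\rangle=\widetilde O(\sigma_\xi^2\sqrt d)$ are lower order because $n,R=\mathrm{polylog}(d)$. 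This growth reaches $\Theta(R^{-1/3})$ within $\mathrm T_\xi^-\approx \widetilde\Theta\big(nq/(\eta\sigma_0\sigma_\xi^3 d^{3/2})\big)$ steps.

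Plugging this into the drift gives
\[
\widetilde O\Big(\sigma_0\,n\,\sum_p(1-\tfrac{p-1}{k})\alpha_p^3A_{(p,k)}/(\sigma_\xi\sqrt d)^3\Big)\le\widetilde O(\sigma_0),
\]
by the SNR upper bound, which also justifies choosing the comparison quantity so that the $k^2$ factors from summing $q\,T_q/q$ cancel.

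The delicate part is that the tensor-power race between $\Gamma$ and $\Phi$ must be run simultaneously: the signal has initial value $\sigma_0$ and rate $\alpha^3$, while the noise has initial value $\sigma_0\sigma_\xi\sqrt d$ and rate $\sigma_\xi^2 d/n$. We must show the noise reaches its threshold first. After that, $\ell_{pj}$ decays, so the post-memorization phase ($O(\log d)$ extra steps, with $\ell_{pj}$ small) adds only negligible signal.

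I would handle this with the standard comparison lemma for sequences $x_{t+1}\ge x_t+c\,x_t^2$ versus $y_{t+1}\le y_t+c'y_t^2$ from \cite{jelassi2022towards}, applied task by task. The starting values at task $q$ are inherited from the end of task $q-1$ and are still $\widetilde O(\sigma_0)$ by the inductive hypothesis. A final union bound over $r,p,t\le\mathrm{poly}(d)$ gives probability $1-1/\mathrm{poly}(d)$.
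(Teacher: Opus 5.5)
Your skeleton matches the paper's: the initialization bound $\max_{r,p}|\Gamma^{(0,p)}_{(0,r)}|=\widetilde O(\sigma_0)$, an induction that plugs $\ell_{pj}\le 1$ and $\Gamma^2\le\widetilde O(\sigma_0^2)$ into \eqref{eq:update_signal_data_replay}, and an exchange of sums charging task $p$ to phases $q=p,\dots,k$. You diverge in how the total drift is made $\widetilde O(\sigma_0)$. The paper uses neither the SNR condition nor any noise-memorization argument in this lemma. It simply stipulates that every phase lasts $T_v\le\widetilde O\big(m/(\eta\sigma_0\sum_{p\le k}(k-p+1)\alpha_p^3A_{(p,k)})\big)$ iterations. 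With that choice the drift $\frac{\eta}{m}T_v\,\widetilde O(\sigma_0^2)\sum_{p}(k-p+1)\alpha_p^3A_{(p,k)}$ is $\widetilde O(\sigma_0)$ by construction. The lemma only concerns $t\in[T_k]$, and $T_k$ is yours to choose, so nothing more is needed.

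Your substitute for this choice has a genuine gap: the induction does not close.
\begin{itemize}
\item With your own horizon $T_q\approx\mathrm{T}_{\xi}^{q}\asymp nq/(\eta\sigma_0(\sigma_\xi\sqrt d)^3)$, which grows linearly in $q$ because each sample carries weight $1/(nq)$ in phase $q$, you get $\sum_{q=p}^{k}T_q/q\asymp(k-p+1)\,n/(\eta\sigma_0(\sigma_\xi\sqrt d)^3)$. The drift is then of order $\widetilde O(\sigma_0)\,n\sum_{p}(k-p+1)\alpha_p^3A_{(p,k)}/(\sigma_\xi\sqrt d)^3$. That is $k$ times the quantity controlled by the SNR hypothesis, so the hypothesis yields only $k\,\widetilde O(\sigma_0)$.
\item Your identification of $\sum_{q=p}^k 1/q$ with $1-\frac{p-1}{k}$ presupposes equal phase lengths, which contradicts $T_q\propto q$. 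The promised cancellation of ``$k^2$ factors'' is never exhibited.
\item The leftover factor cannot be swept into $\widetilde O$. The increment is quadratic in $\Gamma$, so the hypothesis $|\Gamma|\le 2\max|\Gamma^{(0)}|$ reproduces itself only if (total effective rate)$\,\times\max|\Gamma^{(0)}|$ is below a small absolute constant. A recursion $z_{t+1}=z_t+hz_t^2$ with $hTz_0\gtrsim 1$ really does escape, so upper-bounding $\ell$ by $1$ cannot certify the claim. An SNR bound that holds only up to a constant cannot pay for a factor $k$, nor, strictly, for the $\sqrt{\log d}$ inside $\max|\Gamma^{(0)}|$.
\item The horizon estimate is also less clean than you claim. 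At the start of phase $q$, each already-memorized sample sharing the neuron (with $|\Phi|\approx R^{-1/3}$ and $\ell=\Theta(1)$) contributes a cross term of order $R^{-2/3}\sigma_\xi^2\sqrt d$. This exceeds the fresh sample's diagonal term $\Theta(\sigma_\xi^2 d)(\sigma_0\sigma_\xi\sqrt d)^2$ by a polynomial factor in $d$. So $n,R=\mathrm{polylog}(d)$ alone does not make the cross terms negligible.
\end{itemize}

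The clean fix is the paper's: set $T_k=kT_v$ with $T_v$ as above. Your summation then finishes the induction, and the signal--noise race is unnecessary for this lemma. If you want $T_q\asymp\mathrm{T}_{\xi}^{q}$ instead, you need an SNR upper bound smaller by a factor of order $k\,\mathrm{polylog}(d)$.
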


\begin{proof}[\bf Proof of \cref{lem:max_Gamma_task_k}]
    We consider the induction process to prove the statement. We assume that, for any $s \leq t$, it holds that $\max _{r \in[R], p \in [k]}|\Gamma_{(k,r)}^{(t,p)}| \leq \widetilde{O}(\sigma_0) $. Then, we proceed to analyze the case for $s=t+1.$ According to \cref{eq:update_signal_data_replay}, we have:
    
    \begin{equation}
        \begin{aligned}
            |\Gamma_{(k, r)}^{(t,k)}| &= |\langle\mathbf{w}_{(m,r)}^{(t)}, \mathbf{v}_k^*\rangle|  \\
            & \leq  |\Gamma_{(k, r)}^{(0,k)}| + |\frac{\eta}{nm} \sum_{j \in[n]} \sum_{p \in[k]} \sum_{s \in[T_k]} 3\alpha_p^3 A_{(p,k)} \ell_{p j}(\mathbf{W}_m^{(s-1)}) (\Gamma_{(k, r)}^{(s-1,p)})^2|   \\
            & = |\Gamma_{(0, r)}^{(0,k)}| + |\frac{\eta}{nm} \sum_{j \in[n]} \sum_{q \in [k]}\sum_{p \in[q]} \sum_{s \in[T_q]} 3\alpha_p^3 A_{(p,k)} \ell_{p j}(\mathbf{W}_q^{(s-1)}) (\Gamma_{(q, r)}^{(s-1,p)})^2|\\
            & \leq \widetilde{O}(\sigma_0) + |\frac{{\eta} }{m}\sum_{q \in [k]}\sum_{p \in[q]} 3\alpha_p^3 A_{(p,k)} T_q  \widetilde{O}(\sigma_0^2) |\\
            & \overset{(i)}{\leq} \widetilde{O}(\sigma_0) + |\frac{\eta}{m} T_v \widetilde{O}(\sigma_0^2)  \cdot \sum_{p=1}^k (k-p+1) \alpha_p^3 A_{(p, k)}|  \\
            & \overset{(ii)}{\leq} \widetilde{O}(\sigma_0).
        \end{aligned}
    \end{equation}
    Here, $(i)$ follows from the assumption that every task before $k$ is trained for the same number of iterations $T_v$; $(ii)$ drives from the choice of $T_v \leq \widetilde{O} \left( \frac{m}{\eta \sigma_0  \sum_{p=1}^k (k-p+1) \alpha_p^3 A_{(p, k)}} \right) $.
\end{proof}

\begin{lemma}\label{lem:m_yPhi_leq_Rd}
    Let $T_{\xi}^{-} = \frac{nm}{\eta \sigma_0 (\sigma_{\xi} \sqrt{d})^3} $. In the data replay training process on task $k$, with probability at least $1-1 /\operatorname{poly}(d)$, it holds that:
    $$
    \max _{r \in[R], p \in[k], j \in[n]} y_{mj}\Phi_{(k, r)}^{(t,p,j)} \leq (Rd)^{-1/3} \quad \text { for any } \quad t \leq \mathrm{T}_{\xi}^{-}, p \in [k].
    $$ 
\end{lemma}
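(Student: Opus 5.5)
We argue by induction over the iterations $t \leq \mathrm{T}_{\xi}^{-}$, in the same way as \cref{lem:max_Gamma_task_k}, but now for the noise coordinates. The induction hypothesis is that for all $s \leq t$ we have $\max_{r,p,j} |\Phi_{(k,r)}^{(s,p,j)}| \leq (Rd)^{-1/3}$. We control the absolute value, which also bounds the signed quantity in the statement.

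First we fix the high-probability event on which the argument runs. On a $1-1/\operatorname{poly}(d)$ event, standard Gaussian concentration gives three facts:
\begin{itemize}
    \item $\|\bm{\xi}_{pj}\|^2 = \Theta(\sigma_\xi^2 d)$;
    \item $|\langle \bm{\xi}_{pj}, \bm{\xi}_{p'j'}\rangle| \leq \widetilde{O}(\sigma_\xi^2\sqrt d)$ for $(p,j)\neq(p',j')$;
    \item $|\Phi_{(0,r)}^{(0,p,j)}| \leq \widetilde{O}(\sigma_0\sigma_\xi\sqrt d)$.
\end{itemize}
Since $\mathbf{H}$ has rank $d-O(M)$, the projection does not affect these rates. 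A union bound over $r\in[R]$, $p\in[M]$, $j\in[n]$ costs only polylog factors.

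Next we unroll the recursion \cref{eq:update_noise_data_replay} back through all previous task phases $q \leq k$, exactly as was done for $\Gamma$ in \cref{eq:update_signal_data_replay}. Using $\ell_{pj}\le 1$, we split the increment of $\Phi^{(\cdot,k,j)}$ into two parts:
\begin{itemize}
    \item the diagonal term $(p,j')=(k,j)$, bounded by $\frac{3\eta}{nm}(\Phi^{(s,k,j)})^2\cdot\Theta(\sigma_\xi^2 d)$;
    \item the cross terms, at most $nm$ of them, bounded by $\frac{3\eta}{nm}\sum_{p,j'}(\Phi^{(s,p,j')})^2\,\widetilde{O}(\sigma_\xi^2\sqrt d)\le 3\eta\,\max(\Phi^{(s)})^2\,\widetilde{O}(\sigma_\xi^2\sqrt d)$.
\end{itemize}
The cross terms carry a $1/\sqrt d$ factor relative to $nm$ diagonal terms. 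Under Condition~\ref{con:parameter} we have $n,R=\operatorname{polylog}(d)$, so they are dominated by the diagonal term up to polylog factors.

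The core step is a tensor-power (cubic growth) comparison. Let $x_s$ be the running maximum of $|\Phi|$. It satisfies $x_{s+1}\le x_s+\frac{C\eta\sigma_\xi^2 d}{nm}x_s^2$, with $x_0=\widetilde O(\sigma_0\sigma_\xi\sqrt d)$. The standard Allen-Zhu--Li / Jelassi--Li argument (the discrete analogue of $\dot x = a x^2$, which blows up at time $1/(a x_0)$) shows that $x_s$ stays within a constant factor of $x_0$ for
\[
s\lesssim \frac{nm}{\eta\sigma_\xi^2 d\,x_0}=\frac{nm}{\eta\sigma_0(\sigma_\xi\sqrt d)^3}=\mathrm{T}_\xi^{-}.
\]
This is exactly why $\mathrm{T}_\xi^{-}$ has its stated form. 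We then need $x_0\cdot O(1)\le (Rd)^{-1/3}$. With $\sigma_\xi=\Theta(d^{-0.51})$ and $\sigma_0=\Theta((n/R)^{1/3}d^{-0.52})$, we get $\sigma_0\sigma_\xi\sqrt d=\Theta((n/R)^{1/3} d^{-0.53})$. This is smaller than $(Rd)^{-1/3}=R^{-1/3}d^{-0.333}$ by a polynomial factor in $d$, which absorbs all polylog losses.

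We also need the signal-gradient contribution to vanish in the $\bm{\xi}$ direction. This holds because $\mathbf{H}\mathbf{v}_p^*=0$, so the noise is orthogonal to every $\mathbf{v}_p^*$ and that contribution is exactly zero, which closes the induction.

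The main obstacle is carrying the comparison across task boundaries. When a new task $q$ begins, the normalization $1/(nq)$ changes and new noise vectors enter. I would handle this by noting two things. First, the per-step growth rate $\frac{\eta\sigma_\xi^2 d}{nq}$ only decreases in $q$. Second, the newly added noise coordinates begin at their initialization scale, since their only prior updates came through cross terms that are already controlled. So summing the phase lengths against the same blow-up budget, with $m$ (the current task index) in the numerator of $\mathrm{T}_\xi^{-}$ as an upper bound on the normalization, keeps the cumulative growth factor at $O(1)$.
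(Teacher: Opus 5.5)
Your skeleton matches the paper's. Both arguments use induction in $t$, unroll \eqref{eq:update_noise_data_replay}, and isolate the self-interaction term with coefficient $\Theta(\eta\sigma_\xi^2 d/(nm))$. Both then use a tensor-power hitting-time estimate giving $\mathrm{T}_\xi^{-}\asymp nm/(\eta\sigma_0(\sigma_\xi\sqrt d)^3)$, closed by the polynomial gap $\sigma_0\sigma_\xi\sqrt d\ll (Rd)^{-1/3}$.

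Where you differ is the cross terms. The paper keeps them as an additive perturbation:
\begin{itemize}
\item it bounds $(\Phi^{(s,p,j')})^2\le (Rd)^{-2/3}$ by the induction hypothesis;
\item it bounds the time-summed loss derivatives by Lemma~\ref{lem:k_sum_l}, which rests on the signal dynamics;
\item it then uses the SNR lower bound of Theorem~\ref{thm:forget_m_ffs_tasks}, whose $k^2$ counts the cross blocks, to make the perturbation $\lesssim\sigma_0\sigma_\xi\sqrt d$. This lets the $z^{(0)}+A\sum_s (z^{(s)})^2\pm C$ template of Lemmas~\ref{lem:K16} and~\ref{lem:E7} apply.
\end{itemize}
You use only $\ell_{pj'}\le 1$ and fold the cross terms multiplicatively into a Riccati recursion $x_{s+1}\le x_s+ax_s^2$ for the running maximum. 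This is more elementary and self-contained: it needs no Lemma~\ref{lem:max_Gamma_task_k}, no Lemma~\ref{lem:k_sum_l}, and no SNR hypothesis. It costs the condition $nm\,\mathrm{polylog}(d)\lesssim\sqrt d$. That does not follow from Condition~\ref{con:parameter}, which bounds $n$ and $R$ but not the number of replayed tasks, so state it explicitly. It does hold inside the SNR window of Theorem~\ref{thm:forget_m_ffs_tasks}, which is nonempty only for $k$ at most a small power of $d$ (about $d^{0.05}$).

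There are three caveats; the first is shared with the paper.
\begin{itemize}
\item \emph{Time horizon.} Since $x_0$ is only $\widetilde O(\sigma_0\sigma_\xi\sqrt d)$, the bound $1/x_{s}\ge 1/x_0-as$ controls $x_s$ only for $s\lesssim \mathrm{T}_\xi^{-}/\mathrm{polylog}(d)$. Your identity $nm/(\eta\sigma_\xi^2 d\,x_0)=\mathrm{T}_\xi^{-}$ drops this factor. The paper's appeal to the lower bound $1/(8Az^{(0)})$ of Lemma~\ref{lem:E7} loses the same factor, so the lemma should be read with $\mathrm{T}_\xi^{-}$ up to polylogarithmic factors.
\item \emph{Task boundaries.} Your task-boundary paragraph addresses something the paper glosses over. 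The paper's unrolling keeps only the current-phase self-interaction, which is accurate for the $p=k$ coordinates but not for $p<k$, which were already self-amplified in phases $p,\dots,k-1$. However, your argument is rigorous only under a schedule assumption the lemma does not supply, such as $\sum_{q\le k}\mathrm{T}_q\,\eta\sigma_\xi^2 d\,x_0/(nq)\le c$ for a small constant $c$.
\item \emph{The noise covariance.} $\mathbf{H}\mathbf{v}_p^*=0$ holds only for orthonormal features. With $A_{(m,m')}>0$ you must read $\mathbf{H}$ as the orthogonal projector onto $\mathrm{span}\{\mathbf{v}_m^*\}^{\perp}$, as the paper implicitly does via $\mathbf{P}_v^{\perp}$.
\end{itemize}
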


\begin{proof}[\bf Proof of \cref{lem:m_yPhi_leq_Rd}]
   We first assume \cref{lem:m_yPhi_leq_Rd} holds for any $t \leq T_{\xi}^{-} -1$, then the following can be obtained:
    \begin{equation*}
        \begin{aligned}
        \ell_{p j}^{(t)} & =\frac{1}{1+\exp \{\sum_{r=1}^R[\alpha_p^3(\Gamma_{(k, r)}^{(t,p)})^3+(y_{p j} \Phi_{(k, r)}^{(t,p,j)})^3]\}} \\
        & \overset{(i)}{\geq} \frac{1}{1+\exp \{\widetilde{O}(d^{-1})+\widetilde{O}(\alpha_p^3 R \widetilde{O}(\sigma_0^3)\}} \\
        & \overset{(ii)}{\geq} \frac{1}{1+\exp \{  \widetilde{O}(d^{-1}) + \widetilde{O}(d^{-3/2})\}} \\
        & \geq \frac{1}{2}-\frac{e^{2 d^{-1}}-1}{2(1+e^{2 d^{-1}})} \\
        & = \frac{1}{2} - \widetilde{O}(d^{-1}),
        \end{aligned}
    \end{equation*}
    where the inequality $(i)$ derives from the induction hypothesis and \cref{lem:max_Gamma_task_k} and $(ii)$ holds due to the Condition~\ref{con:parameter} and SNR choices.
    
    Therefore, using recursion \cref{eq:update_noise_data_replay}, with high probability $1 - 1/\operatorname{poly}(d)$, we have
    \begin{equation}\label{eq:yPhi}
        \begin{aligned}
            y_{kj}\Phi_{(k, r^*)}^{(t+1,k,j)} &= y_{kj}\Phi_{(k, r^*)}^{(t,k,j)} +   \sum_{p=1}^{k}  \frac{ 3\eta }{nm} \sum_{j^{\prime} \in[n]} y_{kj}  y_{p j^{\prime}} \ell_{p j^{\prime}}(\mathbf{W}_k^{(t)}) (\Phi_{(k, r)}^{(t,p,j)})^2 \langle \bm{\xi}_{p j^{\prime}},\bm{\xi}_{k j} \rangle \\
            y_{kj}\Phi_{(k, r^*)}^{(t+1,k,j)} &= y_{kj}\Phi_{(k, r^*)}^{(0,k,j)} +   \sum_{p=1}^{k} \sum_{s=1}^{t-1} \frac{ 3\eta }{nm} \sum_{j^{\prime} \in[n]} y_{kj}  y_{p j^{\prime}} \ell_{p j^{\prime}}(\mathbf{W}_k^{(s)}) (\Phi_{(k, r)}^{(s,p,j)})^2 \langle \bm{\xi}_{p j^{\prime}},\bm{\xi}_{k j} \rangle \\
            y_{kj}\Phi_{(k, r^*)}^{(t+1,k,j)} &= y_{kj}\Phi_{(0, r^*)}^{(0,k,j)} +   \sum_{q=1}^{k} \sum_{p=1}^{q} \sum_{s=1}^{t-1} \frac{ 3\eta }{nm} \sum_{j^{\prime} \in[n]} y_{kj}  y_{p j^{\prime}} \ell_{p j^{\prime}}(\mathbf{W}_k^{(s)}) (\Phi_{(k, r)}^{(s,p,j)})^2 \langle \bm{\xi}_{p j^{\prime}},\bm{\xi}_{k j} \rangle \\
            y_{kj}\Phi_{(k, r^*)}^{(t+1,k,j)} & \overset{(i)}{=} y_{kj}\Phi_{(0, r^*)}^{(0,k,j)} + \Theta\left(\frac{3\eta{d} \sigma_{\xi}^2}{nm}\right) \sum_{s=1}^{t-1} \left(y_{kj}\Phi_{(k, r^*)}^{(s,k,j)}\right)^2 \\
            &\pm \Theta\left(\frac{3\eta \sqrt{d} \sigma_{\xi}^2}{nm}\right) \sum_{q=1}^{k} \sum_{\substack{p \in [q],\, j^{\prime} \in [n] \\ (p, j^{\prime}) \neq (k, j)}} \sum_{s=1}^{t-1} \ell_{p j^{\prime}}(\mathbf{W}_k^{(s)}) \left(y_{kj}\Phi_{(k, r^*)}^{(s,p,j)}\right)^2 \\
            & \overset{(ii)}{=} y_{kj}\Phi_{(0, r^*)}^{(0,k,j)} + \Theta\left(\frac{3\eta{d} \sigma_{\xi}^2}{nm}\right) \sum_{s=1}^{t-1} \left(y_{kj}\Phi_{(k, r^*)}^{(s,p,j)}\right)^2 \pm \widetilde{O}\left(\frac{3 \sqrt{d} \sigma_{\xi}^2 k^2}{ \sigma_0 \sum_{p \in [k]} 3 \alpha_p^3 A_{(p,k)}} \cdot \frac{1}{(Rd)^{2/3}}\right) 
        \end{aligned}
    \end{equation}
    where equality $(i)$ holds due to \cref{lem:xi_bound} and $(ii)$ comes from \cref{lem:k_sum_l}. Let ${T}_{\xi_{p j}}^{-}=\Theta(\frac{nm}{\eta M_{p j}})$ with $M_{p j}=(y_{kj}\Phi_{(k, r^*)}^{(T_1,p,j)} \sqrt{d} \sigma_{\xi}^2)$. Then, according to \cref{lem:E7}, it holds $ y_{kj}\Phi_{(k, r^*)}^{(T_1,p,j)} \leq(R d)^{-1 / 3}$ for any $t \leq {T}_{\xi_{p j}}^{-}$ since $(R d)^{-1 / 3}\geq \sigma_0\sigma_{\xi}\sqrt{d} \geq 2 y_{kj}\Phi_{(k, r)}^{(T_1,p,j)} $. Moreover, we also know $\max _{i j} {~T}_{\xi_{p j}}^{-}+1 \leq {T}_{\xi}^{-}$ by concentration, which indicates that $(y_{kj}\Phi_{(k, r^*)}^{(t,p,j)}  \leq(R d)^{-1 / 3}$.
\end{proof}

\begin{lemma}\label{lem:k_sum_l}
    Given any $p \in [k]$ and $k \in [M]$, with high probability $1 - 1/\operatorname{poly}(d)$, it holds that: 
    $$ \sum_{s=1}^t \frac{1}{n} \sum_{j^{\prime}}  \ell_{p j^{\prime}}(\mathbf{W}_k^{(s)})  \leq \widetilde{O}\left(\frac{m}{ \eta \sigma_0 \sum_{p \in [k]} \alpha_p^3 A_{(p,k)} } \right).$$
\end{lemma}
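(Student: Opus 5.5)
We bound the cumulative loss-derivative by tracking a potential that grows at a rate proportional to that derivative, in the style of the ``sum of $\ell$'' bounds of \cite{jelassi2022towards,baoprovable}. The natural potential is the signal coordinate along $\mathbf{v}_k^*$ at the best neuron, $\Gamma_{(k,r^*)}^{(s,k)}$. By the signal update \cref{eq:update_signal_data_replay}, every term in its increment is nonnegative, because $\ell_{pj}>0$, $\alpha_p>0$ and $A_{(p,k)}\ge 0$. Keeping only the contributions that involve task $p$'s derivatives therefore gives
\begin{equation*}
\Gamma_{(k,r^*)}^{(t+1,k)} - \Gamma_{(k,r^*)}^{(1,k)} \;\geq\; \frac{3\eta}{m}\sum_{s=1}^{t}\sum_{p'\in[k]}\alpha_{p'}^3 A_{(p',k)} \bigl(\Gamma_{(k,r^*)}^{(s,p')}\bigr)^2 \,\frac{1}{n}\sum_{j'}\ell_{p'j'}(\mathbf{W}_k^{(s)}).
\end{equation*}
The left side is at most $\widetilde{O}(\sigma_0)$ by \cref{lem:max_Gamma_task_k}, or at most the terminal threshold in the learning regime. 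To turn this into a bound on $\sum_s \frac{1}{n}\sum_{j'}\ell_{pj'}$ for a fixed $p$, we need two ingredients.

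\emph{Step 1: a uniform lower bound on the squared signal.} We need $(\Gamma_{(k,r^*)}^{(s,p')})^2 \gtrsim \widetilde{\Omega}(\sigma_0^2)$ along the trajectory. At initialization this holds with probability $1-1/\operatorname{poly}(d)$ for the argmax neuron, since $\max_r \Gamma_{(0,r)}^{(0,p')} \ge \widetilde{\Omega}(\sigma_0)$ by Gaussian anti-concentration over $R$ neurons. Monotonicity of the signal updates then preserves the sign and the magnitude.

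\emph{Step 2: comparability of the per-task averages.} We need the averages $\frac{1}{n}\sum_{j'}\ell_{p'j'}^{(s)}$ to be comparable across $p'\in[k]$. In the regime where \cref{lem:m_yPhi_leq_Rd} and \cref{lem:max_Gamma_task_k} apply, each $\ell_{p'j'}^{(s)} = \tfrac12 \pm \widetilde{O}(d^{-1})$, so the averages agree up to constants. Outside that regime we use a standard balancing argument: the quantities $y F$ for different samples stay within constant factors of one another, since all are driven by the same weights.

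\emph{Step 3: combining.} Combining the two steps, the sum over $p'$ factors as $\bigl(\sum_{p'\in[k]}\alpha_{p'}^3A_{(p',k)}\bigr)\cdot\widetilde{\Omega}(\sigma_0^2)\cdot\frac{1}{n}\sum_{j'}\ell_{pj'}^{(s)}$. This yields
\begin{equation*}
\sum_{s=1}^t \frac{1}{n}\sum_{j'}\ell_{pj'}(\mathbf{W}_k^{(s)}) \;\le\; \frac{m\,\widetilde{O}(\sigma_0)}{3\eta\,\widetilde{\Omega}(\sigma_0^2)\sum_{p'}\alpha_{p'}^3A_{(p',k)}} = \widetilde{O}\!\left(\frac{m}{\eta\sigma_0\sum_{p'\in[k]}\alpha_{p'}^3A_{(p',k)}}\right),
\end{equation*}
which is the claim. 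Finally, a union bound over $p$, $k$, $r$ and the initialization events gives the $1-1/\operatorname{poly}(d)$ probability.

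\emph{Main obstacle.} The hard step is Step 2 together with the lower bound in Step 1 once training leaves the near-initialization regime. There the upper bound on $\Gamma$ is no longer $\widetilde{O}(\sigma_0)$, and the $\ell_{p'j'}$ can differ across tasks. The first thing to try is to restrict to $t\le T_\xi^{-}$ (or $T_v$), where \cref{lem:max_Gamma_task_k} and \cref{lem:m_yPhi_leq_Rd} guarantee $\ell\approx\frac12$. If the statement is needed beyond that window, the fallback is to use the noise coordinate $y_{pj}\Phi^{(s,p,j)}_{(k,r^*)}$ as the potential instead. Its increment is bounded via \cref{eq:update_noise_data_replay} with $\|\bm{\xi}\|^2=\Theta(\sigma_\xi^2 d)$. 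Its upper bound of order $R^{-1/3}$ caps the sum, since the resulting bound is no larger than the signal-based one under the stated SNR condition.
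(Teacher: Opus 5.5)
\textbf{Main gap: linear accounting instead of quadratic self-acceleration.} Your Steps 1--3 use a linear accounting. You cap the total increase of $\Gamma^{(\cdot,k)}_{(k,r^*)}$ by $\widetilde O(\sigma_0)$, then divide by a per-step rate in which the squared signal is frozen at its initial size $\sigma_0^2$. This reproduces the target only while the signal stays within a polylogarithmic factor of its starting value, which is why you end up restricting $t$.

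The lemma is asserted for every $t$. In the regime of Theorem~\ref{thm:forget_m_sfs_tasks}, the signal climbs during task $k$'s own training from $\Omega(\sigma_0)$ to $\Theta(1/(\alpha_k R^{1/3}))$. Over such a window your bound degrades to $\widetilde O\bigl(m\,\Gamma^{(t)}/(\eta\sigma_0^2\sum_p\alpha_p^3A_{(p,k)})\bigr)$, which is off by the polynomial factor $\Gamma^{(t)}/\sigma_0$.

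The missing idea is the one the paper uses through Lemma~\ref{lem:E8}: exploit the quadratic self-acceleration rather than the total displacement. From $z^{(s+1)}\ge z^{(s)}+h\,a^{(s)}(z^{(s)})^2$ you get $1/z^{(s)}-1/z^{(s+1)}\ge h\,a^{(s)}z^{(s)}/z^{(s+1)}\ge h\,a^{(s)}/2$ on every step where $z$ at most doubles. Telescoping $1/z$ then gives $\sum_s a^{(s)}\le 2/(hz^{(0)})$, plus at most $\lceil\log_2(z^{(t)}/z^{(0)})\rceil\cdot\max_s a^{(s)}=O(\log d)$ from the remaining steps. Take $z=\Gamma$, $z^{(0)}=\Omega(\sigma_0)$, $h\asymp\frac{\eta}{m}\sum_p\alpha_p^3A_{(p,k)}$ and $a^{(s)}=\frac1n\sum_{j'}\ell_{pj'}\le 1$. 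This yields the claim for all $t$, as long as $\Gamma$ stays polynomially bounded. The same mechanism also gives the ``$z^{(0)}$ adjusted'' variant the paper uses in Lemma~\ref{lem:signal_sf_app}.

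\textbf{The fallback does not close the gap.} In the paper, the cross-sample terms $\langle\bm{\xi}_{pj'},\bm{\xi}_{kj}\rangle$ of the noise recursion are bounded using this very lemma (step (ii) of \eqref{eq:yPhi}). Taking $y\Phi$ as the potential is therefore circular unless you control those terms independently. In addition, with linear accounting the noise potential gives $nmR^{-1/3}/(\eta\sigma_0^2\sigma_\xi^4 d^2)$. That is a factor $R^{-1/3}/(\sigma_0\sigma_\xi\sqrt d)\approx d^{0.53}$ worse than the $1/z$ bound $nm/(\eta\sigma_0(\sigma_\xi\sqrt d)^3)$. Even the $1/z$ bound is below the target only when $\sum_p\alpha_p^3A_{(p,k)}/(\sigma_\xi\sqrt d)^3\lesssim 1/n$, a low-SNR hypothesis the lemma does not assume.

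\textbf{Two smaller points.}
\begin{itemize}
\item In Step 1, $r^*=r_k^*$ maximizes the initial correlation with $\mathbf{v}_k^*$ only. The fact that $\max_r\Gamma^{(0,p')}_{(0,r)}\ge\Omega(\sigma_0)$ says nothing about $\Gamma^{(0,p')}$ at that particular neuron. There it is below $\sigma_0/\mathrm{polylog}(d)$ in absolute value with probability about $1/\mathrm{polylog}(d)$, far above $1/\mathrm{poly}(d)$, and if it is negative the monotone updates push it toward zero. Summing the potential over neurons repairs this at the cost of a factor $R$, since per-neuron monotonicity gives $\sum_r(\Gamma^{(s,p')}_{(k,r)})^2\ge(\max_r\Gamma^{(s,p')}_{(k,r)})^2\ge\Omega(\sigma_0^2)$.
\item In Step 2, the ``balancing argument'' is false. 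A task whose signal has been learned has its $\ell_{p'j'}$ driven toward $0$, while a task still in the noise-dominated phase has $\ell=\Theta(1)$; this is precisely the separation the paper studies. What you can use instead is that in the windows where the paper actually applies the lemma, all margins are $O(1)$, so every $\ell_{p'j'}=\Theta(1)$.
\end{itemize}
The paper's one-line application of Lemma~\ref{lem:E8}, with a single $h=H\propto\sum_p\alpha_p^3A_{(p,k)}$, tacitly makes the same cross-task comparability assumptions. So these two points are not what separates your argument from the paper's; the accounting is.
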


\begin{proof}[\bf Proof of \cref{lem:k_sum_l}]
    Applying \cref{lem:E8} with $z^{(0)}=\Gamma_{ k,r^*}^{(\tau_{kv}^k,k)}, h=H=\frac{3 \eta}{nm} \sum_{j^{\prime} \in [n]} \sum_{p \in [k]} \alpha_p^3 A_{(p,k)}$ and $\frac{1}{nm} \sum_{j^{\prime}} \ell_{p j^{\prime}} \leq \frac{1}{m}$, then it holds that 
    \begin{equation}
        \sum_{s=1}^t \frac{1}{n} \sum_{j^{\prime}}  \ell_{p j^{\prime}}(\mathbf{W}_k^{(t)})  \leq \log d + \widetilde{O}\left(\frac{m}{ \eta \sigma_0 \sum_{p \in [k]} \alpha_p^3 A_{(p,k)} }\right). 
    \end{equation}
\end{proof}

\begin{lemma}[Restatement of Lemma~\ref{lem:noise_ff}]\label{lem:m_yPhi_leq_R}
     Suppose the SNR condition satisfying
    $ \frac{k^2}{R^{2/3} \sigma_0^2 \sigma_{\xi}^2 d^{13/6}} \lesssim \frac{\sum_{p=1}^{k} (1-\frac{p-1}{k}) \alpha_p^3 A_{(p,k)}}{(\sigma_{\xi}\sqrt{d})^3} \lesssim \frac{1}{n} $, and there exists an iteration $\tau_{k j}^k \leq \mathrm{T}_{\xi}^k=\mathrm{T}_{\xi}^{-}+O(\log (d))$ such that $\tau_{k j}^k$ is the first iteration for which $\max _{r \in[R]}(y_{k j} \Phi_{(k, r)}^{(t,k,j)}) \geq \Theta(R^{-\frac{1}{3}})$, and for any $t \leq \mathrm{T}_{\xi}^k$ it holds that $\max _{r \in[R]}|\Gamma_{(k,r)}^{(t,k)}| \leq \widetilde{O}(\sigma_0)$.
    Then, if the additional SNR condition $\frac{m^2-k^2}{R^{1/3} \sigma_0 \sigma_{\xi} d} \lesssim \frac{\sum_{p=1}^{m} (1-\frac{p-1}{m}) \alpha_p^3 A_{(p,k)}}{(\sigma_{\xi}\sqrt{d})^3} \lesssim \frac{1}{n} $ also holds, there exists an iteration $\tau_{k j}^m$ such that $\tau_{k j}^m$ is the first iteration satisfying $\max _{r \in[R]}(y_{k j} \Phi_{(m, r)}^{(t,k,j)}) \geq \Theta(R^{-\frac{1}{4}})$. In this case, we can also guarantee that $\max _{r \in[R]}|\Gamma_{(m,r)}^{(t,k)}| \leq \widetilde{O}(\sigma_0)$ for any $t \leq \mathrm{T}_{\xi}^m$.
\end{lemma}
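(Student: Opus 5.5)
We argue by induction over the task index $q = k, k+1, \dots, m$, carrying two invariants into each replay phase. The first is that the signal stays small: $\max_{r,p\le q}|\Gamma_{(q,r)}^{(t,p)}|\le \widetilde{O}(\sigma_0)$. The second is that the memorization of task-$k$ noise keeps growing until it reaches $\Theta(R^{-1/4})$.

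\textbf{Base case $q=k$.} This is exactly the hypothesis of the statement. It supplies $\tau_{kj}^k\le \mathrm{T}_\xi^k$ with $\max_r y_{kj}\Phi_{(k,r)}^{(t,k,j)}\ge\Theta(R^{-1/3})$ and $|\Gamma_{(k,r)}^{(t,k)}|\le\widetilde{O}(\sigma_0)$. These values are the initialization for the phase of task $k+1$.

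\textbf{Signal control in phase $q\in(k,m]$.} We repeat the argument of \cref{lem:max_Gamma_task_k}, using the unrolled recursion \cref{eq:update_signal_data_replay}. The increment of $\Gamma^{(\cdot,k)}$ accumulated up to phase $q$ is at most
\[
\frac{\eta}{q}\,\widetilde{O}(\sigma_0^2)\sum_{q'\le q}T_{q'}\sum_{p\le q'}\alpha_p^3A_{(p,k)}
=\eta\,\widetilde{O}(\sigma_0^2)\,T\sum_{p=1}^{q}\Bigl(1-\frac{p-1}{q}\Bigr)\alpha_p^3A_{(p,k)} .
\]
The weight $(1-\frac{p-1}{q})$ appears because task $p$ is replayed in $q-p+1$ of the $q$ phases. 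We then bound the total loss-derivative mass $\sum_s\frac1n\sum_{j'}\ell_{pj'}$ by \cref{lem:k_sum_l}, with $k$ replaced by $q$. The upper SNR condition
\[
\frac{\sum_{p\le m}(1-\frac{p-1}{m})\alpha_p^3A_{(p,k)}}{(\sigma_\xi\sqrt d)^3}\lesssim \frac1n
\]
together with $T_q\lesssim \mathrm{T}_\xi^-+O(\log d)$ makes this increment $\widetilde{O}(\sigma_0)$. This closes the first invariant.

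\textbf{Noise growth in phase $q$.} We use \cref{eq:update_noise_data_replay} and follow the decomposition in \cref{eq:yPhi}. The diagonal term has $\langle\bm\xi_{kj},\bm\xi_{kj}\rangle=\Theta(\sigma_\xi^2 d)$, so it gives the positive drift
\[
\Theta\!\Bigl(\frac{\eta d\sigma_\xi^2}{nq}\Bigr)\,\ell_{kj}\,(y_{kj}\Phi_{(q,r^*)}^{(t,k,j)})^2 .
\]
The cross terms satisfy $|\langle\bm\xi_{pj'},\bm\xi_{kj}\rangle|\le\widetilde{O}(\sigma_\xi^2\sqrt d)$ by \cref{lem:xi_bound}. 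Summed over all $(p,j')\ne(k,j)$, across the $\sum_{q'\le q}q'\approx (q^2-k^2)/2$ task-phase pairs added after phase $k$, and combined with \cref{lem:k_sum_l}, they produce an error of order
\[
\frac{\sqrt d\,\sigma_\xi^2\,(q^2-k^2)}{\sigma_0\sum_p(\cdots)\alpha_p^3A_{(p,k)}}\,R^{-2/3}.
\]
The lower SNR condition
\[
\frac{m^2-k^2}{R^{1/3}\sigma_0\sigma_\xi d}\lesssim\frac{\sum_{p}(\cdots)}{(\sigma_\xi\sqrt d)^3}
\]
is exactly what makes this error at most a small constant times $R^{-1/3}$. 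Hence $y_{kj}\Phi$ stays above $\Theta(R^{-1/3})/2$ and is nondecreasing up to lower-order terms.

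While the output on sample $(k,j)$ is still $O(1)$, the derivative satisfies $\ell_{kj}\ge\Omega(1)$, because $\Gamma$ is negligible and $(y\Phi)^3\lesssim R^{-3/4}\cdot R$ is bounded. The scalar recursion $z_{t+1}\ge z_t+cz_t^2$ with $c=\Theta(\eta d\sigma_\xi^2/(nq))$ therefore applies, and the tensor-power comparison \cref{lem:E8} (or \cref{lem:E7}) drives $z$ from $R^{-1/3}$ to $R^{-1/4}$. This takes $O(nq/(\eta d\sigma_\xi^2R^{-1/3}))+O(\log d)$ iterations, which fits within $\mathrm{T}_\xi^m$, so $\tau_{kj}^m$ exists as the first such time. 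The signal bound holds throughout $t\le\mathrm{T}_\xi^m$ by the previous step.

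\textbf{Main obstacle.} The hard step is controlling the cross-sample noise interaction accumulated over phases $k+1,\dots,m$. Noise memorization of the other samples $\Phi^{(\cdot,p,j')}$ also grows, up to $\Theta(R^{-1/3})$, so the naive bound $(Rd)^{-2/3}$ from \cref{lem:m_yPhi_leq_Rd} no longer applies. I would first try bounding $(\Phi^{(s,p,j')})^2\le\widetilde{O}(R^{-1/2})$ uniformly, via a joint induction over all samples. I would then rely on the factor $\sqrt d$ versus $d$ in the inner products, together with the $\frac1q$ normalization, to absorb it using the lower SNR condition.
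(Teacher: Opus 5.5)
Your outline follows the paper's proof. You use the same diagonal/off-diagonal split of \cref{eq:update_noise_data_replay} and the same cross-term error $\widetilde{O}\bigl(\frac{\sqrt{d}\,\sigma_\xi^2(m^2-k^2)}{\sigma_0\sum_{p\le m}\alpha_p^3A_{(p,k)}}R^{-2/3}\bigr)$ from \cref{lem:xi_bound} and \cref{lem:k_sum_l}. You then use the lower SNR bound to make that error $o(R^{-1/3})$, and a tensor-power hitting time from $\Theta(R^{-1/3})$ at $T_k$ to $\Theta(R^{-1/4})$. Taking phase $k$ as given is fine; the paper re-derives it via \cref{lem:K16} from $(Rd)^{-1/3}$. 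What does not follow is ``the scalar recursion $z_{t+1}\ge z_t+cz_t^2$ therefore applies.'' Your error bound is cumulative: it controls the sum over $s$ of cross terms that may be negative at individual steps, so it gives no per-step inequality. Also, \cref{lem:E7} bounds the hitting time from below, and \cref{lem:E8}, like \cref{lem:K15}, needs the per-step inequality. What the cumulative bound supports is \cref{lem:K16} on the summed recursion, with $A=\Theta(\eta d\sigma_\xi^2/(nm))$, $C$ equal to the cross-term error, $z^{(0)}=y_{kj}\Phi_{(k,r^*)}^{(T_k,k,j)}\ge\Theta(R^{-1/3})$ and $v=\Theta(R^{-1/4})$. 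Its hypothesis $C\le z^{(0)}/8$ is exactly where the lower SNR condition enters. It yields $\tau_{kj}^m\le T_k+O(nmR^{1/3}/(\eta d\sigma_\xi^2))+O(\log d)$, as in the paper.

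Within that route, your fix for the obstacle fails. The lower SNR bound turns the cross-term error into $\lesssim R^{1/3}B$, where $B$ is the uniform bound on $(\Phi_{(q,r^*)}^{(s,p,j')})^2$; the $\sqrt d$-versus-$d$ gain and the $1/q$ factor are already inside that expression. So the argument closes only for $B\lesssim R^{-2/3}$, which is the substitution the paper makes without proof. That bound is not automatic: other samples whose leading neuron is also $r^*_{kj}$ (e.g.\ other task-$k$ samples) follow the same dynamics and pass $R^{-1/3}$ too. Your choice $B=\widetilde{O}(R^{-1/2})$ gives an error of order $R^{-1/6}$, which exceeds both $z^{(0)}$ and the target, so \cref{lem:K16} cannot be invoked. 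Your $\sqrt d$ instinct does pay off if you compare per step instead. In phase $m$ one has $z_t\ge\Theta(R^{-1/3})$, so the cross terms are at most $\widetilde{O}\bigl(nqB/(\sqrt{d}\,\ell_{kj}z_t^2)\bigr)$ times the diagonal drift. That is $o(1)$ under any polylogarithmic cap $B$ on the memorizations, though you would still have to establish that cap. This gives your per-step recursion and lets \cref{lem:K15} finish this step without the lower SNR bound. Either way you need a genuine lower bound on $\ell_{kj}$, and yours is wrong: $R\cdot(R^{-1/4})^3=R^{1/4}$ is not $O(1)$. The max-over-neurons bound alone only gives $\ell_{kj}\ge e^{-O(R^{1/4})}$; the paper merely asserts $\Theta(1)$ here. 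Finally, your signal invariant over all $p\le q$ requires controlling $\Gamma^{(\cdot,p')}$ for $p'\ne k$. The SNR hypotheses are all weighted by $A_{(p,k)}$ and do not supply this; \cref{lem:max_Gamma_task_k} has the same omission.
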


\begin{proof}[\bf Proof of \cref{lem:m_yPhi_leq_R}]
    According to the definition of $\tau_{kj}^{k}$, it is clear that $\max _{r \in[R]} y_{k j} \Phi_{(k, r)}^{(s,k,j)} \leq \Theta(R^{-\frac{1}{3}})$ for $s \leq \tau_{kj}^{k}$. Furthermore, it holds that $ \tau_{kj}^{k} \geq T_{\xi}^{-}$ due to \cref{lem:m_yPhi_leq_Rd}. For any $s \leq \tau_{kj}^{k}$, we also have
    \begin{equation}
        \begin{aligned}
        \ell_{k j}^{(s)} & =\frac{1}{1+\exp \{\sum_{r=1}^R[\alpha_k^3(\Gamma_{(k, r)}^{(s,k)})^3+(y_{k j} \Phi_{(k, r)}^{(s,k,j)})^3]\}} \\
        & \geq \frac{1}{1+\exp \{R \Theta(1 / R)+\widetilde{O}(\alpha_k^3 R  \sigma_0^3)\}} \\
        & \geq \frac{1}{1+\exp \{\Theta(1)\}} \\
        & =\Theta(1).
\end{aligned}
    \end{equation}

    Let $\tau_{r^*, k j}^{-}$ be the first iteration such that $y_{k j} \Phi_{(k,r^*)}^{(t,k,j)} \geq \Theta((R d)^{-\frac{1}{3}})$, then it follows $\tau_{r^*, k j}^{-}>\mathrm{T}_{\xi}^{-}$. After enrolling update rule with $r=r_{k j}^*$, for any $\tau_{r^*, k j}^{-} \leq t \leq \min \{\tau_{k j}^k, T_{\xi}^k\}$, it holds that
    \begin{equation*}
        \begin{aligned}
            y_{kj}\Phi_{(k, r^*)}^{(t+1,k,j)} &= y_{kj}\Phi_{(k, r^*)}^{(\tau_{r^*, k j}^{-}, k,j)} +\frac{3\eta }{nm} \sum_{p=1}^k \sum_{s= \tau_{r^*, k j}^{-}}^{t-1} \frac{ 3\eta }{nm} \sum_{j^{\prime} \in[n]} y_{kj}  y_{p j^{\prime}} \ell_{p j^{\prime}}(\mathbf{W}_k^{(s)}) (\Phi_{(k, r)}^{(s,p,j)})^2 \langle \bm{\xi}_{p j^{\prime}},\bm{\xi}_{k j} \rangle \\
            & \overset{(i)}{=} y_{kj}\Phi_{(k, r^*)}^{((\tau_{r^*, k j}^{-},k,j),k,j)} + \Theta\left(\frac{3\eta{d} \sigma_{\xi}^2}{nm}\right)\sum_{s= \tau_{r^*, k j}^{-}}^{t-1} \left(y_{kj}\Phi_{(k, r^*)}^{(s,k,j)}\right)^2 \\
            & \pm \Theta\left(\frac{3 \sqrt{d} \sigma_{\xi}^2 k^2}{ \sigma_0 \sum_{p \in [k]} 3 \alpha_p^3 A_{(p,k)}} \cdot \frac{1}{(Rd)^{2/3}}\right) \\
            & \overset{(ii)}{=} y_{kj}\Phi_{(k, r^*)}^{((\tau_{r^*, 1 j}^{-},k,j),k,j)} + \Theta\left(\frac{3\eta{d} \sigma_{\xi}^2}{nm}\right)\sum_{s= \tau_{r^*, k j}^{-}}^{t-1} \left(y_{kj}\Phi_{(k, r^*)}^{(s,k,j)}\right)^2 \pm o\left(R^{-\frac{1}{3}} d^{-\frac{1}{3}}\right). 
        \end{aligned}
    \end{equation*}

    The inequality $(i)$ holds due to Lemma~\ref{lem:xi_bound} and Lemma~\ref{lem:k_sum_l} and $(ii)$ comes from SNR choices.
    
    Let $A=\Theta(\frac{\eta d \sigma_{\xi}^2}{nm}), C=o(R^{-\frac{1}{3}} d^{-\frac{1}{3}}),  v=\Theta(R^{-\frac{1}{3}}).$ 
    By applying the tensor power method via Lemma~\ref{lem:K16}, we have:
    \begin{equation*}
        \begin{aligned} 
        \tau_{r^*, k j}^k & \leq \tau_{r^*, k j}^{-}+\frac{21}{A y_{k j} \Phi_{r^*, k j}^{\left(\tau_{r^*, k j}^{-}\right)}}+8\left[\frac{\log \left(v /\left[y_{i j} \Phi_{r^*, k j}^{\left(\tau_{r^*, k j}^{-}\right)}\right]\right)}{\log (2)}\right] \\ 
        & \leq \Theta\left(\frac{1}{\eta} \frac{nm}{\left(\sqrt{d} \sigma_{\xi}\right)^3 \sigma_0}\right)+\Theta\left(\frac{1}{\eta} \frac{ nm R^{1 / 3}}{d^{2 / 3} \sigma_{\xi}^2}\right)+O(\log d) \\ 
        & \leq O\left(\frac{1}{\eta} \frac{mn}{\left(\sqrt{d} \sigma_{\xi}\right)^3 \sigma_0}+ \frac{1}{\eta} \frac{mn R^{1 / 3}}{d^{2 / 3} \sigma_{\xi}^2}+ \log d\right) = T_{\xi}^k.
        \end{aligned}
    \end{equation*}
    Next, we will show that the above also holds when training task $m$ for the first scenario. First, we have:
    \begin{equation*}
        \begin{aligned}
        \ell_{m j}^{(t)} & =\frac{1}{1+\exp \{\sum_{r=1}^R[\alpha_m^3(\Gamma_{(m, r)}^{(t,m)})^3+(y_{m j} \Phi_{(m, r)}^{(t,m,j)})^3]\}} \\
        & \overset{(i)}{\geq} \frac{1}{1+\exp \{\Theta(1)+\widetilde{O}(\alpha_m^3 R \widetilde{O}(\sigma_0^3)\}} \\
        & {\geq} \Theta(1).
        \end{aligned}
    \end{equation*}
    Then, when training task $m \geq k$, noise memorization satisfies:
    \begin{equation}\label{eq:yPhi_2_m}
        \begin{aligned}
            y_{kj}\Phi_{(m, r^*)}^{(t+1,k,j)} 
            &= y_{kj}\Phi_{(m, r^*)}^{(t,k,j)} +   \sum_{p=1}^{k}  \frac{ 3\eta }{nm} \sum_{j^{\prime} \in[n]} y_{kj}  y_{p j^{\prime}} \ell_{p j^{\prime}}(\mathbf{W}_m^{(t)}) (\Phi_{(m, r)}^{(t,p,j)})^2 \langle \bm{\xi}_{p j^{\prime}},\bm{\xi}_{k j} \rangle \\
             &= y_{kj}\Phi_{(m, r^*)}^{(0,k,j)} +   \sum_{p=1}^{m} \sum_{s=1}^{t-1} \frac{ 3\eta }{nm} \sum_{j^{\prime} \in[n]} y_{kj}  y_{p j^{\prime}} \ell_{p j^{\prime}}(\mathbf{W}_m^{(s)}) (\Phi_{(m, r)}^{(s,p,j)})^2 \langle \bm{\xi}_{p j^{\prime}},\bm{\xi}_{k j} \rangle \\
        \end{aligned}
    \end{equation}
    Then, according to Lemma~\ref{lem:xi_bound}, it also holds that: 
    \begin{equation}\label{eq:yPhi_2_m2}
        \begin{aligned}
            y_{kj}\Phi_{(m, r^*)}^{(t+1,k,j)} & {=} y_{kj}\Phi_{(k+1, r^*)}^{(0,k,j)} + \Theta\left(\frac{3\eta{d} \sigma_{\xi}^2}{nm}\right) \sum_{s=1}^{t-1} \left(y_{kj}\Phi_{(m, r^*)}^{(s,m,j)}\right)^2 \\
            &\pm \Theta\left(\frac{3\eta \sqrt{d} \sigma_{\xi}^2}{nm}\right) \sum_{q=k}^{m} \sum_{\substack{p \in [q],\, j^{\prime} \in [n] \\ (p, j^{\prime}) \neq (k, j)}} \sum_{s=1}^{t-1} \ell_{p j^{\prime}}(\mathbf{W}_m^{(s)}) \left(y_{kj}\Phi_{(m, r^*)}^{(s,p,j)}\right)^2 \\
            & \overset{(i)}{=} y_{kj}\Phi_{(k+1, r^*)}^{(0,k,j)} + \Theta\left(\frac{3\eta{d} \sigma_{\xi}^2}{nm}\right) \sum_{s=1}^{t-1} \left(y_{kj}\Phi_{(m, r^*)}^{(s,p,j)}\right)^2 \pm \widetilde{O}\left(\frac{3 \sqrt{d} \sigma_{\xi}^2 (m^2 - k^2)}{ \sigma_0 \sum_{p \in [m]} 3 \alpha_p^3 A_{(p,k)}} \cdot \frac{1}{(R)^{2/3}}\right) \\
            & \overset{(ii)}{=} y_{kj}\Phi_{(k+1, r^*)}^{(0,k,j)} + \Theta\left(\frac{3\eta{d} \sigma_{\xi}^2}{nm}\right) \sum_{s=T_1}^t\left(y_{1j}\Phi_{}^{(s,1,j)}\right)^2 \pm o\left( {R^{-1/3}}\right). 
        \end{aligned}
    \end{equation}
    Here, $(i)$ follows from Lemma~\ref{lem:k_sum_l} with the range of $p \in[m]$ adjusted accordingly; $(ii)$ is derived from the robustness of the SNR choices.
    
    Let $A=\Theta(\frac{\eta d \sigma_{\xi}^2}{mn}), C=o(R^{-\frac{1}{3}} ),  v=\Theta(R^{-\frac{1}{4}}).$ 
    By applying the tensor power method via Lemma~\ref{lem:K16}, we also have:
    \begin{equation*}
        \begin{aligned} 
        \tau_{r^*, k j}^m & \leq T_k+\frac{21}{A y_{k j} \Phi_{r^*, k j}^{\left(T_k\right)}}+8\left[\frac{\log \left(v /\left[y_{k j} \Phi_{r^*, k j}^{\left(T_k\right)}\right]\right)}{\log (2)}\right] \\ 
        & \leq \Theta\left(\frac{1}{\eta} \frac{mn}{\left(\sqrt{d} \sigma_{\xi}\right)^3 \sigma_0}\right)+ \Theta\left(\frac{1}{\eta} \frac{ nm R^{1 / 3}}{d^{2/3} \sigma_{\xi}^2}\right)  + \Theta\left(\frac{1}{\eta} \frac{ n m R^{1 / 3}}{d \sigma_{\xi}^2}\right) + \log d\\ 
        & \leq \Theta\left(\frac{1}{\eta} \frac{ mn}{\left(\sqrt{d} \sigma_{\xi}\right)^3 \sigma_0}+ \frac{1}{\eta} \frac{ mn R^{1 / 3}}{d^{2 / 3} \sigma_{\xi}^2} + \frac{1}{\eta} \frac{ nm R^{1 / 3}}{d \sigma_{\xi}^2}  \right)+ \log d = T_{\xi}^m.
        \end{aligned}
    \end{equation*}
\end{proof}

\begin{lemma}\label{lem:m_yPhi_geq_d}
    For any $ 0 \leq t \leq T_{\xi}^{-}$, with probability at least $1-1 /\operatorname{poly}(d)$, it holds that:
    $$
    \min _{r \in[R], m \in[M], p,k \in [m], j \in[n]} y_{kj}\Phi_{(m, r)}^{(t,p,j)} \geq -(d)^{-1/2}.
    $$ 
\end{lemma}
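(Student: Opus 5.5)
The plan is an induction on the iteration counter, run across the whole replay schedule (task $1$, then task $2$, and so on), that controls how far any noise coordinate $y_{kj}\Phi_{(m,r)}^{(t,p,j)}$ can move in the negative direction during the early window $t\le \mathrm{T}_{\xi}^{-}$.

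\textbf{Base case.} At initialization, $\Phi_{(0,r)}^{(0,p,j)}=\langle \mathbf{w}_r^{(0)},\bm{\xi}_{pj}\rangle$ is Gaussian with standard deviation of order $\sigma_0\sigma_\xi\sqrt{d}$. There are only $\operatorname{poly}(d)$ indices $(r,p,j)$, since $R,n\le\operatorname{polylog}(d)$ and $M$ is at most polynomial. A Gaussian tail bound plus a union bound then give, with probability $1-1/\operatorname{poly}(d)$,
\[
|\Phi_{(0,r)}^{(0,p,j)}|\le \widetilde{O}(\sigma_0\sigma_\xi\sqrt d).
\]
Under Condition~\ref{con:parameter}, $\sigma_0\sigma_\xi\sqrt d=\Theta((n/R)^{1/3}d^{-0.52-0.51+0.5})=\widetilde{O}(d^{-0.53})$. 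This is $o(d^{-1/2})$ by a polynomial margin, which leaves room to absorb later drift.

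\textbf{Inductive step.} Assume the bound holds up to time $t$. Expand one step of \cref{eq:update_noise_data_replay} for $y_{kj}\Phi^{(t+1,p,j)}_{(m,r)}$, and split the gradient sum into two parts.

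\emph{Diagonal term.} This is the term with $(p',j')=(p,j)$. It equals
\[
\frac{3\eta}{nm}\,\ell_{pj}^{(t)}\,(\Phi^{(t,p,j)}_{(m,r)})^2\,\|\bm{\xi}_{pj}\|^2,
\]
multiplied by the sign $y_{kj}y_{pj}$. When $p=k$ this term is nonnegative, so it only pushes the coordinate upward and can be discarded for a lower bound. When $p\neq k$ there is no diagonal term, since we only track the inner product with the fixed vector $\bm{\xi}_{pj}$. Only cross terms remain in that case.

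\emph{Cross terms.} By \cref{lem:xi_bound}, $|\langle\bm{\xi}_{p'j'},\bm{\xi}_{pj}\rangle|\le\widetilde{O}(\sigma_\xi^2\sqrt d)$ for distinct pairs. Also $\ell\le 1$, and \cref{lem:m_yPhi_leq_Rd} gives $(\Phi)^2\le (Rd)^{-2/3}$ on the positive side. The inductive hypothesis gives $(\Phi)^2\le d^{-1}$ on the negative side. Hence each step changes the coordinate by at most
\[
\frac{3\eta}{nm}\cdot nm\cdot\widetilde{O}(\sigma_\xi^2\sqrt d)\,(Rd)^{-2/3},
\]
where the factor $nm$ counts the pairs $(p',j')$ in the replay set.

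\textbf{Summing over the window.} Summing over $t\le \mathrm{T}_\xi^-=\frac{nm}{\eta\sigma_0(\sigma_\xi\sqrt d)^3}$, the total negative drift is at most
\[
\widetilde{O}\!\left(\frac{nm\,\sigma_\xi^2\sqrt d\,(Rd)^{-2/3}}{\sigma_0\sigma_\xi^3 d^{3/2}}\right)=\widetilde{O}\!\left(\frac{nm}{\sigma_0\sigma_\xi d^{5/3}R^{2/3}}\right).
\]
Plugging in $\sigma_0\sigma_\xi=\widetilde\Theta(d^{-1.03})$, this is $\widetilde{O}(m\,d^{-0.64})=o(d^{-1/2})$ for $m$ at most polylogarithmic. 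This sharp count is the main obstacle. If $m$ is allowed to grow polynomially, the crude count fails, and I would instead use \cref{lem:k_sum_l}-type control on $\sum_s\frac1n\sum_{j'}\ell_{p'j'}$ together with the $1/m$ normalization in the replay update.

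\textbf{Signal contribution.} Signal components do not enter the recursion. The noise is drawn from $\mathcal{N}(0,\sigma_\xi^2\mathbf{H})$, so $\bm{\xi}\perp \mathbf{v}^*_q$ for every $q$, and all signal gradient directions vanish in $\langle\cdot,\bm{\xi}_{pj}\rangle$.

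\textbf{Across task boundaries.} At each task switch, $\mathbf{W}_{m}^{(0)}=\mathbf{W}_{m-1}^{(T)}$, so the same accounting simply continues. The cumulative time inside the window stays at most $\mathrm{T}_\xi^-$.

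Combining the base case with the total drift gives
\[
y_{kj}\Phi\ge-\widetilde{O}(d^{-0.53})-o(d^{-1/2})\ge -d^{-1/2},
\]
which closes the induction. A final union bound over all indices and iterations (a polynomial number in total) yields probability $1-1/\operatorname{poly}(d)$.
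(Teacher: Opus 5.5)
\textbf{The case $p\neq k$ is wrong, and for the literal statement this is a genuine gap.} The minimum in the statement runs over $p,k\in[m]$ independently, so it covers $p\neq k$. You say that for $p\neq k$ there is no diagonal term. But the update of $\Phi^{(t,p,j)}_{(m,r)}=\langle\mathbf{w}_r,\bm{\xi}_{pj}\rangle$ always contains the self-term from sample $(p,j)$ of the replay set, $\frac{3\eta}{nm}\,y_{pj}\ell_{pj}^{(t)}(\Phi^{(t,p,j)}_{(m,r)})^2\|\bm{\xi}_{pj}\|^2$, whichever label you multiply by afterwards.

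After multiplying by $y_{kj}$, its sign is $y_{kj}y_{pj}$, which is $-1$ for about half the pairs. It is also not a cross term: it carries $\|\bm{\xi}_{pj}\|^2=\Theta(\sigma_\xi^2 d)$, not $\widetilde{O}(\sigma_\xi^2\sqrt d)$.
\begin{itemize}
\item With your bound $(\Phi)^2\le(Rd)^{-2/3}$, you can only say it accumulates to $O\big((Rd)^{-2/3}/(\sigma_0\sigma_\xi\sqrt d)\big)=\widetilde{O}(d^{-0.13})$ over $\mathrm{T}_\xi^-$ steps. That is far above $d^{-1/2}$.
\item Even if $\Phi$ stayed at its initial scale, the accumulation would be $\Theta(\sigma_0\sigma_\xi\sqrt d)$. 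This is the same order as the initialization, so it cannot be dismissed as negligible drift.
\end{itemize}

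In fact, when $y_{kj}=-y_{pj}$ the claimed inequality reads $y_{pj}\Phi^{(t,p,j)}_{(m,r)}\le d^{-1/2}$. This is an \emph{upper} bound: it says memorization has not grown much beyond its initial scale $\widetilde{O}(d^{-0.53})$ within the window. Proving it needs a growth-rate argument, such as a doubling-time bound of the type in \cref{lem:E7}. Your proposal has no such argument.

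The paper's proof only treats the matched coordinate $y_{kj}\Phi^{(t,k,j)}_{(k,r)}$. That is the only form used later. It is also the only form your own induction needs, since to bound $|\Phi^{(t,p',j')}|$ in the cross terms you pair it with $y_{p'j'}$. So either restrict the claim to $p=k$, or supply the growth bound.

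\textbf{For the matched case, your argument is correct and follows the paper's outline:} initialization at $-\widetilde{O}(\sigma_0\sigma_\xi\sqrt d)$, drop the nonnegative self-term, and show the cross terms are negligible. The difference is how you control the cross terms.
\begin{itemize}
\item The paper bounds $\sum_s\frac1n\sum_{j'}\ell_{p'j'}$ through \cref{lem:k_sum_l}. This gives a drift of order $\sqrt d\,\sigma_\xi^2 k^2(Rd)^{-2/3}/(\sigma_0\sum_{p}\alpha_p^3A_{(p,k)})$. That drift is $o(\sigma_0\sigma_\xi\sqrt d)$ only because of the SNR lower bound in \cref{thm:forget_m_ffs_tasks}.
\item Your crude count ($\ell\le 1$ times the window length) needs no SNR assumption, and your estimate of order $m\,d^{-0.64}$ is right. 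But it pays a factor $m$, so it needs roughly $m\le d^{0.13}$.
\end{itemize}

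One bookkeeping point: you use the upper bound of \cref{lem:m_yPhi_leq_Rd} inside your induction. That lemma's cross-term estimate in turn needs $|\Phi|\le(Rd)^{-1/3}$, which is your lower bound. The two should therefore be run as a single joint induction.
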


\begin{proof}[\bf Proof of Lemma~\ref{lem:m_yPhi_geq_d}]
    % We first consider training phase 1 (before training task $k+1$).
    According to Lemma~\ref{lem:w0_v_xi}, we know $\min _{r \in[m]} y_{k j} \Phi_{(r, 0)}^{(0,k,j)} \geq-\widetilde{O}(\sqrt{d} \sigma_{\xi} \sigma_0)$ holds for any $j \in[n]$. By Lemma~\ref{lem:m_yPhi_leq_Rd}, we know $\ell_{k j}^{(s)} \geq \frac{1}{2}-O\left(d^{-1}\right)$ for any $s \leq \mathrm{T}_{\xi}^{-}$. Similar to Lemma~\ref{eq:yPhi}, we can obtain that for any $t \leq \mathrm{T}_{\xi}^{-}$,
    $$
    \begin{aligned}
    y_{kj}\Phi_{(k, r)}^{(t+1,k,j)} & =y_{kj}\Phi_{(0, r^*)}^{(0,k,j)} + \Theta\left(\frac{3\eta{d} \sigma_{\xi}^2}{nm}\right) \sum_{s=1}^{t-1} \left(y_{kj}\Phi_{(k, r^*)}^{(s,p,j)}\right)^2 \pm o\left(\sqrt{d} \sigma_{\xi} \sigma_0\right) \\
    & \overset{(i)}{\geq}-\widetilde{O}\left(\sqrt{d} \sigma_{\xi} \sigma_0\right)-o\left(\sqrt{d} \sigma_{\xi} \sigma_0\right) \\
    & =-\widetilde{O}\left(\sqrt{d} \sigma_{\xi} \sigma_0\right),
    \end{aligned}
    $$
    where $(i)$ holds due to the second term being always positive.
\end{proof}

\begin{lemma}[Restatement of Lemma~\ref{lem:noise_fs}]\label{lem:noise_fs_app}
    Suppose the SNR satisfying $ \frac{k^2}{R^{2/3} \sigma_0^2 \sigma_{\xi}^2 d^{13/6}} \lesssim \frac{\sum_{p=1}^{k} (1-\frac{p-1}{k}) \alpha_p^3 A_{(p,k)}}{(\sigma_{\xi}\sqrt{d})^3} \lesssim \frac{1}{n} $, and there exists an iteration $\tau_{k j}^k \leq \mathrm{T}_{\xi}^k=\mathrm{T}_{\xi}^{-}+O(\log (d))$ such that $\tau_{k j}^k$ is the first iteration where $\max _{r \in[R]}(y_{k j} \Phi_{(k, r)}^{(t,k,j)}) \geq \Theta(R^{-\frac{1}{3}})$, and for any $t \leq \mathrm{T}_{\xi}^k$ it holds that $\max _{r \in[R]}|\Gamma_{(k,r)}^{(t,k)}| \leq \widetilde{O}(\sigma_0)$.
    Then, if the additional SNR condition $\frac{\sum_{p=1}^{m}  \alpha_p^3 A_{(p,k)}}{(\sigma_{\xi}\sqrt{d})^3} \gtrsim \frac{1}{n R^{1/3} \sigma_0 \sigma_{\xi}\sqrt{d} } $ also holds, there exists $\tau_{k v}^m \leq \mathrm{T}_{v}^m=\mathrm{T}_{v}^{k}+O(\log (d))$ such that  $\tau_{k v}^m$ be the first iteration satisfying $\max _{r \in[R]}|\Gamma_{(m,r)}^{(t,k)}| \geq \Theta(\frac{1}{\alpha_k R^{1/5}})$.  
\end{lemma}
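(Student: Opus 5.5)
We split the argument at the end of training on task $k$ and follow the signal coordinate $\Gamma_{(q,r^*)}^{(t,k)}$ with $r^*=r_k^*$ as $q$ runs from $k+1$ to $m$. The first phase is handled by the hypothesis together with \cref{lem:max_Gamma_task_k} and \cref{lem:m_yPhi_leq_R}. Up to $\mathrm{T}_{\xi}^k$ we have $\max_r|\Gamma_{(k,r)}^{(t,p)}|\le\widetilde O(\sigma_0)$ for all $p\le k$. The noise coordinates have reached $\Theta(R^{-1/3})$, and each has size at most $\widetilde O(1)$, so the loss derivatives on the replayed samples stay bounded below by a constant, $\ell_{pj}^{(t)}\ge\Theta(1)$. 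This is exactly the estimate used in the proof of \cref{lem:m_yPhi_leq_R}, because $\sum_r(y\Phi)^3\le\Theta(1)$ while the signal contribution is $\widetilde O(\alpha^3R\sigma_0^3)=o(1)$. The starting point for the second phase is therefore $\Gamma_{(k+1,r^*)}^{(0,k)}\ge\widetilde\Omega(\sigma_0)$, with the sign fixed by the choice of $r^*$ and preserved because every increment is nonnegative.

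In the second phase we unroll \cref{eq:update_signal_data_replay} over tasks $q=k+1,\dots,m$. The increment of $\Gamma^{(\cdot,k)}$ per step is
\[
\frac{3\eta}{nq}\sum_{j}\sum_{p\le q}\alpha_p^3A_{(p,k)}\,\ell_{pj}\,(\Gamma^{(\cdot,p)})^2 .
\]
The key reduction is to lower bound $(\Gamma^{(\cdot,p)})^2$ by a multiple of $(\Gamma^{(\cdot,k)})^2$. Since $\mathbf v_p^*$ and $\mathbf v_k^*$ are positively correlated, the same update drives all $\Gamma^{(\cdot,p)}$ upward at comparable rates, and I would justify this comparison with a joint induction. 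Combining it with $\ell_{pj}\ge\Theta(1)$, which holds until the signal reaches the threshold, gives a recursion of tensor-power type:
\[
z^{(t+1)}\ge z^{(t)}+A\,(z^{(t)})^2,\qquad A=\Theta\Big(\tfrac{\eta}{m}\textstyle\sum_{p\le m}\alpha_p^3A_{(p,k)}\Big),
\]
with $z=\Gamma_{(q,r^*)}^{(t,k)}$.

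The additional SNR condition, $\sum_{p\le m}\alpha_p^3A_{(p,k)}\gtrsim(\sigma_\xi\sqrt d)^3/(nR^{1/3}\sigma_0\sigma_\xi\sqrt d)$, ensures that this signal rate dominates the competing noise rate $\Theta(\eta d\sigma_\xi^2/(nm))$ from \cref{eq:update_noise_data_replay}. Accordingly, I would apply the tensor power bound of \cref{lem:K16} with $z^{(0)}=\widetilde\Theta(\sigma_0)$, $v=\Theta(1/(\alpha_kR^{1/5}))$, and error term $C$ equal to the cross-sample noise contributions, which are $o(\sigma_0)$ by \cref{lem:xi_bound}. This yields
\[
\tau_{kv}^m\le\mathrm T_v^k+\frac{21}{Az^{(0)}}+O(\log d),
\]
so $\tau_{kv}^m\le\mathrm T_v^m$ once $\mathrm T_v^m$ is defined as $\mathrm T_v^k+O(\log d)$.

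The main obstacle is keeping $\ell_{pj}$ bounded below throughout the second phase while the noise grows. The noise must not saturate the loss before the signal reaches its threshold. My first approach is a parallel induction in the style of \cref{lem:m_yPhi_leq_R}: show that $y\Phi$ stays below $\Theta(R^{-1/4})$ during the window where the signal rises from $\widetilde\Theta(\sigma_0)$ to the threshold. This works by comparing hitting times, since the signal recursion has the larger effective rate $Az^{(0)}$, which is precisely what the stated SNR lower bound encodes. If the direct comparison fails, I would fall back on a summability argument in the style of \cref{lem:k_sum_l}, using \cref{lem:E8}, to bound $\sum_s\ell$ for the noise. A secondary difficulty is the change of normalization $1/q$ across tasks, which I would absorb by working uniformly with $1/m$.
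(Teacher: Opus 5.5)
Your proposal is essentially the paper's proof. Phase one is taken from \cref{lem:max_Gamma_task_k} and \cref{lem:m_yPhi_leq_R}. Phase two runs a tensor-power bound on $\Gamma_{(m,r^*)}^{(t,k)}$ from $z^{(0)}\gtrsim\sigma_0$ at rate $\Theta\big(\frac{\eta}{m}\sum_{p\le m}\alpha_p^3A_{(p,k)}\big)$ up to $v=\Theta(1/(\alpha_kR^{1/5}))$. The extra SNR condition is used exactly as the hitting-time comparison $\mathrm{T}_v^m\le\mathrm{T}_\xi^m$, which keeps the noise below $\Theta(R^{-1/4})$ and $\ell_{pj}$ bounded below.

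The differences are cosmetic. You use \cref{lem:K16}, effectively with $C=0$, since $\bm{\xi}$ is orthogonal to every $\mathbf{v}_p^*$ and the signal recursion has no cross-sample noise terms. You add an explicit comparison of $(\Gamma^{(\cdot,p)})^2$ with $(\Gamma^{(\cdot,k)})^2$, where the paper instead applies its multi-sequence \cref{lem:K15_extension}. Finally, as in the paper's proof, $\mathrm{T}_v^m$ must contain the $\Theta\big(m/(\eta\sigma_0\sum_{p\le m}\alpha_p^3A_{(p,k)})\big)$ term you derive, not merely $O(\log d)$ beyond the end of phase one.
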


\begin{proof}[\bf Proof of Lemma~\ref{lem:noise_fs_app}]
    The proof of the first part of Lemma~\ref{lem:noise_fs_app} follows directly from Lemma~\ref{lem:m_yPhi_leq_R} for the initial training phase. Therefore, we focus on the second training phase, beginning with the analysis of enhanced signal learning, followed by a demonstration that noise memorization remains controlled under certain SNR conditions.
    
    It is clear that before $T_k = k T_v$ in Lemma~\ref{lem:max_Gamma_task_k}, we have $\max _{r \in[R], p \in [k]}|\Gamma_{(k,r)}^{(t,p)}| \leq \widetilde{O}(\sigma_0) $ for any $t \in [T_k], p \in [k]$. Then, according to \cref{eq:update_signal_data_replay}, we have the following since $T_k$:
    \begin{equation}
        \begin{aligned}
            \Gamma_{(m, r^*)}^{(t,k)} 
             &=  \Gamma_{(m, r^*)}^{(t-1,k)} + \frac{\eta}{nm} \sum_{j \in[n]} \sum_{p \in[m]}  3\alpha_p^3 A_{(p,k)} \ell_{p j}(\mathbf{W}_m^{(t-1)}) (\Gamma_{(m, r^*)}^{(t-1,p)})^2  \\
             &= \Gamma_{(m, r^*)}^{(t-1,k)} + \Theta\left( \frac{\eta}{m} \sum_{p \in[m]}  3\alpha_p^3 A_{(p,k)}  \right)  (\Gamma_{(m, r^*)}^{(t-1,p)})^2 .
        \end{aligned}
    \end{equation}
    Then, by applying the tensor power method from Lemma~\ref{lem:K15_extension} to the sequence $\{\Gamma_{(m, r^*)}^{(s,k)}\}_{s \geq T_k}$, let $h=H= {3\frac{\eta}{m} (\sum_{p \in[m]}  3\alpha_p^3 A_{(p,k)}) }$, $z^{(0)}= \Gamma_{(k+1, r^*)}^{(0,k)} \geq O(\sigma_0)$, $ v=\Theta(\frac{1}{\alpha_k R^{1/5}}),$ then we obtain:
    \begin{equation*}
        \begin{aligned} 
        \tau_{k v}^m & \leq T_k+\frac{m}{3\eta \sigma_0 \sum_{p \in[m]}  3\alpha_p^3 A_{(p,k)} }+8\left[\frac{\log \left(v /z^{(0)}\right)}{\log (2)}\right] \\ 
        & \leq O\left(\frac{1}{\eta} \frac{ mn}{\left(\sqrt{d} \sigma_{\xi}\right)^3 \sigma_0} + \frac{1}{\eta} \frac{m n R^{1 / 3}}{d^{2 / 3} \sigma_{\xi}^2}\right) + \frac{m}{3\eta \sigma_0 \sum_{p \in[m]}  3\alpha_p^3 A_{(p,k)}} + \log d \\ 
        & \leq O\left(\frac{1}{\eta} \frac{m n}{\left(\sqrt{d} \sigma_{\xi}\right)^3 \sigma_0} + \frac{1}{\eta} \frac{m n R^{1 / 3}}{d^{2 / 3} \sigma_{\xi}^2} + \frac{m}{3\eta \sigma_0 \sum_{p \in[m]}  3\alpha_p^3 A_{(p,k)}} \right) + \log d  = T_{v}^m.
        \end{aligned}
    \end{equation*}
    Then, it is noticed that if the additional SNR condition $\frac{\sum_{p=1}^{m}  \alpha_p^3 A_{(p,k)}}{(\sigma_{\xi}\sqrt{d})^3} \gtrsim \frac{1}{n R^{1/3} \sigma_0 \sigma_{\xi}\sqrt{d} } $ also holds, we will have $T_v^m \leq T_{\xi}^m$ according to Lemma~\ref{lem:m_yPhi_leq_R}, which indicates that noise memorization remain controlled within $\Theta(R^{-1/4})$ and is slower than the signal learning in the second training phase.
\end{proof}

\begin{theorem}[Restatement of Theorem~\ref{thm:forget_m_ffs_tasks}]\label{thm:forget_m_ffs_tasks_app}
    Suppose the setting in Condition~\ref{con:parameter} holds, and the SNR satisfies $ \frac{k^2}{R^{2/3} \sigma_0^2 \sigma_{\xi}^2 d^{13/6}} \lesssim \frac{\sum_{p=1}^{k} (1-\frac{p-1}{k}) \alpha_p^3 A_{(p,k)}}{(\sigma_{\xi}\sqrt{d})^3} \lesssim \frac{1}{n} $. Consider full data-replay training with learning rate $\eta \in(0, \widetilde{O}(1)]$, and let $(\mathbf{x}_k, y_k) \sim \mathcal{D}_k$ be a test sample from the task $k$. Then, with high probability, there exist training times $T_k$ and $T_m$ ($m>k$) such that 
    \begin{itemize}
        \item The model fails to correctly classify task $k$ immediately after learning it:
        \begin{equation}
            \mathbb{P}\left\{y_k F\left(\mathbf{W}^{(T_k)}, \mathbf{x}_k\right)<0\right\} \geq \frac{1}{2}-{\frac{1}{\operatorname{polylog}(d)}}.
        \end{equation}
        \item (Persistent Learning Failure on Task $k$) If the additional SNR condition holds $\frac{m^2-k^2}{R^{1/3} \sigma_0 \sigma_{\xi} d} \lesssim \frac{\sum_{p=1}^{m} (1-\frac{p-1}{m}) \alpha_p^3 A_{(p,k)}}{(\sigma_{\xi}\sqrt{d})^3} \lesssim \frac{1}{n} $, then the model still fails to correctly classify task $k$ after subsequent training to task $m$: 
        \begin{equation}
            \mathbb{P}\left\{y_k F\left(\mathbf{W}^{(T_m)}, \mathbf{x}_k\right)<0\right\} \geq \frac{1}{2}-{\frac{1}{\operatorname{polylog}(d)}}.
        \end{equation}
        \item (Enhanced Signal Learning on Task $k$) If the additional SNR conditions holds $\frac{\sum_{p=1}^{m}  \alpha_p^3 A_{(p,k)}}{(\sigma_{\xi}\sqrt{d})^3} \gtrsim \frac{1}{n R^{1/3} \sigma_0 \sigma_{\xi}\sqrt{d} } $, then the model can correctly classify task $k$ after subsequent training to task $m$:
        \begin{equation}
            \mathbb{P}\left\{y_k F\left(\mathbf{W}^{(T_m)}, \mathbf{x}_k\right)<0\right\} \leq {\frac{1}{\operatorname{poly}(d)}}.
        \end{equation}
    \end{itemize}
\end{theorem}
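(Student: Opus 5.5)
The plan is to reduce each of the three claims to the weight-level characterizations already established, and then convert those into test-error statements by a single computation on a fresh sample $(\mathbf{x}_k,y_k)\sim\mathcal{D}_k$. Since $\bm{\xi}_k\sim\mathcal{N}(0,\sigma_\xi^2\mathbf{H})$ is orthogonal to every $\mathbf{v}_p^*$, we have
\[
y_kF(\mathbf{W},\mathbf{x}_k)=\sum_{r\in[R]}\alpha_k^3\langle\mathbf{w}_r,\mathbf{v}_k^*\rangle^3+y_k\sum_{r\in[R]}\langle\mathbf{w}_r,\bm{\xi}_k\rangle^3 .
\]
Both times $T_k$ and $T_m$ are chosen as the stopping times provided by the lemmas: $T_k=\mathrm{T}_\xi^k$ and $T_m=\mathrm{T}_\xi^m$ or $\mathrm{T}_v^m$. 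The task is then to show, in each regime, which of the two summands dominates.

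\textbf{Failure claims \eqref{eq:ff_k} and \eqref{eq:ff_m}.} These rest on Lemma~\ref{lem:max_Gamma_task_k} and Lemma~\ref{lem:m_yPhi_leq_R}, which give $\max_r|\Gamma_{(k,r)}^{(T_k,k)}|\le\widetilde O(\sigma_0)$, and likewise for $\Gamma_{(m,r)}^{(T_m,k)}$ under the persistent-failure SNR condition. Hence the signal term is at most $R\alpha_k^3\widetilde O(\sigma_0^3)$, which is $d^{-\Omega(1)}$ by Condition~\ref{con:parameter} together with the SNR upper bound.

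For the noise term, I condition on $\mathbf{W}$. Since the test noise is independent of the training data, each $\langle\mathbf{w}_r,\bm{\xi}_k\rangle$ is a centered Gaussian with variance $\sigma_\xi^2\|\mathbf{H}\mathbf{w}_r\|^2$, and the map $\bm{\xi}\mapsto\sum_r\langle\mathbf{w}_r,\bm{\xi}\rangle^3$ is odd. The sign of $y_k\sum_r\langle\mathbf{w}_r,\bm{\xi}_k\rangle^3$ is therefore a fair coin independent of $y_k$. It remains to show that this term exceeds the signal term in magnitude with probability $1-1/\mathrm{polylog}(d)$. For this I lower bound $\|\mathbf{H}\mathbf{w}_r\|$ using the memorization events: at time $\tau_{kj}$ some neuron satisfies $\langle\mathbf{w}_r,\bm{\xi}_{kj}\rangle\ge\Theta(R^{-1/3})$ (or $R^{-1/4}$ at stage $m$), which forces $\|\mathbf{H}\mathbf{w}_r\|\gtrsim R^{-1/3}/(\sigma_\xi\sqrt d)$. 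Consequently $\sum_r\langle\mathbf{w}_r,\bm{\xi}_k\rangle^3$ has a typical scale $\Omega(R^{-1}\cdot\mathrm{polylog}^{-1})$. Gaussian anti-concentration for a cubic polynomial, via Carbery--Wright, then shows it falls below the $d^{-\Omega(1)}$ signal level with probability at most $1/\mathrm{polylog}(d)$. Combining the fair-coin sign with this anti-concentration bound gives error at least $\tfrac12-1/\mathrm{polylog}(d)$.

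\textbf{Success claim \eqref{eq:fs_m}.} This follows from Lemma~\ref{lem:noise_fs_app}, which gives $\max_r|\Gamma_{(m,r)}^{(T_m,k)}|\ge\Theta(1/(\alpha_kR^{1/5}))$ at $T_m=\mathrm{T}_v^m\le\mathrm{T}_\xi^m$. Because the signal update in \eqref{eq:update_signal_data_replay} has nonnegative increments (all $A_{(p,k)}\ge0$), the sign of $\Gamma$ at $r^*$ is preserved and positive, so the signal term is $\gtrsim R^{-3/5}$. Every other neuron contributes at worst $-\widetilde O(\alpha_k^3\sigma_0^3)$.

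For the noise term, $\|\mathbf{H}\mathbf{w}_r\|$ is bounded by the initialization plus the accumulated noise updates. Since noise memorization of training points stays at most $\Theta(R^{-1/4})$ for $t\le\mathrm{T}_\xi^m$, the projection of $\mathbf{w}_r$ onto the training-noise span has norm at most $\widetilde O(\sqrt{mn}\,R^{-1/4}/(\sigma_\xi\sqrt d))$. The fresh noise is nearly orthogonal to that span, with inner products of order $\sqrt d\sigma_\xi^2$, so $|\langle\mathbf{w}_r,\bm{\xi}_k\rangle|\le\widetilde O(\sigma_0\sigma_\xi\sqrt d+\mathrm{poly}(mn)R^{-1/4}d^{-1/2})$ with probability $1-1/\mathrm{poly}(d)$ by a Gaussian tail bound. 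The cubed sum is then $o(R^{-3/5})$, so $y_kF>0$ with probability $1-1/\mathrm{poly}(d)$.

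\textbf{Main obstacle.} The delicate step is the anti-concentration in the failure case. It requires a usable lower bound on $\|\mathbf{H}\mathbf{w}_r\|$, and hence on the variance of the fresh-noise output, at the stopping time. If the Carbery--Wright route proves too lossy, I would instead follow \cite{jelassi2022towards,cao2022benign}: decompose $\mathbf{w}_r$ into its initialization plus a combination of the training noises, and show that the initialization component alone yields variance $\asymp\sigma_0^2\sigma_\xi^2d$. That already dominates $\alpha_k^3\sigma_0^3$ precisely under the lower SNR bound in the hypotheses, which is presumably the reason that lower bound appears.
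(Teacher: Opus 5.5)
\textbf{Overall route.} Your structure matches the paper's. You split $y_kF$ into the signal term $\sum_r\alpha_k^3(\Gamma^{(k)}_r)^3$ and the fresh-noise cubic $N=\sum_r\langle\mathbf{w}_r,\bm{\xi}_k\rangle^3$. For the two failure bullets you use symmetry of $N$ plus cubic Gaussian anti-concentration (Lemma~\ref{lem:K12} is a Carbery--Wright consequence), and for the last bullet you use signal dominance.

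\textbf{The gap: the anti-concentration input.} You rightly flag this as the crux, but it is not correct as written, for two reasons.
\begin{enumerate}
\item \emph{The scale is off by $d^{3/2}$.} The test noise is nearly orthogonal to $\bm{\xi}_{kj}$. So $\|\mathbf{H}\mathbf{w}_{r^*}\|\gtrsim R^{-1/3}/(\sigma_\xi\sqrt d)$ only gives $\langle\mathbf{w}_{r^*},\bm{\xi}_k\rangle$ a standard deviation $\gtrsim R^{-1/3}d^{-1/2}$. The cubic scale is therefore $R^{-1}d^{-3/2}$, not $R^{-1}/\mathrm{polylog}(d)$. You apply exactly this $d^{-1/2}$ suppression in your own success case.
\item \emph{One neuron's norm does not control $\mathbb{E}[N^2]$.} Carbery--Wright needs a lower bound on $\mathbb{E}[N^2]$, and $\mathbb{E}[N^2]\gtrsim\sigma_\xi^6\|\sum_r(\mathbf{H}\mathbf{w}_r)^{\otimes 3}\|_F^2$. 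This tensor can cancel across neurons, e.g.\ when $\mathbf{w}_2=-\mathbf{w}_1$.
\end{enumerate}

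\textbf{The fix is the paper's directional bound.} Take $u=y_{kj}\bm{\xi}_{kj}/\|\bm{\xi}_{kj}\|$. Use memorization for $r^*$, and the one-sided bound $y_{kj}\Phi^{(t,k,j)}_{(k,r)}\ge -d^{-1/2}$ for every other $r$ (Lemma~\ref{lem:m_yPhi_geq_d}; it holds because the diagonal update of $y_{kj}\Phi$ is nonnegative up to small cross terms). Together these give $\sum_r\langle\mathbf{P}_v^{\perp}\mathbf{w}_r,u\rangle^3\gtrsim(\sigma_\xi\sqrt d)^{-3}[R^{-1}-\widetilde O(Rd^{-3/2})]\ge 1$. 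That is the hypothesis of Lemma~\ref{lem:K12}, so $|N|\ge\epsilon\sigma_\xi^3$ with probability $1-O(\epsilon^{1/3})$.

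You then still have to check $R\alpha_k^3\widetilde O(\sigma_0^3)\ll\sigma_\xi^3$. This is not a generic ``$d^{-\Omega(1)}$'' fact. It uses the $p=k$ term of the SNR upper bound, $\alpha_k^3\lesssim k(\sigma_\xi\sqrt d)^3/n$, together with $\sigma_0^3=\Theta((n/R)d^{-1.56})$. That gives a signal term of $\widetilde O(k\,d^{-1.59})$ against $\sigma_\xi^3=d^{-1.53}$.

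\textbf{Your fallback has the SNR dependence backwards.} The initialization component gives a cubic noise scale of order $\sigma_0^3(\sigma_\xi\sqrt d)^3$ per neuron. Beating $R\alpha_k^3\sigma_0^3$ with it needs an \emph{upper} bound on $\alpha_k/(\sigma_\xi\sqrt d)$. The hypothesis only gives $\alpha_k^3/(\sigma_\xi\sqrt d)^3\lesssim k/n$, which need not leave a $\mathrm{polylog}(d)$ margin. The memorized direction, by contrast, gives a polynomial margin. The lower SNR bound plays a different role: it makes the cross-sample error terms in the noise-memorization recursion (bounded through Lemma~\ref{lem:k_sum_l}) negligible, so the tensor-power argument for $y_{kj}\Phi$ goes through.

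\textbf{Your success case is sound, and sharper than the paper's.} The paper uses the crude estimate $\Theta(R\cdot R^{-3/5})\pm\Theta(R\cdot R^{-3/4})$. You instead use two cleaner facts:
\begin{itemize}
\item $\Gamma^{(k)}$ is monotone, since its increments are nonnegative when $A_{(p,k)}\ge 0$. So the other neurons' negative contributions are bounded by their initialization.
\item $\bm{\xi}_k$ is nearly orthogonal to the training-noise span, which makes the noise term $o(R^{-3/5})$.
\end{itemize}
You should state explicitly what this relies on: the two-sided bound $|\Phi^{(T_m,p,j)}_{(m,r)}|\lesssim R^{-1/4}$ for all $r$, $p\le m$, $j$, and conditioning of the training-noise Gram matrix, which needs $mn\ll\sqrt d$.
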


\begin{proof}[\bf Proof of Theorem~\ref{thm:forget_m_ffs_tasks_app}]
    We first prove the training phase one (before training task $k+1$).
    For the new test data $(\mathbf{x}_k, y_k) \sim \mathcal{D}_k$, with probability at least $1-1 /$ poly $(d)$, we have
    \begin{equation}\label{eq:f_phase1}
        \begin{aligned}
        y_k F(\mathbf{W}^{(T_k)}, \mathbf{x}_k) & =y_k \sum_{r \in[R]} (\langle\mathbf{w}_r^{(T_k)}, \mathbf{x}_k^1\rangle^3 + \langle\mathbf{w}_r^{(T_k)}, \mathbf{x}_k^2\rangle^3)\\
        & =y_k \sum_{r \in[R]} (\langle\mathbf{w}_r^{(T_k)}, \alpha_k y_k \mathbf{v}_k^*\rangle^3 + \langle\mathbf{w}_r^{(T_k)}, \bm{\xi}_k \rangle^3)\\
        & =\sum_{r \in[R]}[\alpha_k^3(\Gamma_{(k, r)}^{(T_k)})^3+y\langle\mathbf{w}_r^{(T_k)}, \bm{\xi}_k\rangle^3] \\
        & \leq \sum_{r \in[R]} y_k\langle\mathbf{w}_r^{(T_k)}, \bm{\xi}_k\rangle^3+ \widetilde{O}(R \alpha_k^3 \sigma_0^3) 
        \end{aligned}
    \end{equation}
    Let $\mathbf{P}_v=\sum_{m \in[M]} \mathbf{v}_m^*\left(\mathbf{v}_m^*\right)^{\top}$ and $\mathbf{P}_v^{\perp}=\mathbf{I}_d-\mathbf{P}_v$. Since $\bm{\xi} \sim \mathcal{N}\left(0, \sigma_{\xi}^2 \mathbf{P}_v^{\perp}\right)$, there exists a vector $\bm{\xi}_d \sim \mathcal{N}\left(0, \sigma_{\xi}^2 \mathbf{I}_d\right)$ such that $\bm{\xi}=\mathbf{P}_v^{\perp} \bm{\xi}_d$.
    Now, decompose $\mathbf{w}_r^{(T_k)}$ as: $\mathbf{w}_r^{(T_k)}=\mathbf{P}_v \mathbf{w}_r^{(T_k)}+\mathbf{P}_v^{\perp} \mathbf{w}_r^{(T_k)}.$ According to the definition of $\Phi_{(k, r)}^{(T_k,k,j)}=\left\langle\mathbf{w}_r^{(T_k)}, \bm{\xi}_{k j}\right\rangle=\left\langle\mathbf{P}_v^{\perp} \mathbf{w}_r^{(T_k)}, \bm{\xi}_{k j}\right\rangle$ and Lemma~\ref{lem:m_yPhi_leq_R}, for Task $1$'s data $(\mathbf{x}_k,y_k)$, we have
    $$
    \Theta\left(R^{-\frac{1}{3}}\right) \leq \max _{r \in[R]} y_{k j} \Phi_{(k, r)}^{(T_k,k,j)}=\max _{r \in[m]}\left\langle\mathbf{P}_v^{\perp} \mathbf{w}_r^{(T_k)}, y_{k j} \bm{\xi}_{k j}\right\rangle .
    $$
    Denote $r^* = \arg \max y_{k j}\Phi_{(k, r)}^{(T_k,k,j)}$, then it holds that 
    \begin{equation}
    \begin{aligned}
    \sum_{r \in[R]}\left\langle\mathbf{P}_v^{\perp} \mathbf{w}_r^{(T_k)}, \frac{y_{k j} \bm{\xi}_{k j}}{\left\|\bm{\xi}_{k j}\right\|}\right\rangle^3 & \geq \frac{1}{\left\|\bm{\xi}_{k j}\right\|^3} \left[\left( y_{k j}\Phi_{k,r^*}^{(T_k,k,j)}\right)^3-\sum_{r \neq r^*}\left(y_{k j} \Phi_{(k, r)}^{(T_k,k,j)}\right)^3\right] \\
    & \overset{(i)}{\geq} \tilde{\Omega}\left(\frac{1}{d^{3 / 2} \sigma_{\xi}^3}\right)\left[\Theta\left(R^{-1}\right)-\widetilde{O}\left(R \left(d^{-1/2} \right)^3\right)\right] \\
    & =\widetilde{\Omega}\left(\frac{1}{d^{3 / 2} \sigma_{\xi}^3}\right) \overset{(ii)}{\geq}  1 .
    \end{aligned}
    \end{equation}
    Here, $(i)$ comes from Lemma~\ref{lem:m_yPhi_geq_d} and Lemma~\ref{lem:m_yPhi_leq_R} and $(ii)$ holds due to the assumption on $\sigma_{\xi}$.
    Given that the model $\mathbf{W}^{(T_k)}$ and the test label $y_k$ are independent of the noise $\boldsymbol{\xi}_d$, it follows that the distribution of $\sum_{r \in[R]} y_k\langle\mathbf{P}_v^{\perp} \mathbf{w}_r^{(T_k)}, \boldsymbol{\xi}_d\rangle^3$ is symmetric. This holds under the condition that $\mathbf{W}^{(T_k)}$ and $y \boldsymbol{\xi}_d$ are distributed as $\mathcal{N}(0, \sigma_{\xi}^2 \mathbf{I}_d)$, where $y \in\{-1,+1\}$.
    According to Lemma~\ref{lem:K12}, let $\mathbf{w}_r=\mathbf{P}_v^{\perp} \mathbf{w}_r^{(T_k)}$ and $\boldsymbol{u}=y_{k j} \boldsymbol{\xi}_{k j} /\|\boldsymbol{\xi}_{k j}\|$, then we derive:
    \begin{equation}
        \begin{aligned}        &\mathbb{P}_{\boldsymbol{\xi}_d}\left(\sum_{r \in[R]}\left\langle\mathbf{P}_v^{\perp} \mathbf{w}_r^{(T_k)}, y_k \boldsymbol{\xi}_d\right\rangle^3<-\epsilon \sigma_{\xi}^3\right) \\
        & \geq \frac{1}{2}-\mathbb{P}_{\boldsymbol{\xi}_d}\left(\left|\sum_{r \in[R]} \left\langle\mathbf{P}_v^{\perp} \mathbf{w}_r^{(T_k)}, y_k \boldsymbol{\xi}_d\right\rangle^3\right| \leq \epsilon \sigma_{\xi}^3\left|\sum_{r \in[R]}\left\langle\mathbf{P}_v^{\perp} \mathbf{w}_r^{(T_k)}, \boldsymbol{u}\right\rangle^3\right|\right) \\
        & \geq \frac{1}{2}-O\left(\epsilon^{1 / 3}\right) .
        \end{aligned}
    \end{equation}
    Taking $\epsilon= 1/\operatorname{polylog}(d)$, it holds that
    $$
    \mathbb{P}\left(\sum_{r \in[R]}\left\langle\mathbf{P}_v^{\perp} \mathbf{w}_r^{(T_k)}, y_k \boldsymbol{\xi}_d\right\rangle^3<-\widetilde{O}\left(\sigma_{\xi}^3\right)\right) \geq \frac{1}{2}-\frac{1}{\operatorname{polylog}(d)}.
    $$
    Moreover, along with \cref{eq:f_phase1}, we can further obtain the following:
    $$
    \begin{aligned}
        y_k F\left(\boldsymbol{W}^{(T_k)}, \mathbf{x}_k\right) &\leq \sum_{r \in [R]} y_k\left\langle\mathbf{w}_r^{(T_k)}, \boldsymbol{\xi}\right\rangle^3+ \widetilde{O}(R \alpha_k^3 \sigma_0^3) \\
        &{\leq} -\widetilde{O}\left(\sigma_{\xi}^3\right) +  \widetilde{O}(R \alpha_k^3 \sigma_0^3)   \\
        & \overset{(i)}{\leq} 0,
    \end{aligned}
    $$
    where $(i)$ comes from the Condition~\ref{con:parameter} and the SNR choices. 
    The proofs for $T_k$ and $T_m$ are identical, where $t=T_m$, it still holds that $\Gamma \leq \widetilde{O}(\sigma_0)$ and $  \max _{r \in[R]} y_{1 j} \Phi_{(m, r)}^{(T_m,k,j)} \geq \Theta(R^{-\frac{1}{4}})$. Hence, the remainder of the proof proceeds exactly as in the case $t = T_m$.

    For the scenario of enhanced signal learning, the noise memorization of training phase 2 will be under control and the signal learning will increase as stated in Lemmar~\ref{lem:noise_fs_app}. Thus, given the new test data $(\mathbf{x}_k, y_k) \sim \mathcal{D}_k$ for task k, with probability at least $1-1 /$ poly $(d)$, we have
    \begin{equation*}
        \begin{aligned}
            y_k F\left(\boldsymbol{W}^{(T_m)}, \mathbf{x}\right) &=y_k \sum_{r \in[R]} (\langle\mathbf{w}_r^{(T_m)}, \mathbf{x}_k^1\rangle^3 + \langle\mathbf{w}_r^{(T_m)}, \mathbf{x}_k^2\rangle^3) \\
            & = y_k \sum_{r \in[R]} \left\langle\boldsymbol{W}^{(T_m)}, y_k \alpha_k\mathbf{v}_k^*\right\rangle^3  + \left\langle\boldsymbol{W}^{(T_m)}, \bm{\xi}_k \right\rangle^3 \\
            & \overset{(i)}{\geq} \Theta(\alpha_k^3 \cdot R \cdot \alpha_k^{-3} R^{-3/5}) \pm \Theta(R \cdot R^{-3/4})\\
            & \geq \widetilde{\Omega}(1).
        \end{aligned}
    \end{equation*}
    Here, $(i)$ follows from Lemma~\ref{lem:noise_fs_app}, which shows that, under the SNR condition stated in Theorem~\ref{thm:forget_m_ffs_tasks_app}, noise memorization is slower than signal learning.
\end{proof}

\subsection{Proof of Theorem~\ref{thm:forget_m_sfs_tasks}}

In this section, we present the proof of Theorem~\ref{thm:forget_m_sfs_tasks} in two parts. The first part analyzes the success of signal learning after training on $k$ tasks (i.e., before task $k+1$ ). The second part focuses on noise memorization after training on $m>k$ tasks (i.e., before task $m+1$ ) and further considers two scenarios in the later phase: one where learning fails to retain previously acquired features, and another where signal learning continues to improve.

\begin{lemma}\label{lem:phi_leq_ss}
    During the data replay training process, with probability at least $1-1 /\operatorname{poly}(d)$, it holds that:
    $$
    \max _{r \in[R], k \in [m], j \in[n]}| \Phi_{(k, r)}^{(t,k,j)}| \leq \widetilde{O}(\sigma_0 \sigma_{\xi} \sqrt{d}) \quad \text { for any } \quad t \leq \mathrm{T}_k.
    $$
\end{lemma}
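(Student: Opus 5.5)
I would prove Lemma~\ref{lem:phi_leq_ss} by induction on the iteration counter across the concatenated training phases $q=1,\dots,k$ up to $T_k$, mirroring the structure of the proof of Lemma~\ref{lem:m_yPhi_leq_Rd} but now with a \emph{two-sided} bound. The inductive hypothesis is that for all $s\le t$ and all $r\in[R]$, $p\in[k]$, $j\in[n]$ we have $|\Phi_{(q,r)}^{(s,p,j)}|\le \widetilde{O}(\sigma_0\sigma_\xi\sqrt d)$. The base case is Lemma~\ref{lem:w0_v_xi}: at initialization $|\langle \mathbf{w}_r^{(0)},\bm{\xi}_{pj}\rangle|\le \widetilde{O}(\sigma_0\sigma_\xi\sqrt d)$ with probability $1-1/\operatorname{poly}(d)$, after a union bound over $r,p,j$, which costs only polylog factors since $R,n\le\operatorname{polylog}(d)$ and $M$ is treated as at most polynomial.

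For the inductive step I unroll the recursion \cref{eq:update_noise_data_replay} as in \cref{eq:yPhi}. I then split the increment into two parts. The first is the diagonal term $(p,j')=(k,j)$, of size $\Theta(\eta d\sigma_\xi^2/(nq))\,\ell\,(\Phi^{(s,k,j)})^2$. The second is the off-diagonal terms, where Lemma~\ref{lem:xi_bound} gives $|\langle\bm{\xi}_{pj'},\bm{\xi}_{kj}\rangle|\le\widetilde{O}(\sigma_\xi^2\sqrt d)$.

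For both parts I bound $\ell\le 1$, or better use $\sum_s \frac1n\sum_{j'}\ell_{pj'}$. The key input is that in this regime (the SNR condition of Lemma~\ref{lem:signal_sf}/\ref{lem:signal_ss}) signal learning is fast. By the tensor-power argument of Lemma~\ref{lem:E8}/\ref{lem:K15_extension} applied to $\Gamma^{(t,k)}$ with rate $h=\Theta(\frac{\eta}{m}\sum_{p\le k}\alpha_p^3A_{(p,k)})$, the total time before signal reaches $\Theta(1/(\alpha_kR^{1/3}))$ is $T_v^k\approx \frac{m}{\eta\sigma_0\sum_p\alpha_p^3A_{(p,k)}}+O(\log d)$. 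After that point the loss derivatives decay, and the summed derivative is controlled exactly as in Lemma~\ref{lem:k_sum_l}.

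Plugging the hypothesis $(\Phi)^2\le\widetilde{O}(\sigma_0^2\sigma_\xi^2 d)$ into the unrolled sum gives a total drift of
\[
\widetilde{O}\Bigl(\frac{d\sigma_\xi^2+nk\sqrt d\,\sigma_\xi^2}{n}\cdot\sigma_0^2\sigma_\xi^2 d\cdot \frac{m}{\sigma_0\sum_{p\le k}\alpha_p^3A_{(p,k)}}\Bigr).
\]
I then need this to be $\le\widetilde{O}(\sigma_0\sigma_\xi\sqrt d)$. Rearranged, this requires
\[
\frac{\sum_{p\le k}\alpha_p^3A_{(p,k)}}{(\sigma_\xi\sqrt d)^3}\gtrsim \frac{1+nk^2/\sqrt d}{kn},
\]
up to polylog factors, which is precisely the assumed SNR condition. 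The $1$ comes from the diagonal term and the $nk^2/\sqrt d$ from the off-diagonal terms summed over $k$ phases and $nk$ samples. This closes the induction, and a final union bound over all iterations up to $T_k=\operatorname{poly}(d)$ gives the stated probability.

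The main obstacle is the bookkeeping of the summed loss derivatives across phases. The induction on $\Phi$ needs the time horizon $T_k$ to be short, i.e.\ needs signal learning to finish first. Signal learning in turn needs $\ell=\Theta(1)$, which requires $\Phi$ small. I would break this circularity by running a joint induction on the pair (signal growth, noise bound), as in Lemma~\ref{lem:max_Gamma_task_k} and Lemma~\ref{lem:m_yPhi_leq_Rd}. Before $\tau_{kv}^k$ I use the crude bound $\ell\le1$ with horizon $T_v^k$. After $\tau_{kv}^k$ I use the log-sum bound from Lemma~\ref{lem:E8}, which adds only an extra $\log d$ factor to the horizon.
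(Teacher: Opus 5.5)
Your argument has the same skeleton as the paper's proof:
\begin{itemize}
\item induction from the initialization bound of Lemma~\ref{lem:w0_v_xi};
\item unrolling \cref{eq:update_noise_data_replay};
\item separating the self term ($\|\bm{\xi}_{kj}\|^2=\Theta(\sigma_\xi^2 d)$) from the cross terms via Lemma~\ref{lem:xi_bound};
\item substituting $\Phi^2\le\widetilde{O}(\sigma_0^2\sigma_\xi^2 d)$ into the quadratic increment and summing over a bounded horizon.
\end{itemize}
The genuine difference is how that horizon is controlled.

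The paper uses $\ell\le 1$ and simply asserts $T_k\le m/(\eta\sigma_0\sigma_\xi)$, without invoking any SNR hypothesis. With that horizon the self term accumulates to order $d^2\sigma_\xi^3\sigma_0/n$. Under Condition~\ref{con:parameter} this exceeds the target $\sigma_0\sigma_\xi\sqrt d$ by a factor $d^{3/2}\sigma_\xi^2/n\approx d^{0.48}/n$, so the paper's step (iv) does not close as written.

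You instead take the horizon to be the signal-learning time, of order $1/(\eta\sigma_0\sum_{p\le k}\alpha_p^3A_{(p,k)})$. After $\tau_{kv}^k$ you control the derivatives through Lemma~\ref{lem:k_sum_l} and Lemma~\ref{lem:E8}. This buys three things:
\begin{itemize}
\item it shortens the horizon by exactly the missing polynomial factor;
\item it makes explicit that the statement, which carries no hypothesis of its own, only holds under the lower SNR bound of Lemmas~\ref{lem:signal_sf}--\ref{lem:signal_ss};
\item it shows that this SNR bound is precisely what noise control requires.
\end{itemize}
The price is the joint induction with signal growth, which keeps $\ell=\Theta(1)$ so the signal actually finishes within that horizon. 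You correctly identify this.

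One bookkeeping caveat. Your displayed drift charges a single horizon $m/(\eta\sigma_0 S)$, with $S=\sum_{p\le k}\alpha_p^3A_{(p,k)}$, to both terms. It therefore rearranges to roughly $m(1+nk/\sqrt d)/n$, not literally $(1+nk^2/\sqrt d)/(kn)$. The stated form comes out if you account as follows:
\begin{itemize}
\item charge the self term only during phase $k$, the only phase up to $T_k$ in which $\bm{\xi}_{kj}$ is in the training set, with normalization $1/(nk)$;
\item let the cross terms, with total coefficient $\widetilde{O}(\eta\sqrt d\,\sigma_\xi^2)$ per step, accumulate over all $k$ phases, each of length of order $1/(\eta\sigma_0 S)$.
\end{itemize}
The paper leaves phase lengths implicit, so these $k,m$ factors depend on convention. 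They are absorbed into $\widetilde{O}(\cdot)$ when $M\le\operatorname{polylog}(d)$.
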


\begin{proof}[\bf Proof of Lemma~\ref{lem:phi_leq_ss}]
    According to the initialization and the concentration by Lemma~\ref{lem:w0_v_xi}, with probability at least $1-1 /\operatorname{poly}(d)$, it holds that
    \begin{equation*}
        \bar{\Phi}_{(0, r)}^{(0)} := \max _{ k \in [M] ,j \in[n]}| \Phi_{(0, r)}^{(0,k,j)}| = \max _{ j \in[n]}| \langle \mathbf{w}_{(0,r)}^{(0)}, \bm{\xi}_{kj} \rangle|\leq \widetilde{O}(\sigma_0 \sigma_{\xi} \sqrt{d}).
    \end{equation*}
    Next, we consider the induction process to prove the statement. First, we assume that $\Phi_{(k, r)}^{(s)} \leq \widetilde{O}(\sigma_0 \sigma_{\xi} \sqrt{d})$ holds for any $s \leq t$. Then, we proceed to analyze the case for $s=t+1$. Denote $\bar{\Phi}_{(k, r)}^{(s,k,j)} = \max _{ k \in [M] ,j \in[n]}| \Phi_{(k, r)}^{(s,k,j)}|$, according to the update rule \eqref{eq:update_noise_data_replay}, we have
    \begin{equation*}
        \begin{aligned}
            \bar{\Phi}_{(k, r)}^{(s+1)} &\leq \max _{k \in [M] ,j \in[n]} \Phi_{(k, r)}^{(s,k,j)} +  \frac{3\eta }{nm} \sum_{p=1}^{k} \sum_{j^{\prime} \in[n]} y_{p j^{\prime}} \ell_{p j^{\prime}}(\mathbf{W}_k^{(t-1)}) (\Phi_{(k, r)}^{(t-1,p,j)})^2 \langle \bm{\xi}_{p j^{\prime}},\bm{\xi}_{k j} \rangle \\
            & \overset{(i)}{\leq} \bar{\Phi}_{(k, r)}^{(s)} + \frac{3 \eta \sqrt{d} \sigma_{\xi}^2 (n-1) k }{nm} ( \bar{\Phi}_{(k, r)}^{(s)})^2 + \frac{3 \eta {d} \sigma_{\xi}^2}{nm} ( \Phi_{(k, r)}^{(s,k,j)})^2 \\
            & \overset{(ii)}{\leq}  \bar{\Phi}_{(k, r)}^{(s)} + \frac{3k (n-1) \eta d\sqrt{d} \sigma_{\xi}^4 \sigma_0^2}{nm} + \frac{3 \eta d^2 \sigma_{\xi}^4 \sigma_0^2}{nm} \\
            & \leq  \Phi_{(0, r)}^{0} +  \frac{3k(s-1)(n-1) \eta d\sqrt{d} \sigma_{\xi}^4 \sigma_0^2}{nm} + \frac{3 (s-1)\eta d^2 \sigma_{\xi}^4 \sigma_0^2}{nm} \\
            & \overset{(iii)}{\leq} \widetilde{O}(\sigma_0 \sigma_{\xi} \sqrt{d}) + O(\frac{3 T_k k (n-1) \eta d\sqrt{d} \sigma_{\xi}^4 \sigma_0^2}{nm} + \frac{3 T_k \eta d^2 \sigma_{\xi}^4 \sigma_0^2}{nm}) \\
            & \overset{(iv)}{\leq} \widetilde{O}(\sigma_0 \sigma_{\xi} \sqrt{d}),
        \end{aligned}
    \end{equation*}
    where $(i)$ holds due to the concentration in Lemma~\ref{lem:xi_bound}; $(ii)$ derives from the induction hypothesis; $(iii)$ comes from $s+1 \leq T_k$; $(iv)$ holds due to $T_k \leq \frac{m}{\eta \sigma_{0} \sigma_{\xi}}$.
\end{proof}

\begin{lemma}[Restatement of Lemma~\ref{lem:signal_sf}]\label{lem:signal_sf_app}
    Suppose the SNR satisfying $ \frac{\sum_{p=1}^{k} \alpha_p^3 A_{(p,k)}}{(\sigma_{\xi}\sqrt{d})^3} \gtrsim \frac{1+n k^2/\sqrt{d}}{kn} $, and there exists an iteration $\tau_{k v}^k \leq \mathrm{T}_{v}^k=\mathrm{T}_{v}^{-}+O(\log (d))$ such that $\tau_{k v}^k$ is the first iteration where $ \max _{r \in[R]}|\Gamma_{(k,r)}^{(t,k)}| \geq \Theta(\frac{1}{\alpha_k R^{1/3}})$, and for any $t \leq \mathrm{T}_{v}^k$ it holds that $\max _{r \in[R]}|\Phi_{(k, r)}^{(t,k,j)})| \leq \widetilde{O}(\sigma_0\sigma_{\xi}\sqrt{d})$.
    Then, if the additional SNR condition $ \frac{m^2}{R^{2/3} \sigma_0^2 \sigma_{\xi}^2 d^{13/6}} \lesssim \frac{\sum_{p=1}^{m} \alpha_p^3 A_{(p,k)}}{(\sigma_{\xi}\sqrt{d})^3} \lesssim \frac{\alpha_k R^{1/3}}{n}$ also holds, there exists $\tau_{k j}^m \leq \mathrm{T}_{\xi}^m=\mathrm{T}_{\xi}^{k}+O(\log (d))$ such that  $\tau_{k j}^m$ be the first iteration satisfying $\max _{r \in[R]}(y_{k j} \Phi_{(m, r)}^{(t,k,j)}) \geq \Theta(R^{-\frac{1}{5}})$. 
\end{lemma}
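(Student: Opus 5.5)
The proof splits into two phases, mirroring the proof of Lemma~\ref{lem:m_yPhi_leq_R}. In the first phase (training up to task $k$) the hypotheses already give what is needed. The signal $\max_r|\Gamma_{(k,r)}^{(t,k)}|$ reaches $\Theta(1/(\alpha_k R^{1/3}))$ at $\tau_{kv}^k\le \mathrm{T}_v^k$. Lemma~\ref{lem:phi_leq_ss} keeps every noise coordinate at $\widetilde{O}(\sigma_0\sigma_\xi\sqrt d)$ for $t\le T_k$. So at the start of task $k+1$ the noise memorization $y_{kj}\Phi_{(k+1,r^*)}^{(0,k,j)}$ is still at initialization scale.

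The work is in the second phase, tasks $k+1,\dots,m$. I would first control the signal side, then show the noise side escapes.

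\textbf{Signal side.} Since the task-$k$ samples are already nearly fit, the loss derivatives $\ell_{kj}$ become small. I would invoke the summation bound of Lemma~\ref{lem:k_sum_l}, with the range of $p$ extended to $[m]$, to obtain
\[
\sum_s \frac1n\sum_{j'}\ell_{pj'}(\mathbf W_m^{(s)})\le \widetilde{O}\Big(\frac{m}{\eta\sigma_0\sum_{p\in[m]}\alpha_p^3A_{(p,k)}}\Big).
\]
The upper SNR bound $\sum_{p\le m}\alpha_p^3A_{(p,k)}/(\sigma_\xi\sqrt d)^3\lesssim \alpha_kR^{1/3}/n$ then says the cumulative signal pushed toward $\mathbf v_k^*$ by later tasks is not large enough to dominate. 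Concretely, the increment in $\Gamma^{(t,k)}_{(m,r)}$ contributed through \eqref{eq:update_signal_data_replay} stays bounded by a quantity that does not prevent noise growth. In addition, the loss on later tasks' samples stays $\Theta(1)$ whenever their own signal is weak, so the replay gradient is still driven substantially by noise.

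\textbf{Noise side.} I would expand $y_{kj}\Phi_{(m,r^*)}^{(t+1,k,j)}$ exactly as in \eqref{eq:yPhi_2_m}--\eqref{eq:yPhi_2_m2}. Lemma~\ref{lem:xi_bound} separates the diagonal term $\Theta(\eta d\sigma_\xi^2/(nm))\,(y_{kj}\Phi^{(s,k,j)})^2$ from the cross terms, which carry $\sqrt d\sigma_\xi^2$. The cross terms, summed over $p\in[m]$, $j'$, $s$, are bounded using the loss-sum lemma above. The lower SNR bound $m^2/(R^{2/3}\sigma_0^2\sigma_\xi^2 d^{13/6})\lesssim \cdot$ is exactly what makes this error $o(R^{-1/5})$, indeed $o$ of the target threshold. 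One subtlety is that the diagonal term needs $\ell_{kj}$ bounded below. I would handle this by noting that once task $k$'s own sample has large margin, the replay still contains the term with $\ell_{kj}$. Failing that, I would use the cross-sample aggregate with $\ell_{pj'}=\Theta(1)$ from later weak tasks, arguing as in Lemma~\ref{lem:m_yPhi_leq_R} that $\ell_{pj'}\ge\Theta(1)$ for $p>k$ before their noise saturates.

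With these bounds, I apply the tensor power method, Lemma~\ref{lem:K16}, with $A=\Theta(\eta d\sigma_\xi^2/(nm))$, $C=o(R^{-1/5})$, $v=\Theta(R^{-1/5})$ and starting value $y_{kj}\Phi^{(T_k)}\gtrsim\sigma_0\sigma_\xi\sqrt d$ (from Lemma~\ref{lem:m_yPhi_geq_d} and anti-concentration at initialization). This gives
\[
\tau_{kj}^m\le T_k+\Theta\Big(\frac{nm}{\eta d\sigma_\xi^2\cdot\sigma_0\sigma_\xi\sqrt d}\Big)+O(\log d)=\mathrm T_\xi^k+O(\log d).
\]

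The main obstacle is the lower bound on $\ell_{kj}$ in the diagonal term. After successful signal learning of task $k$, $\ell_{kj}$ can be as small as $e^{-\Theta(1)}$ or smaller, which weakens the self-reinforcing growth. I would first try to show that $\alpha_k^3\sum_r(\Gamma^{(t,k)})^3$ only reaches $\Theta(1)$, since the threshold $1/(\alpha_kR^{1/3})$ gives total margin $\Theta(1)$; this keeps $\ell_{kj}=\Theta(1)$ throughout, exactly as in the $\Theta(1)$ bound in Lemma~\ref{lem:m_yPhi_leq_R}. The upper SNR condition with the factor $\alpha_kR^{1/3}$ is what prevents the signal from growing further before the noise reaches $R^{-1/5}$.
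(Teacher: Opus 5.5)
Your architecture matches the paper's. Phase one comes from the hypotheses and Lemma~\ref{lem:phi_leq_ss}. In phase two you split the noise recursion into diagonal and cross terms via Lemma~\ref{lem:xi_bound}, control the cross terms with Lemma~\ref{lem:k_sum_l} and the lower SNR bound, and finish with a tensor-power hitting time. The step that fails is your single application of Lemma~\ref{lem:K16}. That lemma requires $C\le z^{(0)}/8$. You start from $z^{(0)}\sim\sigma_0\sigma_\xi\sqrt d=\Theta((n/R)^{1/3}d^{-0.53})$ but take $C=o(R^{-1/5})$, which can be polynomially larger. The lower recursion $z^{(t)}\ge z^{(0)}+A\sum_s(z^{(s)})^2-C$ then certifies no growth at all. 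The error must be small relative to the \emph{starting} value, not the target.

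You also misstate what the lower SNR bound buys. Substituting $\sum_{p\le m}\alpha_p^3A_{(p,k)}/(\sigma_\xi\sqrt d)^3\gtrsim m^2/(R^{2/3}\sigma_0^2\sigma_\xi^2d^{13/6})$ into the cross-term estimate $\widetilde O\bigl(m^2\sqrt d\,\sigma_\xi^2(Rd)^{-2/3}/(\sigma_0\sum_{p\le m}\alpha_p^3A_{(p,k)})\bigr)$ gives an error $\lesssim\sigma_0\sigma_\xi\sqrt d$, exactly the initialization scale. This holds only while every other noise coordinate is still below $(Rd)^{-1/3}$, which is where the factor $(Rd)^{-2/3}$ and the exponent $13/6=3/2+2/3$ come from. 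Once other coordinates approach $R^{-1/5}$, the same estimate is far larger than $\sigma_0\sigma_\xi\sqrt d$. So no single fixed $C$ serves the whole run.

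The repair is the paper's two-stage argument. First, grow from the initialization scale to $(Rd)^{-1/3}$ with $C\le\sigma_0\sigma_\xi\sqrt d/8$. This takes about $nm/(\eta\sigma_0(\sigma_\xi\sqrt d)^3)$ iterations, with a Lemma~\ref{lem:E7}-type lower bound keeping the other coordinates below $(Rd)^{-1/3}$. The same calibration is what makes your starting value positive, since Lemma~\ref{lem:m_yPhi_geq_d} only gives $y_{kj}\Phi\ge-\widetilde O(\sigma_0\sigma_\xi\sqrt d)$. Second, restart Lemma~\ref{lem:K16} at $z^{(0)}\approx(Rd)^{-1/3}$ with $C=o((Rd)^{-1/3})$ and $v=\Theta(R^{-1/5})$. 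For this stage the paper bounds the cross terms by applying Lemma~\ref{lem:k_sum_l} with the signal already at $\Gamma\sim1/(\alpha_kR^{1/3})$, which replaces the factor $1/\sigma_0$ by $(\alpha_kR)^{1/3}$. This adds the lower-order term $\Theta(nmR^{1/3}/(\eta d^{2/3}\sigma_\xi^2))$ to $\mathrm{T}_\xi^m$.

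Your concern about $\ell_{kj}$ is legitimate; the paper simply uses the diagonal coefficient $\Theta(\eta d\sigma_\xi^2/(nm))$. However, your signal-side paragraph does not resolve it. Lemma~\ref{lem:k_sum_l} bounds the phase-two increment of $\Gamma$ only by $\widetilde O(\max_{s,p}(\Gamma^{(s,p)})^2/\sigma_0)$, which exceeds $\Gamma$ itself once $\Gamma\gg\sigma_0$. To keep $\ell_{kj}=\Theta(1)$ you need a \emph{lower} bound on the time for the signal to push the task-$k$ margin past $\Theta(1)$ (the Lemma~\ref{lem:E7} direction), compared against the noise hitting time. That comparison is where the upper SNR bound enters. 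The paper's proof addresses it only through $\mathrm{T}_\xi^m\le\mathrm{T}_v^m$, with $\mathrm{T}_v^m$ taken from Lemma~\ref{lem:K15_extension}.
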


\begin{proof}[\bf Proof of Lemma~\ref{lem:signal_sf_app}]
    In the first training phase, the noise memorization can be controlled by Lemma~\ref{lem:phi_leq_ss}. Thus, we only need to consider the signal learning process here. By learning dynamic of signal in \cref{eq:update_signal_data_replay}, we have:
    \begin{equation}
        \begin{aligned}
            \Gamma_{(k, r^*)}^{(t,k)} 
             &=  \Gamma_{(k, r^*)}^{(t-1,k)} + \frac{\eta}{n} \sum_{j \in[n]} \sum_{p \in[k]}  3\alpha_p^3 A_{(p,k)} \ell_{p j}(\mathbf{W}_k^{(t-1)}) (\Gamma_{(k, r^*)}^{(t-1,p)})^2  \\
             &= \Gamma_{(k, r^*)}^{(t-1,k)} + \Theta\left( \eta \sum_{p \in[k]}  3\alpha_p^3 A_{(p,k)}  \right)  (\Gamma_{(k, r^*)}^{(t-1,p)})^2 .
        \end{aligned}
    \end{equation}
    Then, by applying the tensor power method from Lemma~\ref{lem:K15_extension} to the sequence $\{\Gamma_{(k, r^*)}^{(s,k)}\}_{s \geq T_k}$, let $h=H= {3\eta (\sum_{p \in[k]}  3\alpha_p^3 A_{(p,k)}) }$, $z^{(0)}= \Gamma_{(k+1, r^*)}^{(0,k)} \geq O(\sigma_0)$, $ v=\Theta(\frac{1}{\alpha_k R^{1/3}}),$ then we obtain:
    \begin{equation*}
        \begin{aligned} 
        \tau_{k v}^k & \leq \frac{m}{3\eta \sigma_0 \sum_{p \in[k]}  3\alpha_p^3 A_{(p,k)} }+8\left[\frac{\log \left(v /z^{(0)}\right)}{\log (2)}\right] \\ 
        & \leq  \frac{m}{3\eta \sigma_0 \sum_{p \in[k]}  3\alpha_p^3 A_{(p,k)}} + \log d = T_{v}^k.
        \end{aligned}
    \end{equation*}
    When considering the second phase training, we first consider the signal learning and denote ${\tau}_{kv}^m$ as the first time that $\Gamma_{(m, r^*)}^{(t,k)}$ exceeds $(\alpha_k R)^{-\frac{1}{4}}$. Then, the signal learning dynamic will be:
    \begin{equation}
        \begin{aligned}
            \Gamma_{(m, r^*)}^{(t,k)} 
             &=  \Gamma_{(k, r^*)}^{(t-1,k)} + \frac{\eta}{mn} \sum_{j \in[n]} \sum_{p \in[m]}  3\alpha_p^3 A_{(p,k)} \ell_{p j}(\mathbf{W}_k^{(t-1)}) (\Gamma_{(m, r^*)}^{(t-1,p)})^2  \\
             &= \Gamma_{(m, r^*)}^{(t-1,k)} + \Theta\left( \frac{\eta}{m} \sum_{p \in[m]}  3\alpha_p^3 A_{(p,k)}  \right)  (\Gamma_{(m, r^*)}^{(t-1,p)})^2 .
        \end{aligned}
    \end{equation}
    Then, we still apply the tensor power method from Lemma~\ref{lem:K15_extension} to the sequence $\{\Gamma_{(m, r^*)}^{(s,k)}\}_{s \geq T_m}$, but with modified parameters, such that: $h=H= {3\frac{\eta}{m} (\sum_{p \in[m]}  3\alpha_p^3 A_{(p,k)}) }$, $z^{(0)}= \Gamma_{(k+1, r^*)}^{(0,k)} \geq \Theta(\frac{1}{\alpha_k R^{1/3}})$, $ v=\Theta(\frac{1}{\alpha_k R^{1/4}}),$ then we obtain:
    \begin{equation*}
        \begin{aligned} 
        \tau_{k v}^k & \leq T_v^k + \frac{m}{3\eta \sigma_0 \sum_{p \in[m]}  3\alpha_p^3 A_{(p,k)} }+8\left[\frac{\log \left(v /z^{(0)}\right)}{\log (2)}\right] \\ 
        & \leq  \frac{m}{3\eta \sigma_0 \sum_{p \in[k]}  3\alpha_p^3 A_{(p,k)}} + \frac{m}{3\eta \sigma_0 \sum_{p \in[m]}  3\alpha_p^3 A_{(p,k)}} + \log d = T_{v}^m.
        \end{aligned}
    \end{equation*}
    Therefore, as long as $t \leq T_{v}^m$, signal learning remains bounded by $\Theta\left(\frac{1}{\alpha_k R^{1/4}}\right)$. In the sequel, we show that noise memorization can accumulate to $R^{-1 / 5}$, making the noise term larger than the signal.
    
    % It can be shown that the following holds, 
    Similar to Lemma~\ref{lem:noise_fs}, it can be derived that $y_{kj}\Phi_{(m, r^*)}^{(t+1,k,j)} \leq (Rd)^{-1/3}$ for any $t \leq \tau_{r^*, k j}^{k} := T_k +  \Theta\left(\frac{1}{\eta} \frac{mn}{\left(\sqrt{d} \sigma_{\xi}\right)^3 \sigma_0}\right)$. Then, it can be shown that the following holds:
    \begin{equation}
        \begin{aligned}
            y_{kj}\Phi_{(m, r^*)}^{(t+1,k,j)} & {=} y_{kj}\Phi_{(k+1, r^*)}^{(0,k,j)} + \Theta\left(\frac{3\eta{d} \sigma_{\xi}^2}{nm}\right) \sum_{s=1}^{t-1} \left(y_{kj}\Phi_{(m, r^*)}^{(s,m,j)}\right)^2 \\
            &\pm \Theta\left(\frac{3\eta \sqrt{d} \sigma_{\xi}^2}{nm}\right) \sum_{q=k}^{m} \sum_{\substack{p \in [q],\, j^{\prime} \in [n] \\ (p, j^{\prime}) \neq (k, j)}} \sum_{s=1}^{t-1} \ell_{p j^{\prime}}(\mathbf{W}_m^{(s)}) \left(y_{kj}\Phi_{(m, r^*)}^{(s,p,j)}\right)^2 \\
            & \overset{(i)}{=} y_{kj}\Phi_{(k+1, r^*)}^{(0,k,j)} + \Theta\left(\frac{3\eta{d} \sigma_{\xi}^2}{nm}\right) \sum_{s=1}^{t-1} \left(y_{kj}\Phi_{(m, r^*)}^{(s,p,j)}\right)^2 \\
            &\pm \widetilde{O}\left(\frac{3 (\alpha_k R)^{1/3} \sqrt{d} \sigma_{\xi}^2 (m^2 - k^2)}{  \sum_{p \in [m]} 3 \alpha_p^3 A_{(p,k)}} \cdot \frac{1}{(Rd)^{2/3}}\right) \\
            & \overset{(ii)}{=} y_{kj}\Phi_{(k+1, r^*)}^{(0,k,j)} + \Theta\left(\frac{3\eta{d} \sigma_{\xi}^2}{nm}\right) \sum_{s=T_1}^t\left(y_{1j}\Phi_{}^{(s,1,j)}\right)^2 \pm o\left( {(Rd)^{-1/3}}\right). 
        \end{aligned}
    \end{equation}
    Here, $(i)$ follows from Lemma~\ref{lem:k_sum_l} with the range of $p \in[m]$ and $z^{(0)} = \Gamma_{(k,r^*)}^{\tau_{kv}^k,k}$ adjusted accordingly; $(ii)$ is derived from the robustness of the SNR choices.
    
    Let $A=\Theta(\frac{\eta d \sigma_{\xi}^2}{mn}), C=o((Rd)^{-\frac{1}{3}} ),  v=\Theta(R^{-\frac{1}{5}}).$ 
    By applying the tensor power method via Lemma~\ref{lem:K16}, we also have:
    \begin{equation*}
        \begin{aligned} 
        \tau_{r^*, k j}^m & \leq \tau_{r^*, k j}^k+\frac{21}{A y_{k j} \Phi_{r^*, k j}^{\left(\tau_{r^*, k j}^k\right)}}+8\left[\frac{\log \left(v /\left[y_{k j} \Phi_{r^*, k j}^{\left(\tau_{r^*, k j}^k\right)}\right]\right)}{\log (2)}\right] \\ 
        & \leq T_k + \Theta\left(\frac{1}{\eta} \frac{ mn}{\left(\sqrt{d} \sigma_{\xi}\right)^3 \sigma_0}\right)+ \Theta\left(\frac{1}{\eta} \frac{ n mR^{1 / 3}}{d^{2/3} \sigma_{\xi}^2}\right) + \log d\\ 
        & \leq \Theta\left(\frac{1}{\eta} \frac{ mn}{\left(\sqrt{d} \sigma_{\xi}\right)^3 \sigma_0}+ \frac{1}{\eta} \frac{ mn R^{1 / 3}}{d^{2 / 3} \sigma_{\xi}^2}  \right)+ \log d = T_{\xi}^m.
        \end{aligned}
    \end{equation*}
    Based on the condition of SNR, we have $T_{\xi}^m \leq T_v^m$, which indicates that noise memorization exceeds signal learning during the second phase.
\end{proof}

\begin{lemma}[Restatement of Lemma~\ref{lem:signal_ss}]\label{lem:signal_ss_app}
    Suppose the SNR satisfying $ \frac{\sum_{p=1}^{k} \alpha_p^3 A_{(p,k)}}{(\sigma_{\xi}\sqrt{d})^3} \gtrsim \frac{1+n k^2/\sqrt{d}}{kn} $, and there exists an iteration $\tau_{k v}^k \leq \mathrm{T}_{v}^k=\mathrm{T}_{v}^{-}+O(\log (d))$ such that $\tau_{k v}^k$ is the first iteration where $\max _{r \in[R]}|\Gamma_{(k,r)}^{(t,k)}| \geq \Theta(\frac{1}{\alpha_k R^{1/3}})$, and for any $t \leq \mathrm{T}_{v}^k$ it holds that $\max _{r \in[R]}|\Phi_{(k, r)}^{(t,k,j)})| \leq \widetilde{O}(\sigma_0\sigma_{\xi}\sqrt{d})$.
    Then, if the additional SNR condition $ \frac{\sum_{p=1}^{m} \alpha_p^3 A_{(p,k)}}{(\sigma_{\xi}\sqrt{d})^3} \gtrsim \frac{\alpha_k R^{1/3} \sigma_0 \left((1-\frac{k-1}{m}) + nm/\sqrt{d} \right)}{n}$ also holds, there exists $\tau_{k v}^m \leq \mathrm{T}_{v}^m=\mathrm{T}_{v}^{k}+O(\log (d))$ such that $\tau_{k v}^m$ be the first iteration satisfying $\max _{r \in[R]}|\Gamma_{(m,r)}^{(t,k)}| \geq \Theta(\frac{1}{\alpha_k R^{1/5}})$. 
\end{lemma}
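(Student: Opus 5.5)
The plan mirrors the proof of Lemma~\ref{lem:signal_sf_app}, but reverses the comparison of time scales at the end: signal growth for task $k$ in the second phase must beat noise memorization.

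\textbf{First phase (up to $T_k$).} Lemma~\ref{lem:phi_leq_ss} keeps every $|\Phi_{(k,r)}^{(t,k,j)}|$ at $\widetilde{O}(\sigma_0\sigma_\xi\sqrt d)$. With this bound, $\ell_{pj}^{(t)}=\Theta(1)$ as long as the signal stays below threshold. The signal recursion \eqref{eq:update_signal_data_replay} at $r=r_k^*$ then reads $\Gamma^{(t,k)}\ge\Gamma^{(t-1,k)}+\Theta(\eta\sum_{p\le k}\alpha_p^3A_{(p,k)})(\Gamma^{(t-1,k)})^2$. Here I lower bound $(\Gamma^{(t-1,p)})^2$ by the $p=k$ term; the other terms are nonnegative since $A_{(p,k)}\ge0$. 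Applying the tensor power bound Lemma~\ref{lem:K15_extension} with $z^{(0)}=\widetilde{\Theta}(\sigma_0)$ and $v=\Theta(1/(\alpha_kR^{1/3}))$ gives $\tau_{kv}^k\le T_v^k$ exactly as in Lemma~\ref{lem:signal_sf_app}. The first SNR hypothesis is what makes $T_v^k\le T_k$, so Lemma~\ref{lem:phi_leq_ss} remains in force on this window.

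\textbf{Second phase (tasks $k+1,\dots,m$).} I start the recursion at $T_k$ with $z^{(0)}\ge\Theta(1/(\alpha_kR^{1/3}))$. The step size is now $h=\Theta(\frac{\eta}{m}\sum_{p\le m}\alpha_p^3A_{(p,k)})$, and the target is $v=\Theta(1/(\alpha_kR^{1/5}))$. Lemma~\ref{lem:K15_extension} then yields
$$\tau_{kv}^m\le T_v^k+O\Big(\frac{m\,\alpha_kR^{1/3}}{\eta\sum_{p\le m}\alpha_p^3A_{(p,k)}}\Big)+O(\log d)=T_v^m.$$
This requires $\ell_{pj}$ to stay $\Theta(1)$ for the tasks feeding the signal, or at least that $\sum_s\ell$ is not too small. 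To secure that, I will use the fact that the loss on sample $j$ of task $p$ is controlled by its own signal and noise, which remain below $\Theta(1)$ thresholds until $\tau_{kv}^m$.

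\textbf{Noise control (main obstacle).} I must show $\max_r y_{kj}\Phi_{(m,r)}^{(t,k,j)}$ does not reach the threshold $R^{-1/5}$ before $T_v^m$. As in Lemma~\ref{lem:m_yPhi_leq_R}, I expand \eqref{eq:update_noise_data_replay}. The diagonal term has rate $A=\Theta(\eta d\sigma_\xi^2/(nm))$. The cross terms are bounded by $\widetilde{O}(\sqrt d\sigma_\xi^2)$ times $\sum_s\ell$ via Lemma~\ref{lem:k_sum_l}, with $p$ ranging over $[m]$, and this contributes the $nm/\sqrt d$ part of the condition. Lemma~\ref{lem:K16} then shows the noise needs time at least of order $\frac{nm}{\eta(\sqrt d\sigma_\xi)^3\sigma_0}$ to escape the $\widetilde{O}(\sigma_0\sigma_\xi\sqrt d)$ regime. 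The additional SNR condition $\sum_{p\le m}\alpha_p^3A_{(p,k)}/(\sigma_\xi\sqrt d)^3\gtrsim \alpha_kR^{1/3}\sigma_0((1-\frac{k-1}{m})+nm/\sqrt d)/n$ is exactly what I will use to show $T_v^m-T_k$ is below this escape time, so that $T_v^m\le T_\xi^m$. The factor $(1-\frac{k-1}{m})$ comes from counting how many of the $m$ replayed tasks contribute after $T_k$.

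The delicate part is making the two time scales comparable with consistent constants, and I expect that to be the hardest step. I also need to verify that the cross-task noise terms from the $m-k$ new tasks stay $o(R^{-1/5})$; for this I would reuse the $(Rd)^{-1/3}$ bootstrap from Lemma~\ref{lem:m_yPhi_leq_Rd}.
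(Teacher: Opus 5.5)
Your plan matches the paper's proof. Phase one is taken over from Lemma~\ref{lem:signal_sf_app} (and is already among the lemma's hypotheses), phase two reruns Lemma~\ref{lem:K15_extension} from $z^{(0)}=\Theta(1/(\alpha_kR^{1/3}))$ with $h=\Theta(\frac{\eta}{m}\sum_{p\le m}\alpha_p^3A_{(p,k)})$ up to $v=\Theta(1/(\alpha_kR^{1/5}))$, and the extra SNR condition is used only to get $T_v^m\le T_\xi^m$ — the paper asserts this in one line, whereas you sketch it.

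Two small slips. First, the lower bound on the noise escape time comes from Lemma~\ref{lem:E7}, not Lemma~\ref{lem:K16}, which only upper-bounds hitting times. Second, in phase one, keeping only the $p=k$ term gives the rate $\Theta(\eta\alpha_k^3)$ rather than $\Theta(\eta\sum_{p\le k}\alpha_p^3A_{(p,k)})$, so the summed rate needs the coupled-sequence argument of Lemma~\ref{lem:K15_extension}, as the paper uses.
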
    

\begin{proof}[\bf Proof of Lemma~\ref{lem:signal_ss_app}]
    The proof of the first training phase is identical to Lemma~\ref{lem:signal_sf_app}. Thus, we only focus on the second training phase. Similarly, we have the update for signal learning as follows: 
    \begin{equation}
        \begin{aligned}
            \Gamma_{(m, r^*)}^{(t,k)} 
             &=  \Gamma_{(k, r^*)}^{(t-1,k)} + \frac{\eta}{mn} \sum_{j \in[n]} \sum_{p \in[m]}  3\alpha_p^3 A_{(p,k)} \ell_{p j}(\mathbf{W}_k^{(t-1)}) (\Gamma_{(m, r^*)}^{(t-1,p)})^2  \\
             &= \Gamma_{(m, r^*)}^{(t-1,k)} + \Theta\left(\frac{\eta}{m} \sum_{p \in[m]}  3\alpha_p^3 A_{(p,k)}  \right)  (\Gamma_{(m, r^*)}^{(t-1,p)})^2 .
        \end{aligned}
    \end{equation}
    Then, we still apply the tensor power method from Lemma~\ref{lem:K15_extension} to the sequence $\{\Gamma_{(m, r^*)}^{(s,k)}\}_{s \geq T_m}$, but with modified parameters, such that: $h=H= {3\frac{\eta}{m} (\sum_{p \in[m]}  3\alpha_p^3 A_{(p,k)}) }$, $z^{(0)}= \Gamma_{(k+1, r^*)}^{(0,k)} \geq \Theta(\frac{1}{\alpha_k R^{1/3}})$, $ v=\Theta(\frac{1}{\alpha_k R^{1/5}}),$ then we obtain:
    \begin{equation*}
        \begin{aligned} 
        \tau_{k v}^k & \leq T_v^k + \frac{m}{3\eta \sigma_0 \sum_{p \in[m]}  3\alpha_p^3 A_{(p,k)} }+8\left[\frac{\log \left(v /z^{(0)}\right)}{\log (2)}\right] \\ 
        & \leq  \frac{m}{3\eta \sigma_0 \sum_{p \in[k]}  3\alpha_p^3 A_{(p,k)}} + \frac{m}{3\eta \sigma_0 \sum_{p \in[m]}  3\alpha_p^3 A_{(p,k)}} + \log d = T_{v}^m.
        \end{aligned}
    \end{equation*}
    Therefore, as long as $t \leq T_{v}^m$, signal learning remains bounded by $\Theta\left(\frac{1}{\alpha_k R^{1/5}}\right)$. Moreover, according to the SNR condition, we have $T_v^m \leq T_{\xi}^m$, which indicates that during the training phase the noise memorization will not exceed $\Theta(\frac{1}{R^{1/3}})$.
\end{proof}

\begin{theorem}[Restatement of Theorem~\ref{thm:forget_m_sfs_tasks_app}]\label{thm:forget_m_sfs_tasks_app}
    Suppose the setting in Condition~\ref{con:parameter} holds, and the SNR satisfies $ \frac{\sum_{p=1}^{k} \alpha_p^3 A_{(p,k)}}{(\sigma_{\xi}\sqrt{d})^3} \gtrsim \frac{1+n k^2/\sqrt{d}}{kn} $. Consider full data-replay training with learning rate $\eta \in(0, \widetilde{O}(1)]$, and let $(\mathbf{x}_k, y_k) \sim \mathcal{D}_k$ be a test sample from the task $k$. Then, with high probability, there exist training times $T_k$ and $T_m$ ($m>k$) such that 
    \begin{itemize}
        \item The model can correctly classify task $k$ immediately after learning it: 
        \begin{equation}
            \mathbb{P}\left\{y_k F\left(\mathbf{W}^{(T_k)}, \mathbf{x}_k\right)<0\right\} \leq {\frac{1}{\operatorname{poly}(d)}}. 
        \end{equation}
        \item (Catastrophic Forgetting on Task $k$) If the additional SNR conditions holds $ \frac{m^2}{R^{2/3} \sigma_0^2 \sigma_{\xi}^2 d^{13/6}} \lesssim \frac{\sum_{p=1}^{m} \alpha_p^3 A_{(p,k)}}{(\sigma_{\xi}\sqrt{d})^3} \lesssim \frac{\alpha_k R^{1/3}}{n} $, then it occurs \textit{Catastrophic Forgetting} on task $k$ after subsequent training to task $m$:
        \begin{equation}
            \mathbb{P}\left\{y_k F\left(\mathbf{W}^{(T_m)}, \mathbf{x}_k\right)<0\right\} \geq \frac{1}{2}-{\frac{1}{\operatorname{polylog}(d)}}.
        \end{equation}
        \item (Continual Learning on Task $k$) If the additional SNR conditions holds $ \frac{\sum_{p=1}^{m} \alpha_p^3 A_{(p,k)}}{(\sigma_{\xi}\sqrt{d})^3} \gtrsim \frac{\alpha_k R^{1/3} \sigma_0 \left((1-\frac{k-1}{m}) + nm/\sqrt{d} \right)}{n} $, then the model can still correctly classify task $k$ after subsequent training to task $m$:
        \begin{equation}
            \mathbb{P}\left\{y_k F\left(\mathbf{W}^{(T_m)}, \mathbf{x}_k\right)<0\right\} \leq {\frac{1}{\operatorname{poly}(d)}}.
        \end{equation}
    \end{itemize}
\end{theorem}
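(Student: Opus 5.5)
The proof mirrors the proof of Theorem~\ref{thm:forget_m_ffs_tasks_app}, with the roles of signal and noise swapped in the first phase. All three claims reduce to evaluating $y_kF(\mathbf{W}^{(t)},\mathbf{x}_k)=\sum_{r\in[R]}\alpha_k^3(\Gamma^{(t,k)}_{(\cdot,r)})^3+\sum_{r\in[R]}y_k\langle\mathbf{w}_r^{(t)},\bm{\xi}_k\rangle^3$ at $t=T_k$ and $t=T_m$. The signal and noise magnitudes at those times are supplied by Lemma~\ref{lem:phi_leq_ss}, Lemma~\ref{lem:signal_sf_app} and Lemma~\ref{lem:signal_ss_app}.

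\textbf{Step 1 (success at $T_k$).} Take $T_k=T_v^k$ from Lemma~\ref{lem:signal_sf_app}. By that lemma, $\Gamma_{(k,r^*)}^{(T_k,k)}\ge\Theta(1/(\alpha_kR^{1/3}))$, so the signal term is at least $\alpha_k^3\cdot\Theta(\alpha_k^{-3}R^{-1})=\Theta(1/R)$. We also need the other neurons not to cancel this. For them I would use $|\Gamma|\le\widetilde O(\sigma_0)$, obtained as in Lemma~\ref{lem:max_Gamma_task_k} (signal growth is driven by the squared term, so only $r^*$ reaches the threshold by $T_v^k$). Their contribution is then $\widetilde O(R\alpha_k^3\sigma_0^3)=o(1/R)$. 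For the test noise term, $\mathbf{w}_r^{(T_k)}$ is independent of $\bm{\xi}_k$, so $\langle\mathbf{w}_r^{(T_k)},\bm{\xi}_k\rangle$ is Gaussian with standard deviation $\sigma_\xi\|\mathbf{P}_v^\perp\mathbf{w}_r^{(T_k)}\|$. Lemma~\ref{lem:phi_leq_ss} together with the update \eqref{eq:update_noise_data_replay} bounds $\|\mathbf{P}_v^\perp\mathbf{w}_r\|\le\widetilde O(\sigma_0\sqrt d)$, since the noise-direction movement is a sum of $\bm{\xi}_{pj'}$ with small coefficients. A Gaussian tail bound then gives $|\langle\mathbf{w}_r,\bm{\xi}_k\rangle|\le\widetilde O(\sigma_0\sigma_\xi\sqrt d)$ with probability $1-1/\mathrm{poly}(d)$, so the noise part is $\widetilde O(R(\sigma_0\sigma_\xi\sqrt d)^3)=o(1/R)$ under Condition~\ref{con:parameter}. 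Hence $y_kF>0$ with probability at least $1-1/\mathrm{poly}(d)$.

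\textbf{Step 2 (catastrophic forgetting at $T_m$).} Take $T_m=T_\xi^m$. Under the condition $\frac{m^2}{R^{2/3}\sigma_0^2\sigma_\xi^2d^{13/6}}\lesssim\cdot\lesssim\frac{\alpha_kR^{1/3}}{n}$, Lemma~\ref{lem:signal_sf_app} gives $\max_r y_{kj}\Phi^{(T_m,k,j)}_{(m,r)}\ge\Theta(R^{-1/5})$, while $T_\xi^m\le T_v^m$ keeps the signal below $\Theta(1/(\alpha_kR^{1/4}))$. The signal term is therefore at most $O(R\cdot R^{-3/4})=O(R^{1/4})$. I then repeat the symmetry argument from the proof of Theorem~\ref{thm:forget_m_ffs_tasks_app}:
\begin{itemize}
\item write $\bm{\xi}=\mathbf{P}_v^\perp\bm{\xi}_d$;
\item use the memorized training noise $\bm{\xi}_{kj}$ to lower-bound $\big|\sum_r\langle\mathbf{P}_v^\perp\mathbf{w}_r^{(T_m)},\mathbf{u}\rangle^3\big|$ by $\widetilde\Omega(R^{-3/5}/(d^{3/2}\sigma_\xi^3))$, where the cross terms over $r\ne r^*$ are controlled by Lemma~\ref{lem:m_yPhi_geq_d};
\item apply the anti-concentration Lemma~\ref{lem:K12} to the symmetric variable $\sum_r y_k\langle\mathbf{P}_v^\perp\mathbf{w}_r,\bm{\xi}_d\rangle^3$.
\end{itemize}
This yields a negative value of magnitude $\gg$ the signal term with probability at least $\tfrac12-1/\mathrm{polylog}(d)$.

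\textbf{Step 3 (continual learning at $T_m$).} Take $T_m=T_v^m$ from Lemma~\ref{lem:signal_ss_app}. The signal reaches $\Theta(1/(\alpha_kR^{1/5}))$, giving a signal term of $\Theta(R^{2/5})$. The SNR condition gives $T_v^m\le T_\xi^m$, so the noise memorization stays $\le\Theta(R^{-1/3})$. The test-noise contribution is bounded as in Step 1, or crudely by $R\cdot R^{-1}$, and is dominated. So $y_kF\ge\widetilde\Omega(1)$ with probability $1-1/\mathrm{poly}(d)$.

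\textbf{Main obstacle.} The delicate step is Step 2: showing that the test-time noise term really dominates the surviving signal $\Theta(R^{1/4})$. Memorization of $\Theta(R^{-1/5})$ on training noise does not by itself make $\|\mathbf{P}_v^\perp\mathbf{w}_r\|$ large relative to the scale $\sigma_\xi^3$ seen at test time. I would first try the normalization trick from the proof of Theorem~\ref{thm:forget_m_ffs_tasks_app}, which converts memorization into a norm lower bound of order $R^{-1/5}/(\sigma_\xi\sqrt d)$. The resulting test noise magnitude is $\widetilde\Omega(R^{-3/5}d^{-3/2}\sigma_\xi^{-3}\cdot\sigma_\xi^3)$, and it must be compared with the signal. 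If that comparison fails, I would instead sharpen the signal upper bound at $T_\xi^m$ using the upper part of the SNR condition, which slows signal growth and should keep the signal term below this scale.
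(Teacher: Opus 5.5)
Your Steps 1 and 3 follow the paper's argument: split $y_kF$ into a signal part and a noise part at $T_v^k$ and $T_v^m$, with magnitudes from Lemmas~\ref{lem:phi_leq_ss}, \ref{lem:signal_sf_app} and~\ref{lem:signal_ss_app}. You do this more carefully than the paper, because you use that the test noise $\bm{\xi}_k$ is independent of $\mathbf{W}$ instead of reusing the training-noise bound. Two small repairs are needed there. First, you do not need the claim that ``only $r^*$ reaches the threshold'': every increment in \eqref{eq:update_signal_data_replay} is $\frac{3\eta}{nm}\alpha_p^3A_{(p,k)}\ell_{pj}(\Gamma^{(t-1,p)})^2\ge 0$, so $\Gamma_r\ge\Gamma_r^{(0)}\ge-\widetilde O(\sigma_0)$ for every $r$. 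Second, in Step 3 only $r^*$ is certified at $\Theta(1/(\alpha_kR^{1/5}))$. The signal lower bound is therefore $\Theta(R^{-3/5})$, not $\Theta(R^{2/5})$, and you must bound the fresh-noise term through $\|\mathbf{P}_v^\perp\mathbf{w}_r\|$ rather than by the crude $R\cdot R^{-1}$.

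The genuine gap is Step 2, and your fallback cannot close it. The same monotonicity gives $\alpha_k\Gamma^{(T_m,k)}_{(m,r^*)}\ge\alpha_k\Gamma^{(T_k,k)}_{(k,r^*)}\ge\Theta(R^{-1/3})$. So the signal part of $y_kF(\mathbf{W}^{(T_m)},\mathbf{x}_k)$ stays at least $\Theta(1/R)-\widetilde O(R\alpha_k^3\sigma_0^3)$ for every later $m$, whatever SNR window you impose; there is no sharper signal upper bound to be had. Against this, Lemma~\ref{lem:K12} after your normalization only certifies a fresh-noise term of size $\epsilon R^{-3/5}d^{-3/2}$. That is not slack in the method. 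Memorization at level $y_{kj}\Phi=\Theta(R^{-1/5})$ only makes $\|\mathbf{P}_v^\perp\mathbf{w}_r\|=\widetilde O(\sqrt{nm}/(\sigma_\xi\sqrt d)+\sigma_0\sqrt d)$. Hence $\langle\mathbf{w}_r,\bm{\xi}_k\rangle=\widetilde O(\sqrt{nm/d})$, and the whole noise part is polynomially smaller than $1/R$. Your claim of ``a negative value of magnitude $\gg$ the signal term'' is therefore false. Your own estimates in fact give $y_kF(\mathbf{W}^{(T_m)},\mathbf{x}_k)>0$ with high probability.

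Do not expect the paper to supply the missing step. For this bullet it bypasses Lemma~\ref{lem:K12} and simply asserts that, with probability about $1/2$, the test label ``interacts oppositely with $\Phi$'', writing $y_kF\le\Theta(1)-\Theta(R\cdot R^{-3/4})\le 0$. In other words, it takes the memorized training value $y_{kj}\Phi\ge\Theta(R^{-1/4})$ as the size of $\langle\mathbf{w}_r,\bm{\xi}_k\rangle$ for a fresh $\bm{\xi}_k$. That is exactly the identification your ``main obstacle'' paragraph rightly refuses to make.

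Consistent with this, the forgetting hypothesis is almost empty. Apply its upper bound to the $p=k$ term, where $A_{(k,k)}=1$: this gives $\alpha_k^2\lesssim R^{1/3}(\sigma_\xi\sqrt d)^3/n$. Combining the same upper bound with the first hypothesis gives $\alpha_k\gtrsim 1/(kR^{1/3})$. Together these force $k\gtrsim\sqrt{n/R}\,d^{0.015}$. So for polylogarithmic $k$ the bullet is vacuous, and otherwise the monotonicity bound points the other way.

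As it stands, your proposal establishes the first and third bullets, but not the catastrophic-forgetting bullet. No argument of this shape can establish that bullet within this model.
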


\begin{proof}[\bf Proof of Theorem~\ref{thm:forget_m_sfs_tasks_app}]
    We first present the analysis for the initial training phase; the results for the second phase in the continual learning scenario follow analogously, with the primary difference lying in the bound on noise memorization. Given the new test data $(\mathbf{x}_k, y_k) \sim \mathcal{D}_k$ for task k, with probability at least $1-1 /$ poly $(d)$, we have
    \begin{equation*}
        \begin{aligned}
            y_k F\left(\boldsymbol{W}^{(T_k)}, \mathbf{x}\right) &=y_k \sum_{r \in[R]} (\langle\mathbf{w}_r^{(T_k)}, \mathbf{x}_k^1\rangle^3 + \langle\mathbf{w}_r^{(T_k)}, \mathbf{x}_k^2\rangle^3) \\
            & = y_k \sum_{r \in[R]} \left\langle\boldsymbol{W}^{(T_k)}, y_k \alpha_k\mathbf{v}_k^*\right\rangle^3  + \left\langle\boldsymbol{W}^{(T_k)}, \bm{\xi}_k \right\rangle^3 \\
            & \overset{(i)}{\geq} \Theta(\alpha_k^3 \cdot R \cdot \alpha_k^{-3} R^{-1}) \pm \Theta(R \cdot \sigma_0^3\sigma_{\xi}^3 d^{3/2})\\
            & \geq \widetilde{\Omega}(1).
        \end{aligned}
    \end{equation*}
    Here, $(i)$ follows from Lemma~\ref{lem:signal_ss_app} and the SNR condition stated in Theorem~\ref{thm:forget_m_sfs_tasks}. The second phase differs from $\Gamma \geq \Theta(\frac{1}{\alpha_k R^{1/5}})$ and $\Phi \leq \Theta(R^{-1/3})$.

    Next, we present the proof of Catastrophic Forgetting during the second phase. Given a new test sample $(\mathbf{x}_k, y_k) \sim \mathcal{D}_k$ from task $k$, and noting that we consider binary classification with labels $y = \pm 1$, it follows that, with probability at least $1/2 - 1/\mathrm{poly}(d)$, the label $y_k$ will interact oppositely with the $\Phi$, which implies that:
    \begin{equation*}
        \begin{aligned}
            y_k F\left(\boldsymbol{W}^{(T_k)}, \mathbf{x}\right) &=y_k \sum_{r \in[R]} (\langle\mathbf{w}_r^{(T_k)}, \mathbf{x}_k^1\rangle^3 + \langle\mathbf{w}_r^{(T_k)}, \mathbf{x}_k^2\rangle^3) \\
            & = y_k \sum_{r \in[R]} \left\langle\boldsymbol{W}^{(T_k)}, y_k \alpha_k\mathbf{v}_k^*\right\rangle^3  + \left\langle\boldsymbol{W}^{(T_k)}, \bm{\xi}_k \right\rangle^3 \\
            & \overset{(i)}{\leq} \Theta(\alpha_k^3 \cdot R \cdot \alpha_k^{-3} R^{-1}) - \Theta(R \cdot R^{-3/4})\\
            & \leq 0.
        \end{aligned}
    \end{equation*}
     Here, $(i)$ holds due to Lemma~\ref{lem:signal_sf_app} and the SNR condition stated in Theorem~\ref{thm:forget_m_sfs_tasks}.
\end{proof}

\section{Supplementary Lemmas}

\begin{lemma}\label{lem:xi_bound}
Suppose that $\delta_{\xi}>0$ and $d=\Omega(\log (4 n / \delta_{\xi}))$. Then, for all $i, i^{\prime} \in[n]$, with probability at least $1-\delta_{\xi}$,
$$
\begin{aligned}
& \sigma_{\xi}^2 d / 2 \leq\|\bm{\xi}_i\|_2^2 \leq 3 \sigma_{\xi}^2 d / 2 \\
& |\langle\bm{\xi}_i, \bm{\xi}_{i^{\prime}}\rangle| \leq 2 \sigma_{\xi}^2 \cdot \sqrt{d \log (4 n^2 / \delta_{\xi})}.
\end{aligned}
$$
\end{lemma}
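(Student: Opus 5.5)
This is a standard Gaussian concentration statement, so I would prove it by first removing the projection $\mathbf{H}$ and then applying chi-square and sub-exponential tail bounds, finishing with a union bound over indices.

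\emph{Reduction.} By Definition~\ref{def:data_distribution}, $\bm{\xi}_i \sim \mathcal{N}(0,\sigma_\xi^2\mathbf{H})$, where $\mathbf{H}$ is the orthogonal projection onto the complement of $\mathrm{span}\{\mathbf{v}_m^*\}_{m\in[M]}$. This complement has dimension $d' \in [d-M, d]$, with $M \le \operatorname{polylog}(d)$. Choose an orthonormal basis of the complement. In that basis $\bm{\xi}_i$ is a standard Gaussian vector in $\mathbb{R}^{d'}$ scaled by $\sigma_\xi$, and the vectors $\bm{\xi}_i$ are independent across $i$. Since $d' = d(1-o(1))$, the losses from replacing $d$ by $d'$ are absorbed into the constants $1/2$ and $3/2$ (the norm bound is proved with slightly tighter constants, such as $0.6$ and $1.4$, in terms of $d'$).

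\emph{Norm bound.} The quantity $\|\bm{\xi}_i\|^2/\sigma_\xi^2$ is $\chi^2_{d'}$. The Laurent--Massart inequality gives $|\chi^2_{d'} - d'| \le 2\sqrt{d' x} + 2x$ with probability at least $1-2e^{-x}$. Take $x = \log(4n/\delta_\xi)$ and use $d = \Omega(\log(4n/\delta_\xi))$ with a large enough constant; then the deviation is at most $d'/4$. A union bound over the $n$ indices costs a total failure probability of $\delta_\xi/2$.

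\emph{Inner-product bound.} Fix $i \ne i'$ and condition on $\bm{\xi}_{i'}$. Then $\langle \bm{\xi}_i, \bm{\xi}_{i'}\rangle \sim \mathcal{N}(0, \sigma_\xi^2 \|\bm{\xi}_{i'}\|^2)$. On the event from the norm step, $\|\bm{\xi}_{i'}\|^2 \le 3\sigma_\xi^2 d/2$. The Gaussian tail then gives
\[
|\langle \bm{\xi}_i, \bm{\xi}_{i'}\rangle| \le \sigma_\xi^2 \sqrt{3d\log(4n^2/\delta_\xi)}
\]
with probability at least $1-\delta_\xi/(2n^2)$. Since $\sqrt{3} \le 2$, this yields the stated bound $2\sigma_\xi^2\sqrt{d\log(4n^2/\delta_\xi)}$. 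A union bound over fewer than $n^2$ pairs costs at most $\delta_\xi/2$.

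Combining the two steps, both claims hold with probability at least $1-\delta_\xi$. The statement needs the inner-product bound only for $i \ne i'$; the diagonal case $i = i'$ is covered by the norm bound.

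The main point needing care is the conditioning in the inner-product step: the Gaussian tail must be applied on the good norm event, and the failure probabilities of the two steps must be allocated so they sum to $\delta_\xi$. The same argument applies verbatim if the index set ranges over all $nM$ samples across tasks, with $n$ replaced by $nM$ inside the logarithms. This changes only polylogarithmic factors, which is how the bound is used in the proofs above.
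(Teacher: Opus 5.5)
Your proof is correct, and it follows the paper's approach: the paper only says the lemma ``can be derived directly from the properties of the Gaussian distribution,'' and your chi-square bound, conditional Gaussian tail on the good-norm event, and union bound supply exactly the details it omits. Your interpretive choices are also the right ones. The inner-product bound must exclude $i=i'$. Reading $\mathbf{H}$ as the orthogonal projection onto the complement of $\mathrm{span}\{\mathbf{v}_m^*\}$, with $M$ small relative to $d$, is necessary, because the literal $\mathbf{I}_d-\sum_m \mathbf{v}_m^*(\mathbf{v}_m^*)^{\top}$ is not even positive semidefinite once some $A_{(m,m')}>0$. This projection reading matches how the paper itself treats $\mathbf{H}$ (as $\mathbf{P}_v^{\perp}$) in the proof of Theorem~\ref{thm:forget_m_ffs_tasks}.
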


\begin{lemma}\label{lem:w0_v_xi}
    Under the Gaussian initialization, with probability $1-1 /$ poly $(d)$, we have
    \begin{itemize}
        \item Given any $m \in[M]$, $\max _{r \in[R]} \Gamma_{(m,r)}^{(0)}>\Omega\left(\sigma_0\right)$. In addition, $\max _{r \in[R], m \in[M]}\left|\Gamma_{(m,r)}^{(0)}\right| \leq O\left(\sigma_0 \sqrt{\log d}\right)$.
        \item Given any $k \in [M]$ and $j \in[n], \max _{r \in[R]} y_{k j} \Phi_{(0,r)}^{(0,k,j)}>\Omega\left(\sqrt{d} \sigma_{\xi} \sigma_0\right)$. In addition, $\max _{r \in[R], k \in[M], j \in[n]}\left|\Phi_{(0,r)}^{(0,k,j)}\right| \leq O\left(\sigma_{\xi} \sigma_0 \sqrt{d \log d}\right)$.
    for all $r \in[R]$ and $m \in[M]$.
    \end{itemize}
\end{lemma}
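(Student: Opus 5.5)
Both bullets of Lemma~\ref{lem:w0_v_xi} concern the initialization alone, so I will prove them by Gaussian anti-concentration and a union bound. Since $\mathbf{w}_{(0,r)}^{(0)}\sim\mathcal{N}(0,\sigma_0^2\mathbf{I}_d)$ independently over $r\in[R]$, and $\|\mathbf{v}_m^*\|=1$, each $\Gamma_{(m,r)}^{(0)}=\langle \mathbf{w}_{(0,r)}^{(0)},\mathbf{v}_m^*\rangle$ is exactly $\mathcal{N}(0,\sigma_0^2)$. For fixed $m$ these are i.i.d.\ over $r$. Hence
\[
\mathbb{P}\bigl(\max_r \Gamma_{(m,r)}^{(0)}\le c\sigma_0\bigr)=\Phi_{\mathcal N}(c)^R,
\]
where $\Phi_{\mathcal N}$ is the standard normal CDF. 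Taking $c$ a small constant, this equals $(1/2+O(c))^R$. This is where I expect the main obstacle. Because $R\le\operatorname{polylog}(d)$, the failure probability $2^{-R}$ is not $1/\operatorname{poly}(d)$ unless $R\gtrsim\log d$. I would therefore read Condition~\ref{con:parameter} as allowing $R=\Theta(\operatorname{polylog} d)$ with $R\ge C\log d$, which is the overparameterized regime it invokes; under that reading $(1/2+O(c))^R\le d^{-C'}$. A union bound over the $M\le\operatorname{poly}(d)$ tasks then gives the lower bound simultaneously for all $m$. For the upper bound, the Gaussian tail $\mathbb{P}(|Z|>\sigma_0\sqrt{2a\log d})\le 2d^{-a}$, union-bounded over the $RM$ pairs, gives $\max|\Gamma^{(0)}|\le O(\sigma_0\sqrt{\log d})$.

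For the noise, I first condition on the training noises $\{\bm{\xi}_{kj}\}$ and labels $\{y_{kj}\}$. These are independent of the initialization. Conditionally, $\Phi_{(0,r)}^{(0,k,j)}=\langle\mathbf{w}_{(0,r)}^{(0)},\bm{\xi}_{kj}\rangle\sim\mathcal{N}(0,\sigma_0^2\|\bm{\xi}_{kj}\|^2)$, independently over $r$. The distribution is symmetric, so multiplying by $y_{kj}$ does not change the law. Lemma~\ref{lem:xi_bound}, applied with $\delta_\xi=1/\operatorname{poly}(d)$ and valid because $\mathbf{H}$ has rank $d-O(M)$ with $M\ll d$, gives $\|\bm{\xi}_{kj}\|^2\in[\sigma_\xi^2 d/2,\,3\sigma_\xi^2 d/2]$ for all $(k,j)$. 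The conditional standard deviation is therefore $\Theta(\sigma_0\sigma_\xi\sqrt d)$. Repeating the argument above, with the same $R\gtrsim\log d$ caveat and a union bound over the $Mn$ pairs $(k,j)$, gives $\max_r y_{kj}\Phi^{(0,k,j)}_{(0,r)}>\Omega(\sqrt d\sigma_\xi\sigma_0)$. The Gaussian tail union-bounded over the $RMn$ triples gives the uniform bound $O(\sigma_\xi\sigma_0\sqrt{d\log d})$.

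Finally, I remove the conditioning by intersecting with the $1-1/\operatorname{poly}(d)$ event of Lemma~\ref{lem:xi_bound}. All union bounds range over $\operatorname{poly}(d)$ many events, so the total failure probability stays $1/\operatorname{poly}(d)$.
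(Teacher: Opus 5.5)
Your proposal is correct and is the same Gaussian tail and anti-concentration plus union-bound argument the paper has in mind; the paper itself gives no details beyond saying the lemma follows from properties of the Gaussian distribution. Your caveat is genuine and the paper glosses over it: the signed lower bounds on $\max_r \Gamma^{(0)}_{(m,r)}$ and $\max_r y_{kj}\Phi^{(0,k,j)}_{(0,r)}$ fail with probability about $2^{-R}$, so the $1-1/\operatorname{poly}(d)$ guarantee needs Condition~\ref{con:parameter} to be read as $R\ge C\log d$ (i.e.\ $R=\operatorname{polylog}(d)$ with a large enough exponent or constant), which is the reading you adopt.
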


The proof of Lemma~\ref{lem:xi_bound} and Lemma~\ref{lem:w0_v_xi} can be derived directly from the properties of the Gaussian distribution.
In the following, we will provide some tensor power lemmas that can be extended to $m$ cases.

\begin{lemma}[Lemma K. 12 in \cite{jelassi2022towards}]\label{lem:K12}
Let $\{\mathbf{w}_r\}_{r=1}^R$ be vectors in $\mathbb{R}^d$ and $\boldsymbol{\xi} \sim \mathcal{N}(0, \sigma_{\xi}^2 \mathbf{I}_d)$. If there exists a unit norm vector $\boldsymbol{u}$ such that $|\sum_{r=1}^R\langle\mathbf{w}_r, \boldsymbol{u}\rangle^3| \geq 1$, then for any $\epsilon \in(0,1)$, we have
$$
\mathbb{P}\left(\left|\sum_{r=1}^R\left\langle\mathbf{w}_r, \boldsymbol{\xi}\right\rangle^3\right| \leq \epsilon \sigma_{\xi}^3\right) \leq O\left(\epsilon^{1 / 3}\right) .
$$
\end{lemma}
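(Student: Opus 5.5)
The plan is to exploit the fact that a cubic polynomial in a Gaussian vector cannot place much mass near zero once it has a coefficient of order one. Write $g(\boldsymbol{\xi})=\sum_{r=1}^R\langle\mathbf{w}_r,\boldsymbol{\xi}\rangle^3$, a homogeneous degree-$3$ polynomial in $\boldsymbol{\xi}$. By homogeneity, $g(\boldsymbol{\xi})=\sigma_{\xi}^3 g(\mathbf{g})$ with $\mathbf{g}\sim\mathcal{N}(0,\mathbf{I}_d)$, so it suffices to treat $\sigma_{\xi}=1$ and show $\mathbb{P}(|g(\mathbf{g})|\le\epsilon)\le O(\epsilon^{1/3})$. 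The tool I would use is the Carbery--Wright anti-concentration inequality for Gaussian polynomials. For a degree-$D$ polynomial $g$ it gives
\[
\mathbb{P}(|g|\le\epsilon)\le C D\,(\epsilon/\sqrt{\mathrm{Var}\, g})^{1/D}.
\]
With $D=3$, the whole task reduces to a lower bound $\mathrm{Var}(g)\gtrsim 1$, uniformly in $R$, $d$ and $\{\mathbf{w}_r\}$.

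\textbf{Step 1: isolate the coefficient.} Define the symmetric tensor $\mathcal{T}=\sum_r \mathbf{w}_r^{\otimes 3}$, so that $g(\mathbf{g})=\langle\mathcal{T},\mathbf{g}^{\otimes3}\rangle$. The hypothesis says $|\mathcal{T}(\boldsymbol{u},\boldsymbol{u},\boldsymbol{u})|\ge 1$ for some unit $\boldsymbol{u}$. Rotate coordinates so that $\boldsymbol{u}=\mathbf{e}_1$. Then the coefficient of $g_1^3$ in $g$ is $c:=\mathcal{T}_{111}$, with $|c|\ge1$.

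\textbf{Step 2: lower-bound the variance via the Hermite expansion.} Expand $g$ in the orthogonal Hermite basis. Each monomial $g_1^3$ decomposes as $H_3(g_1)+3H_1(g_1)$. The component of $g$ on the degree-$3$ Hermite chaos is exactly the harmonic/Wick-ordered version of $\langle \mathcal{T},\mathbf{g}^{\otimes 3}\rangle$. Its $L^2$ norm squared equals $3!\,\|\mathcal{T}\|_F^2$ (using symmetry of $\mathcal{T}$). Since the chaoses are orthogonal and the mean lies only in chaos $0$ (here the mean is $0$, since odd degree), we get
\[
\mathrm{Var}(g)\ge 6\|\mathcal{T}\|_F^2\ge 6\,\mathcal{T}_{111}^2\ge 6.
\]

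\textbf{Step 3: conclude.} Plugging $\mathrm{Var}(g)\ge 6$ into Carbery--Wright gives $\mathbb{P}(|g(\boldsymbol{\xi})|\le\epsilon\sigma_{\xi}^3)\le O(\epsilon^{1/3})$, with an absolute constant.

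The main obstacle is Step 2, namely ensuring that the variance bound does not degrade with $R$. Cancellations among the $\mathbf{w}_r$ are harmless, because the bound uses only the single tensor entry in direction $\boldsymbol{u}$, which is exactly what the hypothesis controls. As a fallback, if one prefers to avoid Carbery--Wright, I would condition on $\mathbf{g}_{-1}$ and view $g$ as a cubic $c\,g_1^3+b g_1^2+a g_1+e$ in the scalar $g_1$. One then uses that $\{x:|p(x)|\le\epsilon\}$ for a cubic with leading coefficient $|c|\ge1$ has Lebesgue measure $O(\epsilon^{1/3})$, which follows from Remez/Pólya-type sublevel-set bounds. Since the Gaussian density of $g_1$ is bounded, this gives the claim directly.
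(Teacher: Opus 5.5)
Your argument is correct. The paper itself contains no proof of this lemma; it is imported verbatim from Jelassi and Li. Your proposal therefore supplies a self-contained justification rather than an alternative to an in-paper argument.

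Your main route checks out. After rescaling to $\sigma_\xi=1$, $g$ is an odd homogeneous cubic, so $\mathbb{E}g=0$. Hence $\mathrm{Var}(g)=\mathbb{E}[g^2]=6\|\mathcal{T}\|_F^2+9\|\mathbf{t}\|^2$ with $t_k=\sum_i\mathcal{T}_{iik}$. Moreover $\|\mathcal{T}\|_F\ge|\langle\mathcal{T},\boldsymbol{u}^{\otimes 3}\rangle|\ge 1$ by Cauchy--Schwarz, so no rotation is needed. Carbery--Wright is usually stated with $\mathbb{E}[g^2]$ rather than the variance, but the two coincide here.

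Your fallback is the more economical of the two routes and is worth making the primary argument. It rests on three facts:
\begin{itemize}
\item $\langle\mathbf{g},\boldsymbol{u}\rangle$ is independent of the component of $\mathbf{g}$ orthogonal to $\boldsymbol{u}$;
\item the leading coefficient $c=\sum_r\langle\mathbf{w}_r,\boldsymbol{u}\rangle^3$ is deterministic;
\item P\'olya's bound $|\{x:|p(x)|\le\epsilon\}|\le 4(\epsilon/2)^{1/3}$ holds for monic cubics.
\end{itemize}
Applying P\'olya's bound to $p/c$ and combining it with the Gaussian density bound $1/\sqrt{2\pi}$ gives the explicit estimate $\mathbb{P}(|g|\le\epsilon)\le\frac{4}{\sqrt{2\pi}}\bigl(\epsilon/(2|c|)\bigr)^{1/3}$. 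This needs no multivariate black box and no chaos computation.

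What the Carbery--Wright route buys in exchange is generality. It uses the hypothesis only through $\|\mathcal{T}\|_F\ge 1$. That condition is strictly weaker than the existence of a single direction with $|\mathcal{T}(\boldsymbol{u},\boldsymbol{u},\boldsymbol{u})|\ge 1$: for example, $\mathcal{T}=a^3\sum_{r\le R}\mathbf{e}_r^{\otimes 3}$ has $\|\mathcal{T}\|_F=a^3\sqrt{R}$ but injective norm only $a^3$.
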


\begin{lemma}[Lemma K. 15 in \cite{jelassi2022towards}]\label{lem:K15}
     Let $\left\{z^{(t)}\right\}_{t=0}^T$ be a positive sequence defined by the following recursions:
    $$
    \begin{aligned}
    & z^{(t+1)} \geq z^{(t)}+h\left[z^{(t)}\right]^2, \\
    & z^{(t+1)} \leq z^{(t)}+H\left[z^{(t)}\right]^2,
    \end{aligned}
    $$
    where $z^{(0)}>0$ is the initialization and $h, H>0$. Let $v>0$ such that $z^{(0)} \leq v$ and $t_0$ be the first iteration $z^{(t)} \geq v$. Then, we have
    $$
    t_0 \leq \frac{3}{h z^{(0)}}+\frac{8 H}{h}\left\lceil\frac{\log \left(v / z^{(0)}\right)}{\log (2)}\right\rceil.
    $$
\end{lemma}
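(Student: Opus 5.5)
The statement to prove is Lemma~\ref{lem:K15}: if $z^{(t+1)}\ge z^{(t)}+h[z^{(t)}]^2$ and $z^{(t+1)}\le z^{(t)}+H[z^{(t)}]^2$, then the hitting time $t_0$ of level $v$ satisfies $t_0\le \frac{3}{hz^{(0)}}+\frac{8H}{h}\lceil \log(v/z^{(0)})/\log 2\rceil$.

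The approach is a dyadic doubling argument. Split the range $[z^{(0)},v]$ into epochs. For $i=0,1,\dots,b-1$ with $b=\lceil\log_2(v/z^{(0)})\rceil$, let $t_i$ be the first iteration at which $z^{(t)}\ge 2^i z^{(0)}$, so $t_0'=0$ by convention. Since the sequence is positive and nondecreasing (because $h>0$), these times are well defined and ordered. The hitting time of $v$ is at most $t_b$, so it suffices to bound each epoch length $t_{i+1}-t_i$ and sum.

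For epoch $i$, use the lower recursion. For $t_i\le t<t_{i+1}$ we have $z^{(t)}\ge z^{(t_i)}\ge 2^i z^{(0)}$, so each step increases $z$ by at least $h(2^iz^{(0)})^2$.

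The quantity $z$ must climb from $z^{(t_i)}$ to $2^{i+1}z^{(0)}$. The upper recursion controls the overshoot at the entry time: $z^{(t_i)}\le z^{(t_i-1)}+H[z^{(t_i-1)}]^2$ with $z^{(t_i-1)}<2^iz^{(0)}$. So the start of the epoch is at most $2^iz^{(0)}+H(2^iz^{(0)})^2$, and the gap still to cover is at most $2^iz^{(0)}$. Hence
\[
t_{i+1}-t_i\le \Bigl\lceil \frac{2^iz^{(0)}}{h(2^iz^{(0)})^2}\Bigr\rceil\le \frac{1}{h2^iz^{(0)}}+1 .
\]

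Summing the geometric series gives $\sum_i \frac{1}{h2^iz^{(0)}}\le \frac{2}{hz^{(0)}}\le\frac{3}{hz^{(0)}}$, and the additive $+1$ terms contribute at most $b$. Since $H\ge h$ is implicit (the upper bound must dominate the lower one whenever $z>0$), we have $b\le \frac{8H}{h}b$, which gives the claimed bound.

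The main obstacle is the bookkeeping of the ceilings and overshoot. A single step could jump past several dyadic levels, leaving some epochs empty. That only helps, because empty epochs have length zero, so I would define $t_i$ as first-passage times and note that the inequalities remain valid.

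If the constant $8H/h$ is intended to absorb a sharper version, for instance when one additionally tracks that $z^{(t_i)}$ may be as large as $2^iz^{(0)}(1+H2^iz^{(0)})$, I would keep the slack factor $8H/h$ as a crude bound on the number of steps per epoch. That bound holds as long as $H2^iz^{(0)}\lesssim 1$; otherwise a single step already crosses the next level and the epoch costs one step.

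For the downstream uses (the $\Gamma$ and $\Phi$ recursions), the lemma is applied with $h=H$ up to constants coming from $\ell_{pj}=\Theta(1)$. This yields the $\frac{1}{hz^{(0)}}+O(\log d)$ hitting times quoted earlier.
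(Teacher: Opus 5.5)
Your proof is correct, and it reaches the bound by a slightly different route from the one the statement's constants suggest. The paper gives no proof of this lemma; it imports it from Jelassi and Li.

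The standard argument for this lemma uses the same dyadic epochs $t_i$, but bounds each epoch through its total increment. Summing the lower recursion over the epoch gives $h(t_{i+1}-t_i)(2^iz^{(0)})^2\le z^{(t_{i+1})}-z^{(t_i)}$. The upper recursion, applied at the last step of the epoch where $z^{(t_{i+1}-1)}<2^{i+1}z^{(0)}$, caps the overshoot: $z^{(t_{i+1})}<2^{i+1}z^{(0)}+4H(2^iz^{(0)})^2$. So each epoch costs at most $\frac{1}{h2^iz^{(0)}}+\frac{4H}{h}$, and this is where the factor $H/h$ in the statement comes from.

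You instead count steps directly with a ceiling. This gives $t_{i+1}-t_i\le \frac{1}{h2^iz^{(0)}}+1$ and hence the sharper bound $t_0\le \frac{2}{hz^{(0)}}+\lceil\log_2(v/z^{(0)})\rceil$. In your version the upper recursion is needed only to infer $h\le H$, which is what lets you absorb the $+1$ terms into $\frac{8H}{h}\lceil\cdot\rceil$. That inference is legitimate whenever $v>z^{(0)}$, since then at least one step of the recursion is taken; the case $v=z^{(0)}$ is trivial.

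Two small remarks:
\begin{itemize}
\item The gap bound $2^{i+1}z^{(0)}-z^{(t_i)}\le 2^iz^{(0)}$ follows from $z^{(t_i)}\ge 2^iz^{(0)}$ alone. Your upper estimate on $z^{(t_i)}$ plays no role there.
\item Your closing aside is not right as stated: a guaranteed one-step crossing of the next level requires $h2^iz^{(0)}\ge 1$, not $H2^iz^{(0)}\gtrsim 1$. The argument does not use that paragraph, so nothing depends on it.
\end{itemize}
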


\begin{lemma}\label{lem:K15_extension}
     Let $\left\{z_i^{(t)}\right\}_{t=0}^T$ be a positive sequence defined by the following recursions:
    $$
    \begin{aligned}
    & z_i^{(t+1)} \geq z_i^{(t)}+h\sum_{j=1}^m \left[z_j^{(t)}\right]^2, \\
    & z_i^{(t+1)} \leq z_i^{(t)}+H\sum_{j=1}^m \left[z_j^{(t)}\right]^2, \\
    \end{aligned}
    $$
    where $z_j^{(0)}>0 (j \in [m])$ is the initialization and $h, H>0$. Let $v>0$ such that $\max_j z_j^{(0)} \leq v$ and $t_0$ be the first iteration $z_j^{(t)} \geq v$. Then, we have
    $$
    t_0 \leq \frac{3}{h \max_j z_j^{(0)}}+\frac{8 H m}{h}\left\lceil\frac{\log \left(v / z^{(0)}\right)}{\log (2)}\right\rceil.
    $$
\end{lemma}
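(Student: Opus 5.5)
The statement is Lemma~\ref{lem:K15_extension}, the coupled tensor-power bound. I would reduce it to the scalar Lemma~\ref{lem:K15} via the envelope $Z^{(t)}:=\max_j z_j^{(t)}$. The hypotheses give, for every $i$,
\[
z_i^{(t+1)}\ \ge\ z_i^{(t)}+h\sum_{j}[z_j^{(t)}]^2\ \ge\ z_i^{(t)}+h[Z^{(t)}]^2 .
\]
Taking the index $i$ that attains $Z^{(t)}$ yields $Z^{(t+1)}\ge Z^{(t)}+h[Z^{(t)}]^2$. In the other direction,
\[
z_i^{(t+1)}\le z_i^{(t)}+Hm[Z^{(t)}]^2 ,
\]
so $Z^{(t+1)}\le Z^{(t)}+Hm[Z^{(t)}]^2$. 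Hence $Z^{(t)}$ is a positive sequence obeying the scalar recursion with constants $h$ and $H':=Hm$.

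Let $t_0$ be the first time some coordinate reaches $v$. This is exactly the first time $Z^{(t)}\ge v$. Applying Lemma~\ref{lem:K15} with $z^{(0)}=Z^{(0)}=\max_j z_j^{(0)}\le v$ gives
\[
t_0\le \frac{3}{hZ^{(0)}}+\frac{8Hm}{h}\left\lceil \frac{\log(v/Z^{(0)})}{\log 2}\right\rceil ,
\]
which is the claimed bound with $z^{(0)}$ read as $\max_j z_j^{(0)}$.

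The main obstacle is interpretive rather than technical. First, the scalar lemma's proof uses its upper recursion only to control overshoot within each doubling epoch. I need to check that passing to a max-envelope does not break this. The max of the coordinate updates is bounded by the max coordinate plus $Hm[Z]^2$, so it is fine.

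Second, if $t_0$ were meant as the time a specific coordinate $i$ (not the max) reaches $v$, I would add one step. The lower recursion is identical for all $i$, so every coordinate grows at least as fast as the common drift $h\sum_j z_j^2$. The gap $Z^{(t)}-z_i^{(t)}$ is bounded by an initial gap plus the difference of increments, at most $(H-h)m[Z]^2$ per step. I would handle this by requiring $Z$ to reach a constant multiple of $v$ and then adding $O(Hm/h)$ further steps, which is absorbed into the second term.

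Positivity of all $z_j^{(t)}$ follows from the lower recursion, which is needed to apply the scalar lemma.
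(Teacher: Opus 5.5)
Your proof is correct and is essentially the paper's own argument: both pass to the envelope $Z^{(t)}=\max_j z_j^{(t)}$, show it satisfies the scalar recursion with constants $h$ and $Hm$ (using $\sum_j [z_j^{(t)}]^2\ge [Z^{(t)}]^2$ for the lower bound and $\sum_j [z_j^{(t)}]^2\le m[Z^{(t)}]^2$ for the upper bound), and then invoke Lemma~\ref{lem:K15} with $t_0$ read as the first time the maximum reaches $v$. Your per-coordinate aside is not needed for the statement as the paper proves it, and its cost estimate is off: once $Z$ reaches $v$, every $z_i$ gains at least $hv^2$ per step, so at most $\lceil 1/(hv)\rceil$ extra iterations are needed; this can be much larger than $O(Hm/h)$, so it enlarges the first term (bounded by $1/(hZ^{(0)})$) rather than being absorbed into the second.
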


\begin{proof}[\bf Proof of Lemma~\ref{lem:K15_extension}]
    Let$M^{(t)}=\max \left(z_1^{(t)}, \ldots, z_m^{(t)}\right)$. Due to symmetry, it suffices to analyze $M^{(t)}$. Fix any time step $t$. Suppose $M^{(t)}=z_k^{(t)}$, then the lower bound is:
    $$
    z_k^{(t+1)} \geq z_k^{(t)}+h\left(\left[z_k^{(t)}\right]^2+\sum_{j \neq k}\left[z_j^{(t)}\right]^2\right) \geq M^{(t)}+h\left[M^{(t)}\right]^2
    $$
    Therefore, we have $M^{(t+1)} \geq M^{(t)}+h\left[M^{(t)}\right]^2$.
    The sum of squares of all variables satisfies:
    $$
    \sum_{j=1}^m\left[z_j^{(t)}\right]^2 \leq m\left[M^{(t)}\right]^2.
    $$
    Therefore, for any $z_i^{(t+1)}$, we have:
    $$
    z_i^{(t+1)} \leq z_i^{(t)}+H \cdot m\left[M^{(t)}\right]^2.
    $$
    Hence,
    $$
    M^{(t+1)} \leq M^{(t)}+H m\left[M^{(t)}\right]^2.
    $$
    Replace $H$ in Lemma~\ref{lem:K15} with $H m$, and let the initial value be $M^{(0)}$. Applying the result directly yields:
    $$
    t_0 \leq \frac{3}{h M^{(0)}}+\frac{8 H m}{h}\left[\frac{\log \left(v / M^{(0)}\right)}{\log 2}\right].
    $$

\end{proof}

\begin{lemma}[Lemma K. 16 in \cite{jelassi2022towards} ]\label{lem:K16} Let $\left\{z^{(t)}\right\}_{t=0}^T$ be a positive sequence defined by the following recursions
$$
\begin{aligned}
& z^{(t)} \geq z^{(0)}+A \sum_{s=0}^{t-1}\left[z^{(s)}\right]^2-C, \\
& z^{(t)} \leq z^{(0)}+A \sum_{s=0}^{t-1}\left[z^{(s)}\right]^2+C,
\end{aligned}
$$
where $A, C>0$ and $z^{(0)}>0$ is the initialization. Assume that $C \leq z^{(0)} / 8$. Let $t_0$ be the first iteration $z^{(t)} \geq v$. If $v>z^{(0)}$, we have the following upper bound
$$
t_0 \leq \frac{21}{A z^{(0)}}+8\left\lceil\frac{\log \left(v / z^{(0)}\right)}{\log (2)}\right\rceil.
$$
\end{lemma}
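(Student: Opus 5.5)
The statement immediately above is Lemma~\ref{lem:K16}, the tensor-power lemma for the perturbed recursion. I would prove it by sandwiching $z^{(t)}$ between two auxiliary sequences driven by the clean quadratic recursion, and then reading off the hitting time from Lemma~\ref{lem:K15}.

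The first step is to use $C \le z^{(0)}/8$ to show that the perturbation never dominates. By the lower recursion, $z^{(t)} \ge z^{(0)} - C \ge \tfrac{7}{8} z^{(0)}$ for every $t$, because the summed quadratic term is nonnegative. The sequence can therefore never fall much below its start.

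The second step splits the growth into dyadic stages. Let $t_i$ be the first time with $z^{(t)} \ge 2^i z^{(0)}$. For $t \ge t_i$, the lower recursion gives
\[
z^{(t)} \ge z^{(0)} - C + A\sum_{s=t_i}^{t-1} [z^{(s)}]^2 .
\]
This does not immediately give an accumulating lower bound at level $2^i z^{(0)}$, since the upper/lower recursions only control cumulative sums. So I would work directly with the cumulative quantity $S_t = \sum_{s<t} [z^{(s)}]^2$. The recursion yields $z^{(t)} \ge z^{(0)} - C + A S_t$, hence
\[
S_{t+1} - S_t = [z^{(t)}]^2 \ge \bigl(\tfrac{7}{8} z^{(0)} + A S_t\bigr)^2 .
\]
Setting $u_t = \tfrac{7}{8} z^{(0)} + A S_t$ gives $u_{t+1} \ge u_t + A u_t^2$, with $u_0 = \tfrac{7}{8} z^{(0)}$ and $z^{(t)} \ge u_t$. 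This is a clean Lemma~\ref{lem:K15}-type lower recursion with $h = A$.

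The third step bounds the upper side, which Lemma~\ref{lem:K15} needs through $H$. Since only the lower recursion matters for the hitting time, I would instead redo the proof of Lemma~\ref{lem:K15} directly for $u_t$. In the phase where $u_t \in [2^i u_0, 2^{i+1} u_0)$, each step adds at least $A 4^i u_0^2$. Hence the phase lasts at most $\lceil 1/(A 2^i u_0) \rceil \le 1/(A 2^i u_0) + 1$ steps. Summing over $i$ from $0$ to $\lceil \log_2(v/u_0) \rceil$ gives a geometric total of at most $2/(A u_0) + \lceil \log(v/u_0)/\log 2 \rceil + 1$. With $u_0 = \tfrac{7}{8} z^{(0)}$, the first term is $16/(7 A z^{(0)}) \le 21/(A z^{(0)})$. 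The additive count, including the extra halving step from $u_0$ to $z^{(0)}$, is absorbed into $8\lceil \log(v/z^{(0)})/\log 2 \rceil$, using $v > z^{(0)}$ so that the ceiling is at least $1$. Since $z^{(t)} \ge u_t$, the first time $u_t \ge v$ upper-bounds $t_0$.

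The main obstacle is this constant bookkeeping, specifically converting the cumulative-sum recursion into the one-step form $u_{t+1} \ge u_t + A u_t^2$. The key point is that $[z^{(t)}]^2 \ge u_t^2$ needs $z^{(t)} \ge u_t \ge 0$, which holds by construction. This may well give the stated constants $21$ and $8$ without ever using the upper recursion. If something breaks, I would fall back on the $H$-dependent argument of Lemma~\ref{lem:K15} with $H = A$.
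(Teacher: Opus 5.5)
Your proof is correct. The paper gives no argument of its own for Lemma~\ref{lem:K16}; it quotes the lemma from Jelassi and Li. Their proof, like that of Lemma~\ref{lem:K15} (whose bound carries the factor $H/h$), presumably tracks dyadic phases of $z^{(t)}$ itself. It would bound each phase length by the total increment across the phase, including the final step that may overshoot the next dyadic level. Controlling that overshoot is what the upper recursion (the $+C$ side here) would be for, and it plausibly explains the loose constants $21$ and $8$.

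Your route differs in two ways. First, the auxiliary sequence $u_t=\tfrac78 z^{(0)}+A\sum_{s<t}[z^{(s)}]^2$ absorbs the perturbation $C$ once. It turns the cumulative-sum hypothesis into the identity $u_{t+1}=u_t+A[z^{(t)}]^2\ge u_t+Au_t^2$, which is valid because $z^{(t)}\ge u_t>0$. Second, $u_t$ is nondecreasing, so you count only the steps lying strictly inside each phase, and overshoot never has to be controlled.

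This buys several things. The upper recursion is never used. The assumption $C\le z^{(0)}/8$ is needed only to keep $u_0>0$; any $C<z^{(0)}$ works with $u_0=z^{(0)}-C$. You also get the sharper bound $t_0\le \frac{16}{7Az^{(0)}}+\lceil\log_2(v/z^{(0)})\rceil+2$. This implies the stated one, since $\lceil\log_2(v/z^{(0)})\rceil\ge 1$ when $v>z^{(0)}$. The upper recursion is only genuinely needed for lower bounds on hitting times, such as Lemma~\ref{lem:E7}.

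Your fallback is unnecessary, and as written it is slightly off. The upper side gives $z^{(t)}\le\tfrac97 u_t$, hence $u_{t+1}\le u_t+\tfrac{81}{49}Au_t^2$. So one would need $H=\tfrac{81}{49}A$ rather than $H=A$.
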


\begin{lemma}[Lemma E. 7 in \cite{baoprovable} ]\label{lem:E7}
For the same sequence $\left\{z^{(t)}\right\}_{t \geq 0}$ be a positive sequence satisfying the recursive upper bound in \cref{lem:K16} Let $v>0$ such that $z^{(0)} \leq v$ and $t_0$ be the first iteration $z^{(t)} \geq v$. For any $v \geq 2 z^{(0)}$, we have the following lower bound
$$
t_0 \geq \frac{1}{8 A z^{(0)}}.
$$

\begin{lemma}[Lemma E. 8 in \cite{baoprovable} ]\label{lem:E8}
    Let $\left\{z^{(t)}\right\}_{t=0}^T$ and $\left\{a^{(t)}\right\}_{t=0}^T$ be two positive sequences admitting the following recursions
$$
\begin{aligned}
& z^{(t+1)} \geq z^{(t)}+h a^{(t)}\left[z^{(t)}\right]^2, \\
& z^{(t+1)} \leq z^{(t)}+H a^{(t)}\left[z^{(t)}\right]^2,
\end{aligned}
$$

where $0<h<H$ and $z^{(0)}>0$. If $\max _{t \leq T} a^{(t)} \leq A$, we have

$$
\sum_{s=0}^T a^{(s)} \leq \frac{4}{h z^{(0)}}+\frac{8 H A}{h}\left\lceil\frac{\log \left(z^{(T)} / z^{(0)}\right)}{\log (2)}\right\rceil,
$$

and

$$
\sum_{s=0}^T a^{(s)} \geq \frac{z^{(T)}-z^{(0)}}{H\left[z^{(T)}\right]^2}.
$$

\end{lemma}
\end{lemma}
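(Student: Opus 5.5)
The plan is to show by strong induction that the sequence cannot even double before time $T^\ast := \lfloor 1/(8Az^{(0)})\rfloor$. Since $v \ge 2z^{(0)}$, it then cannot reach $v$ by that time either. Only the recursive upper bound of \cref{lem:K16} is needed:
\[
z^{(t)} \le z^{(0)} + A\sum_{s=0}^{t-1}\bigl[z^{(s)}\bigr]^2 + C .
\]
I also use the standing assumption $C \le z^{(0)}/8$ from \cref{lem:K16}, which the phrase ``the same sequence'' carries over.

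The induction claim is that $z^{(t)} \le \tfrac{13}{8} z^{(0)}$ for every $0 \le t \le T^\ast$.

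\textbf{Base case.} For $t=0$ this is immediate, because $z^{(0)} > 0$.

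\textbf{Inductive step.} Suppose $z^{(s)} \le \tfrac{13}{8} z^{(0)} \le 2z^{(0)}$ for all $s < t$, where $t \le T^\ast$. Each squared term is then at most $4[z^{(0)}]^2$, and there are $t \le 1/(8Az^{(0)})$ of them. Plugging into the upper recursion gives
\[
z^{(t)} \le z^{(0)} + A\cdot t\cdot 4\bigl[z^{(0)}\bigr]^2 + C \le z^{(0)} + \tfrac12 z^{(0)} + \tfrac18 z^{(0)} = \tfrac{13}{8} z^{(0)} .
\]
This closes the induction. Positivity of the sequence is used only to justify replacing $[z^{(s)}]^2$ by $(2z^{(0)})^2$.

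\textbf{Conclusion.} For every $t \le T^\ast$ we have $z^{(t)} \le \tfrac{13}{8}z^{(0)} < 2z^{(0)} \le v$. So no iterate up to $T^\ast$ reaches $v$, and therefore $t_0 \ge T^\ast + 1 > 1/(8Az^{(0)})$, which is the claimed bound. If $1/(8Az^{(0)}) < 1$ the statement is trivial, since $t_0 \ge 1$ anyway: $z^{(0)} < v$ because $v \ge 2z^{(0)}$.

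\textbf{Main obstacle.} The only delicate point is keeping the induction self-consistent. The hypothesis must be strictly weaker than the bound it yields, so that the constants close: here the hypothesis is $\le 2z^{(0)}$ and the conclusion is $\le \tfrac{13}{8}z^{(0)}$. The bound $C \le z^{(0)}/8$ is exactly what leaves room for this. Beyond that the argument is routine.
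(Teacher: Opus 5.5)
Your proof of the E.7 bound is correct. The strong induction $z^{(t)} \le \tfrac{13}{8} z^{(0)}$ for $t \le \lfloor 1/(8Az^{(0)}) \rfloor$ closes exactly as you write. You are also right that the hypothesis $C \le z^{(0)}/8$ of Lemma~\ref{lem:K16} must be carried over: from the upper recursion alone, with $C \ge z^{(0)}$ and $v = 2z^{(0)}$, the value $z^{(1)} = 2z^{(0)}$ is admissible, which gives $t_0 = 1$ however small $A$ is. The paper gives no proof here, only the citation to \cite{baoprovable}, and your induction is the standard argument for this bound, so there is no different route to compare against.

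The gap is one of coverage. Because of how the environments are nested in the source, the statement you were given also contains Lemma~\ref{lem:E8}, and your proposal addresses neither of its inequalities. Nothing you proved implies them: they concern a different recursion, with a time-varying weight $a^{(t)}$, and the quantity bounded is $\sum_s a^{(s)}$.

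\textbf{Lower bound.} This is one line. The increments are positive, so $z^{(s)} \le z^{(T)}$ for $s \le T$. Telescoping the upper recursion then gives $z^{(T)} - z^{(0)} \le H [z^{(T)}]^2 \sum_{s<T} a^{(s)}$, and adding $a^{(T)} > 0$ finishes.

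\textbf{Upper bound.} This needs the dyadic-phase argument behind Lemma~\ref{lem:K15}. Let $t_k$ be the first time with $z^{(t)} \ge 2^k z^{(0)}$. The lower recursion and monotonicity give $h\, 4^k [z^{(0)}]^2 \sum_{t_k \le t < t_{k+1}} a^{(t)} \le z^{(t_{k+1})} - z^{(t_k)}$. The upper recursion at time $t_{k+1}-1$, together with $a^{(t)} \le A$, caps the overshoot at $z^{(t_{k+1})} < 2^{k+1} z^{(0)} + 4^{k+1} H A [z^{(0)}]^2$. Hence each phase contributes less than $\frac{1}{h 2^k z^{(0)}} + \frac{4HA}{h}$. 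The geometric parts sum to less than $\frac{2}{h z^{(0)}}$, and at most $\lceil \log_2 (z^{(T)}/z^{(0)}) \rceil$ phases meet $[0, T-1]$. For the last, partial phase $[t_K, T-1]$ with $K = \lfloor \log_2 (z^{(T)}/z^{(0)}) \rfloor$, bound $z^{(T)} < 2^{K+1} z^{(0)}$ directly.

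\textbf{The term $a^{(T)}$.} It is constrained only by $a^{(T)} \le A < HA/h$. It can be absorbed into the logarithmic term only when that ceiling is at least $1$, i.e.\ when $T \ge 1$. For $T = 0$ the stated inequality reduces to $a^{(0)} \le 4/(h z^{(0)})$, which does not follow from the hypotheses. So the sum should be read as running to $T-1$, or $T \ge 1$ should be assumed.
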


\end{document}